\newtheorem{theorem}{Theorem}%  meant for continuous numbers
\newtheorem{lemma}{Lemma}
\newtheorem{corollary}{Corollary}
\newacronym{aco}{ACO}{Ant Colony Optimisation}
\newacronym{api}{API}{Application Programming Interface}
\newacronym{aps}{APS}{Advanced Planning \& Scheduling}
\newacronym{b2b}{B2B}{Business-to-Business}
\newacronym{b2c}{B2C}{Business-to-Customer}
\newacronym{bb}{B\&B}{Branch \& Bound}
\newacronym{bn}{BN}{Bayesian Network}
\newacronym{cdf}{CDF}{Cumulative Distribution Function}
\newacronym{cp}{CP}{Constraint Programming}
\newacronym{cpd}{CPD}{Conditional Probability Distribution}
\newacronym{cpt}{CPT}{Conditional Probability Table}
\newacronym{dag}{DAG}{Directed Acyclic Graph}
\newacronym{des}{DES}{Discrete Event Simulation}
\newacronym{erp}{ERP}{Enterprise Resource Planning}
\newacronym{fis}{FIS}{Finite Intermediate Storage}
\newacronym{ga}{GA}{Genetic Algorithm}
\newacronym{gep}{GEP}{Gene Expression Programming}
\newacronym{iot}{IoT}{Internet of Things}
\newacronym{lp}{LP}{Linear Programming}
\newacronym{lstm}{LSTM}{Long Short-Term Memory}
\newacronym{mc}{MC}{Monte Carlo}
\newacronym{mes}{MES}{Manufacturing Execution System}
\newacronym{mdp}{MDP}{Markov Decision Process}
\newacronym{mip}{MIP}{Mixed Integer Programming}
\newacronym{milp}{MILP}{Mixed-Integer Linear Programming}
\newacronym{minlp}{MINLP}{Mixed-Integer Nonlinear Programming}
\newacronym{mle}{MLE}{Maximum Likelihood Estimation}
\newacronym{mpc}{MPC}{Model Predictive Control}
\newacronym{nis}{NIS}{No Intermediate Storage}
\newacronym{nn}{NN}{Neural Network}
\newacronym{nphard}{NP-hard}{Nondeterministic Polynomial-time hard}
\newacronym{or}{OR}{Operations Research}
\newacronym{pdf}{PDF}{Probability Density Function}
\newacronym{pid}{P\&ID}{Piping and Instrumentation Diagram}
\newacronym{pmf}{PMF}{Probability Mass Function}
\newacronym{pse}{PSE}{Process System Engineering}
\newacronym{pso}{PSO}{Particle Swarm Optimisation}
\newacronym{rd}{R\&D}{Research and Development}
\newacronym{rl}{RL}{Reinforcement Learning}
\newacronym{rtn}{RTN}{Resource-Task Network}
\newacronym{sa}{SA}{Simulated Annealing}
\newacronym{stn}{STN}{State-Task Network}
\newacronym{tsp}{TSP}{Travelling Saleman Problem}
\newacronym{uis}{UIS}{Unlimited Intermediate Storage}
\newacronym{zw}{ZW}{Zero-Wait}
\newcommand{\independent}{\perp\!\!\!\perp} 
\title{Bayesian dynamic scheduling of multipurpose batch processes under incomplete look-ahead information}
\author[1]{Taicheng Zheng}
\author[1]{Dan Li}
\author[1,*]{Jie Li}
\affil[]{Centre for Process Integration, the Department of Chemical Engineering, The University of Manchester, Oxford Road, Manchester, M13 9SS, United Kingdom}
\begin{document}

\maketitle

\begin{abstract}
    Multipurpose batch processes become increasingly popular in manufacturing industries because they can adapt to low-volume, high-value products and shifting demands. These processes often operate in a dynamic environment, which faces disturbances such as processing delays and demand fluctuations. To minimise long-term cost and system nervousness (that is, disruptive changes to schedules), schedulers must design rescheduling strategies to address such disturbances effectively. Existing methods often assume complete look-ahead information over the scheduling horizon. This assumption contrasts with realistic situations where schedulers can only access incomplete look-ahead information. Sticking with existing methods may lead to suboptimal long-term costs and high-level system nervousness. To address this gap, we propose a Bayesian dynamic scheduling method. This method relies on learning a Bayesian Network from the probability distribution of disturbances. Specifically, the Bayesian Network represents how likely each operation will be impacted by disturbances. During the online execution, when new disturbances become observed, this method updates the posterior distribution and therefore guides the rescheduling strategy. We compare our method with the existing periodically-completely rescheduling strategy (which generates new schedules from scratch at fixed intervals) on four benchmark problems. Computational results show that our method achieves statistically better long-term costs and system nervousness. In the theoretical aspect, we prove that if disturbances are mutually independent, the impact-quantifying variables inherently satisfy the independence assumptions required by Bayesian Networks. As an implication, practitioners can extend the method to other scheduling problems (such as job shop scheduling and continuous processes), provided that they define the problem-specific dependencies between operations.
\end{abstract}

\textbf{Keywords}: dynamic scheduling, disturbance, incomplete look-ahead information, Bayesian Network, multipurpose batch process

%%%%%%%%%%%%%%%%%%%%%%%%%%%%%%%%%%%%%%%%%%%%%%%%%%%%%%%%%%%%%%%%%%%%%%%%%%%%%%%%%%%%%%%%%%
\section{Introduction} \label{sec:ch4_introduction}

Modern chemical production becomes increasingly driven by niche markets that demand low-volume, high-value products \cite{wong_critical_2019}. Examples include pharmaceuticals \cite{awad_constraint_2022}, agrochemicals, and advanced materials \cite{prata_integrated_2008}. Such markets often experience fast-changing customer preferences, regulatory updates, and demand fluctuations \cite{ouelhadj_survey_2009, harjunkoski_scope_2014}. To remain competitive, manufacturers must adapt rapidly to these changes and deliver diverse, customisable, and specialised products. Multipurpose batch processes allow enterprises to address these challenges by setting up flexible, reconfigurable production workflows within a single facility \cite{li_integrated_2016, li_novel_2022, zheng_rescheduling_2025}. Unlike continuous processes, which are designed for large-scale, standardised outputs, multipurpose batch processes share equipment across multiple products and production stages. This flexibility enables manufacturers to switch between recipes and adjust batch sizes without incurring costly downtime. As a result, in chemical industries where flexibility and specialisation are desired, multipurpose batch processes are attracting increasing attention \cite{lee_mixed-integer_2017, woolway_application_2019}.

Multipurpose batch processes often operate in a dynamic environment where real-time disturbances (such as equipment malfunctions, processing delays, and raw material shortages) exist. Such disturbances pose significant challenges to production scheduling. On the one hand, if schedulers ignore disturbances and stick to the original schedule, it may lead to suboptimal long-term costs \cite{gupta_rescheduling_2016}. More seriously, the original schedule may become infeasible due to violations of operational constraints or resource availability. On the other hand, over-responding to disturbances (such as constantly revising production sequences, or delaying operations) often leads to \textit{system nervousness}, which refers to frequent disruptive changes to schedules \cite{hwangbo_production_2024}. Such changes may disrupt pre-arranged workflow, cause chaos and confusion, and damage business credibility with third party suppliers \cite{morita_proactive_2023}. For example, a sudden cancellation of chemical batches may leave a supplier with unused inventory, which damages business trust and collaboration. As a result, \textit{dynamic scheduling}, the research area that aims to address scheduling in the presence of real-time disturbances, is attracting growing research interests \cite{oyebolu_dynamic_2019, di_pretoro_demand_2022, avadiappan_state_2021}.

Dynamic scheduling methods can be broadly grouped into two categories: \textit{proactive scheduling} and \textit{reactive scheduling}. Proactive scheduling anticipates potential disturbances to generate a robust baseline schedule before execution begins \cite{wittmannhohlbein_proactive_2013, lim_proactive_2009, morita_proactive_2023}. In contrast, reactive scheduling dynamically revises or regenerates schedules in real-time as disruptions occur during the execution \cite{li_reactive_2008, rahal_proactive_2020, zhang_data-driven_2021}. For a comprehensive review on these methods, we refer readers to review articles such as \cite{vieira_rescheduling_2003, ouelhadj_survey_2009, li_process_2008, sabuncuoglu_hedging_2009}. A critical concept that implicitly distinguishes proactive and reactive scheduling is the \textit{certainty horizon}, that is, the horizon within which the outcome of disturbances becomes confirmed. Proactive scheduling studies, such as \cite{bonfill_proactive_2008, lim_proactive_2009, han_proactive_2010, cui_proactive_2018}, often assume a certainty horizon of zero length. This assumption implies that schedulers possess no information about future disturbance outcomes. Conversely, reactive scheduling studies, such as \cite{janak_production_2006, kopanos_reactive_2014, nie_extended_2014, rahmani_stable_2016, kotidis_digiglyc_2021}, often assume a certainty horizon that equals the scheduling horizon, which implies a complete look-ahead across the scheduling horizon.

However, in practice, multipurpose batch systems rarely operate with certainty horizons that equal either a length of zero or the scheduling horizon. Instead, the certainty horizon often presents a length in between. Consider a multipurpose batch process within a chemical plant, where schedulers are required to generate schedules for the next rolling week. In realistic situations, the certainty horizon may range from a few hours to a few days due to the real-time nature of disturbances. For example, a supplier may notify the plant of a delay in raw materials only 24 hours before its scheduled use in a high-priority batch. Also, \acrfull{iot}-enabled predictive maintenance systems may flag a high-probability machine failure only multiple hours prior to its next operation. In fact, such scenarios frequently occur in day-to-day production operations. These facts reflect the reality of \textit{incomplete look-ahead information}, where schedulers have only partial visibility into future disturbances over the scheduling horizon \cite{dwibedy_semi-online_2022, lee_semi-online_2013}.

This incomplete look-ahead information poses additional challenges to dynamic scheduling. When disturbances are partially observed over the scheduling horizon, the performance of a dynamic scheduling algorithm will inevitably depend on its ability to effectively use this myopic, yet nonzero look-ahead information. However, to the best of our knowledge, this problem has received very limited attention from the area of proactive scheduling (which assumes zero look-ahead), or reactive scheduling (which assumes complete look-ahead), or proactive-reactive scheduling (which alternates between the other two). The only discussions to our awareness, within the \acrfull{pse} community, appear in \cite{wittmann-hohlbein_proactive_2013}, \cite{kopanos_reactive_2014} and \cite{gupta_design_2019}. 

Wittmann-Hohlbein and Pistikopoulos \cite{wittmann-hohlbein_proactive_2013} formulated the proactive scheduling problem of batch processes as a multi-parametric \acrshort*{milp} model. In their formulation, uncertain parameters are divided into two groups, namely those revealed at the decision stage and those remaining uncertain throughout the scheduling horizon. However, their work does not involve the closed-loop pattern of dynamic scheduling. Kopanos et al. \cite{kopanos_reactive_2014} discussed the difference between the prediction horizon (comparable to the certainty horizon in our study) and the control horizon (the same as the scheduling horizon). However, their work assumes that the prediction horizon exceeds the control horizon in length. This assumption implies complete look-ahead information, which falls beyond our context. Gupta et al. \cite{gupta_design_2019} explored strategies for tuning the scheduling horizon length and the rescheduling frequency. Their strategies were evaluated on various lengths of certainty horizon for uncertain demands. Although their work involves the incomplete look-ahead information, it was limited to a purely periodic rescheduling strategy. Such a strategy neither leverages the probabilistic property of system disturbances, nor reuses information from previous schedules, which may lead to impractical computational time and high system nervousness for industrial-sized problems.

In addition to the above limitations, to the best of our knowledge, few works have systematically addressed the problem of dynamic scheduling under incomplete look-ahead information. Moreover, existing works rarely point out that, despite the superficial difference in the length of certainty horizon, from a probabilistic decision-making perspective, the incomplete look-ahead implies a fundamentally different challenge than the zero or complete look-ahead. That is, how to use observed disturbances to update the posterior distribution based on physical knowledge. To explain, let us first consider the proactive scheduling, where zero look-ahead about disturbances is assumed. In this case, rescheduling decisions often rely on \textit{unconditional probability queries} \cite{friedman_probabilistic_2009}. For example, designing a proactive schedule may implicitly involve answering queries such as:
\begin{itemize}
    \item ``What is the probability that, Task X will not complete on time due to a machine breakdown during its scheduled execution?''
    \item ``What is the probability that, Order P will be cancelled before its delivery date?''
\end{itemize}
In practice, such unconditional probability queries can be addressed using statistical models (such as Weibull analysis, autoregressive models) or machine learning models (such as random forests, LSTM) trained directly on historical data. On the other hand, in the reactive scheduling, where complete look-ahead is assumed, the rescheduling decision is often based on \textit{deterministic logic queries}. For example:
\begin{itemize}
    \item ``If Task X fails, can Task Y initiate as scheduled without delaying other tasks?''
    \item ``If an unexpected order requiring for 10 units of product M is due at time $t$, can the current schedule fulfil this demand without inserting new tasks?''
\end{itemize}
In principle, these queries can be addressed through a single episode of \acrfull{des} (with all disturbance variables fixed to their observed values) \cite{burns_discrete-event_2022}, or through logic-based reasoning methods such as propositional calculus \cite{rawlings_incorporating_2019, soares_real-time_2008}. However, in the context of incomplete look-ahead information, the underlying problem shifts to answering \textit{conditional probability queries}, such as:
\begin{itemize}
    \item ``Given a confirmed two-hour delay in Task X, what is the probability that Task Y cannot initiate as scheduled?''
    \item ``Given that an unexpected order requiring for 10 units of product M is due at time $t$, what is the probability that the current schedule can still fulfil this demand under a certain disturbance level?''
\end{itemize}
These conditional probability queries are more difficult to address because essentially, they combine the previous two types of queries. On the one hand, obtaining the conditional distribution requires to represent the joint prior of disturbances. On the other hand, conditioning on the event (that is, updating the posterior) requires to model both the observed disturbances and the physical knowledge (such as the production workflow, the relationships between tasks, and the characteristics of disturbances). These challenges suggest the need for a novel dynamic scheduling framework that can systematically incorporate probabilistic reasoning with logical constraints to (1) model observed disturbances, (2) formalise the physical knowledge, and (3) update the posterior using both the observed disturbances and the physical knowledge. To the best of our knowledge, this area remains unexplored within the \acrshort*{pse} community.

In this work, we propose a Bayesian dynamic scheduling framework to systematically address these challenges. We first formalise the dynamic scheduling problem under incomplete look-ahead information using a dynamic system formalism, where the planned schedules are modelled as action sequences and the look-ahead information is modelled as realisations of disturbance variables. Then, for the context of multipurpose batch processes, we develop a dynamic system formulation to describe how the batch system evolves through time. Specifically, in this formulation we incorporate various commonly encountered disturbances, including machine breakdowns, processing time variations, yield losses, casual orders, and urgent demands. We also develop a \acrshort*{milp} formulation, which is specifically tailored for the dynamic system formulation, to generate schedules during the online execution. This \acrshort*{milp} formulation incorporates the current system states through a collection of initial state constraints, while involves hierarchical objective functions to avoid multi-solution issues and improve schedule robustness. Next, we propose the Bayesian dynamic scheduling framework. This framework generally follows a closed-loop scheduling pattern, while the main difference appears in updating a \acrfull{bn} to represent the potential impact of disturbances on each batch (that is, an operation) in the schedule. During the online execution, when new disturbances are observed, the framework performs inference algorithms in real time to update the posterior impacts. If the posterior impact distribution is probabilistically unacceptable, a warm-start rescheduling is triggered. One of the main novelties of the framework is the introduction of impact variables, which are artificially constructed random variables that encode the ``{local impact}'' of disturbances. These variables allow us to conveniently constructs Bayesian Networks and perform inference algorithms to reason about posterior impacts. In the theoretical aspect, we prove that, if the inherent disturbances in the system are mutually independent, then the impact variables automatically satisfies the conditional independence semantics required by \acrshort*{bn}s. This theoretical fact validates our framework, and demonstrates the applicability of our framework to other scheduling problems, such as job shop scheduling and continuous processes. To test the effectiveness of our framework, we perform extensive numerical experiments on four benchmark problems. The computational results show that, under the incomplete look-ahead information, increasing the rescheduling frequency does not necessarily reduce long-term cost. Furthermore, in some disturbance scenarios, our algorithm achieves lower long-term cost than the periodically-completely strategy even at any rescheduling frequency.

We organise the remainder of this paper as follows. In Section \ref{sec:ch4_prerequisite}, we briefly introduce mathematical prerequisites. In Section \ref{sec:ch4_problem_description}, we describe the dynamic scheduling problem of a multipurpose batch process under incomplete look-ahead information. In Section \ref{sec:ch4_problem_formulation}, we present both the dynamic system formulation and the MILP formulation of the problem. In Section \ref{sec:ch4_methodology}, we present our Bayesian dynamic scheduling framework in detail. In Section \ref{sec:ch4_discussions}, we discuss the theoretical aspect of our framework. In Section \ref{sec:ch4_experimental_design}, we present the setup of numerical experiments. In Section \ref{sec:ch4_results}, we present computational results. In Section \ref{sec:ch4_conclusions}, we conclude our work.
%%%%%%%%%%%%%%%%%%%%%%%%%%%%%%%%%%%%%%%%%%%%%%%%%%%%%%%%%%%%%%%%%%%%%%%%%%%%%%%%%%%%%%%%%%

%%%%%%%%%%%%%%%%%%%%%%%%%%%%%%%%%%%%%%%%%%%%%%%%%%%%%%%%%%%%%%%%%%%%%%%%%%%%%%%%%%%%%%%%%%
\section{Prerequisite} \label{sec:ch4_prerequisite}

This section presents mathematical prerequisites required for later discussions, including probabilistic independence, \acrfull*{dag}s, and \acrfull*{bn}s. Readers that are already familiar with these topics may skip this section on first reading and revisit as needed.

Unless otherwise specified, we use the following typographical conventions across the remainder of this paper.
\begin{itemize}
    \item Lowercase italic symbols (such as $x$) denote scalars.
    \item Lowercase boldface italic symbols (such as $\boldsymbol{x}$) denote column vectors.
    \item Uppercase italic symbols (such as $X$) denote events or random variables.
    \item Uppercase boldface italic symbols (such as $\boldsymbol{X}$) denote random column vectors.
    \item Blackboard bold symbols (such as $\mathbb{X}$) denote sets.
    \item Uppercase calligraphic symbols (such as $\mathcal{X}$) denote structured collections, such as sequences or tuples.
    \item Small cap symbols (such as $\textsc{SomeFuncs}$) denote functions or algorithms.
\end{itemize}

\subsection{Probabilistic independence} \label{subsec:ch4_probabilistic_independence}

% recall: P is a function from Omega to [0, 1], while p is induced from FX(x), which is defined by P.
% We intentionally keep vague about the definition of "probability distribution", because it can either refer to the probability measure, or the cdf/pmf/pdf.
% Note also that the "x" in P(x) and p(x) represent different things: The first represents an event and the second is a number from the real line.
Let $(\Omega, \mathcal{F}, \mathsf{P})$ be a probability space, where $\Omega$ denotes the sample space, $\mathcal{F}$ denotes the event space, and $\mathsf{P}$ denotes the probability measure. A random vector $\boldsymbol{X} = (X_{1}, \cdots, X_{m})^{\intercal}$ is a measurable function $\boldsymbol{X}: \Omega \rightarrow \mathbb{R}^{m}$. A specific value of $\boldsymbol{X}$, which is denoted by $\boldsymbol{x}$, is called a \textit{realisation} of $\boldsymbol{X}$. We denote the collection of all possible realisations of $\boldsymbol{X}$ by $\boldsymbol{X}(\Omega)$. That is, $\boldsymbol{X}(\Omega) = \{ \boldsymbol{X}(\omega): \omega \in \Omega \}$. We write $\mathsf{P}(\boldsymbol{x})$ as the shorthand for $\mathsf{P}(\boldsymbol{X} = \boldsymbol{x})$ when the random vector $\boldsymbol{X}$ is clear from the context. When necessary, we use $\mathsf{p}_{\boldsymbol{X}} (\boldsymbol{x})$ to denote the probability mass or density function induced by $\mathsf{P}$. When the random vector $\boldsymbol{X}$ is clear from the context, we abbreviate $\mathsf{p}_{\boldsymbol{X}}(\boldsymbol{x})$ as $\mathsf{p}(\boldsymbol{x})$. 

\begin{itemize}
    \item (Independence of two events) Two events $E_{1}$ and $E_{2}$ are \textit{independent} under probability measure $\mathsf{P}$, if we have 
    $\mathsf{P} (E_{1} \cap E_{2}) = \mathsf{P} (E_{1}) \cdot \mathsf{P} (E_{2})$.
    \item (Mutual independence of events) A collection of events $\{ E_{1}, \cdots, E_{m} \}$ is \textit{mutually independent} under probability measure $\mathsf{P}$, if for every subset $\mathbb{I} \subseteq \{1, \cdots, m \}$ with $\left| \mathbb{I} \right| \geq 1$, we have     $\mathsf{P} \left( \bigcap_{i \in \mathbb{I}} E_{i} \right) = \prod_{i \in \mathbb{I}} \mathsf{P} (E_{i})$.
    \item (Independence of two random vectors) Two random vectors $\boldsymbol{X} \in \mathbb{R}^{m}$ and $\boldsymbol{Y} \in \mathbb{R}^{n}$ are independent under probability measure $\mathsf{P}$, if for all Borel sets\footnotemark $\mathbb{B}_{1}$, $\cdots$ $\mathbb{B}_{m}$, $\mathbb{C}_{1}$, $\cdots$, $\mathbb{C}_{n} \subseteq \mathbb{R}$, the events $\bigcap_{i=1}^{m} \{ X_{i} \in \mathbb{B}_{i} \}$ and $\bigcap_{j=1}^{n} \{ Y_{j} \in \mathbb{C}_{j} \}$ are independent under $\mathsf{P}$. 
    \footnotetext{A Borel set of $\mathbb{R}$ is any set that can be obtained by repeatedly applying countable unions, countable intersections, and complements to closed intervals (such as $[a, b]$) or half-open intervals (such as $[a, b)$) on $\mathbb{R}$. This concept is required for technical rigour. All sets of practical interest are Borel sets.}
    \item (Mutual independence of random variables) A collection of random variables $\{X_{1}, \cdots, X_{m} \}$ is mutually independent under probability measure $\mathsf{P}$, if for all Borel sets $\mathbb{B}_{1}, \mathbb{B}_{2}, \cdots, \mathbb{B}_{m} \subseteq \mathbb{R}$, the events $\{ X_{1} \in \mathbb{B}_{1} \}, \cdots, \{ X_{m} \in \mathbb{B}_m \}$ are mutually independent under $\mathsf{P}$.
\end{itemize}
We denote the independence statement ``$\boldsymbol{X}$ is independent of $\boldsymbol{Y}$ under $\mathsf{P}$'' by $\mathsf{P} \models \boldsymbol{X} \independent \boldsymbol{Y}$, where the symbol $\models$ reads as ``satisfies'' and the symbol $\independent$ reads as ``is independent of''. When the probability measure $\mathsf{P}$ is clear from the context, we abbreviate $\mathsf{P} \models \boldsymbol{X} \independent \boldsymbol{Y}$ as $\boldsymbol{X} \independent \boldsymbol{Y}$. 
\begin{itemize}
    \item (Factorisation of the joint distribution of two random vectors)  
    Let $\boldsymbol{X}$ and $\boldsymbol{Y}$ be two random vectors that admit a joint probability distribution under the probability measure $\mathsf{P}$. Then, $\boldsymbol{X}$ and $\boldsymbol{Y}$ are independent under $\mathsf{P}$ if and only if their joint probability distribution factorises as the product of their marginals. That is, for every possible $\boldsymbol{x} \in \boldsymbol{X}(\Omega)$ and $\boldsymbol{y} \in \boldsymbol{Y}(\Omega)$, we have  
    \begin{align} \label{eq:ch4_factorisation_two_random_vectors}
        \mathsf{P}(\boldsymbol{x}, \boldsymbol{y}) = \mathsf{P}(\boldsymbol{x}) \cdot \mathsf{P}(\boldsymbol{y}).
    \end{align}
    \item (Factorisation of the joint distribution of random variables) If a collection of random variables $\{X_{1}, \cdots, X_{m} \}$ admits a joint probability distribution $\mathsf{P}(x_{1}, \cdots, x_{m})$, then $X_{1}, \cdots, X_{m}$ are mutually independent under $\mathsf{P}$ if and only if
    \begin{align} \label{eq:ch4_factorisation_random_variables}
        \mathsf{P}(x_{1}, \cdots, x_{m}) = \prod_{i=1}^{m} \mathsf{P}(x_{i})
    \end{align}
    for every possible $x_1 \in X_{1}(\Omega), \cdots, x_{m} \in X_{m}(\Omega)$.
\end{itemize}
Without ambiguity, we also denote the factorisation statement ``$\boldsymbol{X}$ and $\boldsymbol{Y}$ factorise in their joint distribution induced by $\mathsf{P}$'' by $\mathsf{P} \models \boldsymbol{X} \independent \boldsymbol{Y}$. When the probability distribution $\mathsf{P}$ is clear from the context, we abbreviate $\mathsf{P} \models \boldsymbol{X} \independent \boldsymbol{Y}$ as $\boldsymbol{X} \independent \boldsymbol{Y}$.
\begin{itemize}
    \item (Conditional probability of events) For two events $E_{1}$ and $E_{2}$ with $\mathsf{P}(E_{2}) > 0$, the \textit{conditional probability} is defined as $\mathsf{P} (E_{1} \mid E_{2}) \overset{\mathrm{def}}{=} \frac{\mathsf{P} (E_{1} \cap E_{2})}{\mathsf{P} (E_{2})}$.
\end{itemize}
Notably, the condition $\mathsf{P}(E_{2}) > 0$ is mandatory. For cases where $\mathsf{P}(E_{2}) = 0$, the conditional probability is undefined.
\begin{itemize}
    \item (Conditional independence of events) Let $E_{1}, E_{2}, E_{3}$ be three events where $\mathsf{P} (E_{3}) > 0$. The event $E_{1}$ is \textit{conditionally independent of} $E_{2}$ given $E_{3}$ under probability measure $\mathsf{P}$, if we have $\mathsf{P}(E_{1} \cap E_{2} \mid E_{3}) = \mathsf{P}(E_{1} \mid E_{3}) \cdot \mathsf{P}(E_{2} \mid E_{3})$.
    \item (Conditional independence of random vectors) Let $\boldsymbol{X}, \boldsymbol{Y}, \boldsymbol{Z}$ be random vectors. The random vector $\boldsymbol{X}$ is conditionally independent of $\boldsymbol{Y}$ given $\boldsymbol{Z}$ under probability measure $\mathsf{P}$, if for all Borel sets $\mathbb{B}_{1}$, $\cdots$, $\mathbb{B}_{m}$, $\mathbb{C}_{1}$, $\cdots$, $\mathbb{C}_{n}$, $\mathbb{D}_{1}$, $\cdots$, $\mathbb{D}_{p} \subseteq \mathbb{R}$, the event $\bigcap_{i=1}^{m} \{ X_{i} \in \mathbb{B}_{i} \} $ is independent of $\bigcap_{j=1}^{n} \{ Y_{j} \in \mathbb{C}_{j} \} $ given $\bigcap_{k=1}^{p} \{ Z_{k} \in \mathbb{D}_{k} \} $.
\end{itemize}

Similarly, we denote the conditional independence statement ``$\boldsymbol{X}$ is conditionally independent of $\boldsymbol{Y}$ given $\boldsymbol{Z}$ under probability measure $\mathsf{P}$'' by $\mathsf{P} \models \boldsymbol{X} \independent \boldsymbol{Y} \mid \boldsymbol{Z}$. When $\mathsf{P}$ is clear, we abbreviate the statement as $\boldsymbol{X} \independent \boldsymbol{Y} \mid \boldsymbol{Z}$. 

\begin{itemize}
    \item Let $\boldsymbol{X}$ and $\boldsymbol{Y}$ be two random vectors with realisations $\boldsymbol{x}$ and $\boldsymbol{y}$, respectively. The \textit{conditional probability distribution} is defined as $\mathsf{P} (\boldsymbol{x} \mid \boldsymbol{y}) \overset{\mathrm{def}}{=} \frac{\mathsf{P}(\boldsymbol{x}, \boldsymbol{y}) }{\mathsf{P} (\boldsymbol{y})}$ for $\mathsf{P} (\boldsymbol{y}) > 0$.
\end{itemize}
Similarly, when $\mathsf{P}(\boldsymbol{y}) = 0$, the conditional probability distribution $\mathsf{P} (\boldsymbol{x} \mid \boldsymbol{y})$ is undefined.
\begin{itemize}
    \item (Conditional independence of random vectors \#1) Let $\boldsymbol{X}, \boldsymbol{Y}, \boldsymbol{Z}$ be random vectors with realisations $\boldsymbol{x}, \boldsymbol{y}, \boldsymbol{z}$, respectively. The random vector $\boldsymbol{X}$ is conditionally independent of $\boldsymbol{Y}$ given $\boldsymbol{Z}$ under probability measure $\mathsf{P}$, if and only if for all possible realisations $\boldsymbol{x} \in \boldsymbol{X}(\Omega), \boldsymbol{y} \in \boldsymbol{Y}(\Omega)$, and $\boldsymbol{z} \in \boldsymbol{Z}(\Omega)$ such that $\mathsf{P}(\boldsymbol{z}) > 0$, we have 
    \begin{align} \label{eq:ch4_conditional_factorisation_definition_1}
        \mathsf{P}(\boldsymbol{x}, \boldsymbol{y} \mid \boldsymbol{z}) = \mathsf{P}(\boldsymbol{x} \mid \boldsymbol{z}) \cdot \mathsf{P} (\boldsymbol{y} \mid \boldsymbol{z}).
    \end{align} 
    \item (Conditional independence of random vectors \#2) Equivalently, the random vector $\boldsymbol{X}$ is conditionally independent of $\boldsymbol{Y}$ given $\boldsymbol{Z}$ under $\mathsf{P}$, if for every $\boldsymbol{x} \in \boldsymbol{X}(\Omega), \boldsymbol{y} \in \boldsymbol{Y}(\Omega)$, and $\boldsymbol{z} \in \boldsymbol{Z}(\Omega)$, we have 
    \begin{align} \label{eq:ch4_conditional_factorisation_definition_2}
    \mathsf{P}(\boldsymbol{x} \mid \boldsymbol{y}, \boldsymbol{z}) = \mathsf{P}(\boldsymbol{x} \mid \boldsymbol{z}) \text{ or } \mathsf{P}(\boldsymbol{y}, \boldsymbol{z}) = 0. 
    \end{align}    
\end{itemize}
Similarly, we denote the conditional factorisation statement ``$\boldsymbol{X}$ and $\boldsymbol{Y}$ conditionally factorise over $\mathsf{P}$ given $\boldsymbol{Z}$'' by $\mathsf{P} \models \boldsymbol{X} \independent \boldsymbol{Y} \mid \boldsymbol{Z}$. When the probability distribution $\mathsf{P}$ is clear from the context, we abbreviate $\mathsf{P} \models \boldsymbol{X} \independent \boldsymbol{Y} \mid \boldsymbol{Z}$ as $\boldsymbol{X} \independent \boldsymbol{Y} \mid \boldsymbol{Z}$.

\textit{A technical note}. In many existing textbooks (see, for example \cite{friedman_probabilistic_2009, darwiche_modeling_2009, ross_first_2014, scutari_bayesian_2021}), when writing the formula $\boldsymbol{X}_{1} \independent \boldsymbol{X}_{2} \mid \boldsymbol{X}_{3}$, the authors often implicitly assume that such random vectors are disjoint (that is, $\boldsymbol{X}_{i} \cap \boldsymbol{X}_{j} = \varnothing$ for distinct $i, j \in \{1, 2, 3\}$). However, in our context, since these random vectors are not presumed to be known, we do not make such assumptions. Therefore, it is possible that the random vectors $\boldsymbol{X}_{1}$, $\boldsymbol{X}_{2}$, and $\boldsymbol{X}_{3}$ are not disjoint. In this case, if we continue to write $\boldsymbol{X}_{1} \independent \boldsymbol{X}_{2} \mid \boldsymbol{X}_{3}$, it may cause confusions. To address this subtlety, in the remainder of this paper, when we write $\boldsymbol{X}_{1} \independent \boldsymbol{X}_{2} \mid \boldsymbol{X}_{3}$, we exclude random variables in $\boldsymbol{X}_{3}$ from $\boldsymbol{X}_{1}$ and $\boldsymbol{X}_{2}$. In other words, when we write $\boldsymbol{X}_{1} \independent \boldsymbol{X}_{2} \mid \boldsymbol{X}_{3}$, we actually mean $(\boldsymbol{X}_{1} \setminus \boldsymbol{X}_{3}) \independent (\boldsymbol{X}_{2} \setminus \boldsymbol{X}_{3}) \mid \boldsymbol{X}_{3}$.

\subsection{Directed Acyclic Graphs} \label{subsec:ch4_prerequisite_dag}

In graph theory, a \textit{\acrfull*{dag}} is a data structure $\mathcal{G} = (\mathbb{X}, \mathbb{A}) $, where $\mathbb{X}$ is a set of nodes and $\mathbb{A}$ is a set of arcs (that is, directed edges) that forms no cycles. Our discussions will involve the following definitions. An illustrative example of such definitions is provided in Appendix \ref{appendix:illustraive_example_dag}.
\begin{itemize}
    \item (Arc) We denote an arc that starts from a node $X_{i}$ to another node $X_{j}$ by $X_{i} \rightarrow X_{j}$, or equivalently $X_{j} \leftarrow X_{i}$.
    \item (Parent) A node $X_{i}$ is a \textit{parent} of $X_{j}$ if we have $X_{i} \rightarrow X_{j} \in \mathbb{A}$. We denote the collection of parents of $X_{j}$ by $\mathrm{Pa}^{\mathcal{G}} (X_{j})$.
    \item (Child) A node $X_{j}$ is a \textit{child} of $X_{i}$ if we have $X_{i} \rightarrow X_{j} \in \mathbb{A}$. We denote the collection of children of $X_{i}$ by $\mathrm{Ch}^{\mathcal{G}} (X_{i})$.
    \item (Path) A \textit{path} from $X_{1}$ to $X_{n}$ is a sequence of arcs $X_1 \rightarrow X_2 \rightarrow \cdots \rightarrow X_n$, where $X_{i} \rightarrow X_{i+1} \in \mathbb{A}$ for all $i \in \{ 1, \cdots, n-1 \}$. The length of a path equals the number of \textit{arcs} included in the sequence.
    \item (Root) A node $X_{i}$ is a \textit{root} in $\mathcal{G}$, if it has no parents (that is, $\mathrm{Pa}^{\mathcal{G}} (X_{i}) = \varnothing$).
    \item (Node height) The \textit{height} of a node $X_{i}$ in $\mathcal{G}$, which is denoted by $\mathrm{H}^{\mathcal{G}} (X_{i})$, is the length of the longest path from any root to $X_{i}$. By convention, the height of a root is $0$. The collection of nodes with height $j$ in $\mathcal{G}$ is denoted by $\mathbb{H}^{\mathcal{G}}_{j}$.
    \item (Graph height) The \textit{height} of a DAG $\mathcal{G}$, which is denoted by $\mathrm{H}( {\mathcal{G}} )$, is the maximum height over all nodes in $\mathcal{G}$. 
    \item (Ancestor) A node $X_{a}$ is an \textit{ancestor} of $X_{b}$, if a path exists from $X_{a}$ to $X_{b}$ in $\mathcal{G}$. We denote the collection of ancestors of $X_{b}$ by $\mathrm{An}^{\mathcal{G}} (X_{b})$.
    \item (Descendant) A node $X_{b}$ is a \textit{descendant} of $X_{a}$, if a path exists from $X_{a}$ to $X_{b}$ in $\mathcal{G}$. We denote the collection of descendants of $X_{a}$ by $\mathrm{De}^{\mathcal{G}} (X_{a})$.
    \item (Non-descendant) The \textit{non-descendants} of $X_{a}$, which is denoted by $\overline{\mathrm{De}}^{\mathcal{G}}(X_a)$, is the complement of descendants of $X_{a}$ in $\mathcal{G}$. That is, $\overline{\mathrm{De}}^{\mathcal{G}}(X_a) = \mathbb{X} \setminus \mathrm{De}^{\mathcal{G}} (X_a)$.
    \item (Topological ordering) A node ordering $X_{1}, \cdots, X_{n}$ is \textit{topological}, if $X_{i}$ appears before $X_{j}$ in the ordering for every arc $X_{i} \rightarrow X_{j} \in \mathbb{A}$. For any well-defined DAG, a topological ordering always exists.
    \item ($n$-fold parent) For a subset $\mathbb{X'} \subseteq \mathbb{X}$, we define $\mathrm{Pa}^{\mathcal{G}} (\mathbb{X}') \overset{\mathrm{def}}{=} \bigcup_{X \in \mathbb{X}'} \mathrm{Pa}^{\mathcal{G}}(X) $. The \textit{$n$-fold parent set} $(\mathrm{Pa}^{\mathcal{G}})^{(n)} (X) $ is defined recursively as
    \begin{align*}
        (\mathrm{Pa}^{\mathcal{G}})^{(n)}(X) = \mathrm{Pa}^{\mathcal{G}} \left( (\mathrm{Pa}^{\mathcal{G}})^{(n-1)}(X) \right),  
    \end{align*}
    where $(\mathrm{Pa}^{\mathcal{G}})^{(0)}(X) = \{X\}$ by convention.
\end{itemize}

\subsection{Bayesian Networks}

A \textit{\acrfull*{bn}} is a probabilistic graphical model that uses a \acrshort*{dag} to compactly encode a joint probability distribution over a set of random variables. Formally, a BN is a pair $\mathcal{B} = (\mathcal{G}, \mathsf{P})$ that consists of
\begin{itemize}
    \item a DAG $\mathcal{G} = (\boldsymbol{X}, \mathbb{A})$, where the node set $\boldsymbol{X} = \left(X_{1}, X_{2}, \cdots, X_{m} \right)^{\intercal}$ represents random variables, and the arc set $\mathbb{A}$ represents direct dependencies between these random variables,
    \item a probability distribution $\mathsf{P}$ that satisfies one of the following equivalent semantics:
    \begin{itemize}
        \item (Factorisation semantic) The probability distribution $\mathsf{P}$ factorises over $\mathcal{G}$ according to 
        \begin{align} \label{eq:ch4_bn_semantic_factorisation}
            \mathsf{P}(x_{1}, x_{2}, \cdots, x_{m}) = \prod_{i=1}^{m} \mathsf{P} \left( x_{i} \mid \mathrm{pa}^{\mathcal{G}} (X_{i}) \right),
        \end{align}
        where $x_1, \cdots, x_{m}$ denote possible realisations of $X_{1}, \cdots, X_{m}$, respectively. The condition $\mathrm{pa}^{\mathcal{G}} (X_i)$ denotes the realisation of parents of $X_{i}$.
        \item (Independence semantic) The DAG $\mathcal{G}$ encodes the following conditional independence statements.
        \begin{align} \label{eq:ch4_bn_semantic_independence}
            \left(X_{i} \independent \overline{\mathrm{De}}^{\mathcal{G}} (X_{i}) \mid \mathrm{Pa}^{\mathcal{G}} (X_{i}) \right), \  \forall X_{i} \in \boldsymbol{X}.
        \end{align}
    \end{itemize}
\end{itemize}
Intuitively, the factorisation semantic states that, the joint probability distribution $\mathsf{P}$ factorises into the product of the \acrfull{cpd} of each variable given their parents. The independence semantic states that, a variable is independent of its non-descendants given its parents. These two semantics are equivalent because the factorisation implies the independence (via marginalisation) and vice versa (via topological ordering of variables). A proof of this equivalence can be found in textbooks such as \cite{friedman_probabilistic_2009, darwiche_modeling_2009}.

In this paper, we focus on discrete BNs where variables have finitely many possible values. This assumption suffices in most practical applications. Also, it reduces computational complexities that appear in the continuous and hybrid case (for example, integration over densities).
%%%%%%%%%%%%%%%%%%%%%%%%%%%%%%%%%%%%%%%%%%%%%%%%%%%%%%%%%%%%%%%%%%%%%%%%%%%%%%%%%%%%%%%%%%
%%%%%%%%%%%%%%%%%%%%%%%%%%%%%%%%%%%%%%%%%%%%%%%%%%%%%%%%%%%%%%%%%%%%%%%%%%%%%%%%%%%%%%%%%%
\section{Problem description} \label{sec:ch4_problem_description}

In this section, we describe the dynamic scheduling problem of multipurpose batch processes in detail. We partition the problem description into two parts: the static part and the dynamic part. The static part includes the structure, mechanism, and scheduling decisions of typical multipurpose batch processes. The dynamic part includes how the production system evolves through time, how disturbance information is dynamically revealed to the scheduler, and how to measure the long-term performance of a dynamic scheduling algorithm.

\subsection{Static aspect}

\begin{figure}[h]
\includegraphics[width=0.65\textwidth]{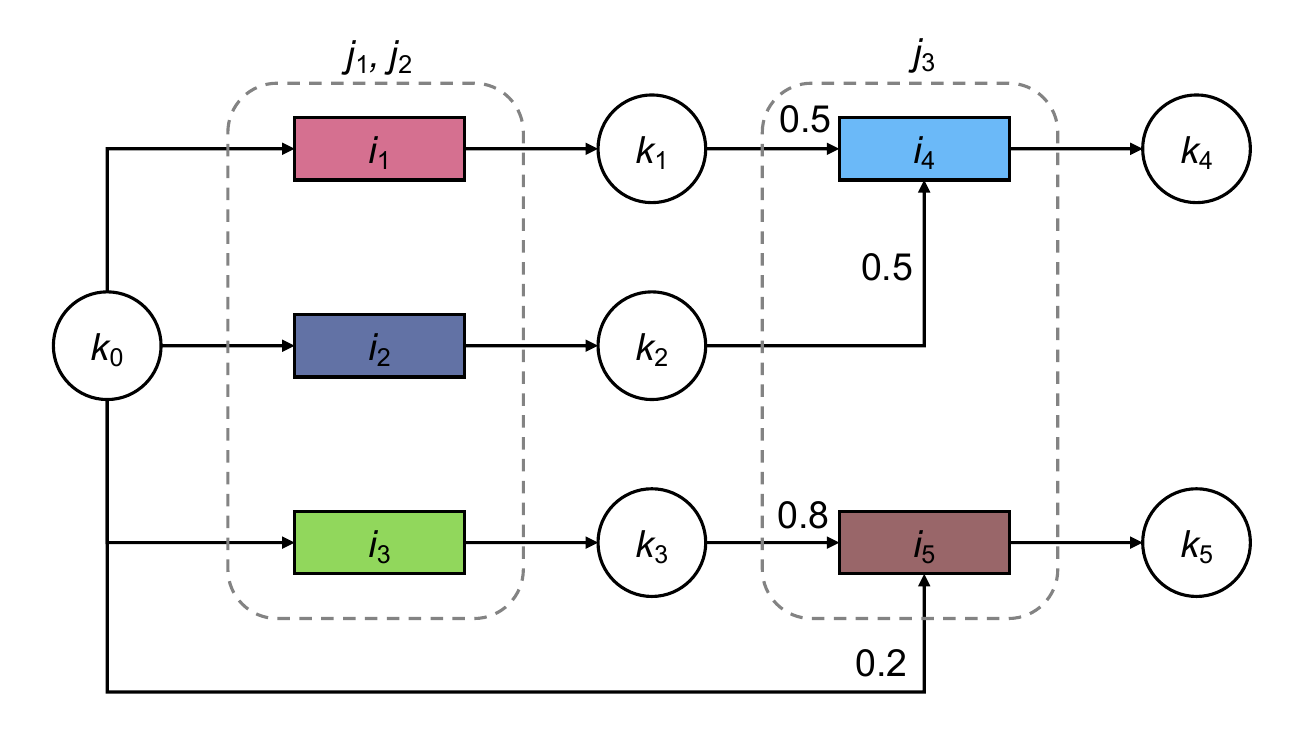}
\centering
\caption{
    STN representation of a sample multipurpose batch process 
}
\label{fig:ch4_stn_sample}
\end{figure}

Generally speaking, multipurpose batch processes are a class of production process, where tasks are executed in variable-sized batches by multipurpose machines. We use the well-established \acrfull{stn} to describe multipurpose batch processes. Figure \ref{fig:ch4_stn_sample} presents a sample \acrshort*{stn}. In the figure, circle nodes represent \textit{materials} (also called \textit{states}). Circle nodes with no parents, with no children, and all others represent raw materials, products, and intermediate materials, respectively. We use indices and sets $k \in \mathbb{K}$ to denote materials. Also, we denote the collection of raw materials, intermediate materials, and products by $\mathbb{K}^{\mathrm{R}}$, $\mathbb{K}^{\mathrm{I}}$, and $\mathbb{K}^{\mathrm{P}}$, respectively. Each material has a dedicated storage tank (although not explicitly denoted in the figure) with a finite capacity $s_{k}^{\max}$. Square nodes represent \textit{tasks}, that is, processing steps that transform materials. We use indices and sets $i \in \mathbb{I}$ to denote tasks. Arcs that connect materials and tasks represent processing recipes. Decimal numbers next to arcs represent the conversion coefficient between tasks and materials. For example, in Figure \ref{fig:ch4_stn_sample}, when initiating a batch of task $i_{5}$, the input materials should comprise $80 \%$ of material $k_{3}$ and $20 \%$ of material $k_{0}$. We use $\rho_{i,k}$ to denote the conversion coefficient between $i$ and $k$. If the coefficient $\rho_{i,k} > 0$, task $i$ produces material $k$. Otherwise $\rho_{i,k} < 0$, task $i$ consumes material $k$. We use $\mathbb{I}_{k}^{+}$ and $\mathbb{I}_{k}^{-}$ to denote the collections of tasks that produces and consumes material $k$, respectively.

In addition to tasks and materials, dashed round-corner boxes represent \textit{machines}, that is, processing units used to execute specific tasks. Tasks placed within the box of a machines means that these tasks are compatible with that machine. For example, in Figure \ref{fig:ch4_stn_sample}, tasks $i_{4}$ and $i_{5}$ can be executed on machine $j_{3}$, but cannot be executed on $j_{1}$ and $j_{2}$. We use indices and sets $j \in \mathbb{J}$ to denote machines and use the set $\mathbb{I}_{j}$ to denote tasks that can be executed on machine $j$. Each task-machine pair $(i, j)$ has a minimum and a maximum batch size requirement, denoted by $y_{i,j}^{\min}$ and $y_{i,j}^{\max}$, respectively. We denote the \textit{nominal processing time} of pair $(i, j)$ by $\tau_{i,j}$, that is, the time required to complete a task $i$ on machine $j$ when no disturbances occur.

The objective of scheduling is to determine a schedule that encodes four basic types of production decisions, which include timing (when to initiate a batch), batching (how much size to initiate a batch), assignment (which machine to execute a batch), and sequencing (what are the precedence relationships between batches).

\subsection{Dynamic aspect}

We describe the dynamics of a multipurpose batch process using a discrete-time dynamic system
\begin{align} \label{eq:ch4_system_dynamics_general_notation}
    \boldsymbol{s}_{t+1} = \textsc{TransFunc} \left( \boldsymbol{s}_t, \boldsymbol{a}_t, \boldsymbol{i}_t \right).
\end{align}
Here, the index $t$ represents time steps, which starts from $t_{0}$ and ends at $t_{N}$. We denote $\mathbb{T} = \{t_{0}, t_{1} \cdots, t_{N} \}$ as the collection of all time steps. The state vector $\boldsymbol{s}_t$ represents physical states (such as machine statuses and inventory levels) during the half-open time interval $[t-1, t)$. The action vector $\boldsymbol{a}_t$ represents system operations (such as batch initiations and machine assignments) at time point $t$. The information vector $\boldsymbol{i}_{t}$ represents realised disturbances (such as processing delays and machine failures) during $[t-1, t)$. The transition function $\textsc{TransFunc}$ maps the current state, action, and information variables to the next-step state $\boldsymbol{s}_{t+1}$.

We model disturbances using random variables. Specifically, we view the information vector $\boldsymbol{i}_{t}$ as a possible realisation of the random vector $\boldsymbol{I}_{t}$. Denote $\boldsymbol{I}_{t_0: t_N} = \left( \boldsymbol{I}_{t_0}, \boldsymbol{I}_{t_1}, \cdots, \boldsymbol{I}_{t_N} \right)$, and let $\mathsf{P}$ be the probability distribution of $\boldsymbol{I}_{t_0: t_N}$. We assume that random variables in $\boldsymbol{I}_{t_0: t_N}$ are mutually independent under $\mathsf{P}$. Also, the probability distribution $\mathsf{P}$ is known a priori, but not necessarily time-invariant.

We group disturbances into two categories: \textit{endogenous disturbances} and \textit{exogenous disturbances}. While the endogenous disturbances refer to those that directly affect the physical state of the system, such as machine failures, the exogenous disturbances refer to those that originate outside the system, such as demand fluctuations. We use $\boldsymbol{I}^{\mathrm{En}}_{t_{0}: t_{N}}$ and $\boldsymbol{I}^{\mathrm{Ex}}_{t_{0}: t_{N}}$ to denote vectors whose elements represent endogenous and exogenous disturbances, respectively.

\begin{figure}[h]
\includegraphics[width=0.75\textwidth]{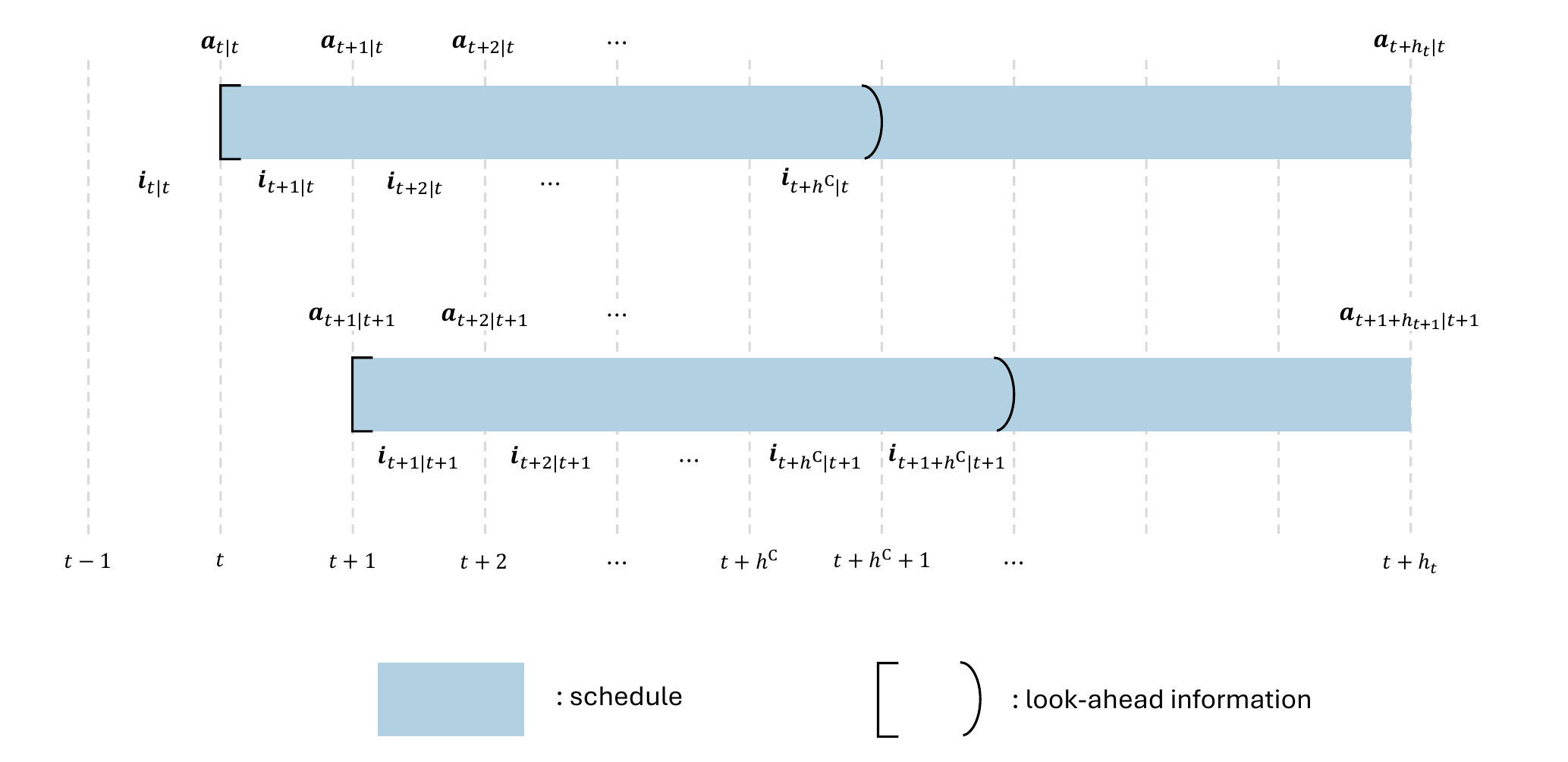}
\centering
\caption{
    Illustration of look-ahead information and action sequences
}
\label{fig:ch4_illustration_dynamic_system}
\end{figure}

Recall that the certainty horizon refers to the horizon within which the outcome of future disturbances becomes confirmed. As illustrated in Figure \ref{fig:ch4_illustration_dynamic_system}, we describe this concept using a rolling window of look-ahead information. That is, at each time step $t$, the scheduler observes the look-ahead information
\begin{align} \label{eq:ch4_general_lookahead_info}
    \boldsymbol{i}_{t+1: t+h^{\mathrm{C}} \mid t} = \left( \boldsymbol{i}_{t+1 \mid t}, \boldsymbol{i}_{t+2 \mid t}, \cdots, \boldsymbol{i}_{t+h^{\mathrm{C}} \mid t} \right)
\end{align}
in advance. The notation $\boldsymbol{i}_{t+1: t+h^{\mathrm{C}} \mid t}$ denotes the sequence of information vectors spanning from $t+1$ to $t + h^{\mathrm{C}}$ given the current time step $t$. The integer $h^{\mathrm{C}}$ represents the length of the certainty horizon. Notably, we assume that the look-ahead information is \textit{perfect}, which means that, once an information vector $\boldsymbol{i}_{t}$ has been realised, its value will no longer change. Formally, for every $t$ such that $t + h^{\mathrm{C}} \leq t_e$, we assume 
\begin{align*}
    \boldsymbol{i}_{t+h^{\mathrm{C}} \mid t} = \boldsymbol{i}_{t+h^{\mathrm{C}} \mid t + 1} = \cdots = \boldsymbol{i}_{t+h^{\mathrm{C}} \mid t + h^{\mathrm{C}}}.
\end{align*}
Given an integer $a > 0$, when the current time step is clear from the context, we simply write $\boldsymbol{i}_{t + a \mid t}$ as $\boldsymbol{i}_{t+a}$, which consists with the unconditional notation used in Equation \ref{eq:ch4_system_dynamics_general_notation}.

We describe a planned schedule using a sequence of action vectors. As illustrated in Figure \ref{fig:ch4_illustration_dynamic_system}, if at time step $t$, we have a schedule spanning from $t$ to $t + h_{t}$, we denote this schedule by
\begin{align*}
    \boldsymbol{a}_{t: t+h_{t} \mid t} = \left( \boldsymbol{a}_{t \mid t}, \boldsymbol{a}_{t + 1 \mid  t}, \cdots, \boldsymbol{a}_{t + h_{t} \mid t} \right),
\end{align*}
where the integer $h_{t}$ represents the scheduling horizon at $t$. Similar to Equation \ref{eq:ch4_general_lookahead_info}, the notation $\boldsymbol{a}_{t: t+h_{t} \mid t}$ denotes the sequence of action vectors spanning from $t$ to $t + h_{t}$ given the current time step $t$. When step $t$ is clear from the context, we also simply write $\boldsymbol{a}_{t+a \mid t}$ as $\boldsymbol{a}_{t+a}$. Note that, for two distinct time steps $t + a$ and $t + b$, where $0 < a < b$, it is not necessary that $\boldsymbol{a}_{t + b \mid t} = \boldsymbol{a}_{t + b \mid t + a}$, because a rescheduling between $t$ and $t+a$ may change the action vector at $t + b$. Also, the scheduling horizon $h_{t}$ may vary from one time step to another. For example, if no rescheduling occurs at $t+1$, the horizon shrinks as $h_{t+1} = h_{t} - 1$ because the schedule between $t$ and $t+1$ has been executed. However, the horizon $h_{t}$ must be greater than a minimum required length $h^{\min}$.

Our problem considers two objectives. The first objective is to minimise the long-term cost
\begin{align} \label{eq:ch4_cost_func}
    c = \sum_{t=t_0}^{t_{N}} c_t = \sum_{t=t_0}^{t_{N}} \textsc{CostFunc}(\boldsymbol{s}_t, \boldsymbol{a}_t, \boldsymbol{i}_t),
\end{align}
where the cost function $\textsc{CostFunc}$ evaluates the cost incurred during $[t-1, t)$. The second objective is to minimise the cumulative system nervousness
\begin{align} \label{eq:ch4_nerv_func}
    d = \sum_{t=t_1}^{t_N} d_t = \sum_{t=t_1}^{t_{N}} \textsc{NervFunc}( \boldsymbol{a}_{t: t+h_t \mid t}, \boldsymbol{a}_{t-1: t-1+h_{t+-1} \mid t-1 }),
\end{align}
where the nervousness function $\textsc{NervFunc}$ quantifies how much the planned action sequence at $t$ differs from that at $t-1$.
%%%%%%%%%%%%%%%%%%%%%%%%%%%%%%%%%%%%%%%%%%%%%%%%%%%%%%%%%%%%%%%%%%%%%%%%%%%%%%%%%%%%%%%%%%
%%%%%%%%%%%%%%%%%%%%%%%%%%%%%%%%%%%%%%%%%%%%%%%%%%%%%%%%%%%%%%%%%%%%%%%%%%%%%%%%%%%%%%%%%%
\section{Problem formulation} \label{sec:ch4_problem_formulation}
We present two mathematical formulations to model the operations of a multipurpose batch process. In Subsection \ref{subsec:ch4_formulation_ds}, we present the first formulation, which follows the dynamic system formalism, and is used to describe how the system evolves through time. Essentially, this formulation concretises of the state transition function (that is, Equation \ref{eq:ch4_system_dynamics_general_notation}). In Subsection \ref{subsec:ch4_formulation_milp}, we present the second formulation, which is a Mixed-Integer Linear Program (MILP) and will serve as a ``schedule generator'' in our framework.

Specifically, our formulations consider four types of commonly encountered disturbances: demand fluctuations, machine breakdowns, processing time variations, and yield losses. While the demand fluctuation falls into the category of exogenous disturbances, the remaining three fall into the category of endogenous disturbances. Also, we consider the operational conditions where (1) no changeover times are required to switch between different families of tasks, and (2) storage tanks have no capacity limit. Notably, although we make these specific assumptions, we intend to set a concrete basis for the later explanation of our methodology. After the entire methodological framework is presented, readers should feel not difficult to extend our method to other cases where additional types of disturbances (such as raw material shortages, operational cost variations) exist and different assumptions are required (such as different changeover policies, different storage constraints).

\subsection{Dynamic system formulation} \label{subsec:ch4_formulation_ds}

\subsubsection*{Parameters}
\begin{itemize}
    \item $\xi_{k,t}$. A continuous parameter that represents the supply upper bound of raw material $k$ at time point $t$. The supply is delivered periodically with constant upper bounds, and is visible to the scheduler across the next scheduling horizon.
    \item $\lambda_{k,t}$. A semi-continuous parameter that represents the \textit{baseline} demand for product $k$ at time point $t$. The baseline demand occurs periodically in constant quantities and is visible to the scheduler across the next scheduling horizon. We use baseline demand to model predictable, recurring orders from long-term contractors.
    \item $\nu_{k,t}$. A semi-continuous parameter that represents the \textit{intermittent} demand for product $k$ at time point $t$. The intermittent demand occurs non-periodically in variable quantities but is also visible to the scheduler across the next scheduling horizon. We use intermittent demand to model irregular, short-term orders from casual customers.
\end{itemize}

\subsubsection*{State variables}
\begin{itemize}
    \item $r_{i,j,t}$. A semi-integer variable that represents the elapsed time of a batch. If at the end of time interval $[t-1, t)$, a batch of task $i$ is executing or about to complete on machine $j$, then $r_{i,j,t}$ equals the number of time periods elapsed from the start time of this batch to the time point $t$. Otherwise no such batch is executing or about to complete, then $r_{i,j,t} = 0$.
    \item $b_{i,j,t}$. A semi-continuous variable that represents the size of a batch. If at the end of time interval $[t-1, t)$, a batch of task $i$ is executing or is about to complete on machine $j$, then $b_{i,j,t}$ equals the size of this batch. Otherwise no such batch is executing or is about to complete, then $b_{i,j,t} = 0$.
    \item $s_{k,t}$. A continuous variable that represents the inventory level of material $k$ during time interval $[t-1, t)$.
    \item $l_{k,t}$. A continuous variable that represents the backlog level (that is, the cumulative unmet demand) of product $k$ during time interval $[t-1, t)$.
\end{itemize}

\subsubsection*{Action variables}
\begin{itemize}
    \item $x_{i,j,t}$. A binary variable that represents the initiation of a batch. If at time point $t$, a batch of task $i$ is initiated on machine $j$, then $x_{i,j,t} = 1$. Otherwise no such batch is initiated, then $x_{i,j,t} = 0$.
    \item $y_{i,j,t}$. A semi-continuous variable that represents the input size of a batch. If at time point $t$, a batch of task $i$ is initiated on machine $j$, then $y_{i,j,t}$ equals the input size of this batch. Otherwise no such batch is initiated, then $y_{i,j,t} = 0$.
    \item $p_{k,t}$. A continuous variable that represents the quantity of raw material $k$ purchased at time point $t$.
    \item $q_{k,t}$. A continuous variable that represents the quantity of product $k$ shipped at time point $t$.
\end{itemize}

\subsubsection*{Information variables}
\begin{itemize}
    \item $f_{k,t}$. A semi-continuous variable that represents \textit{urgent} demand for product $k$ at time point $t$. The urgent demand occurs irregularly in random quantities and becomes visible to the scheduler only the length of certainty horizon prior to its shipment deadline. 
    \item $u_{j,t}$. A binary variable that represents machine breakdowns. If during time interval $[t-1, t)$, machine $j$ is unavailable due to a breakdown event, then $u_{j,t} = 1$. Otherwise, $u_{j,t} = 0$.
    \item $v_{i,j,n,t}$. A positive continuous variable that represents multiplicative processing time variations. If at the end of time interval $[t-1, t)$, the elapsed time of the batch of task $i$ on machine $j$ equals $n$ time periods, then the varied processing time of this batch will equal $\lceil \tau_{i,j} \cdot v_{i,j,n,t} \rceil$ time periods (recall that the parameter $\tau_{i,j}$ represents the nominal processing time). 
    \item $w_{i,j,n,t}$. A positive continuous variable that represents the multiplicative yield loss of a batch. If at the end of time interval $[t-1, t)$, the elapsed time of the batch of task $i$ on machine $j$ equals $n$ time periods, then the varied output of this batch will equal $b_{i,j,t} \cdot w_{i,j,n,t}$ units.
\end{itemize}

\subsubsection*{Transition functions}

We use a set of propositional logics to determine the value of next-step states. For convenience, we follow the convention that logical operators (such as $\land$ and $\lor$) have lower operational precedences than arithmetic operations (such as $+, -, \cdot, /$) and relational operators (such as $=, <, >, \in$). For example, the formula $a > b \land c < d$ should be interpreted as $(a > b) \land (c < d)$ rather than $a > (b \land c) < d$.

Also, the transition functions that will be presented below only represent one specific case of the operations of multipurpose batch processes. When extending to other cases (such as different operational conditions, additional disturbances), these transition functions require to be reformulated. However, reformulating these functions may not be trivial, especially when the logical interaction between variables are complexly entangled and the number of variable families is large. The challenge often comes from some corner cases (that is, some specific combinations of current-step variable values may result in a less apparent next-step state value) that may be overlooked by modellers. If such corner cases are overlooked, the reformulated transition functions will not correctly represent the real-world system dynamics. To address this challenge, we also propose an empirical procedure that can derive the transition function with the guarantee that all corner cases are inspected. This procedure relies on tracking the combination of variable values on a set of \textit{decision trees}. Since this procedure is not directly related to our methodological contributions, we present it in Appendix \ref{appendix:trans_func}. Next, we present concretised transition functions.

\begin{subequations} \label{eq:ch4_tf_rijt}
\begin{numcases}{r_{i,j,t+1} = }
    0       & $\text{if } \left( r_{i,j,t} = 0 \land x_{i,j,t} = 0 \right) \lor$ \label{eq:ch4_tf_rijt_1} \\
            & $\phantom{\text{if }} \left( r_{i,j,t} > 0 \land u_{j,t} = 1 \land x_{i,j,t} = 0 \right) \lor$ \label{eq:ch4_tf_rijt_2} \\ 
            & $\phantom{\text{if }} \left( r_{i,j,t} > 0 \land u_{j,t} = 0 \land \lceil \tau_{i,j} \cdot v_{i,j,(r_{i,j,t}),t} \rceil = r_{i,j,t} \land x_{i,j,t} = 0 \right)$ \label{eq:ch4_tf_rijt_3} \\[2ex]
    1       & $\text{if } \left( r_{i,j,t} = 0 \land x_{i,j,t} = 1 \right) \lor$ \label{eq:ch4_tf_rijt_4} \\
            & $\phantom{\text{if }} \left( r_{i,j,t} > 0 \land u_{j,t} = 1 \land x_{i,j,t} = 1 \right) \lor$ \label{eq:ch4_tf_rijt_5} \\
            & $\phantom{\text{if }} \left( r_{i,j,t} > 0 \land u_{j,t} = 0 \land \lceil \tau_{i,j} \cdot v_{i,j,(r_{i,j,t}),t} \rceil = r_{i,j,t} \land x_{i,j,t} = 1 \right)$ \label{eq:ch4_tf_rijt_6} \\[2ex]
    r_{i,j,t} + 1 & $\text{if } \left( r_{i,j,t} > 0 \land u_{j,t} = 0 \land \lceil \tau_{i,j} \cdot v_{i,j,(r_{i,j,t}),t} \rceil > r_{i,j,t} \right)$ \label{eq:ch4_tf_rijt_7}
\end{numcases}    
\end{subequations}

Equation \ref{eq:ch4_tf_rijt} represents the transition function for the elapsed time $r_{i,j,t}$. Equation \ref{eq:ch4_tf_rijt_1} represents the situation where no batch is executing during $[t-1, t)$ and no new batch is initiated at $t$. Therefore, the next-step elapsed time $r_{i,j,t+1}$ should equal $0$. Equation \ref{eq:ch4_tf_rijt_2} represents the situation that an ongoing batch is disrupted by a machine breakdown during $[t-1, t)$, but no new batch is initiated at $t$. As a result, we also have $r_{i,j,t+1} = 0$. Equation \ref{eq:ch4_tf_rijt_3} represents the situation where a batch is normally completing at the end of $[t-1, t)$ without being disrupted by machine breakdowns (recall that the term $ \lceil \tau_{i,j} \cdot v_{i,j,(r_{i,j,t}),t} \rceil $ represents the varied processing time), but no new batch is initiated at $t$. Again, in this case we have $r_{i,j,t+1} = 0$. Equations \ref{eq:ch4_tf_rijt_4}, \ref{eq:ch4_tf_rijt_5}, \ref{eq:ch4_tf_rijt_6} are similar to Equations \ref{eq:ch4_tf_rijt_1}, \ref{eq:ch4_tf_rijt_2}, \ref{eq:ch4_tf_rijt_3}, respectively, with the only difference that a new batch is initiated at time point $t$. Therefore, in these cases the next-step elapsed time $r_{i,j,t+1}$ should equal one time period. Equation \ref{eq:ch4_tf_rijt_7} represents the situation that a batch is normally executing during $[t-1, t)$ without being disrupted by machine breakdowns, and will remain ongoing at the end of $[t-1, t)$. As a result, the next-step elapsed time $r_{i,j,t+1}$ should equal $r_{i,j,t}$ added by one time period.

\begin{subequations} \label{eq:ch4_tf_bijt}
\begin{numcases}{b_{i,j,t+1} = }
    0       & $\text{if } \left( r_{i,j,t} = 0 \land x_{i,j,t} = 0 \right) \lor$ \label{eq:ch4_tf_bijt_1} \\
            & $\phantom{\text{if }} \left( r_{i,j,t} > 0 \land u_{j,t} = 1 \land x_{i,j,t} = 0 \right) \lor$ \label{eq:ch4_tf_bijt_2} \\ 
            & $\phantom{\text{if }} \left( r_{i,j,t} > 0 \land u_{j,t} = 0 \land \lceil \tau_{i,j} \cdot v_{i,j,(r_{i,j,t}),t} \rceil = r_{i,j,t} \land x_{i,j,t} = 0 \right)$ \label{eq:ch4_tf_bijt_3} \\[2ex]
    y_{i,j,t} & $\text{if } \left( r_{i,j,t} = 0 \land x_{i,j,t} = 1 \right) \lor$ \label{eq:ch4_tf_bijt_4} \\
            & $\phantom{\text{if }} \left( r_{i,j,t} > 0 \land u_{j,t} = 1 \land x_{i,j,t} = 1 \right) \lor$ \label{eq:ch4_tf_bijt_5} \\
            & $\phantom{\text{if }} \left( r_{i,j,t} > 0 \land u_{j,t} = 0 \land \lceil \tau_{i,j} \cdot v_{i,j,(r_{i,j,t}),t} \rceil = r_{i,j,t} \land x_{i,j,t} = 1 \right)$ \label{eq:ch4_tf_bijt_6} \\[2ex]
    b_{i,j,t} & $\text{if } \left( r_{i,j,t} > 0 \land u_{j,t} = 0 \land \lceil \tau_{i,j} \cdot v_{i,j,(r_{i,j,t}),t} \rceil > r_{i,j,t} \right)$ \label{eq:ch4_tf_bijt_7}
\end{numcases}    
\end{subequations}

Equation \ref{eq:ch4_tf_bijt} represents the transition function for the batch size $b_{i,j,t}$. The overall structure of Equation \ref{eq:ch4_tf_bijt} is similar to that of Equation \ref{eq:ch4_tf_rijt}. Specifically, Equations \ref{eq:ch4_tf_bijt_1}--\ref{eq:ch4_tf_bijt_3} correspond to Equations \ref{eq:ch4_tf_rijt_1}--\ref{eq:ch4_tf_rijt_3}, where no batch is executing during $[t, t+1)$. Equations \ref{eq:ch4_tf_bijt_4}--\ref{eq:ch4_tf_bijt_6} correspond to Equations \ref{eq:ch4_tf_rijt_4}--\ref{eq:ch4_tf_rijt_6}, where a new batch is initiated at time point $t$ and therefore the next-step batch size $b_{i,j,t+1}$ should equal the input size $y_{i,j,t}$. Equation \ref{eq:ch4_tf_bijt_7} corresponds to Equation \ref{eq:ch4_tf_rijt_7}, where $b_{i,j,t+1} = b_{i,j,t}$ because the current batch remains ongoing during $[t, t+1)$.

\begin{subequations} \label{eq:ch4_tf_skt}
    \begin{align} \label{eq:ch4_tf_skt_1}
    s_{k,t+1} = s_{k,t} + \sum_{j \in \mathbb{J}} \sum_{i \in \mathbb{I}_{j} \cap \mathbb{I}_{k}^{+}}  \rho_{i,k}  b_{i,j,t}^{+} + \sum_{j \in \mathbb{J}} \sum_{i \in \mathbb{I}_{j} \cap \mathbb{I}_{k}^{-}} \rho_{i,k} y_{i,j,t} + p_{k,t} - q_{k,t}
    \end{align}
    
    \begin{numcases}{b^{+}_{i,j,t} = }
        0   & $\text{if } \left( r_{i,j,t} = 0 \right) \lor$ \label{eq:ch4_tf_skt_2} \\
            & $\phantom{\text{if }} \left( r_{i,j,t} > 0 \land u_{j,t} = 1 \right) \lor$ \label{eq:ch4_tf_skt_3} \\
            & $\phantom{\text{if }} \left( r_{i,j,t} > 0 \land u_{j,t} = 0 \land \lceil \tau_{i,j} \cdot v_{i,j,(r_{i,j,t}),t} \rceil > r_{i,j,t} \right) $ \label{eq:ch4_tf_skt_4} \\[2ex]
        b_{i,j,t} \cdot w_{i,j,(r_{i,j,t}),t} & $\text{if } \left( r_{i,j,t} > 0 \land u_{j,t} = 0 \land \lceil \tau_{i,j} \cdot v_{i,j,(r_{i,j,t}),t} \rceil = r_{i,j,t} \right)$ \label{eq:ch4_tf_skt_5}
    \end{numcases}
\end{subequations}

Equation \ref{eq:ch4_tf_skt} represents the transition function for the inventory level $s_{k,t}$. As presented in Equation \ref{eq:ch4_tf_skt_1}, the next-step inventory level $s_{k(t+1)}$ depends on (1) the current inventory level $s_{k,t}$, (2) the quantity of materials added from upstream batch completions $\sum_{j \in \mathbb{J}} \sum_{i \in \mathbb{I}_{j} \cap \mathbb{I}_{k}^{+}}  \rho_{i,k}  b_{i,j,t}^{+}$, (3) the quantity of material consumed by downstream batch initiations $\sum_{j \in \mathbb{J}} \sum_{i \in \mathbb{I}_{j} \cap \mathbb{I}_{k}^{-}} \rho_{i,k} y_{i,j,t}$, (4) the quantity of purchased raw materials $p_{k,t}$, and (5) the quantity of shipped products $q_{k,t}$. By default, we set $p_{k,t} = 0$ for $k \notin \mathbb{K}^{\mathrm{R}}$ and set $q_{k,t} = 0$ for $k \notin \mathbb{K}^{\mathrm{P}}$. In the term (2), the auxiliary variable $b_{i,j,t}^{+}$ represents the yield that is output by the batch of task $i$ on machine $j$ at the end of $[t-1, t)$. Equations \ref{eq:ch4_tf_skt_2}--\ref{eq:ch4_tf_skt_5} determine the value of $b_{i,j,t}^{+}$. That is, if the elapsed time of the batch equals $0$ (Equation \ref{eq:ch4_tf_skt_2}), or the ongoing batch is disrupted by a machine breakdown (Equation \ref{eq:ch4_tf_skt_3}), or the batch is not completing (Equation \ref{eq:ch4_tf_skt_4}), then $b_{i,j,t}^{+} = 0$. Otherwise, if the batch is normally completing at the end of $[t-1, t)$, then $b_{i,j,t}^{+}$ equals the original batch size $b_{i,j,t}$ multiplied by the yield loss variable $w_{i,j,(r_{i,j,t}),t}$.

\begin{align} \label{eq:ch4_tf_lkt}
    l_{k,t+1} = l_{k,t} + \lambda_{k,t} + \nu_{k,t} + f_{k,t} - q_{k,t}
\end{align}

Equation \ref{eq:ch4_tf_lkt} represents the transition function for the backlog level $l_{k,t}$. The next-step backlog level $l_{k,t+1}$ equals the current backlog level $l_{k,t}$, added by the baseline demand $\lambda_{k,t}$, the intermittent demand $\nu_{k,t}$, and the urgent demand $f_{k,t}$, and subtracted by the quantity of shipped material $q_{k,t}$.

\subsubsection*{Cost function}

\begin{equation} \label{eq:ch4_cost_func_ct}
\begin{split}
    c_{t} &= \textsc{CostFunc} \left( \boldsymbol{s}_{t}, \boldsymbol{a}_{t}, \boldsymbol{i}_{t} \right) \\
    &= \sum_{j \in \mathbb{J}} \sum_{i \in \mathbb{I}_{j}} c^{x}_{i,j} x_{i,j,t} 
    % + \sum_{j \in \mathbb{J}} \sum_{i \in \mathbb{I}_{j}} c^{y}_{i,j} y_{i,j,t} 
    + \sum_{k \in \mathbb{K}} c^{s}_{k} s_{k,t} 
    + \sum_{k \in \mathbb{K}^{\mathrm{P}}} c^{l}_{k} l_{k,t}
\end{split}
\end{equation}

Equation \ref{eq:ch4_cost_func_ct} represents the cost function. The total cost consists of 
(1) the setup cost for initiating batches at time point $t$, 
% (2) the operational cost that is proportional to the input size of a batch, 
(2) the inventory cost, and 
(3) the backlog cost. The parameters $c^{x}_{i,j}$, $c^{s}_{k}$, and $c^{l}_{k}$ represent the fixed cost of initiating a batch, the cost of holding per unit of material per time period, and the cost of having per unit of backlog per time period, respectively.

\subsubsection*{Nervousness function}

\begin{equation} \label{eq:ch4_nerv_func_dt}
\begin{split}
    d_{t} &= \textsc{NervFunc} \left( \boldsymbol{a}_{t: t+h_{t} \mid t}, \boldsymbol{a}_{t-1: t - 1 + h_{t-1} \mid t-1} \right) \\
    &= \sum_{t' = t }^{t + h_{t}} \sum_{j \in \mathbb{J}} \sum_{i \in \mathbb{I}_{j}} \left| x_{i,j,t' \mid t} - x_{i,j,t' \mid t - 1} \right|
\end{split}
\end{equation}

Equation \ref{eq:ch4_nerv_func_dt} represents the nervousness function. This function evaluates the difference between the current-step action sequence $\boldsymbol{a}_{t: t+h_{t} \mid t}$ (which represents the ``new schedule'') and the previous-step action sequence $\boldsymbol{a}_{t-1: t - 1 + h_{t-1} \mid t-1}$ (which represents the ``original schedule''). To interpret, let us examine the formula $\left| x_{i,j,t' \mid t} - x_{i,j,t' \mid t - 1} \right|$. If either $x_{i,j,t' \mid t} = x_{i,j,t' \mid t - 1} = 0$ (which means that in either the original schedule or the new schedule, no batch of task $i$ is initiated on machine $j$ at time point $t$) or $x_{i,j,t' \mid t} = x_{i,j,t' \mid t - 1} = 1$ (which means that the batch exists in both the original schedule and the new schedule), then we have $\left| x_{i,j,t' \mid t + 1} - x_{i,j,t' \mid t} \right| = 0$. On the other hand, if exactly one of $x_{i,j,t' \mid t}$ and $x_{i,j,t' \mid t-1}$ equals $1$ while the remaining one equals $0$, then we have $\left| x_{i,j,t' \mid t + 1} - x_{i,j,t' \mid t} \right| = 1$. As a result, intuitively, Equation \ref{eq:ch4_nerv_func_dt} evaluates the \textit{bidirectional} difference in batch initiations. Note that we only check the difference for the interval $[t+1, t+h_{t}]$, which is overlapped by the original schedule and the new schedule. Note also that when no rescheduling is triggered at time point $t$, the value of the nervousness function equals zero.

\subsection{MILP formulation} \label{subsec:ch4_formulation_milp}

In this subsection, we present the MILP formulation that will be used to generate new schedules at rescheduling time points. For readers who are already familiar with static scheduling of multipurpose batch processes, our formulation can be viewed as a variant of the standard discrete-time formulation. Notably, to distinguish from variables in the dynamic system formulation, we use bar notations (such as $\overline{x}$) to denote decision variables within the MILP. Also, in contrast with $t$, which is used in the dynamic system formulation to represent real-time points, we use $t'$ to index virtual-time points in the MILP.

\subsubsection*{Parameters}
\begin{itemize}
    \item $\tau_{i,j}^{\max}$. An integer parameter that represents the maximum possible time of processing a batch of task $i$ on machine $j$. We set $\tau_{i,j}^{\max} = \lceil \tau_{i,j} \cdot v_{i,j,n,t}^{\max} \rceil$, where $v_{i,j,n,t}^{\max}$ represents the maximum possible value of the processing time variation variable $v_{i,j,n,t}$.
    \item $\tau_{i,j,n,t'}$. An auxiliary integer parameter that represents the adjusted processing time by incorporating the look-ahead information of $v_{i,j,n,t}$. Equation \ref{eq:ch4_milp_parameter_tau_ijnt} determines the value of $\tau_{i,j,n,t'}$.
        \begin{subequations} \label{eq:ch4_milp_parameter_tau_ijnt}
        \begin{numcases}{\tau_{i,j,n,t'} = }
            \lceil \tau_{i,j} \cdot v_{i,j,n,t} \rceil  & $\text{if } n = 0 \land t' \in \left[t, t+h^{\mathrm{C}} \right]$ \label{eq:ch4_milp_parameter_tau_ijnt_1}  \\
            \tau_{i,j}                               & $\text{if } n = 0 \land  t' \in \left[ t + h^{\mathrm{C}} + 1, t + h_{t} \right]$ \label{eq:ch4_milp_parameter_tau_ijnt_2} \\
            \tau_{i,j,n-1,t'-1}                    & $\text{if } n \geq 1$ \label{eq:ch4_milp_parameter_tau_ijnt_3}
        \end{numcases}    
        \end{subequations}
    Here, Equation \ref{eq:ch4_milp_parameter_tau_ijnt_1} represents the case where $n=0$ and the virtual-time point $t'$ falls within the next certainty horizon $\left[ t, t+h^{\mathrm{C}} \right]$. Therefore, according to the definition of $v_{i,j,n,t}$, the processing time should adjust to $\lceil \tau_{i,j} \cdot v_{i,j,n,t} \rceil$. Equation \ref{eq:ch4_milp_parameter_tau_ijnt_2} represents the case that $n=0$ and $t'$ falls beyond the certainty horizon. In this case, we assume that $\tau_{i,j,n,t'}$ equals the nominal processing time $\tau_{i,j}$. Equation \ref{eq:ch4_milp_parameter_tau_ijnt_3} propagates the observed value of $v_{i,j,n,t'}$ from $n=0$ to $n \geq 1$.
    \item $\rho_{i,j,k,n,t'}$. An auxiliary continuous parameter that represents the adjusted conversion coefficient by incorporating the look-ahead information of the yield loss variable $w_{i,j,n,t}$. Equation \ref{eq:ch4_milp_parameter_rho_ijknt} determines the value of $\rho_{i,j,k,n,t'}$.
        \begin{subequations} \label{eq:ch4_milp_parameter_rho_ijknt}
        \begin{numcases}{\rho_{i,j,k,n,t'} = }
            \rho_{i,k}                       & $\text{if } \rho_{i,k} < 0$ \label{eq:ch4_milp_parameter_rho_ijknt_1}  \\
            \rho_{i,k} \cdot w_{i,j,n,t'}       & $\text{if } \rho_{i,k} > 0 \land n = 0 \land t' \in \left[ t, t+h^{\mathrm{C}} \right] $ \label{eq:ch4_milp_parameter_rho_ijknt_2} \\
            \rho_{i,k}                       & $\text{if } \rho_{i,k} > 0 \land n = 0 \land t' \in \left[ t+h^{\mathrm{C}} + 1, t + h_{t} \right] $ \label{eq:ch4_milp_parameter_rho_ijknt_3} \\
            \rho_{i,j,k,n-1,t'-1}           & $\text{if } \rho_{i,k} > 0 \land n \geq 1$ \label{eq:ch4_milp_parameter_rho_ijknt_4}
        \end{numcases}    
        \end{subequations}
    To interpret, recall that the parameter $\rho_{i,k}$ represents the nominal conversion coefficient. Recall also that $\rho_{i,k} > 0$ and $\rho_{i,k} < 0$ mean that the task $i$ produces and consumes material $k$, respectively. Equation \ref{eq:ch4_milp_parameter_rho_ijknt_1} represents the case where task $i$ consumes material $k$. Since the batch input is not affected by the yield loss variable, in this case we simply set $\rho_{i,j,k,n,t'}$ to its nominal value. Equations \ref{eq:ch4_milp_parameter_rho_ijknt_2}, \ref{eq:ch4_milp_parameter_rho_ijknt_3}, and \ref{eq:ch4_milp_parameter_rho_ijknt_4} are similar to Equations \ref{eq:ch4_milp_parameter_tau_ijnt_1}, \ref{eq:ch4_milp_parameter_tau_ijnt_2}, \ref{eq:ch4_milp_parameter_tau_ijnt_3}, respectively. Specifically, Equation \ref{eq:ch4_milp_parameter_rho_ijknt_2} represents the case where the value of $w_{i,j,n,t'}$ has been observed by the scheduler. Equation \ref{eq:ch4_milp_parameter_rho_ijknt_3} represents the case where the value of $w_{i,j,n,t'}$ has not been observed. Equation \ref{eq:ch4_milp_parameter_rho_ijknt_4} propagates the value of $\rho_{i,j,k,n,t'}$ from $n=0$ to cases where $n \geq 1$. As a result, the parameter $\rho_{i,j,k,n,t'}$ can be interpreted as, if at the end of $[t'-1, t')$, the elapsed time of a batch equals $n$ time periods, then the value of $\rho_{i,j,k,n,t'}$ represents the adjusted conversion coefficient between this batch and the material $k$.
\end{itemize}

\subsubsection*{Decision variables}
\begin{itemize}
    \item $\overline{r}_{i,j,n,t'}$. A binary variable that represents the elapsed time of a batch. If at the end of $[t'-1, t')$, the elapsed time of the batch of task $i$ on machine $j$ equals $n$ time periods, then $\overline{r}_{i,j,n,t'} = 1$. Otherwise no such batch is initiated, then $\overline{r}_{i,j,n,t'} = 0$.
    \item $\overline{b}_{i,j,n,t'}$. A semi-continuous variable that represents the size of a batch. If at the end of $[t'-1, t')$, the elapsed time of the batch of task $i$ on machine $j$ equals $n$ time periods, then $\overline{b}_{i,j,n,t'}$ equals its batch size. Otherwise no such batch is initiated, then $\overline{b}_{i,j,n,t'} = 0$.
    \item $\overline{s}_{k,t'}$. A continuous variable that represents the inventory level of material $k$ during $[t'-1, t')$.
    \item $\overline{l}_{k,t'}$. A continuous variable that represents the backlog level of product $k$ during $[t'-1, t')$.
    \item $\overline{x}_{i,j,t'}$. A binary variable that indicates the initiation of a batch. If a batch of task $i$ on machine $j$ is initiated at time point $t'$, then $\overline{x}_{i,j,t'} = 1$. Otherwise, $\overline{x}_{i,j,t'} = 0$.
    \item $\overline{y}_{i,j,t'}$. A semi-continuous variable that represents the input batch size. If a batch of task $i$ on machine $j$ is initiated at time point $t'$, then $\overline{y}_{i,j,t'}$ equals its input size. Otherwise, $\overline{y}_{i,j,t'} = 0$.
    \item $\overline{p}_{i,j,t'}$. A continuous variable that represents the quantity of raw material $k$ purchased at time point $t'$.
    \item $\overline{q}_{i,j,t'}$. A continuous variable that represents the quantity of product $k$ shipped at time point $t'$.
\end{itemize}

\subsubsection*{Constraints}

\textbf{Initial state constraints}. When developing a MILP at a rescheduling time point $t$, the first step is to ensure that the decision variables with index $t' = t$ represent the current system state. We achieve this step via a set of initial state constraints that fix the value of these variables. In our MILP formulation, four groups of variables require fixing: the elapsed time variable $\overline{r}_{i,j,n,t'}$, the batch size variable $\overline{b}_{i,j,n,t'}$, the inventory variable $\overline{s}_{k,t'}$, and the backlog variable $\overline{l}_{k,t'}$. Equations \ref{eq:ch4_milp_initial_state_constraint_rijnt}-\ref{eq:ch4_milp_initial_state_constraint_lkt} represent the initial state constraints for these four groups of variables.

\begin{align} \label{eq:ch4_milp_initial_state_constraint_rijnt}
        M \cdot (\overline{r}_{i,j,(r_{i,j,t}),t'} - 1) \leq r_{i,j,t} \leq M \cdot \overline{r}_{i,j,(r_{i,j,t}),t'}, \quad \forall j \in \mathbb{J}, i \in \mathbb{I}_{j}, t' = t.
\end{align}

Equation \ref{eq:ch4_milp_initial_state_constraint_rijnt} represents the initial state constraint for $\overline{r}_{i,j,n,t'}$. Here, $M$ represents the big-M constant (such as $100$). To interpret, let us first consider the case when $r_{i,j,t} > 0$. In this case, the two inequalities hold if and only if $\overline{r}_{i,j,(r_{i,j,t}),t'} = 1$. For another case where $r_{i,j,t} = 0$, the inequalities hold if and only if $\overline{r}_{i,j,(r_{i,j,t}),t'} = 0$. As a result, Equation \ref{eq:ch4_milp_initial_state_constraint_rijnt} ensures that the value of $\overline{r}_{i,j,(r_{i,j,t}),t'}$ represents the current state of elapsed times of the system.

\begin{equation} \label{eq:ch4_milp_initial_state_constraint_bijnt}
\begin{split}
    M \cdot (\overline{b}_{i,j,(r_{i,j,t}),t'} - 1) \leq r_{i,j,t} \leq M \cdot \overline{b}_{i,j,(r_{i,j,t}),t'}, \quad \forall j \in \mathbb{J}, i \in \mathbb{I}_{j}, t' = t.
\end{split}
\end{equation}

Similarly, Equation \ref{eq:ch4_milp_initial_state_constraint_bijnt} represents the initial state constraint for $\overline{b}_{i,j,n,t'}$. 
\begin{equation} \label{eq:ch4_milp_initial_state_constraint_skt}
\begin{split}
    \overline{s}_{k,t'} = s_{k,t}, \quad \forall k \in \mathbb{K}, t' = t.
\end{split}
\end{equation}

Equation \ref{eq:ch4_milp_initial_state_constraint_skt} represents the initial state constraint for $\overline{s}_{k,t}$. Because the state variable $s_{kt}$ and the decision variable $\overline{s}_{k,t}$ share the same semantic, we simply set $\overline{s}_{k,t} = s_{k,t}$.
\begin{equation} \label{eq:ch4_milp_initial_state_constraint_lkt}
\begin{split}
    \overline{l}_{k,t'} = l_{k,t}, \quad \forall k \in \mathbb{K}^{\mathrm{P}}, t' = t.
\end{split}
\end{equation}

Similarly, Equation \ref{eq:ch4_milp_initial_state_constraint_lkt} represents the initial state constraint for $\overline{l}_{k,t}$.

\textbf{Lifting constraint for elapsed time}. Equation \ref{eq:ch4_milp_lifting_rijnt} represents the lifting constraint for $\overline{r}_{i,j,n,t'}$. The term ``lifting'' means that this constraint lifts the value of $\overline{r}_{i,j,n-1,t'}$ (with indices $n-1$ and $t'$) to $\overline{r}_{i,j,n,t'+1}$ (with indices $n$ and $t'+1$). Intuitively, Equation \ref{eq:ch4_milp_lifting_rijnt} states that, if at the end of $[t'-1, t')$, the elapsed time of a batch equals $n-1$ time periods, then at the end of the next time interval $[t', t'+1)$, the elapsed time of this batch will equal $n$ time periods if it remains ongoing. For time points within the certainty horizon (that is, $t' \in \left[ t, t+h^{\mathrm{C}} \right)$, as presented in Equation \ref{eq:ch4_milp_lifting_rijnt_before_hc}), we need also incorporate the look-ahead information about machine breakdowns. That is, if $m_{j,t'} = 1$, which means that machine $j$ is unavailable due to a breakdown during $[t'-1, t')$, then the batch should terminate during the next time interval $[t', t'+1)$. For time points beyond the certainty horizon (that is, $t' \in \left[t+h^{\mathrm{C}}, t + h_{t} \right)$, as presented in Equation \ref{eq:ch4_milp_lifting_rijnt_after_hc}), we assume that no forthcoming breakdowns exist on machine $j$.
\begin{subequations} \label{eq:ch4_milp_lifting_rijnt}
\begin{alignat}{2}
&\begin{aligned} \label{eq:ch4_milp_lifting_rijnt_before_hc}
    &\overline{r}_{i,j,n,t'+1} = \overline{r}_{i,j,n-1,t'} \cdot (1 - {u}_{j,t'}), \\
    &\quad \forall j \in \mathbb{J}, \forall i \in \mathbb{I}_{j}, n \in \{n \in \mathbb{Z}^{+}: n \geq 2 \land n \leq \tau_{i,j,n,t'} \}, t' \in \{t, t+1, \cdots, t + h^{\mathrm{C}} - 1 \}.
\end{aligned}   
\\
&\begin{aligned} \label{eq:ch4_milp_lifting_rijnt_after_hc}
    &\overline{r}_{i,j,n,t'+1} = \overline{r}_{i,j,n-1,t'}, \\
    &\quad \forall j \in \mathbb{J}, \forall i \in \mathbb{I}_{j}, n \in \{n \in \mathbb{Z}^{+}: n \geq 2 \land n \leq \tau_{i,j,n,t'} \}, t' \in \{t + h^{\mathrm{C}}, t + h^{\mathrm{C}} + 1, \cdots, t + h_{t} - 1 \}.
\end{aligned}
\end{alignat}
\end{subequations}

\textbf{Lifting constraint for batch size}. Similar to Equation \ref{eq:ch4_milp_lifting_rijnt}, Equation \ref{eq:ch4_milp_lifting_bijnt} represents the lifting constraints for $\overline{b}_{ijnt'}$. 
\begin{subequations} \label{eq:ch4_milp_lifting_bijnt}
\begin{alignat}{2}
&\begin{aligned} \label{eq:ch4_milp_lifting_bijnt_before_hc}
    &\overline{b}_{i,j,n,t'+1} = \overline{b}_{i,j,n-1,t'} \cdot (1 - {u}_{j,t'}), \\
    &\quad \forall j \in \mathbb{J}, \forall i \in \mathbb{I}_{j}, n \in \{n \in \mathbb{Z}^{+}: n \geq 2 \land n \leq \tau_{i,j,n,t'} \}, t' \in \{t, t+1, \cdots, t + h^{\mathrm{C}} - 1 \}.
\end{aligned}   
\\
&\begin{aligned} \label{eq:ch4_milp_lifting_bijnt_after_hc}
    &\overline{b}_{i,j,n,t'+1} = \overline{b}_{i,j,n-1,t'}, \\
    &\quad \forall j \in \mathbb{J}, \forall i \in \mathbb{I}_{j}, n \in \{n \in \mathbb{Z}^{+}: n \geq 2 \land n \leq \tau_{i,j,n,t'} \}, t' \in \{t + h^{\mathrm{C}}, t + h^{\mathrm{C}} + 1, \cdots, t + h_{t} - 1 \}.
\end{aligned}
\end{alignat}
\end{subequations}

\textbf{Inventory constraint}. Equations \ref{eq:ch4_milp_inventory_skt_before_hc} and \ref{eq:ch4_milp_inventory_skt_after_hc} represent the inventory constraints within and beyond the certainty horizon, respectively. In both equations, the first summation term represents the quantity of materials added from upstream batch completions, and the second summation term represents the quantity of materials consumed by downstream batch initiations. In Equation \ref{eq:ch4_milp_inventory_skt_before_hc}, the logical expression $1- \min\{ \left| \tau_{i,j,n,t'} - n \right|, 1 \}$ ensures that only when $n = \tau_{i,j,n,t'}$ the batch will be added to the inventory. Otherwise if $n \neq \tau_{i,j,n,t'}$, the expression equals $0$ and therefore the batch will not be added to the inventory. By convention, we set $\overline{p}_{k,t'} = 0$ for $k \notin \mathbb{K}^{\mathrm{R}}$ and set $\overline{q}_{k,t'} = 0$ for $k \notin \mathbb{K}^{\mathrm{P}}$.

\begin{subequations} \label{eq:ch4_milp_inventory_skt}
\begin{alignat}{2}
&\begin{aligned} \label{eq:ch4_milp_inventory_skt_before_hc}
    &\overline{s}_{k,t'+1} = \overline{s}_{k,t'} + \sum_{j \in \mathbb{J}} \sum_{i \in \mathbb{I}_{j} \cap \mathbb{I}_{k}^{+} } \sum_{n = 1}^{\tau_{i,j,n,t'}} \Bigl(1- \min\left\{ \left| \tau_{i,j,n,t'} - n \right|, 1 \right\} \Bigr) \cdot (1 - u_{j,t'}) \cdot \rho_{i,j,k,n,t'} \cdot \overline{b}_{i,j,n,t'} \\
    &\quad + \sum_{j \in \mathbb{J}} \sum_{i \in \mathbb{I}_{j} \cap \mathbb{I}_{k}^{-}} \rho_{i,j,k,n,t'} \cdot \overline{y}_{i,j,t'} + \overline{p}_{k,t'} - \overline{q}_{k,t'}, \quad \forall k \in \mathbb{K}, t' \in \{t, t+1, \cdots, t+h^{\mathrm{C}} - 1 \}. 
\end{aligned}   
\\
&\begin{aligned} \label{eq:ch4_milp_inventory_skt_after_hc}
    &\overline{s}_{k,t'+1} = \overline{s}_{k,t'} + \sum_{j \in \mathbb{J}} \sum_{i \in \mathbb{I}_{j} \cap \mathbb{I}_{k}^{+} } \sum_{n = 1}^{\tau_{i,j,n,t'}} \Bigl(1- \min\left\{ \left| \tau_{i,j,n,t'} - n \right|, 1 \right\} \Bigr) \cdot \rho_{i,j,k,n,t'} \cdot \overline{b}_{i,j,n,t'} \\
    &\quad + \sum_{j \in \mathbb{J}} \sum_{i \in \mathbb{I}_{j} \cap \mathbb{I}_{k}^{-}} \rho_{i,j,k,n,t'} \cdot \overline{y}_{i,j,t'} + \overline{p}_{k,t'} - \overline{q}_{k,t'}, \quad \forall k \in \mathbb{K}, t' \in \{t + h^{\mathrm{C}}, \cdots, t+h_{t} - 1 \}. 
\end{aligned}
\end{alignat}
\end{subequations}

\textbf{Backlog constraint}. Equation \ref{eq:ch4_milp_backlog_lkt} represents the backlog constraint. Similar to the transition function of the backlog variable (that is, Equation \ref{eq:ch4_tf_lkt}), the next-step backlog equals the current step backlog added by the baseline demand, the intermittent demand, and the urgent demand, and subtracted by the shipped product quantity.
\begin{equation} \label{eq:ch4_milp_backlog_lkt}
\begin{split}
    \overline{l}_{k,t' + 1} = \overline{l}_{k,t'} + \mu_{k,t'} + \nu_{k,t'} - \overline{q}_{k,t'}, \quad \forall k \in \mathbb{K}^{\mathrm{P}}, t' \in \{ t, t+1, \cdots, t+h_{t} - 1\}.
\end{split}
\end{equation}

\textbf{Logic constraint between elapsed time and batch initiation}. Equation \ref{eq:ch4_milp_logical_rijnt_and_xijt} represents the logical constraint between $\overline{r}_{i,j,n,t'}$ and $\overline{x}_{i,j,t'}$. Equation \ref{eq:ch4_milp_logical_rijnt_and_xijt} states that, if at time point $t'$, a batch initiates (that is, $\overline{x}_{i,j,t'} = 1$), then at the end of $[t', t'+1)$, the elapsed time of this batch will equal one time period. Note that even if the processing time of this batch equals only one time period, according to the definition of $\overline{r}_{i,j,n,t'}$, Equation \ref{eq:ch4_milp_logical_rijnt_and_xijt} still holds because the batch is completing by the end of $[t', t'+1)$. 
\begin{align} \label{eq:ch4_milp_logical_rijnt_and_xijt}
    \overline{r}_{i,j,n,t'+1} = \overline{x}_{i,j,t'}, &\quad \forall j \in \mathbb{J}, i \in \mathbb{I}_{j}, n = 1, t' \in \{t, t+1, \cdots, t + h_{t} - 1\}.
\end{align}

\textbf{Logical constraint between batch size and batch input}. Similar to Equation \ref{eq:ch4_milp_logical_rijnt_and_xijt}, Equation \ref{eq:ch4_milp_logical_bijnt_and_yijt} represents the logical constraint between $\overline{b}_{i,j,n,t'}$ and $\overline{y}_{i,j,t'}$.
\begin{align} \label{eq:ch4_milp_logical_bijnt_and_yijt}
    \overline{b}_{i,j,n,t'+1} = \overline{y}_{i,j,t'}, &\quad \forall j \in \mathbb{J}, i \in \mathbb{I}_{j}, n = 1, t' \in \{t, t+1, \cdots, t + h_{t} - 1\}.
\end{align}

\textbf{Machine occupancy constraint}. Equations \ref{eq:ch4_milp_machine_occupancy_before_hc} and \ref{eq:ch4_milp_machine_occupancy_after_hc} represent the machine occupancy constraints within and beyond the certainty horizon, respectively. Intuitively, Equation \ref{eq:ch4_milp_machine_occupancy} ensures that for any time point $t'$ and machine $j$, exactly one of the following conditions hold: (1) $j$ is idle at $t'$, (2) a batch initiates at $t'$ on $j$, (3) a batch is ongoing during $[t', t'+1)$, and (4) $j$ is unavailable due to a breakdown during $[t', t'+1)$. The condition (4) only applies to time points within the certainty horizon, as presented in Equation \ref{eq:ch4_milp_machine_occupancy_before_hc}.
\begin{subequations} \label{eq:ch4_milp_machine_occupancy}
\begin{alignat}{2}
&\begin{aligned} \label{eq:ch4_milp_machine_occupancy_before_hc}
    \sum_{i \in \mathbb{I}_{j}} \overline{x}_{ijt'} + \sum_{i \in \mathbb{I}_{j} } \sum_{n \in \{ n \in \mathbb{Z}^{+}: n \geq 1 \land n \leq \tau_{i,j,n,t'} - 1 \} } \overline{r}_{i,j,n,t'} + u_{j,t'} \leq 1, \quad \forall j \in \mathbb{J}, t' \in \{ t, t+1, \cdots, t+h^{\mathrm{C}} \}.
\end{aligned}   
\\
&\begin{aligned} \label{eq:ch4_milp_machine_occupancy_after_hc}
    \sum_{i \in \mathbb{I}_{j}} \overline{x}_{ijt'} + \sum_{i \in \mathbb{I}_{j} } \sum_{n \in \{ n \in \mathbb{Z}^{+}: n \geq 1 \land n \leq \tau_{i,j,n,t'} - 1 \} } \overline{r}_{i,j,n,t'} \leq 1, \quad \forall j \in \mathbb{J}, t' \in \{ t + h^{\mathrm{C}} + 1, , \cdots, t+h_{t} \}.
\end{aligned}
\end{alignat}
\end{subequations}

\textbf{Bounding constraint for raw material purchases}. Equation \ref{eq:ch4_milp_bound_for_raw_material_purchase} ensures that the purchased quantity of raw materials must be greater than zero and be less than the supply upper bound.
\begin{align} \label{eq:ch4_milp_bound_for_raw_material_purchase}
    0 \leq \overline{p}_{k,t'} \leq \xi_{k,t'}, \quad \forall k \in \mathbb{K}^{\mathrm{R}}, t' \in \{t, t+1, \cdots, t+ h_{t} \}.
\end{align}

\textbf{Bounding constraint for batch input}. Equation \ref{eq:ch4_milp_bound_for_batch_input} ensures that, if a batch initiates, its input size should be between the minimum and maximum allowable batch inputs.
\begin{align} \label{eq:ch4_milp_bound_for_batch_input}
    y^{\min}_{i,j} \cdot \overline{x}_{i,j,t} \leq \overline{y}_{i,j,t} \leq y^{\max}_{i,j} \cdot \overline{x}_{i,j,t}, \quad \forall j \in \mathbb{J}, i \in \mathbb{I}_{j}, t' \in \{t, t+1, \cdots, t+h_{t} \}
\end{align}

\textbf{Bounding constraint for inventory}. Equation \ref{eq:ch4_milp_bound_for_inventory} ensures that the inventory level of $k$ at any time point $t'$ should be no less than zero and be no greater than the storage capacity.
\begin{align} \label{eq:ch4_milp_bound_for_inventory}
    0 \leq \overline{s}_{k,t'} \leq s^{\max}_{k}, \quad \forall k \in \mathbb{K}.
\end{align}

\subsubsection*{Objective functions}
In multipurpose batch scheduling, it is not rare that thousands of schedules share the same optimal cost value. However, within these schedules, some of them can be more preferable than others in practice. For example, schedules that avoid unnecessary postpones of batches or maximise consistencies with the original schedule are often preferred in industrial settings. Therefore, to prioritise schedules with desired properties, we hierarchically optimise three objective functions rather than a single one. 

\textbf{Objective for minimising cost}. Equation \ref{eq:ch4_milp_obj1_min_cost} presents the primary objective, which minimises the total cost over the next scheduling horizon $[t, t+h_{t}]$. Similar to the cost function used in the dynamic system formulation (that is, Equation \ref{eq:ch4_cost_func_ct}), the total cost consists of the setup cost, the inventory cost, and the backlog cost.
\begin{equation} \label{eq:ch4_milp_obj1_min_cost}
\begin{split}
    \min \quad \overline{f}_{1} =  \sum_{t' =t}^{t + h_{t}} \Biggl( \sum_{j \in \mathbb{J}} \sum_{i \in \mathbb{I}_{j}} c^{x}_{i,j} \overline{x}_{i,j,t'} 
    % + \sum_{j \in \mathbb{J}} \sum_{i \in \mathbb{I}_{j}} c^{y}_{i,j} \overline{y}_{i,j,t'} 
    + \sum_{k \in \mathbb{K}} c^{s}_{k} \overline{s}_{k,t'} 
    + \sum_{k \in \mathbb{K}^{\mathrm{P}}} c^{l}_{k} \overline{l}_{k,t'} \Biggr)
\end{split}
\end{equation}

\textbf{Objective for batch initiation changes}. Equation \ref{eq:ch4_milp_obj2_min_change} presents the secondary objective, which maximises the consistency of batch initiations with the original schedule. When $x_{i,j,t'} = \overline{x}_{i,j,t'}$, which means that a batch exists in both the original schedule and the new schedule, the multiplicative reward $x_{i,j,t'} \cdot \overline{x}_{i,j,t'} = 1$. Otherwise when $x_{i,j,t'} \neq \overline{x}_{i,j,t'}$, which means that the batch is cancelled or shifted, then the reward $x_{i,j,t'} \cdot \overline{x}_{i,j,t'} = 0$. Note that Equation \ref{eq:ch4_milp_obj2_min_change} is unidirectional, which means that it only rewards consistencies for batches that already exist in the original schedule. For batches that appear in the new schedule for the first time, this objective does not enforce where they should be placed.
\begin{align} \label{eq:ch4_milp_obj2_min_change}
    \max \quad \overline{f}_{2} = \sum_{j \in \mathbb{J}} \sum_{i \in \mathbb{I}_{j}} \sum_{t' = t}^{t + h^{t}} x_{i,j,t'} \cdot \overline{x}_{i,j,t'}
\end{align}

\textbf{Objective for batch initiation lateness}. Equation \ref{eq:ch4_milp_obj3_min_lateness} presents the tertiary objective, which minimises the total lateness of batch initiations. The exponent $\frac{t'-t}{h_{t} - t}$ penalises delays in batch initiations. The denominator $h_{t} - t$ represents a normalisation term that ensures the penalty magnitude is only related to the length of the scheduling horizon, while is regardless of the absolute time step $t$. The reason that we use an exponential penalty rather than a linear one is to make the difference in batch initiation lateness more numerically significant. As a result, Equation \ref{eq:ch4_milp_obj3_min_lateness} enforces new batches to be scheduled as early as possible.
\begin{align} \label{eq:ch4_milp_obj3_min_lateness}
    \min \quad \overline{f}_{3} =  \sum_{j \in \mathbb{J}} \sum_{i \in \mathbb{I}_{j}} \sum_{t' = t}^{t + h_{t}} e^{\frac{t' - t}{h_{t} - t}} \cdot \overline{x}_{i,j,t} 
\end{align}

%%%%%%%%%%%%%%%%%%%%%%%%%%%%%%%%%%%%%%%%%%%%%%%%%%%%%%%%%%%%%%%%%%%%%%%%%%%%%%%%%%%%%%%%%%
%%%%%%%%%%%%%%%%%%%%%%%%%%%%%%%%%%%%%%%%%%%%%%%%%%%%%%%%%%%%%%%%%%%%%%%%%%%%%%%%%%%%%%%%%%
\section{Methodology} \label{sec:ch4_methodology}
In this section, we present our Bayesian dynamic scheduling method in detail. Rather than a single procedure, our method is actually a framework that consists of multiple algorithm modules. While each module by itself incorporates the ideas of Bayesian decision-making, different modules dynamically interact with each other to achieve an automated dynamic scheduling. Also, our framework is customisable. The presented modules can be replaced with user-customised modules to adapt to specific problem contexts, such as different production workflows and disturbance settings.

To present, we follow a general-to-specific order. In Subsection \ref{subsec:ch4_methodology_term_and_notations}, we clarify terminologies and notations essential for subsequent presentations. In Subsection \ref{subsec:ch4_methodology_overview}, we first use an example to illustrate the key motivations behind our method. Then, we present the overall framework of Bayesian dynamic scheduling. Subsections \ref{subsec:ch4_methodology_impact_propagation}--\ref{subsec:ch4_methodology_rescheduling_strategies} expand the details of algorithm modules integrated in the overall framework. Subsection \ref{subsec:ch4_methodology_impact_propagation} presents the \textsc{ImpactPropagation} algorithm, which roughly speaking, constructs random variables to describe how severe a batch is likely to be impacted by disturbances. Subsection \ref{subsec:ch4_methodology_bn} presents algorithms related to Bayesian Networks, including structure learning, parameter learning, and inference algorithms. Finally, in Subsection \ref{subsec:ch4_methodology_rescheduling_strategies}, we present the Bayesian-aided when- and how-to-reschedule strategies.

Notably, some algorithms in our framework may be difficult to follow without concrete problem settings. Therefore, we present algorithms based on the formulation described in Section \ref{sec:ch4_problem_formulation}. However, to persuade readers that our framework can be applied to other dynamic scheduling problems, such as job shop scheduling and continuous processes, we try to describe our method from a general perspective as much as possible. That is, we describe each module at a high level, while wrap the problem-specific details as function arguments. Also, after presenting each module, we will discuss shortly whether this module is customisable, and if so, how to customise. By doing so, when readers wish to implement our method, they need only follow the overall procedure, while one or more modules can be customised to adapt to their specific contexts.

\subsection{Terminologies and notations} \label{subsec:ch4_methodology_term_and_notations}
To minimise ambiguities, we distinguish the following terminologies throughout this section.
\begin{itemize}
    \item A \textit{task} refers to a general category of activity that can be executed on specific machines. In the STN representation, tasks are denoted by indices and sets $i \in \mathbb{I}$.
    \item An \textit{operation} refers to a specific instance of task execution in a schedule. In Gantt charts, an operation is typically denoted by a horizontal bar. In the context of batch scheduling, ``an operation'' is equivalent to ``a batch''. We denote operations using tuples and sets $\mathcal{O} \in \mathbb{O}$.
\end{itemize}
In multipurpose batch scheduling settings, we identify an operation using a $5$-tuple $\mathcal{O} = (i, j, b, t_{s}, t_{e})$. Here, the indices $i$, $j$, $b$, $t_{s}$, and $t_{e}$ represent the task, the machine, the batch size, the start time, and the completion (end) time of the operation, respectively. When having an operation $\mathcal{O}$ as reference, we use $i(\mathcal{O})$ to denote the task to which $\mathcal{O}$ belongs, $j(\mathcal{O})$ to denote the machine on which $\mathcal{O}$ executes, $b(\mathcal{O})$ to denote the batch size of which $\mathcal{O}$ has, $t_{s}(\mathcal{O})$ to denote the time that $\mathcal{O}$ starts, and $t_{e}(\mathcal{O})$ to denote the time that $\mathcal{O}$ completes. Also, we use $\mathbb{K}^{+} (\mathcal{O})$ and $\mathbb{K}^{-} (\mathcal{O})$ to denote the collection of materials that $\mathcal{O}$ produces and consumes, respectively. 

We view a schedule as a collection of operations. We use the slice notation $\mathbb{O}_{t: t+h \mid t}$ to denote the collection of operations within the time interval $[t, t+h]$ in the schedule at time $t$. In other words,
\begin{align*}
    \mathbb{O}_{t: t+h \mid t} = \{ \mathcal{O} \text{ in the schedule at $t$}: t_{s} (\mathcal{O}) \geq t \land  t_{e} (\mathcal{O}) \leq t + h \}.
\end{align*}

For convenience of discussion, we need a notation to bridge the tuple-based and the index-based representations of variables. Let $\mathcal{A} = (a_{1}, a_{2}, \cdots, a_{m})$ be a tuple. We write $x_{\mathcal{A}}$ to mean $x_{a_1, a_2, \cdots, a_m}$. When having a collection of tuples $\mathbb{A} = \{ \mathcal{A}_{1}, \cdots, \mathcal{A}_{n} \}$, we also write $\boldsymbol{x}_{\mathbb{A}}$ to mean $(x_{\mathcal{A}_{i}})_{\mathcal{A}_{i} \in \mathbb{A}}$. That is, we denote by the vector
\begin{align*}
    \boldsymbol{x}_{\mathbb{A}} = (x_{\mathcal{A}_{1}}, x_{\mathcal{A}_{2}}, \cdots, x_{\mathcal{A}_{n}} )^{\intercal},
\end{align*}
where $\mathcal{A}_{i} \in \mathbb{A}$ for $i \in \{1, 2, \cdots, n \}$.

Because different endogenous disturbances may propagate through the schedule in different patterns, we need index for disturbance types. Recall that we use $\boldsymbol{I}_{t_{0}: t_{N}}^{\mathrm{En}}$ to denote the random vector whose elements represent endogenous disturbances from $t_{0}$ to $t_{N}$. Based on this notation, we use indices and sets $\ell \in \mathbb{L}^{\mathrm{En}}$ to denote endogenous disturbance types, and use $\boldsymbol{I}_{t}^{\mathrm{En}, \ell}$ to denote the random vector whose elements represent the $\ell$-th type of endogenous disturbance at time $t$. For example, in the formulation presented in Section \ref{sec:ch4_problem_formulation}, we incorporate three types of endogenous disturbances, namely machine breakdowns, processing time variations, and yield losses. Let $\ell_{1}$ denote machine breakdowns. Then the set of elements in $\boldsymbol{I}_{t}^{\mathrm{En}, \ell_{1}}$ equals $\{u_{j, t} \in \boldsymbol{I}_{t}: j \in \mathbb{J} \}$. 

\subsection{Overview} \label{subsec:ch4_methodology_overview}
Before presenting the overall Bayesian dynamic scheduling framework, we first illustrate our motivations through an example. Consider the scenario where at a rescheduling time point $t$, we generate a schedule spanning from $t$ to $t + h_{t}$. Then, the system advances to $t + h$ (where $0 < h < h^{\mathrm{C}}$) without revising the schedule. Now, looking since the time point $t$, we have obtained the new look-ahead information over $[t + h^{\mathrm{C}}, t + h + h^{\mathrm{C}})$, while the information for $[t + h + h^{\mathrm{C}}, t + h_{t})$ remains unobserved. As a scheduler, the problem is to determine whether to reschedule at $t + h$, and if so, how to revise the schedule.

To address this problem, we first notice that, the information over $[t + h^{\mathrm{C}}, t + h + h^{\mathrm{C}} )$ has been observed. Naturally, we can decide based on this part of information. For example, assume that from information variables $\boldsymbol{i}_{t + h^{\mathrm{C}}: t + h + h^{\mathrm{C}}}$, we know an upstream operation will lose a certain percentage of yield. Then, to ensure a sufficient inventory level for initiating downstream operations, one reasonable option is to insert additional operations. This class of approaches is exactly what reactive rescheduling methods do: Decide based on the observed information. On the other hand, the information over $[t + h + h^{\mathrm{C}}, t + h_{t} )$ remains unobserved. However, we also notice that, due to the inherent structure of the production system (such as the STN layout, the changeover policies), we can update the uncertainty within this interval by reasoning from the observed information $\boldsymbol{i}_{t + h^{\mathrm{C}}: t + h + h^{\mathrm{C}}}$. For example, assume that during $[t + h^{\mathrm{C}}, t + h + h^{\mathrm{C}})$, we observe that a midstream operation on a bottleneck machine will require more than its nominal processing time to complete. Then, downstream operations will be more likely (or even certainly) not able to initiate as scheduled, because the replenishment of input materials is delayed. 

From a decision-making perspective, rescheduling decisions should leverage available information as much as possible. That is, using both the first, determined part of information (from $t + h$ to $t + h + h^{\mathrm{C}}$) and the second, updated, but unobserved part of information (from $t + h + h^{\mathrm{C}}$ to $t + h_{t}$). For the first part, designing some deterministic logic-based heuristics is not difficult. Often, challenges come from the second part. That is, the industry often has diverse prior knowledge about production system structures, operational conditions, and disturbance characteristics. Difficulties exist in developing an approach that simultaneously encodes such prior knowledge, unifies with the first-part observation, and reasons about the second-part uncertainty. Furthermore, it is even not trivial to decide how to quantify the second-part uncertainty. Possible quantification options may include discrete-valued indicators, formulations that encode expert knowledge, event-based metrics, and so on.

These challenges motivate our Bayesian dynamic scheduling method. Specifically, at the core of our method, we propose a probabilistic framework that (1) encodes the prior knowledge using a graph structure, which is intuitive and visually explainable, (2) unifies the representations of the first- and second-part information, and therefore (3) allows efficient reasoning about the second-part uncertainty. The probabilistic framework relies on constructing a random variable $Z_{\mathcal{O}}^{\ell}$ (referred as the \textit{impact variable}), which loosely speaking, describes how severe the operation $\mathcal{O}$ is impacted by the $\ell$-th type of endogenous disturbances. However, as one of the main novelties of our method, the impact variable $Z_{\mathcal{O}}^{\ell}$, by itself, does not encode \textit{global semantics}, such as:
\begin{align} \label{eq:ch4_direct_semantic_zol}
\begin{minipage}{0.85\textwidth}
    \noindent 
    ``the impact on the operation $\mathcal{O}$ due to system disturbances''
    \end{minipage}
\end{align}
Instead, the impact variable $Z_{\mathcal{O}}^{\ell}$ encodes \textit{disentangled local semantics}, such as:
\begin{align} \label{eq:ch4_indirect_semantic_zol}
\begin{minipage}{0.85\textwidth}
    \noindent 
    ``regarding a specific disturbance type, the impact on the operation $\mathcal{O}$ due to (1) the impacts on parents of $\mathcal{O}$ and (2) the disturbance over the operation $\mathcal{O}$ itself''
    \end{minipage}
\end{align}
Although at first glance, Equation \ref{eq:ch4_indirect_semantic_zol} may appear counter-intuitive and not suffice to describe the impact on the entire schedule, it can greatly simplify the reasoning about the second-part uncertainty. To explain, let $\mathcal{G}^{\mathrm{Op}} = (\mathbb{O}, \mathbb{A})$ be a DAG that encodes the prior knowledge about a schedule, where nodes represent operations and arcs represent dependencies of an operation on another. Then, if we were to infer the global impact on an operation $\mathcal{O} \in \mathbb{O}$, essentially, we are facing a query conditioned by an evidence chain. For example:
\begin{equation} \label{eq:ch4_global_impact_query}
\begin{minipage}{0.85\textwidth}
    \noindent
    ``Given the disturbance information on the operation $\mathcal{O}$ itself, the disturbance information on the parents of $\mathcal{O}$ (that is, $\mathrm{Pa}^{\mathcal{G}^{\mathrm{Op}}}(\mathcal{O})$), the disturbance information on the parents of parents of $\mathcal{O}$ (that is, $\mathrm{Pa}^{\mathcal{G}^{\mathrm{Op}}}(\mathrm{Pa}^{\mathcal{G}^{\mathrm{Op}}}(\mathcal{O}))$), and so on up to the earliest ancestors of $\mathcal{O}$, what is the impact on the operation $\mathcal{O}$?''
\end{minipage}
\end{equation}

Answering this type of queries is often difficult, because they are complexly entangled by not only the propagation effects from ancestral operations, but also the inherent production system structures and various disturbance characteristics. However, if we instead work on the disentangled local impact, we actually shift to addressing impact variables with \textit{local Markov property}. That is, the impact variables associated with the parents of $\mathcal{O}$ suffice to encode the information propagated from all the ancestor operations earlier than $\mathrm{Pa}^{\mathcal{G}^{\mathrm{Op}}} (\mathcal{O})$. As a result, when inferring the impact on the operation $\mathcal{O}$, we need only answer local queries such as:
\begin{align} \label{eq:ch4_local_impact_query}
\begin{minipage}{0.85\textwidth}
    \noindent 
    ``Given the impacts on $\mathrm{Pa}^{\mathcal{G}^{\mathrm{Op}}} (\mathcal{O})$ and the disturbance information on $\mathcal{O}$ itself, what is the impact on the operation $\mathcal{O}$?''
    \end{minipage}
\end{align}
In other words, we no longer need to trace all the disturbances over the ancestors of $\mathcal{O}$ (that is, $\mathrm{An}^{\mathcal{G}^{\mathrm{Op}}} (\mathcal{O})$), but can only focus on the local operation family $\{ \mathcal{O} \} \cup \mathrm{Pa}^{\mathcal{G}^{\mathrm{Op}}} (\mathcal{O})$. This global-to-local idea coincides with the principle of Bayesian Networks: Decomposes a high-dimensional joint probability distribution into a set of local conditional probability distributions. More fundamentally, consider the probability distribution $\mathsf{P} (\boldsymbol{Z}_{\mathbb{O}_{t + h: t + h_{t} }}^{\ell} )$, which incorporates both the first- and second-part information. As we will prove in Section \ref{sec:ch4_discussions}, to encode $\mathsf{P}$, we need only (1) provide a graph structure $ (\mathbb{O}_{t + h : t + h_{t} }, \mathbb{A})$, which represents the inter-operation dependencies implied by the prior knowledge, and (2) replace each operation $\mathcal{O} \in \mathbb{O}_{t + h: t + h_{t} }$ with its corresponding impact variable $Z_{\mathcal{O}}^{\ell}$. Then, the local Markov property guarantees that all Bayesian Networks with the structure $ (\boldsymbol{Z}_{\mathbb{O}_{t + h : t + h_{t} }}^{\ell}, \mathbb{A})$ will exactly encode the probability distribution $\mathsf{P} (\boldsymbol{Z}^{\ell}_{\mathbb{O}_{t + h: t + h_{t} }} )$. This theoretical soundness allows convenient modellings in industrial settings. Furthermore, once the Bayesian Network has been developed, the rich library of inference algorithms (such as variable elimination, belief propagation, and particle-based algorithms) established by the BN community can allow us to efficiently update the posterior distribution $\mathsf{P} (Z_{\mathcal{O}}^{\ell} \mid \boldsymbol{z}_{\mathbb{O}_{t + h: t + h + h^{\mathrm{C}} }}^{\ell} )$ for each $\mathcal{O} \in \mathbb{O}_{t + h + h^{\mathrm{C}}: t + h_{t} }$, where the vector $\boldsymbol{z}_{\mathbb{O}_{t + h: t + h + h^{\mathrm{C}}}}^{\ell} = (z_{\mathcal{O}}^{\ell})_{\mathcal{O} \in \mathbb{O}_{t + h^{\mathrm{C}}: t + h + h^{\mathrm{C}}}}$ represents the evidence observed from the first-part information.

The above idea motivates our Bayesian dynamic scheduling framework, which is presented in Algorithm \ref{algo:ch4_bds}. While the framework generally follows a closed-loop dynamic scheduling workflow, the main differences appear in updating the Bayesian Network $\mathcal{B}_{\ell}$ and computing the posterior distribution $\mathsf{P} (Z_{\mathcal{O}}^{\ell} \mid \boldsymbol{z}_{\mathbb{O}_{t + h: t + h + h^{\mathrm{C}} }}^{\ell} )$. Specifically, at first (Line \ref{algo:ch4_bds_line_initialise_system}), we initialise the system by setting up the initial time step $t_{0}$, the initial state $\boldsymbol{s}_{t_{0}}$, and an empty action sequence. Also, we sample the initial look-ahead information from the disturbance distribution $\mathsf{P}(\boldsymbol{I}_{t: t + h^{\mathrm{C}}})$. Then, we initialise with a group of empty Bayesian Networks (Line \ref{algo:ch4_bds_line_initialise_bn}) and set the cumulative cost and nervousness values to zero (Line \ref{algo:ch4_bds_line_initialise_c_and_d}).

\vspace{1cm}
\RestyleAlgo{ruled}
\SetKwComment{Comment}{/*}{*/}
\SetKw{Or}{or}
\begin{algorithm}[H]
\setstretch{1.25}
\caption{Bayesian dynamic scheduling under incomplete look-ahead information} \label{algo:ch4_bds}
\Begin{
    Initialise $t \coloneqq t_0$, $\boldsymbol{s}_{t} \coloneqq \boldsymbol{s}_{t_0}$, $\boldsymbol{a}_{t: t+h_t} \coloneqq \varnothing$, and $\boldsymbol{i}_{t: t+h^{\mathrm{C}} } \sim \mathsf{P} (\boldsymbol{I}_{t: t+h^{\mathrm{C}}}) $; \label{algo:ch4_bds_line_initialise_system}
    
    Initialise $\mathcal{B}_{\ell} = (\mathcal{G}^{\mathrm{BN}}_{\ell}, \hat{\mathsf{P}}) \coloneqq \varnothing$ for each $\ell \in \mathbb{L}^{\mathrm{En}}$; \label{algo:ch4_bds_line_initialise_bn}

    Initialise $c \coloneqq 0$, $d \coloneqq 0$; \label{algo:ch4_bds_line_initialise_c_and_d}
    
    \While{$t \leq t_N$\label{algo:ch4_bds_main_loop}}{         
        Compute the realised values of impact variables within the next certainty horizon $\boldsymbol{z}_{\mathbb{O}_{t: t+h^{\mathrm{C}}}}^{\ell} \coloneqq \textsc{ImpactPropagation} \bigl(\ell, \mathcal{G}_{\ell}^{\mathrm{Op}}, \{ \boldsymbol{i}^{\mathrm{En}, \ell}_{t_{s} (\mathcal{O}): t_{e} (\mathcal{O}) } \in \boldsymbol{i}_{t: t+h^{\mathrm{C}}}^{\mathrm{En}, \ell}: \mathcal{O} \in \mathbb{O}_{t: t + h^{\mathrm{C}}} \}, \textsc{IsoFunc}, \textsc{PropFunc} \bigr)$, where the input DAG $\mathcal{G}^{\mathrm{Op}}_{\ell}$, which encodes inter-operation dependencies, mirrors the structure of $\mathcal{G}^{\mathrm{BN}}_{\ell}$ while replace each impact variable node $Z_{\mathcal{O}}^{\ell}$ with its associated operation $\mathcal{O}$; \quad \texttt{// See Algorithm \ref{algo:ch4_impact_propagation}} \label{algo:ch4_bds_line_compute_evidence}

        Identify evidentially unrecoverable operations $\mathbb{O}^{\mathrm{EU}}_{t: t+h^{\mathrm{C}} } \coloneqq \bigl\{ \mathcal{O} \in \mathbb{O}_{t: t+h^{\mathrm{C}} } : \bigvee_{\ell \in \mathbb{L}^{\mathrm{En}}}  z_{\mathcal{O}}^{\ell} \geq \gamma_{1}^{\ell}  \bigr\}$; \quad \texttt{// See Equation \ref{eq:ch4_identify_eu_operations}} \label{algo:ch4_bds_line_get_eu_operations}
                        
        Compute the posterior distribution $\hat{\mathsf{P}} (Z_{\mathcal{O}}^{\ell} \mid \boldsymbol{Z}^{\ell}_{\mathbb{O}_{t: t + h^{\mathrm{C}}}} = \boldsymbol{z}^{\ell}_{\mathbb{O}_{t: t + h^{\mathrm{C}}}} ) \coloneqq \textsc{VariableElimination} (\mathcal{B}_{\ell}, Z_{\mathcal{O}}^{\ell}, \boldsymbol{z}_{\mathbb{O}_{t: t + h^{\mathrm{C}}}}^{\ell}) $ for each $\mathcal{O} \in \mathbb{O}_{t + h^{\mathrm{C}}: t + h_{t}}$; \quad \texttt{// See Equation \ref{eq:ch4_variable_elimination}} \label{algo:ch4_bds_line_compute_posterior_distribution}

        Identify probabilistically unrecoverable operations $\mathbb{O}^{\mathrm{PU}}_{t+h^{\mathrm{C}}: t+h_{t} } \coloneqq \Bigl\{ \mathcal{O} \in \mathbb{O}_{t+h^{\mathrm{C}}: t+h_{t} } :  1 - \prod_{\ell \in \mathbb{L}^{\mathrm{En}}} \bigl(1 - \hat{\mathsf{P}} (Z_{\mathcal{O}}^{\ell} < \gamma_{1}^{\ell} \mid \boldsymbol{z}_{\mathbb{O}_{t: t + h^{\mathrm{C}} }}^{\ell} ) \bigr) \geq \gamma_{2}  \Bigl\}$; \quad \texttt{// See Equation \ref{eq:ch4_identify_pu_operations}} \label{algo:ch4_bds_line_get_pu_operations}
        
        Determine whether to reschedule $\varphi \coloneqq \textsc{WhenToReschedule} (\mathbb{O}_{t: t+h_t}, \mathbb{O}^{\mathrm{EU}}_{t: t+h^{\mathrm{C}} }, \mathbb{O}^{\mathrm{PU}}_{t+h^{\mathrm{C}}: t+h_{t}} , \gamma_3)$; \quad \texttt{// See Algorithm \ref{algo:ch4_rescheduling_trigger}} \label{algo:ch4_bds_line_reschedule_triggering}
        
        \If{$\varphi = \mathsf{True}$ \Or $h_t \leq h^{\mathrm{min}}$\label{algo:ch4_bds_line_reschedule_condition}}{
            
            Generate a new action sequence $\boldsymbol{a}_{t: t + h } \coloneqq \textsc{HowToReschedule} \bigl(
            (\boldsymbol{s}_{t}, \boldsymbol{a}_{t: t + h_{t} }, \boldsymbol{i}_{t: t+h^{\mathrm{C}} }), \mathbb{O}_{t: t+h_t }, \mathbb{O}_{t: t+h^{\mathrm{C}} }^{\mathrm{EU}}, \mathbb{O}_{t + h^{\mathrm{C}}: t+h_t }^{\mathrm{PU}}, h^{\mathrm{C}} + h_{t} \bigr)$; \quad \texttt{// See Algorithm \ref{algo:ch4_schedule_generation}} \label{algo:ch4_bds_line_schedule_generation}
            
            For each $\ell \in \mathbb{L}^{\mathrm{En}}$, learn a new Bayesian Network $\mathcal{B}_{\ell} \coloneqq (\mathcal{G}_{\ell}^{\mathrm{BN}}, \hat{\mathsf{P}})$, where $\mathcal{G}_{\ell}^{\mathrm{BN}} \coloneqq \textsc{StructureLearning} (\boldsymbol{Z}_{\mathbb{O}_{t: t+h_t }}^{\ell}, \chi_{\ell}^{\mathrm{Temp}}, \chi_{\ell}^{\mathrm{Spat}} )$ and $\hat{\mathsf{P}} \coloneqq \textsc{ParameterLearning} (n, \mathcal{G}_{\ell}^{\mathrm{BN}})$; \quad \texttt{// See Algorithms \ref{algo:ch4_structure_learning} and \ref{algo:ch4_parameter_learning}} \label{algo:ch4_bds_line_bn_learning}
        }\label{algo:ch4_bds_line_reschedule_procedure_end}

        Update $c \coloneqq c + \textsc{CostFunc}(\boldsymbol{s}_t, \boldsymbol{a}_{t}, \boldsymbol{i}_{t})$, $d \coloneqq d + \textsc{NervFunc}(\boldsymbol{a}_{t: t+h_t \mid t}, \boldsymbol{a}_{t+1: t+h_{t+1} \mid t+1 })$; \quad \texttt{// See Equations \ref{eq:ch4_cost_func} and \ref{eq:ch4_nerv_func}} \label{algo:ch4_bds_line_update_states}
        
        Update $\boldsymbol{s}_{t+1} \coloneqq \textsc{TransFunc} \left( \boldsymbol{s}_t, \boldsymbol{a}_{t}, \boldsymbol{i}_{t} \right)$, $\boldsymbol{a}_{t+1 : t+1 + h_{t+1} \mid t+1} \coloneqq \boldsymbol{a}_{t+1 : t + h_{t} \mid t} $, and $\boldsymbol{i}_{t + h^{\mathrm{C}} + 1 } \sim \mathsf{P}( \boldsymbol{I}_{t + h^{\mathrm{C}} + 1} )$; \quad \texttt{// See Equation \ref{eq:ch4_system_dynamics_general_notation}} \label{algo_bds_line_update_c_and_d}
        
        Update $t \coloneqq t+1$; \label{algo:ch4_bds_line_proceed_next_step}
    }
}
\end{algorithm}
\vspace{1cm}

Next, we enter the main procedure (Line \ref{algo:ch4_bds_main_loop}). At each time step $t$, for the first-part information, we use the \textsc{ImpactPropagation} algorithm (which will be presented in Subsection \ref{subsec:ch4_methodology_impact_propagation}) to compute the realised values of impact variables within the next certainty horizon (Line \ref{algo:ch4_bds_line_compute_evidence}). Here, the argument $\mathcal{G}^{\mathrm{Op}}_{\ell}$ represents the graph structure that encodes the inter-operation dependencies implied by the prior knowledge. The set of vectors $\{ \boldsymbol{i}_{t_s (\mathcal{O}): t_e (\mathcal{O})}^{\mathrm{En}, \ell} \in \boldsymbol{i}_{t: t + h^{\mathrm{C}}}: \mathcal{O} \in \mathbb{O}_{t: t + h^{\mathrm{C}}} \}$ contains realised values of information variables regarding the $\ell$-th type of endogenous disturbances over operations within the next certainty horizon. The functions \textsc{IsoFunc} and \textsc{PropFunc} represent \textit{isolation function} and \textit{propagation function}, respectively. These two functions, which can be customised by users to adapt to their problem-specific contexts, are used to describe the propagation effects of specific endogenous disturbances. We will discuss the design of these two functions in Subsection \ref{subsec:ch4_methodology_impact_propagation} and Section \ref{sec:ch4_discussions}.

Having obtained the realised values of impact variables, we then identify \textit{evidentially unrecoverable} operations in the current schedule (Line \ref{algo:ch4_bds_line_get_eu_operations}). Here, the term ``evidentially unrecoverable'' means that, confirmed by the observed evidence, these operations are severely impacted such that they cannot be ``recovered'' from disturbances. As a result, these operations should be rescheduled. Specifically, the evidentially unrecoverable operations are identified by
\begin{align} \label{eq:ch4_identify_eu_operations}
    \mathbb{O}^{\mathrm{EU}}_{t: t+h^{\mathrm{C}} } \coloneqq \bigl\{ \mathcal{O} \in \mathbb{O}_{t: t+h^{\mathrm{C}} } : \bigvee_{\ell \in \mathbb{L}^{\mathrm{En}}}  z_{\mathcal{O}}^{\ell} \geq \gamma_{1}^{\ell}  \bigr\},
\end{align}
where the parameter $\gamma_{1}^{\ell}$ defines the threshold above which, an operation becomes unrecoverable due to the disturbance type $\ell$. Intuitively, Equation \ref{eq:ch4_identify_eu_operations} states that, an operation is evidentially unrecoverable if, regarding any type of endogenous disturbances, the realised value of its impact variable $z_{\mathcal{O}}^{\ell}$ exceeds the threshold $\gamma_{1}^{\ell}$.

Then, for the second-part information, we use the the Bayesian Network $\mathcal{B}_{\ell}$ to update the posterior distribution over the impact variables beyond the certainty horizon (Line \ref{algo:ch4_bds_line_compute_posterior_distribution}). By having the posterior distributions, we then identify \textit{probabilistically unrecoverable} operations (Line \ref{algo:ch4_bds_line_get_pu_operations}) as
\begin{align} \label{eq:ch4_identify_pu_operations}
    \mathbb{O}^{\mathrm{PU}}_{t+h^{\mathrm{C}}: t+h_{t} } \coloneqq \Bigl\{ \mathcal{O} \in \mathbb{O}_{t+h^{\mathrm{C}}: t+h_{t} } :  1 - \prod_{\ell \in \mathbb{L}^{\mathrm{En}}} \bigl(1 - \hat{\mathsf{P}} (Z_{\mathcal{O}}^{\ell} < \gamma_{1}^{\ell} \mid \boldsymbol{z}_{\mathbb{O}_{t: t + h^{\mathrm{C}} }}^{\ell} ) \bigr) \geq \gamma_{2}  \Bigl\},
\end{align}
where the parameter $\gamma_{2}$ represents the probabilistic threshold above which, an operation is regarded as unrecoverable due to the updated posterior distribution. Here, the left-hand side of the inequality is derived by a sequence of set operations as follows.
\begin{subequations} \label{eq:ch4_identify_pu_operations}
\begin{align}  
    &\hat{\mathsf{P}} \left(
            \text{$\mathcal{O}$ is unrecoverable given the observed evidence}
    \right) \label{eq:ch4_identify_pu_operations_1} \\ 
    = \; &\hat{\mathsf{P}} \left(
        \parbox{9.8cm}{\noindent regarding at least one type of endogenous disturbances, $\mathcal{O}$ is unrecoverable given the observed evidence}
    \right) \label{eq:ch4_identify_pu_operations_2} \\ 
    = \; & 1 - \hat{\mathsf{P}} \left(
        \parbox{8.5cm}{\noindent for all types of endogenous disturbances, $\mathcal{O}$ is not unrecoverable given the observed evidence}
    \right) \label{eq:ch4_identify_pu_operations_3} \\ 
    = \; & 1 - \prod_{\ell \in \mathbb{L}^{\mathrm{En}}} \bigl(1 - \hat{\mathsf{P}} (Z_{\mathcal{O}}^{\ell} < \gamma_{1}^{\ell} \mid \boldsymbol{z}_{\mathbb{O}_{t: t + h^{\mathrm{C}} }}^{\ell} ) \bigr), \label{eq:ch4_identify_pu_operations_4}
\end{align}    
\end{subequations}
where Equality \ref{eq:ch4_identify_pu_operations_4} follows from the assumption of mutual independence in disturbances.

Having the evidentially and probabilistically unrecoverable operations identified, we then decide whether to reschedule (Line \ref{algo:ch4_bds_line_reschedule_triggering}). That is, if the function \textsc{WhenToReschedule} returns $\mathsf{True}$, or the span of the current schedule is less than the minimum required length $h^{\min}$, we enter the rescheduling procedure (Lines \ref{algo:ch4_bds_line_reschedule_condition}--\ref{algo:ch4_bds_line_reschedule_procedure_end}). Otherwise the module \textsc{WhenToReschedule} returns $\mathsf{False}$, we update the system and proceed to the next time step (Lines \ref{algo:ch4_bds_line_update_states}--\ref{algo:ch4_bds_line_proceed_next_step}). In Line \ref{algo:ch4_bds_line_reschedule_triggering}, the argument $\gamma_{3}$ represents the percentage threshold such that, if the number of unrecoverable operations exceeds $\gamma_{3}$ (such as $0.3$) times the total number of operations in the schedule, we trigger a rescheduling. 

Regarding the rescheduling procedure (Lines \ref{algo:ch4_bds_line_reschedule_condition}--\ref{algo:ch4_bds_line_reschedule_procedure_end}), we first generate a new schedule using the \textsc{HowToReschedule} algorithm, which essentially solves a warm-start version of the MILP presented in Subsection \ref{subsec:ch4_formulation_milp}. That is, we unfix the start times of unrecoverable operations, while fix the start times of the remaining operations. After that, we construct a new Bayesian Network $\mathcal{B}_{\ell} = (\mathcal{G}^{\mathrm{BN}}_{\ell}, \hat{\mathsf{P}})$ for each $\ell \in \mathbb{L}^{\mathrm{En}}$ according to the new schedule, where the Bayesian Network structure $\mathcal{G}^{\mathrm{BN}}_{\ell}$ is generated by the \textsc{StructureLearning} algorithm, and the estimated distribution $\hat{\mathsf{P}}$ is returned by the \textsc{ParameterLearning} algorithm. Specifically, we construct $\mathcal{G}^{\mathrm{BN}}_{\ell}$ directly from the prior knowledge about multipurpose batch processes, while we estimate $\hat{\mathsf{P}}$ using Monte Carlo simulations. We present these two algorithms in Subsection \ref{subsec:ch4_methodology_bn}.

Next, we present the details of each algorithm integrated in the overall framework. In Subsection \ref{subsec:ch4_methodology_impact_propagation}, we present the \textsc{ImpactPropagation} algorithm. In Subsection \ref{subsec:ch4_methodology_bn}, we present algorithms related to Bayesian Networks, including \textsc{StructureLearning}, \textsc{ParameterLearning}, and \textsc{VariableElimination}. In Subsection \ref{subsec:ch4_methodology_rescheduling_strategies}, we present \textsc{WhenToReschedule} and \textsc{HowToReschedule}.

\subsection{Impact propagation} \label{subsec:ch4_methodology_impact_propagation}

Before presenting the pseudo code, it is helpful to explain the functionality of \textsc{ImpactPropagation} from an input-output perspective. Actually, the \textsc{ImpactPropagation} algorithm can function in two cases: deterministic and stochastic. In the deterministic case, given (1) a DAG $\mathcal{G}^{\mathrm{Op}} = (\mathbb{O}, \mathbb{A})$, which describes the inter-operation dependencies over the set of operations $\mathbb{O}$ regarding the $\ell$-th type of endogenous disturbances, (2) an isolation function \textsc{IsoFunc}, and (3) a propagation function \textsc{PropFunc}, the \textsc{ImpactPropagation} algorithm maps the realised information variables $\{ \boldsymbol{i}^{\mathrm{En}, \ell}_{t_{s} (\mathcal{O}): t_{e} (\mathcal{O}) } : \mathcal{O} \in \mathbb{O} \}$ to the realised impact variables $\boldsymbol{z}_{\mathbb{O}}^{\ell}$. That is,
\begin{align} \label{eq:ch4_impact_propagation_deterministic_case}
    \{ \boldsymbol{i}^{\mathrm{En}, \ell}_{t_{s} (\mathcal{O}): t_{e} (\mathcal{O}) } : \mathcal{O} \in \mathbb{O} \} \xmapsto{\textsc{ImpactPropagation} ( \mathcal{G}^{\mathrm{Op}}_{\ell} , \mspace{5mu} \cdot \mspace{5mu} , \textsc{IsoFunc}, \textsc{PropFunc} )} \boldsymbol{z}_{\mathbb{O}}^{\ell}.
\end{align}
On the other hand, when we replace the input $\{ \boldsymbol{i}^{\mathrm{En}, \ell}_{t_{s} (\mathcal{O}): t_{e} (\mathcal{O}) } : \mathcal{O} \in \mathbb{O} \}$ with its unrealised version $\{ \boldsymbol{I}^{\mathrm{En}, \ell}_{t_{s} (\mathcal{O}): t_{e} (\mathcal{O}) } : \mathcal{O} \in \mathbb{O} \}$, the \textsc{ImpactPropagation} algorithm functions as
\begin{align} \label{eq:ch4_impact_propagation_stochastic_case}
    \{ \boldsymbol{I}^{\mathrm{En}, \ell}_{t_{s} (\mathcal{O}): t_{e} (\mathcal{O}) } : \mathcal{O} \in \mathbb{O} \} \xmapsto{\textsc{ImpactPropagation} ( \mathcal{G}^{\mathrm{Op}}_{\ell} , \mspace{5mu} \cdot \mspace{5mu} , \textsc{IsoFunc}, \textsc{PropFunc} )} \boldsymbol{Z}_{\mathbb{O}}^{\ell}.
\end{align}
In other words, in the stochastic case, \textsc{ImpactPropagation} constructs impact variables from the unrealised information variables. Note that this construction is valid, because random variables are essentially functions, and therefore the function (\textsc{ImpactPropagation}) of a function (the vector whose set of elements equals $\{ \boldsymbol{I}^{\mathrm{En}, \ell}_{t_{s} (\mathcal{O}): t_{e} (\mathcal{O}) } : \mathcal{O} \in \mathbb{O} \}$) remains a function ($\boldsymbol{Z}_{\mathbb{O}}^{\ell}$). By combining Equations \ref{eq:ch4_impact_propagation_deterministic_case} and \ref{eq:ch4_impact_propagation_stochastic_case}, the functionality of \textsc{ImpactPropagation} can be interpreted as, transforming the $I$-\textit{description} (that is, using information variables) of disturbances into the $Z$-\textit{description} (that is, using impact variables) of disturbances. 

Basically, \textsc{ImpactPropagation} achieves this transformation by traversing the operation set. When arriving at a single operation, let us say $\mathcal{O}$, we require two steps. The first step is \textit{isolation}, which uses the isolation function \textsc{IsoFunc} to map the $I$-description of disturbances over $\mathcal{O}$ into the isolated impact level of $\mathcal{O}$. Here, the adjective ``isolated'' means that, this impact level is only related to the operation $\mathcal{O}$ itself, while is irrelevant of any other operations in the set. Mathematically, the isolation function can be denoted by
\begin{align} \label{eq:ch4_isofunc_definition}
    \underline{Z}^{\ell}_{\mathcal{O}} = \textsc{IsoFunc} ( \boldsymbol{I}_{t_s(\mathcal{O}): t_e(\mathcal{O})}^{\mathrm{En}, \ell} ; \ell)
\end{align}
where the input $\boldsymbol{I}_{t_s(\mathcal{O}): t_e(\mathcal{O})}^{\mathrm{En}, \ell}$ represents information variables over the duration of $\mathcal{O}$. The parameter $\ell$, which is placed after the semicolon, means that this mapping is parameterised by the disturbance type $\ell$. The output variable $\underline{Z}_{\mathcal{O}}^{\ell}$ represents the isolation variable (that is, the isolated impact level) regarding the disturbance type $\ell$, where the underline notation means that this variable is ``isolated''. 

To concretise Equation \ref{eq:ch4_isofunc_definition}, consider the yield loss disturbance that we presented in Subsection \ref{subsec:ch4_formulation_ds}. Regarding this type of disturbances, the $I$-description of $\mathcal{O}$ corresponds to multiplicative yield loss variables $\boldsymbol{i}_{t_s(\mathcal{O}): t_e(\mathcal{O})}^{\mathrm{En}, \ell}$, whose set of elements equals $\{ w_{i, j, n, t}: i = i(\mathcal{O}) \land j = j(\mathcal{O}) \land t_{s} (\mathcal{O}) \leq t \leq t_{e} (\mathcal{O}) \}$. Then, to describe the isolated impact level of $\mathcal{O}$, we may design an isolation function \textsc{IsoFunc} that maps $\boldsymbol{i}_{t_s(\mathcal{O}): t_e(\mathcal{O})}^{\mathrm{En}, \ell}$ to an element in the integer set $\{0, 1, 2, 3 \}$, which represents isolated impact levels $\{ \text{none}, \text{minor}, \text{major}, \text{severe}\}$ in practice. Note that these impact levels are isolated, because the input of \textsc{IsoFunc} does not contain any information variables over operations other than $\mathcal{O}$.

After isolation, the second step is \textit{local propagation}, which quantifies the propagation effect within the local operation family $\{ \mathcal{O} \} \cup \mathrm{Pa}^{\mathcal{G}^{\mathrm{Op}}} (\mathcal{O})$. Here, the operation $\mathcal{O}$ represents the child, and the collection $\mathrm{Pa}^{\mathcal{G}^{\mathrm{Op}}} (\mathcal{O})$ represents parents. Within this local family, we view the impact on $\mathcal{O}$ as consisting of two parts. The first part comes from the isolated impact on the child $\mathcal{O}$ itself. That is, the operation $\mathcal{O}$ can be impacted due to information variables $\boldsymbol{i}_{t_s(\mathcal{O}): t_e(\mathcal{O})}^{\mathrm{En}, \ell}$, which are related to $\mathcal{O}$ itself. The second part comes from the impact propagated from parents. In other words, the final impact on the child also depends on how severe their parents are impacted. To quantify this local propagation effect, we need to design the propagation function \textsc{PropFunc}, which is denoted  by 
\begin{align}
    Z_{\mathcal{O}}^{\ell} = \textsc{PropFunc} (\underline{Z}_{\mathcal{O}}^{\ell}, \boldsymbol{Z}_{\mathrm{Pa}^{\mathcal{G}} (\mathcal{O}) }^{\ell}; \ell),
\end{align}
where the first argument $\underline{Z}_{\mathcal{O}}^{\ell}$ represents the isolated impact on the child operation. The second argument $\boldsymbol{Z}_{\mathrm{Pa}^{\mathcal{G}} (\mathcal{O}) }^{\ell}$ represents impact variables corresponding to parent operations.

Essentially, the \textsc{ImpactPropagation} algorithm repeats this isolation-and-propagation procedure until the entire operation set has been evaluated. Algorithm \ref{algo:ch4_impact_propagation} presents the pseudo code of \textsc{ImpactPropagation}. At first, we identify the set of root operations (Line \ref{algo:ch4_impact_propagation_line_initialise_O0set}) and mark them as $\mathbb{O}_{0}$, that is, the operation set requiring no propagation. For these root operations, we do not need to perform the local propagation because they have no parents. Therefore, for each $\mathcal{O} \in \mathbb{O}_{0}$, we simply apply the isolation function (Line \ref{algo:ch4_impact_propagation_line_isofunc_o0set}) and call the propagation function with the parent argument left to empty (Line \ref{algo:ch4_impact_propagation_line_propfunc_o0set}). After that, we initialise $\mathbb{O}_{1}$ as the set of operations requiring propagations (Line \ref{algo:ch4_impact_propagation_line_initialise_o1set}) and enter the main loop (Lines \ref{algo:ch4_impact_propagation_line_start_while_condition}--\ref{algo:ch4_impact_propagation_line_end_while_condition}). At each iteration, we select an operation whose parent operations have already been propagated (Line \ref{algo:ch4_impact_propagation_line_select_candidate}), and apply the isolation-and-propagation procedure to this operation (Lines \ref{algo:ch4_impact_propagation_line_mainloop_isofunc} and \ref{algo:ch4_impact_propagation_line_mainloop_propfunc}). After that, we update $\mathbb{O}_{0}$ and $\mathbb{O}_{1}$ (Lines \ref{algo:ch4_impact_propagation_line_mainloop_update_O0} and \ref{algo:ch4_impact_propagation_line_mainloop_update_O1}). The algorithm terminates until all the operations in the set have been visited.

\vspace{1cm}
\RestyleAlgo{ruled}
\SetKwComment{Comment}{/*}{*/}
\begin{algorithm}[H]
\setstretch{1.25}
\SetKwInput{KwInput}{Input} % Defines the "Input" keyword
\SetKwInput{KwOutput}{Output} % Defines the "Output" keyword
\SetKwProg{Fn}{Function}{}{}
\caption{Construct impact variables over a dependence-representing DAG} \label{algo:ch4_impact_propagation}

\Fn{\textup{\textsc{ImpactPropagation} ( \newline
    $\ell$, \quad \texttt{// An endogenous disturbance type} \newline
    $\mathcal{G} = (\mathbb{O}, \mathbb{A})$, \quad \texttt{// A DAG that encodes inter-operation dependencies for the operation set $\mathbb{O}$ regarding the $\ell$-th type of endogenous disturbances} \newline
    $\{\boldsymbol{I}^{\mathrm{En}, \ell}_{t_{s} (\mathcal{O}) : t_{e} (\mathcal{O}) }  \in \boldsymbol{I}_{t_{0}: t_{N}}^{\mathrm{En}, \ell}: \mathcal{O} \in \mathbb{O} \}$, \quad \texttt{// Information variables that represent the $\ell$-th type of endogenous disturbances over the operation set $\mathbb{O}$} \newline
    $\textsc{IsoFunc}$, \quad \texttt{// Isolation function} \newline
    $\textsc{PropFunc}$, \quad \texttt{// Propagation function} \newline
    )}}{
    \Begin{
    Initialise $\mathbb{O}_{0} \coloneqq \{\mathcal{O} \in \mathbb{O} : \text{$\mathcal{O}$ is a root in $\mathcal{G}$}\}$ as the set of operations that requires no propagation\; \label{algo:ch4_impact_propagation_line_initialise_O0set}

    \For{$\mathcal{O} \in \mathbb{O}_{0}$}{
    
        Construct the isolation variable $\underline{Z}_{\mathcal{O}}^{\ell} = \textsc{IsoFunc} (\boldsymbol{I}_{t_{s}(\mathcal{O}): t_{e}(\mathcal{O}) }^{\mathrm{En}, \ell}; \ell)$; \label{algo:ch4_impact_propagation_line_isofunc_o0set}
    
        Construct the impact variable $Z_{\mathcal{O}}^{\ell} \coloneqq \textsc{PropFunc} ( \underline{Z}_{\mathcal{O}}^{\ell}, \varnothing; \ell) $\; \label{algo:ch4_impact_propagation_line_propfunc_o0set}
        
        } \label{algo:ch4_impact_propagation_line_o0set_end}

    Initialise $\mathbb{O}_{1} \coloneqq \mathbb{O} \setminus \mathbb{O}_{0}$ as the set of operations that requires propagation\; \label{algo:ch4_impact_propagation_line_initialise_o1set}
    
    \While{$\mathbb{O}_{1} \neq \varnothing$\label{algo:ch4_impact_propagation_line_start_while_condition}}{
        Assign $\mathbb{O}_{2} \coloneqq \{\mathcal{O} \in \mathbb{O}_{1} : \mathrm{Pa}^{\mathcal{G}} (\mathcal{O}) \subseteq \mathbb{O}_{0} \}$ as the set of candidate operations waiting for propagation\; \label{algo:ch4_impact_propagation_line_mainloop_develop_candidate_set}
        
        Select an operation $\mathcal{O}$ from $\mathbb{O}_{2}$. If $\left| \mathbb{O}_{2} \right| > 1$, the specific choice of $\mathcal{O}$ does not matter\; \label{algo:ch4_impact_propagation_line_select_candidate}

        Construct the isolation variable $\underline{Z}_{\mathcal{O}}^{\ell} = \textsc{IsoFunc} (\boldsymbol{I}_{t_{s}(\mathcal{O}): t_{e}(\mathcal{O}) }^{\mathrm{En}, \ell}; \ell)$\; \label{algo:ch4_impact_propagation_line_mainloop_isofunc}
        
        Construct the impact variable $Z_{\mathcal{O}}^{\ell} \coloneqq \textsc{PropFunc}( \underline{Z}_{\mathcal{O}}^{\ell}, \boldsymbol{Z}_{\mathrm{Pa}^{\mathcal{G} }(\mathcal{O})}^{\ell}; \ell)$, where the set of elements in $\boldsymbol{Z}_{\mathrm{Pa}^{\mathcal{G}}(\mathcal{O})}^{\ell}$ equals $\{Z_{\mathcal{O}}^{\ell}: \mathcal{O} \in \mathrm{Pa}^{\mathcal{G}} (\mathcal{O}) \}$\; \label{algo:ch4_impact_propagation_line_mainloop_propfunc}
        
        Update $\mathbb{O}_{0} \coloneqq \mathbb{O}_{0} \cup \{ \mathcal{O} \} $\; 
        \label{algo:ch4_impact_propagation_line_mainloop_update_O0}
        
        Update $\mathbb{O}_{1} \coloneqq  \mathbb{O}_{1} \setminus \{ \mathcal{O} \}$ \; \label{algo:ch4_impact_propagation_line_mainloop_update_O1}
        } \label{algo:ch4_impact_propagation_line_end_while_condition}
    } 
    \Return{$\boldsymbol{Z}^{\ell}_{\mathbb{O}}$}
}
\end{algorithm}
\vspace{1cm}

Figure \ref{fig:ch4_illustration_impact_propagation} illustrates the procedure of \textsc{ImpactPropagation} from a scheduling perspective. The overall layout of Figure \ref{fig:ch4_illustration_impact_propagation} resembles a Gantt chart, where the $x$-axis represents time steps and the $y$-axis represents machines. Distributed on each machine are operations (denoted by coloured boxes) and mutually independent information variables (denoted by round circles). The inter-operation dependencies are represented by grey arrows. For an operation, let us say $\mathcal{O}$, the isolation function (denoted by dashed arrows) accepts the information variables over $\mathcal{O}$ as input and returns the isolation variable (that is, the isolated impact) as output. After that, the propagation function (denoted by solid arrows) accepts the impact variables of parents of $\mathcal{O}$ and the isolation variable of $\mathcal{O}$, and returns the impact variable for $\mathcal{O}$. Note how the relationships between solid arrows and grey arrows reflect the propagation procedure.

\textbf{\begin{figure}[H]
\includegraphics[width=0.85\textwidth]{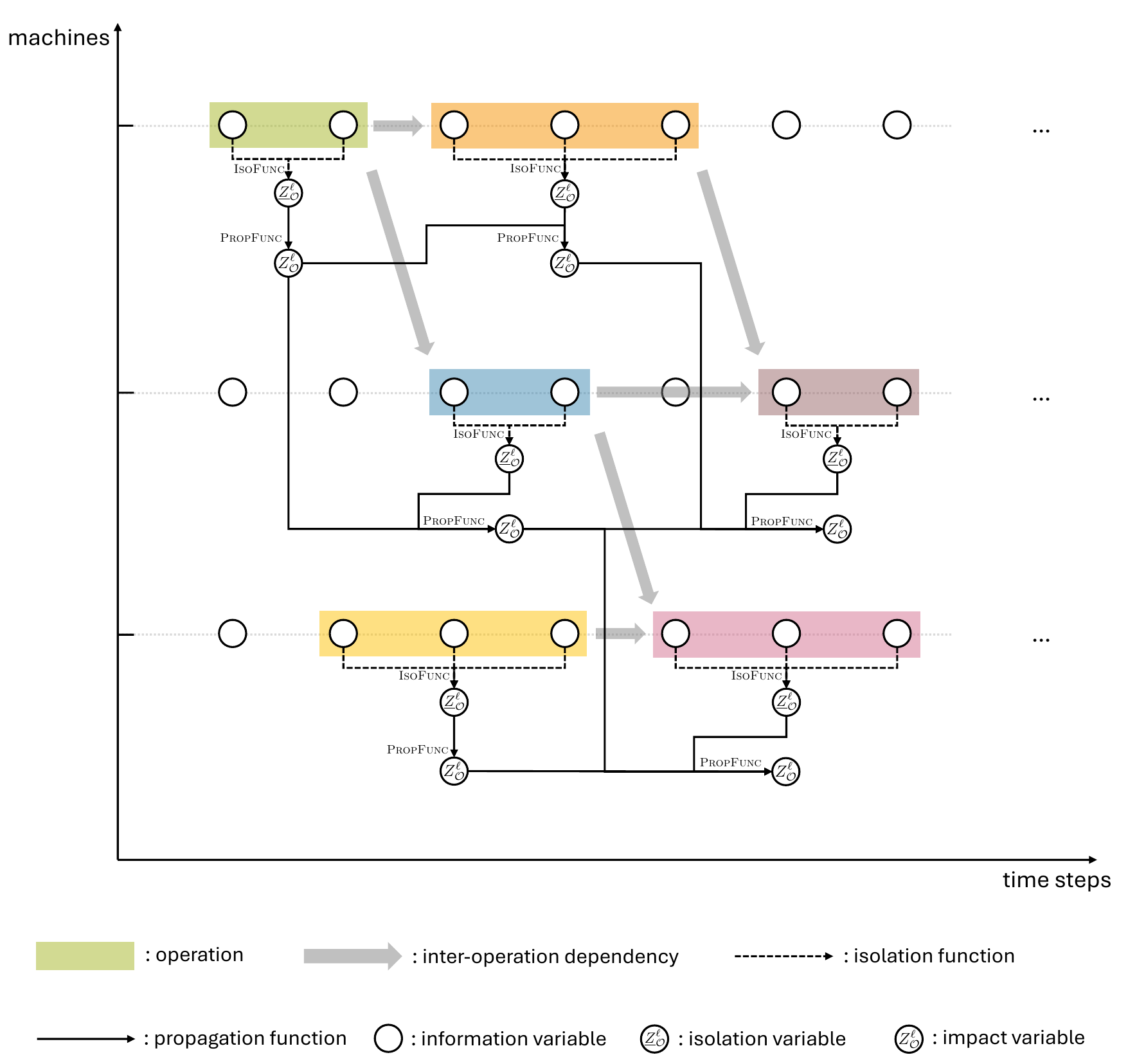}
\centering
\caption{
    Illustration of the \textsc{ImpactPropagation} algorithm
}
\label{fig:ch4_illustration_impact_propagation}
\end{figure}}

Note that, for any well-defined DAG structure $\mathcal{G}^{\mathrm{Op}} = (\mathbb{O}, \mathbb{A})$, we can always visit the entire operation set by following the procedure of \textsc{ImpactPropagation}. Furthermore, the specific choice of candidate operation (as presented in Line \ref{algo:ch4_impact_propagation_line_select_candidate} of Algorithm \ref{algo:ch4_impact_propagation}) does not affect the final output. To show these facts, assume that $\mathcal{O}_{1}, \mathcal{O}_{2}, \cdots, \mathcal{O}_{r}, \mathcal{O}_{r+1}, \cdots, \mathcal{O}_{n}$ forms a topological ordering of the operation set $\mathbb{O}$ (to recall the definition and existence of a topological ordering for a DAG, see Subsection \ref{subsec:ch4_prerequisite_dag}), where the subset $\{ \mathcal{O}_{1}, \mathcal{O}_{2} \cdots, \mathcal{O}_{r} \}$ represents root operations. Then, we can apply the isolation-and-propagation procedure to the entire operation set by following this ordering. This is because, as presented in Lines \ref{algo:ch4_impact_propagation_line_initialise_O0set}--\ref{algo:ch4_impact_propagation_line_o0set_end}, we will first visit root operations $\{ \mathcal{O}_{1}, \cdots, \mathcal{O}_{r} \}$. After that, proceeding with the ordering, we will enter the main loop of the algorithm. Assume that within the main loop, we arrive at an operation $\mathcal{O}_{j}$, where $r < j \leq n$. Then, we must have $\mathrm{Pa}^{\mathcal{G}^{\mathrm{Op}}} (\mathcal{O}_{j}) \subseteq \{ \mathcal{O}_{1}, \cdots, \mathcal{O}_{r}, \mathcal{O}_{r+1}, \cdots, \mathcal{O}_{j-1} \}$. In other words, having $\mathcal{O}_{1}, \cdots, \mathcal{O}_{j-1}$ already visited, selecting $\mathcal{O}_{j}$ is always a valid choice in Line \ref{algo:ch4_impact_propagation_line_select_candidate}. This is because, if $\mathrm{Pa}^{\mathcal{G}^{\mathrm{Op}}} (\mathcal{O}_{j}) \not\subseteq \{ \mathcal{O}_{1}, \cdots, \mathcal{O}_{r}, \mathcal{O}_{r+1}, \cdots, \mathcal{O}_{j-1} \}$, then there exists an operation $\mathcal{O}_{i}$ such that $\mathcal{O}_{i} \in \mathrm{Pa}^{\mathcal{G}^{\mathrm{Op}}} (\mathcal{O}_{j})$ but $\mathcal{O}_{i}$ appears after $\mathcal{O}_{j}$ in the topological ordering. However, this contradicts with the definition of topological ordering. Therefore, $\mathcal{O}_{j}$ is a valid candidate, and by induction, any topological ordering of the operation set is a valid order of traversing. As a result, we can always traverse the operation set by following a topological ordering of operations, while varying the choice of candidate operations just corresponds to following another valid topological ordering.

Also, the \textsc{ImpactPropagation} algorithm can be applied not only to multipurpose batch scheduling, but also to other types of scheduling problems. When adapting to a new production workflow, in principle, we need only adjust the structure of the input DAG. Furthermore, \textsc{ImpactPropagation} can adapt to different disturbance settings through redesigning the isolation function and propagation function. The principle is to ensure that these functions capture the characteristics of disturbance propagation. Regarding our specific context, we present the design of the DAG structure through the \textsc{StructureLearning} algorithm later in Subsection \ref{subsec:ch4_methodology_bn}, while present the designs of \textsc{IsoFunc} and \textsc{PropFunc} next. By the end of this subsection, we also briefly discuss how to design \textsc{IsoFunc} and \textsc{PropFunc} for disturbances beyond our settings.

\textbf{Machine breakdown}. Let $\ell_{1}$ denote the machine breakdown disturbance. Let also $\boldsymbol{I}^{\mathrm{En}, \ell_{1}}_{t_s (\mathcal{O}): t_e (\mathcal{O})}$ denote the vector whose set of elements equals $\{ U_{j, t} \in \boldsymbol{I}_{t_{0}: t_{N}}: j = j(\mathcal{O}) \land t_{s} (\mathcal{O}) \leq t \leq t_{e} (\mathcal{O}) \}$. Then, to setup the isolated impact level for operation $\mathcal{O}$ regarding $\ell_{1}$, we design \textsc{IsoFunc} as Equation \ref{eq:ch4_iso_func_machine_breakdown}.
\begin{subequations}
\begin{align} \label{eq:ch4_iso_func_machine_breakdown}
\begin{split}
    \underline{Z}_{\mathcal{O}}^{\ell_{1}} &= \textsc{IsoFunc} (\boldsymbol{I}^{\mathrm{En}, \ell_{1}}_{t_{s}(\mathcal{O}): t_{e}(\mathcal{O})}; \ell_{1}) \\
    &= \min \{1,  \sum_{t = t_{s} (\mathcal{O}) + 1}^{t_{e} (\mathcal{O})} U_{j(\mathcal{O}), t} \}
\end{split}
\end{align}
Recall that $U_{j, t} = 1$ if machine $j$ is unavailable during $[t-1, t)$ due to a breakdown. Then, if over the duration of $\mathcal{O}$, the machine $j$ is unavailable for at least one time interval (that is, $\sum_{t = t_{s} (\mathcal{O}) + 1}^{t_{e} (\mathcal{O})} U_{j(\mathcal{O}), t} \geq 1$), we have $\underline{Z}_{\mathcal{O}}^{\ell} = 1$. Otherwise $\sum_{t = t_{s} (\mathcal{O}) + 1}^{t_{e} (\mathcal{O})} U_{j(\mathcal{O}), t} = 0$, which means that the machine $j$ is not impacted by breakdowns across the duration of $\mathcal{O}$, we have $\underline{Z}_{\mathcal{O}}^{\ell_{1}} = 0$. As a result, the isolation variable $\underline{Z}_{\mathcal{O}}^{\ell}$ indicates whether the operation $\mathcal{O}$ is impacted by breakdowns over the duration $[t_s (\mathcal{O}), t_e (\mathcal{O})]$.

The propagation function for machine breakdowns is presented in Equation \ref{eq:ch4_prop_func_machine_breakdown}.
\begin{align} \label{eq:ch4_prop_func_machine_breakdown}
\begin{split}
    {Z}_{\mathcal{O}}^{\ell_{1}} &= \textsc{PropFunc}(\underline{Z}_{\mathcal{O}}^{\ell_{1}}, \boldsymbol{Z}_{\mathrm{Pa}^{\mathcal{G}^{\mathrm{Op}}} (\mathcal{O})}^{\ell_{1}}; \ell_{1}) \\
    &= \max \left\{ \underline{Z}_{\mathcal{O}}^{\ell_{1}}, \max_{\substack{\mathcal{O}' \in \mathrm{Pa}^{\mathcal{G}^{\mathrm{Op}}}(\mathcal{O}) \\ \mathbb{K}^{+} (\mathcal{O}') \cap \mathbb{K}^{-} (\mathcal{O}) \neq \varnothing } } Z_{\mathcal{O}'}^{\ell_{1}}  \right\}
\end{split}
\end{align}
As we mentioned before, within a local operation family, the impact on the child $\mathcal{O}$ consists of two parts. The first part, which is described by the isolated impact $\underline{Z}_{\mathcal{O}}^{\ell_{1}}$, represents the impact originating from the child $\mathcal{O}$ itself. The second part, which is described by the second term within the curly bracket, represents the impact propagated from parents. Specifically, consider a parent operation $\mathcal{O}' \in \mathrm{Pa}^{\mathcal{G}^{\mathrm{Op}}} (\mathcal{O})$ such that at least one input material of $\mathcal{O}$ is produced by $\mathcal{O}'$ (that is, $\mathbb{K}^{+} (\mathcal{O}') \cap \mathbb{K}^{-} (\mathcal{O}) \neq \varnothing$). If the parent $\mathcal{O}'$ is impacted by machine breakdowns (that is, $Z_{\mathcal{O}'}^{\ell_{1}} = 1$), the input material for $\mathcal{O}$ will not be produced by $\mathcal{O}'$ as scheduled. Therefore, we must have $Z_{\mathcal{O}}^{\ell_{1}} = 1$. The final impact on $\mathcal{O}$ equals the maximum value over its isolated impact and the impacts propagated from parents.
\end{subequations}

\textbf{Processing time variation}. Let $\ell_{2}$ denote the processing time variation. Let also $\boldsymbol{I}^{\mathrm{En}, \ell_{2}}_{t_{s} (\mathcal{O}): t_{e} (\mathcal{O})}$ denote the vector whose set of elements equals $\{ V_{i, j, n, t} \in \boldsymbol{I}_{t_{0}: t_{N}}: i = i (\mathcal{O}) \land j = j(\mathcal{O}) \land t_{s}(\mathcal{O}) \leq t \leq t_{e}(\mathcal{O}) \}$. For this type of disturbances, we design the isolation function as,
\begin{subequations}
\begin{align} \label{eq:ch4_iso_func_processing_time_variation}
\begin{split}
    \underline{Z}_{\mathcal{O}}^{\ell_{2}} &= \textsc{IsoFunc} (\boldsymbol{I}_{t_{s} (\mathcal{O}): t_{e} (\mathcal{O})}^{\mathrm{En}, \ell_{2}}; \ell_{2} )  \\
    &= \lceil \tau_{i (\mathcal{O}), j(\mathcal{O}) } \cdot v_{i(\mathcal{O}), j(\mathcal{O}), n_{0}, t_s(\mathcal{O})} \rceil - \tau_{i (\mathcal{O}), j (\mathcal{O})}
\end{split}
\end{align}
where the first term $\lceil \tau_{i (\mathcal{O}), j(\mathcal{O}) } \cdot v_{i(\mathcal{O}), j(\mathcal{O}), n_{0}, t_s(\mathcal{O})} \rceil$ represents the adjusted processing time, and the second term $\tau_{i (\mathcal{O}), j (\mathcal{O})}$ represents the nominal processing time. As a result, the isolated impact $\underline{Z}_{\mathcal{O}}^{\ell_{2}}$ represents the number of time periods that the operation $\mathcal{O}$ delays due to the processing time variation disturbance. 
\begin{align} \label{eq:ch4_prop_func_processing_time_variation}
\begin{split}
    {Z}_{\mathcal{O}}^{\ell_{2}} &= \textsc{PropFunc}(\underline{Z}_{\mathcal{O}}^{\ell_{2}}, \boldsymbol{Z}_{\mathrm{Pa}^{\mathcal{G}^{\mathrm{Op}}} (\mathcal{O})}^{\ell_{2}}; \ell_{2}) \\
    &= \max \left\{ \underline{Z}_{\mathcal{O}}^{\ell_{2}}, \max_{\mathcal{O}' \in \mathrm{Pa}^{\mathcal{G}^{\mathrm{Op}}} (\mathcal{O}) } \max \left\{ 0, \tau_{i(\mathcal{O}'), j(\mathcal{O}')} + Z_{\mathcal{O'}}^{\ell_{2}} - \bigl(t_{s}(\mathcal{O}) - t_{s}(\mathcal{O}') \bigr) \right\}  \right\}
\end{split}
\end{align}
\end{subequations}
Based on Equation \ref{eq:ch4_iso_func_processing_time_variation}, we design the \textsc{PropFunc} for $\ell_{2}$ as Equation \ref{eq:ch4_prop_func_processing_time_variation}. Although Equation \ref{eq:ch4_prop_func_processing_time_variation} seems cumbersome, it is intuitive to explain with the illustration of Figure \ref{fig:ch4_illustrative_example_propfunc_proc_time_variation}. In the case (a), where the delay in the parent $\mathcal{O}'$ does not suffice to affect the start time of the child $\mathcal{O}$ (that is, $t_{s} (\mathcal{O}) - t_{s} (\mathcal{O}') \geq \tau_{i (\mathcal{O}'), j (\mathcal{O}')} + Z_{\mathcal{O}'}^{\ell_{2}}$), the innermost $\max$ operator enforces the impact from this parent to $0$. Otherwise, in the case (b), where a delay in the parent $\mathcal{O}'$ forces the child $\mathcal{O}$ to be postponed (that is, $t_{s} (\mathcal{O}) - t_{s} (\mathcal{O}') < \tau_{i (\mathcal{O}'), j (\mathcal{O}')} + Z_{\mathcal{O}'}^{\ell_{2}}$), the impact propagated from this parent equals the delayed time. The middle $\max$ operator means that the delay in the child $\mathcal{O}$ depends on all of its parents. The outermost $\max$ operator determines the final impact by combining the isolated impact originating from $\mathcal{O}$ itself and the maximum impact from its parents.
\textbf{\begin{figure}[H]
\includegraphics[width=0.85\textwidth]{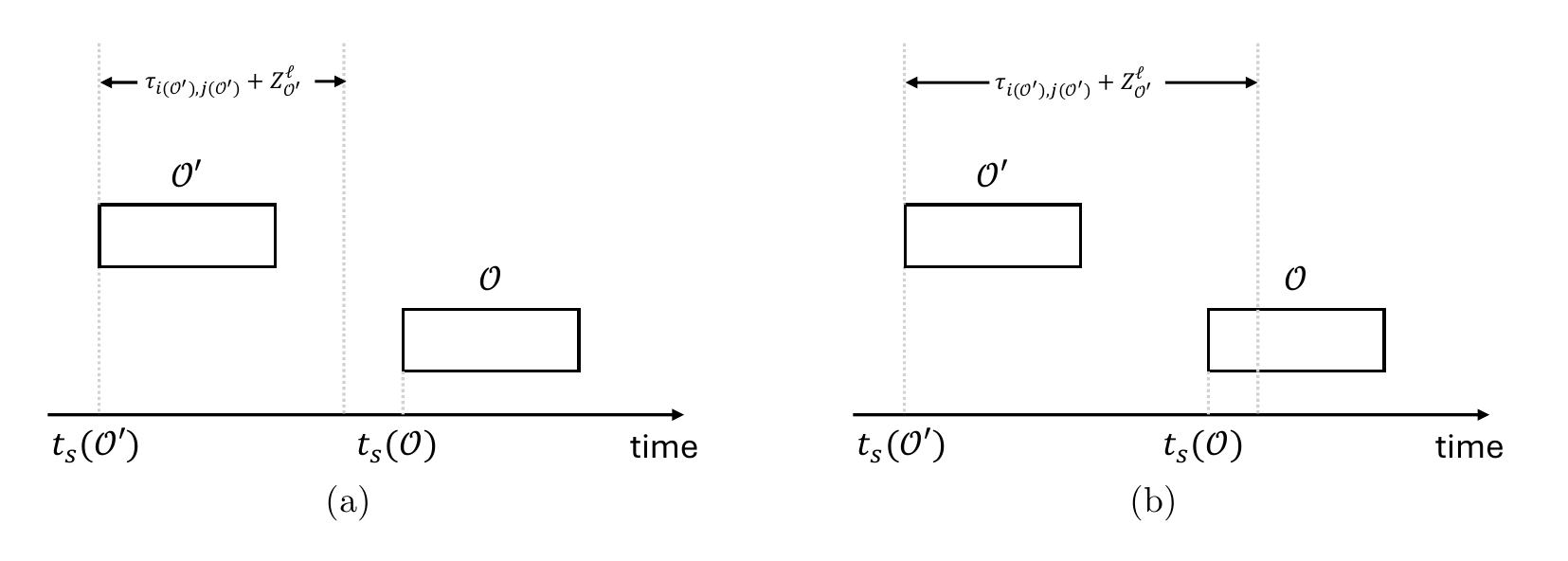}
\centering
\caption{
    Illustration of the propagation function for processing time variations
}
\label{fig:ch4_illustrative_example_propfunc_proc_time_variation}
\end{figure}}

\textbf{Yield loss}. Let $\ell_{3}$ denote the yield loss disturbance, and let $\boldsymbol{I}_{t_s(\mathcal{O}): t_e(\mathcal{O})}^{\mathrm{En}, \ell_{3}}$ denote the vector whose set of elements equals $\{ W_{i, j, n, t} \in \boldsymbol{I}_{t_0: t_{N}}: i = i(\mathcal{O}) \land j = j(\mathcal{O}) \land t_{s} (\mathcal{O}) \leq t \leq t_{e} (\mathcal{O}) \}$. Then, we design the \textsc{IsoFunc} for $\ell_{3}$ as
\begin{subequations}
\begin{align} \label{eq:ch4_iso_func_yield_loss}
\begin{split}
    \underline{Z}_{\mathcal{O}}^{\ell_{3}} &= \textsc{IsoFunc} (\boldsymbol{I}_{t_s(\mathcal{O}): t_e(\mathcal{O})}^{\mathrm{En}, \ell_{3}}; \ell_{3} )  \\
    &= \lceil \left( 1 - W_{i(\mathcal{O}), j(\mathcal{O}), n_{0}, t_{s} (\mathcal{O})} \right) \times 100 \rceil
\end{split}
\end{align}
where the realised value of $W_{i, j, n, t}$ represents the remaining percentage of yield after loss. As a result, the isolated impact can be interpreted as the rounded-up percentage of yield loss. 
\begin{align} \label{eq:ch4_prop_func_yield_loss}
\begin{split}
    {Z}_{\mathcal{O}}^{\ell_{3}} &= \textsc{PropFunc}(\underline{Z}_{\mathcal{O}}^{\ell_{3}}, \boldsymbol{Z}_{\mathrm{Pa}^{\mathcal{G}^{\mathrm{Op}}} (\mathcal{O})}^{\ell_{3}}; \ell_{3}) \\
    &= \max \left\{ \underline{Z}_{\mathcal{O}}^{\ell_{3}}, \max_{\substack{\mathcal{O}' \in \mathrm{Pa}^{\mathcal{G}^{\mathrm{Op}}}(\mathcal{O}) \\ \mathbb{K}^{+} (\mathcal{O}') \cap \mathbb{K}^{-} (\mathcal{O}) \neq \varnothing } } Z_{\mathcal{O}'}^{\ell_{3}}  \right\}
\end{split}
\end{align}
\end{subequations}
The propagation function for yield losses is presented in Equation \ref{eq:ch4_prop_func_yield_loss}, which is similar to Equation \ref{eq:ch4_prop_func_machine_breakdown}. The final impact on the child $\mathcal{O}$ equals the maximum value over (1) the isolated yield loss from $\mathcal{O}$ itself, and (2) the yield loss from the parent operation $\mathcal{O}'$ such that $\mathcal{O}'$ produces at least one input material for $\mathcal{O}$.

\textbf{Extension to other disturbances}. As demonstrated by the above three examples, we can flexibly design \textsc{IsoFunc} and \textsc{PropFunc} to reflect the characteristics of disturbances. However, the below rules must be followed.
\begin{itemize}
    \item For \textsc{IsoFunc}, which evaluates the isolated impact on an operation $\mathcal{O}$, its argument can only involve disturbance variables related to $\mathcal{O}$ itself.
    \item For \textsc{PropFunc}, which quantifies the propagation effect within the local family $\{ \mathcal{O} \} \cup \mathrm{Pa}^{\mathcal{G}^{\mathrm{Op}}} (\mathcal{O})$, its arguments can only involve the isolated impact of $\mathcal{O}$ and the impact variables of parents of $\mathcal{O}$.
\end{itemize}
With these rules, we can extend the \textsc{ImpactPropagation} algorithm to other disturbance settings. To motivate readers, below we summarise some commonly seen disturbances and briefly introduce how to design \textsc{IsoFunc} and \textsc{PropFunc} for them.
\begin{itemize}
    \item Machine-related disturbances. In industrial settings, machine-related disturbances may involve unexpected machine breakdowns, performance degradations, planned maintenance, and so on. Typically, during the time window of these disturbances, no operations can be processed. Therefore, the isolation function \textsc{IsoFunc} can be designed as a binary function, which returns $1$ if the operation overlaps with the disturbance window and returns $0$ otherwise. Regarding the propagation effect, typically, a machine-related disturbance only affects operations subsequent in processing sequences, but does not affect operations scheduled on the same machine and starting after the disturbance window. To illustrate why, let $j$ denote the the machine on which the disturbance occurs, and let $[t_a, t_b]$ denote the disturbance window. Then, the operations starting later than $t_b$ on $j$ (that is, $\{\mathcal{O}: j(\mathcal{O}) = j \land t_{s}(\mathcal{O}) \geq t_b \}$) will not be affected. This is because, even if we cancel all the operations within the disturbance window $[t_a, t_b]$, the operations initiating later than $t_b$ on $j$ can still be executed once $j$ has been repaired. In contrast, the operations subsequent in a processing sequence may be affected. For example, consider the job shop scheduling, where we denote the $k$-th operation of job $i$ by $\mathcal{O}_{i, k}$. Let $j$ be the machine compatible with $\mathcal{O}_{i, k}$, and let us say $j$ is unavailable during the scheduled duration of $\mathcal{O}_{i, k}$. Then, subsequent operations of job $i$ (that is, $\mathcal{O}_{i, k+1}, \mathcal{O}_{i, k+2}, \cdots$) must be postponed because they rely on the completion of $\mathcal{O}_{i, k}$. As a result, the propagation function \textsc{PropFunc} need only consider upstream operations within the same processing sequence, but need not consider precedent operations on the same machine.
    \item Operation-related disturbances. Operation-related disturbances may include processing delays, out-of-specs, quality defects, and so on. While some of them may fail an operation, others may delay the completion time. For situations where an operation fails, the isolated impact and propagation effect can be designed similarly to those presented in machine-related disturbances, because the fail in an operation essentially equals to remove the operation from the schedule. For operation delays, the isolated impact can be designed as a quantity related to the actual time that an operation delays (as we did in Equation \ref{eq:ch4_iso_func_processing_time_variation}), while the propagation effect should consider both processing sequences and timings on the same machine. This is because, the extended duration of $\mathcal{O}$ may compete with the start times of later operations on $j(\mathcal{O})$. In other situations, the propagation effect of operation-related disturbances through the processing sequence also need to consider the inventory level of materials between operations. For example, assume an extreme case that in batch scheduling, the inventory levels of all materials are sufficient for initiating a large number of operations. Then, even if some midstream operations fail, downstream operations can nonetheless use the inventory of intermediate materials to initiate. 
    \item Resource-related disturbances. In practice, some production processes also involve the use of shared resources. For example, a chemical plant may use steams to heat reactants or in an air unit, the process requires air compressors to increase the pressure. In principle, a shared resource can be viewed as a special type of machine. 
    \item Inventory-related disturbances. In practice, inventory-related disturbances may involve mis-recording of inventory levels, inventory leakage, and so on. As an intermediate material, inventory leakage can affect operations whose input materials depends on that inventory. Therefore, the propagation effect of inventory-related disturbances can be viewed as operation-related disturbances, with the difference that the affected operations are the unions of operations depending on the leaked material.
    % \item Material-transfer-related disturbances.
\end{itemize}

For \textsc{IsoFunc}, which evaluates the isolated impact on an operation $\mathcal{O}$, its argument can only involve disturbance variables related to $\mathcal{O}$ itself. On the other hand, for \textsc{PropFunc}, which quantifies the propagation effect within the local family of $\mathcal{O}$, its arguments can only involve the isolated impact of $\mathcal{O}$ and the impact variables of parents of $\mathcal{O}$. Later in Section \ref{sec:ch4_discussions}, we will prove that, by following these rules, the BN with structure $\mathcal{G}_{\ell}^{\mathrm{BN}} = (\boldsymbol{Z}_{\mathbb{O}}^{\ell}, \mathbb{A})$, where the vector $\boldsymbol{Z}_{\mathbb{O}}^{\ell}$ represents impact variables constructed by \textsc{ImpactPropagation}, will exactly encodes the disturbance distribution $\mathsf{P} (\boldsymbol{I}_{t_{0}: t_{N}})$. This proof will establish the theoretical foundations for generalising \textsc{ImpactPropagation} to other problems. For now, we continue to introduce other algorithm modules within the Bayesian dynamic scheduling framework.

\subsection{Bayesian Network algorithms} \label{subsec:ch4_methodology_bn}

In the previous Subsection \ref{subsec:ch4_methodology_impact_propagation}, we present how to construct the impact variable $Z_{\mathcal{O}}^{\ell}$ using the \textsc{ImpactPropagation} algorithm. From a modelling perspective, if we were to develop a BN $\mathcal{B}_{\ell}$ with the structure $\mathcal{G}^{\mathrm{BN}}_{\ell} = (\boldsymbol{Z}_{\mathbb{O}}^{\ell}, \mathbb{A})$, then essentially, Subsection \ref{subsec:ch4_methodology_impact_propagation} establishes the semantic of each single node in $\mathcal{G}_{\ell}^{\mathrm{BN}}$. That said, we remain to develop a complete BN that encodes the impacts on the entire set of operations in the schedule. This step can be broken down into two steps: \textit{structure learning} and \textit{parameter learning}.  While the structure learning aims to develop the graph structure of the BN, the parameter learning determines the local Conditional Probability Distribution (CPD) of each impact variable. In this subsection, we present these two learning procedures. Also, to guide the rescheduling strategy, we need to perform inference algorithms on the learned BN to update the posterior distribution of impact variables. In this subsection, we will also briefly introduce the inference algorithm used in our method.

\subsubsection*{Structure learning} 
The \textsc{StructureLearning} algorithm aims to develop the BN structure $\mathcal{G}^{\mathrm{BN}}_{\ell}$ for disturbance type $\ell$. Recall that within our Bayesian dynamic scheduling framework (Algorithm \ref{algo:ch4_bds}), each BN structure $\mathcal{G}^{\mathrm{BN}}_{\ell}$ is paired with a DAG $\mathcal{G}^{\mathrm{Op}}_{\ell}$, which mirrors the structure of $\mathcal{G}^{\mathrm{BN}}_{\ell}$ and replace each impact variable $Z_{\mathcal{O}}^{\ell}$ with its corresponding operation $\mathcal{O}$. In other words, the graphs $\mathcal{G}^{\mathrm{Op}}_{\ell}$ and $\mathcal{G}^{\mathrm{BN}}_{\ell}$ share the same structure while differ only in their node sets: nodes in $\mathcal{G}^{\mathrm{Op}}_{\ell}$ represent operations, while nodes in $\mathcal{G}^{\mathrm{BN}}_{\ell}$ represent impact variables. Therefore, developing the structure of $\mathcal{G}^{\mathrm{BN}}_{\ell}$ also implies developing the structure of $\mathcal{G}^{\mathrm{Op}}_{\ell}$. As a result, \textsc{StructureLearning} can be viewed as a procedure that develops a graph structure to describe how each operation depends on another. In the context of multipurpose batch scheduling, we identify two groups of inter-operation dependencies. The first group is called \textit{temporal dependence}, which means that for two consecutive operations sharing the same machine, the latter operation depends on the previous one. In practice, this dependence can occur when operations delay. For example, consider two operations, let us say $\mathcal{O}_{1}$ and $\mathcal{O}_{2}$. Suppose that $\mathcal{O}_{1}$ and $\mathcal{O}_{2}$ share the same machine and $t_{e} (\mathcal{O}_{1}) = t_{s} (\mathcal{O}_{2}) = t_{10}$. Then, if $\mathcal{O}_{1}$ delays by one time period, $\mathcal{O}_{2}$ will have to be postponed or revised. As a result, we assert that the operation $\mathcal{O}_{2}$ depends on $\mathcal{O}_{1}$. Generally, for an operation set $\mathbb{O}$, we add the following arcs to represent temporal dependencies.
\begin{align} \label{eq:ch4_temporal_arcs}
    \mathcal{O}' \rightarrow \mathcal{O}, \quad \forall \mathcal{O} \in \mathbb{O}, \mathcal{O}' \in \underset{\substack{ j(\mathcal{O}') = j(\mathcal{O}) \\ t_{e} (\mathcal{O}') \leq t_{s} (\mathcal{O}) } }{\mathrm{arg\,max}} t_{e} (\mathcal{O}')
\end{align}
In Equation \ref{eq:ch4_temporal_arcs}, operations $\mathcal{O}$ and $\mathcal{O}'$ represent the child and the parent, respectively. The parent $\mathcal{O}'$ is selected such that (1) it shares the same machine with $\mathcal{O}$ (that is, $j(\mathcal{O}') = j(\mathcal{O})$), (2) it completes before the start time of $\mathcal{O}$ (that is, $t_{e} (\mathcal{O}') \leq t_{s} (\mathcal{O})$), and (3) among all the operations satisfying conditions (1) and (2), $\mathcal{O}'$ has the latest completion time. 

The second group is \textit{spatial dependence}, which means that for two consecutive operations within the same processing sequence, the downstream operation depends on the upstream operation. In multipurpose batch scheduling, this dependence means that initiating the downstream operation requires at least one output material from the upstream operation. In industrial settings, the spatial dependence may correspond to yield-related disturbances, such as machine breakdowns, quality defects, and contaminations. Mathematically, we represent the spatial dependence using the following arcs.
\begin{align} \label{eq:ch4_spatial_arcs}
    \mathcal{O}' \rightarrow \mathcal{O}, \quad \forall \mathcal{O} \in \mathbb{O}, \mathcal{O}' \in \underset{\substack{ \mathbb{K}^{+} (\mathcal{O}') \cap \mathbb{K}^{-} (\mathcal{O}') \neq \varnothing \\ t_{e} (\mathcal{O}') \leq t_{s} (\mathcal{O}) } }{\mathrm{arg\,max}} t_{e} (\mathcal{O}').
\end{align}
As presented in Equation \ref{eq:ch4_spatial_arcs}, the spatial parent $\mathcal{O}'$ is selected such that (1) it produces at least one input material for $\mathcal{O}$ (that is, $\mathbb{K}^{+} (\mathcal{O}') \cap \mathbb{K}^{-} (\mathcal{O}') \neq \varnothing$), (2) it completes before the start time of $\mathcal{O}$, and (3) among all the operations satisfying conditions (1) and (2), $\mathcal{O}'$ has the latest completion time. 

The \textsc{StructureLearning} algorithm (that is, Algorithm \ref{algo:ch4_structure_learning}) constructs these two groups of dependencies over the impact variables $\boldsymbol{Z}_{\mathbb{O}}^{\ell}$. The binary indicators $\chi_{\ell}^{\mathrm{Temp}}$ and $\chi_{\ell}^{\mathrm{Spat}}$, which are provided as arguments, represent whether the disturbance type $\ell$ can propagate across the temporal and spatial dimensions, respectively. Within the algorithm, we simply begin with a DAG with no arcs (Line \ref{algo:ch4_structure_learning_line_initialise_empty_dag}) Then, for each operation $\mathcal{O}$ on machine $j$, we add temporal arcs (Lines \ref{algo:ch4_structure_learning_line_start_of_add_temp_arcs}--\ref{algo:ch4_structure_learning_line_end_of_add_temp_arcs}) and spatial arcs (Lines \ref{algo:ch4_structure_learning_line_start_of_add_spat_arcs}--\ref{algo:ch4_structure_learning_line_end_of_add_spat_arcs}) as presented in Equations \ref{eq:ch4_temporal_arcs} and \ref{eq:ch4_spatial_arcs}, respectively.

\vspace{1cm}
\RestyleAlgo{ruled}
\SetKwComment{Comment}{/*}{*/}
\begin{algorithm}[H]
\setstretch{1.25}
\SetKwProg{Fn}{Function}{}{}
\caption{Construct BN structure for impact variables} \label{algo:ch4_structure_learning}
\Fn{\textup{\textsc{StructureLearning} ( \newline
    $\boldsymbol{Z}_{\mathbb{O}}^{\ell}$, \quad \texttt{// Impact variables over the operation set $\mathbb{O}$ regarding the $\ell$-th type of endogenous disturbances} \newline
    $\chi_{\ell}^{\mathrm{Temp}}$ \quad \texttt{// Indicator for temporal dependence} \newline
    $\chi_{\ell}^{\mathrm{Spat}}$ \quad \texttt{// Indicator for spatial dependence} \newline
    )}}{
    \Begin{
        Initialise a DAG $\mathcal{G} \coloneqq (\boldsymbol{Z}_{\mathbb{O}}^{\ell}, \mathbb{A})$, where $\mathbb{A} \coloneqq \varnothing$\; \label{algo:ch4_structure_learning_line_initialise_empty_dag}
        
        \For{\textup{each machine} $j \in \mathbb{J}$}{
        
            Assign $\mathbb{O}_{j} \coloneqq \{ \mathcal{O} \in \mathbb{O} : \text{$\mathcal{O}$ is executed on machine $j$} \}$\;
        
            \For{\textup{each }$\mathcal{O} \in \mathbb{O}_{j}$}{
            
                \If{$\chi_{\ell}^{\mathrm{Temp}} = 1$ \label{algo:ch4_structure_learning_line_start_of_add_temp_arcs}}{
                
                    Add the temporal arc $\mathbb{A} \coloneqq \mathbb{A} \cup \{ Z_{\mathcal{O}'}^{\ell} \rightarrow Z_{\mathcal{O}}^{\ell} \} $, where $\mathcal{O}' \in \underset{\substack{ \mathcal{O}' \in \mathbb{O} \\ j(\mathcal{O}') = j(\mathcal{O}) \\ t_{e} (\mathcal{O}') \leq t_{s} (\mathcal{O}) } }{\mathrm{arg\,max}} t_{e} (\mathcal{O}')$
                    
                } \label{algo:ch4_structure_learning_line_end_of_add_temp_arcs}
                \If{$\chi_{\ell}^{\mathrm{Spat}} = 1$ \label{algo:ch4_structure_learning_line_start_of_add_spat_arcs}}{
                    
                    Add the spatial arc $\mathbb{A} \coloneqq \mathbb{A} \cup \{ Z_{\mathcal{O}'}^{\ell} \rightarrow Z_{\mathcal{O}}^{\ell} \} $, where $\mathcal{O}' \in \underset{\substack{ \mathcal{O}' \in \mathbb{O} \\ \mathbb{K}^{+} (\mathcal{O}') \cap \mathbb{K}^{-} (\mathcal{O}') \neq \varnothing \\ t_{e} (\mathcal{O}') \leq t_{s} (\mathcal{O}) } }{\mathrm{arg\,max}} t_{e} (\mathcal{O}')$\;
                    
                } \label{algo:ch4_structure_learning_line_end_of_add_spat_arcs}
            }
        }
    }
    \Return{\textup{$\mathcal{G}$}}
}
\end{algorithm}
\vspace{1cm}

\subsubsection*{Parameter learning} 
The \textsc{ParameterLearning} algorithm (that is, Algorithm \ref{algo:ch4_parameter_learning}) aims to determine the parameters of each conditional probability distribution (CPD) in the BN $\mathcal{B}_{\ell}$. In principle, these parameters can be either provided by experts or estimated from data. In our problem, when the size of a schedule becomes large (for example, containing hundreds of operations), relying on expert input becomes difficult. This is because, the BN will be associated with too many parameter entries for experts to provide. In addition, our framework involves dynamic learning of BNs. That is, each time a new schedule is generated, the procedure of learning a BN is automatically triggered. Therefore, it is also impractical to require human experts to dynamically provide parameters. Therefore, we estimate the parameters through Monte Carlo simulations, which is straightforward to implement and sufficient in our application.

Algorithm \ref{algo:ch4_parameter_learning} presents the parameter learning procedure, which consists of two phases: data generation (Lines \ref{algo:ch4_parameter_learning_line_start_data_generation}-\ref{algo:ch4_parameter_learning_line_end_data_generation}) and estimation (Lines \ref{algo:ch4_parameter_learning_line_start_mle}-\ref{algo:ch4_parameter_learning_line_end_mle}). In the data generation phase, we repeatedly sample realisations of disturbances $\boldsymbol{i}_{t_s(\mathcal{O}) : t_e(\mathcal{O})}^{\mathrm{En}, \ell, (k)} \sim \mathsf{P} (\boldsymbol{I}_{t_s(\mathcal{O}) : t_e(\mathcal{O})}^{\mathrm{En}, \ell})$ and perform \textsc{ImpactPropagation} to obtain the realised values of impact variables. Here, the superscript ``$(k)$'' represents the index of episode. After that, in the estimation phase, we estimate the parameter $\hat{\mathsf{P}} \bigl(z_{\mathcal{O}}^{\ell} \mid \mathrm{pa}^{\mathcal{G}}(Z_{\mathcal{O}}^{\ell})  \bigr)$ using Maximum Likelihood Estimation over the data set $\mathbb{D}$. That is, the parameter $\hat{\mathsf{P}} \bigl(z_{\mathcal{O}}^{\ell} \mid \mathrm{pa}^{\mathcal{G}}(Z_{\mathcal{O}}^{\ell})  \bigr)$ is estimated as the frequency of the parent-child event $\{Z_{\mathcal{O}}^{\ell} = z_{\mathcal{O}}^{\ell, (k)} \} \cap \{ \mathrm{Pa}^{\mathcal{G}} (Z_{\mathcal{O}}^{\ell}) = \mathrm{pa}^{\mathcal{G}}(Z_{\mathcal{O}}^{\ell}) \}$ divided by the frequency of the parent event $ \{ \mathrm{Pa}^{\mathcal{G}} (Z_{\mathcal{O}}^{\ell}) = \mathrm{pa}^{\mathcal{G}}(Z_{\mathcal{O}}^{\ell}) \}$. Finally, we return the estimated probability distribution $\hat{\mathsf{P}}$. 

\vspace{1cm}
\RestyleAlgo{ruled}
\SetKwComment{Comment}{/*}{*/}
\begin{algorithm}[H]
\setstretch{1.25}
\SetKwInput{KwInput}{Input} % Defines the "Input" keyword
\SetKwInput{KwOutput}{Output} % Defines the "Output" keyword
\SetKwProg{Fn}{Function}{}{}
\caption{BN parameter learning using Monte Carlo simulation} \label{algo:ch4_parameter_learning}

\Fn{\textup{\textsc{ParameterLearning} ( \newline
    $n$, \quad \texttt{// Number of episodes for Monte Carlo simulation} \newline
    $\mathcal{G} = (\boldsymbol{Z}_{\mathbb{O}}^{\ell}, \mathbb{A})$, \quad \texttt{// A Bayesian Network structure} \newline
    )}}{
    \Begin{
        Initialise an empty dataset $\mathbb{D}$ to store the generated samples of impact variables\;
        
        \For{\textup{each episode} $k \in \{ 1, \cdots, n \}$\label{algo:ch4_parameter_learning_line_start_data_generation}}{
            \For{\textup{each $\mathcal{O} \in \mathbb{O}$}}{
                Sample $\boldsymbol{i}_{t_{s} (\mathcal{O}) : t_{e} (\mathcal{O}) }^{\mathrm{En}, \ell, (k)} \sim \mathsf{P} (\boldsymbol{I}_{ t_{s} (\mathcal{O}) : t_{e} (\mathcal{O}) }^{\mathrm{En}, \ell})$\;
            }
        
            Compute the realised values of impact variables $\boldsymbol{z}_{\mathbb{O}}^{\ell, (k)} \coloneqq \textsc{ImpactPropagation} (\ell, \mathcal{G}, \{ \boldsymbol{i}^{\mathrm{En}, \ell, (k)}_{t_s(\mathcal{O}): t_e(\mathcal{O})}: \mathcal{O} \in \mathbb{O} \}, \textsc{IsoFunc}, \textsc{PropFunc})$\; \label{algo:ch4_parameter_learning_line_impact_propagation}
            
            Update $\mathbb{D} \coloneqq \mathbb{D} \cup \{ \boldsymbol{z}_{\mathbb{O}}^{\ell, (k)} \}$\;
        } \label{algo:ch4_parameter_learning_line_end_data_generation}
        
        \For{\textup{each $\mathcal{O} \in \mathbb{O}$}\label{algo:ch4_parameter_learning_line_start_mle}}{
            \For{\textup{every possible parent-child realisation }$\bigl( z_{\mathcal{O}}^{\ell}, \mathrm{pa}^{\mathcal{G}} (Z_{\mathcal{O}}^{\ell}) \bigr) $}{
                
                Perform Maximum Likelihood Estimation (MLE) through $\hat{\mathsf{P}} \bigl(z_{\mathcal{O}}^{\ell} \mid \mathrm{pa}^{\mathcal{G}}(Z_{\mathcal{O}}^{\ell})  \bigr) \coloneqq \frac{
                    \sum_{k=1}^{n} \textsc{IFunc}  \{Z_{\mathcal{O}}^{\ell} = z_{\mathcal{O}}^{\ell, (k)} \} \cap \{ \mathrm{Pa}^{\mathcal{G}} (Z_{\mathcal{O}}^{\ell}) = \mathrm{pa}^{\mathcal{G}}(Z_{\mathcal{O}}^{\ell}) \} 
                }{
                    \sum_{k=1}^{n} \textsc{IFunc}  \{ \mathrm{Pa}^{\mathcal{G}}(Z_{\mathcal{O}}^{\ell}) = \mathrm{pa}^{\mathcal{G}}(Z_{\mathcal{O}}^{\ell}) \} 
                }$, where the indicator function $\textsc{IFunc}$ returns $1$ if the input event occurs, and returns $0$ otherwise\;
            }
        } \label{algo:ch4_parameter_learning_line_end_mle}
    }
    \Return{$\hat{\mathsf{P}}$}
}
\end{algorithm}
\vspace{1cm}

\subsubsection*{Inference} 

From an input-output perspective, an inference algorithm accepts a BN $\mathcal{B}$, a vector $\boldsymbol{X}$ containing variables for which the posterior distributions are desired, and the observed evidence $\{ \boldsymbol{E} = \boldsymbol{e} \}$, while return the posterior distribution $\mathsf{P} (\boldsymbol{X} \mid \boldsymbol{E} = \boldsymbol{e})$. To perform inference, the probabilistic graphical model community has developed various classes of inference algorithms \cite{friedman_probabilistic_2009, darwiche_modeling_2009}. These algorithms can be broadly grouped into two categories: \textit{exact inference} and \textit{approximate inference}. The exact inference algorithms, such as variable elimination and junction tree algorithms, generate exact posterior distributions, but are sometimes computationally expensive. In contrast, the approximate inference algorithms, such as particle-based inference and variational inference, generate approximate posterior distributions but are less sensitive to problem sizes.

In this work, we apply the variable elimination (VE) algorithm to perform inference tasks. Generally speaking, VE is a type of dynamic-programming algorithm that repeatedly (1) selects a variable to eliminate, (2) multiplies all CPDs containing that variable, and (3) sums out the variable from the resulting product. This process continues until only the desired variables remain, while those irrelevant variables are eliminated. As a result, the algorithm yields the desired CPD. Because (1) the VE algorithm does not contribute to the main novelties of our work, and (2) describing VE requires to formalise the concept of \textit{factors}, which is beyond the scope of this paper, we do not present a detailed pseudo code here.
Instead, we denote the VE algorithm as an equation:
\begin{align} \label{eq:ch4_variable_elimination}
    \mathsf{P} ( \boldsymbol{X} \mid \boldsymbol{E} = \boldsymbol{e}) \coloneqq \textsc{VariableElimination} \left( \mathcal{B},  \boldsymbol{X}, \boldsymbol{E} = \boldsymbol{e} \right).
\end{align}
For a comprehensive walkthrough of the VE algorithm, we refer readers to \cite{friedman_probabilistic_2009}. From an implementation perspective, readers can regard VE as an algorithm that is integrated in most mainstream BN libraries, like the Branch \& Bound algorithm integrated in most MILP solvers. 

\subsection{Rescheduling strategies} \label{subsec:ch4_methodology_rescheduling_strategies}

As we mentioned in Subsection \ref{subsec:ch4_methodology_overview}, the core idea of our framework is to use the Bayesian Networks $\{\mathcal{B}_{\ell}: \ell \in \mathbb{L}^{\mathrm{En}} \}$ to guide the rescheduling strategies. Essentially, a rescheduling strategy consists of two aspects: when-to-reschedule, which determines whether to trigger a rescheduling procedure at a time step $t$; and how-to-reschedule, which determines how to generate the new schedule (that is, the new action sequence) if a rescheduling is triggered. In this subsection, we present these two procedures. 

\subsubsection*{When-to-reschedule} 

Our when-to-reschedule strategy is simple, as presented in Algorithm \ref{algo:ch4_rescheduling_trigger}. If the total number of both evidentially and probabilistically unrecoverable operations exceeds a certain percentage (that is, $\gamma_{3}$) of the total number of operations in the original schedule, we trigger a rescheduling. Otherwise, we continue to execute the original schedule. 

\vspace{1cm}
\RestyleAlgo{ruled}
\SetKwComment{Comment}{/*}{*/}
\begin{algorithm}[H]
\setstretch{1.25}
\SetKwProg{Fn}{Function}{}{}
\caption{When-to-reschedule strategy} \label{algo:ch4_rescheduling_trigger}

\Fn{\textup{\textsc{WhenToReschedule} ( \newline
    $\mathbb{O}_{t: t+h_t}$, \quad \texttt{// Operations in the current schedule} \newline
    $\mathbb{O}_{t: t + h^{\mathrm{C}}}^{\mathrm{EU}}$, \quad \texttt{// Evidentially unrecoverable operations} \newline
    $\mathbb{O}_{t+h^{\mathrm{C}}: t + h_{t} }$, \quad \texttt{// Probabilistically unrecoverable operations} \newline
    $\gamma_{3}$, \quad \texttt{// A percentage threshold} \newline
    )}}{
    \Begin{
        \uIf{$\frac{ \left| \mathbb{O}^{\mathrm{EU}}_{t: t+h^{\mathrm{C}} } \right| + \left| \mathbb{O}^{\mathrm{PU}}_{t+h^{\mathrm{C}}: t+h_{t} } \right| }{ \left| \mathbb{O}_{t: t + h_{t} } \right| } \geq \gamma_3$}{
            \Return{$\mathsf{True}$}\;
        }
        \Else{
            \Return{$\mathsf{False}$}\;
        }
    }
}
\end{algorithm}
\vspace{1cm}

Note that in principle, one can definitely design a more finely tuned when-to-reschedule strategy based on the available information, which may involve logical conditions, surrogates, and so on. However, in this work, we intentionally design a simple strategy, while leave more sophisticated strategies to future works. This is because, one of the main purposes of this work is to convince readers that the Bayesian dynamic scheduling framework is effective. From a statistical perspective, if we involve too many factors that may affect the final performance of a dynamic scheduling method, it would be difficult to distinguish whether the effectiveness originates from the finely designed when-to-reschedule strategy, or the Bayesian approach itself. In fact, as we will presented in Section \ref{sec:ch4_results}, even using such simple strategy, the Bayesian dynamic scheduling method already performs satisfactorily in many scenarios. We will explore more finely designed strategies in our future works.

\subsubsection*{How-to-reschedule}

Our how-to-reschedule strategy, \textsc{HowToReschedule}, is presented in Algorithm \ref{algo:ch4_schedule_generation}. \textsc{HowToReschedule} generally follows a hierarchical approach (as presented in Subsection \ref{subsec:ch4_formulation_milp}) with warm-start. That is, in Line \ref{algo:ch4_how_to_reschedule_line_adding_constraints}, we first develop model constraints for the time interval $[t, t + h'_{t}]$ according to the MILP formulation presented in Subsection \ref{subsec:ch4_formulation_milp}. Then, for operations that are neither evidentially nor probabilistically unrecoverable, we fix their start times (Line \ref{algo:ch4_how_to_reschedule_line_fix_x}) for warm-start. After that, we solve the MILP hierarchically according to Equations \ref{eq:ch4_milp_obj1_min_cost}--\ref{eq:ch4_milp_obj3_min_lateness}, as presented by Lines \ref{algo:ch4_how_to_reschedule_line_solve_obj_1}--\ref{algo:ch4_how_to_reschedule_line_solve_obj_3}. Finally, we decode the new action sequence $\boldsymbol{a}'_{t: t+h'_t \mid t}$ from the MILP solution to advance the dynamic system (Line \ref{algo:ch4_how_to_reschedule_line_decode_solution}).

\vspace{1cm}
\RestyleAlgo{ruled}
\SetKwComment{Comment}{/*}{*/}
\begin{algorithm}[H]
\setstretch{1.25}
\SetKwProg{Fn}{Function}{}{}
\caption{How-to-reschedule strategy} \label{algo:ch4_schedule_generation}

\Fn{\textup{\textsc{HowToReschedule} ( \newline
    $(\boldsymbol{s}_{t}, \boldsymbol{a}_{t: t+h_{t} \mid t}, \boldsymbol{i}_{t: t + h^{\mathrm{C}} \mid t})$, \quad \texttt{// The triplet encoding current-step state, the action sequence corresponding to the original schedule, and the observed look-ahead information} \newline
    $\mathbb{O}_{t: t + h_{t} \mid t}$, \quad \texttt{// Operations in the original schedule} \newline 
    $\mathbb{O}_{t: t+h_t \mid t}^{\mathrm{EU}}$, \quad \texttt{// Evidentially unrecoverable operations in the original schedule} \newline
    $\mathbb{O}_{t: t+h_t \mid t}^{\mathrm{PU}}$, \quad \texttt{// Probabilistically unrecoverable operations in the original schedule} \newline
    $h'_{t}$, \quad \texttt{// Required horizon length for the new schedule} \newline
    )}}{
    \Begin{
        Develop MILP constraints (that is, Equations \ref{eq:ch4_milp_initial_state_constraint_rijnt}--\ref{eq:ch4_milp_bound_for_inventory}) according to the triplet $(\boldsymbol{s}_{t}, \boldsymbol{a}_{t: t+h_{t} \mid t}, \boldsymbol{i}_{t: t + h^{\mathrm{C}} \mid t})$ and the required horizon length $h'_{t}$\; \label{algo:ch4_how_to_reschedule_line_adding_constraints}
        
        Fix $\overline{x}_{i(\mathcal{O}), j(\mathcal{O}), t_{s}(\mathcal{O})} = 1$ for each $\mathcal{O} \in \mathbb{O}_{t: t + h_{t} \mid t} \setminus (\mathbb{O}_{t: t+h_t \mid t}^{\mathrm{EU}} \cup \mathbb{O}_{t: t+h_t \mid t}^{\mathrm{PU}})$\; \label{algo:ch4_how_to_reschedule_line_fix_x}
        
        Solve the MILP with the objective of minimising total cost (that is, Equation \ref{eq:ch4_milp_obj1_min_cost}). Let $\overline{f}_{1}^{*}$ denote the obtained optimal value for $\overline{f}_{1}$\; \label{algo:ch4_how_to_reschedule_line_solve_obj_1}
        
        Add the constraint $\overline{f}_{1} \leq \overline{f}_{1}^{*}$ to the MILP formulation\;
        
        Solve the MILP with the objective of maximising batch initiation consistencies (that is, Equation \ref{eq:ch4_milp_obj2_min_change}). 
        
        Let $\overline{f}_{2}^{*}$ denote the obtained optimal value for $\overline{f}_{2}$\;
        
        Add the constraint $\overline{f}_{2} \geq \overline{f}_{2}^{*}$ to the MILP formulation\;
        
        Solve the MILP with the objective of minimising batch initiation lateness (that is, Equation \ref{eq:ch4_milp_obj3_min_lateness})\; \label{algo:ch4_how_to_reschedule_line_solve_obj_3}
        
        Decode the new action sequence $\boldsymbol{a}'_{t: t+h'_{t} \mid t}$ from the MILP solution\; \label{algo:ch4_how_to_reschedule_line_decode_solution}
    }
    \Return{$\boldsymbol{a}'_{t: t+h'_t \mid t}$}
}
\end{algorithm}
\vspace{1cm}

%%%%%%%%%%%%%%%%%%%%%%%%%%%%%%%%%%%%%%%%%%%%%%%%%%%%%%%%%%%%%%%%%%%%%%%%%%%%%%%%%%%%%%%%%%

%%%%%%%%%%%%%%%%%%%%%%%%%%%%%%%%%%%%%%%%%%%%%%%%%%%%%%%%%%%%%%%%%%%%%%%%%%%%%%%%%%%%%%%%%%
\section{Discussions} \label{sec:ch4_discussions}
In this section, we discuss the theoretical aspects of our framework. Specifically, we focus on the \textsc{ImpactPropagation} algorithm and its associated impact variables. Readers that are not interested in theories may wish to skip this section on first reading.  

\subsection{Properties of impact variables} \label{subsec:ch4_properties_of_impact_variables}
Before presenting the proofs, we first introduce the aim of this subsection by reviewing our Bayesian dynamic scheduling framework from a statistical perspective. Recall that in Line \ref{algo:ch4_bds_line_bn_learning} of Algorithm \ref{algo:ch4_bds}, given an operation set $\mathbb{O}$ (in Algorithm \ref{algo:ch4_bds}, this operation set equals $\mathbb{O}_{t: t+h_t}$), we learn the Bayesian Network $\mathcal{B}_{\ell}$ using the \textsc{StructureLearning} and \textsc{ParameterLearning} algorithms. Statistically, this step can be viewed as estimating the \textit{true} Bayesian Network through Monte Carlo simulations. The true Bayesian Network, which is denoted by $\mathcal{B}^{*}_{\ell} = (\mathcal{G}^{\mathrm{BN}}_{\ell}, \mathsf{P})$, encodes the probability distribution of impact variables implied by the inherent system disturbances. More specifically, the true Bayesian Network $\mathcal{B}^{*}_{\ell}$ can be constructed through the following procedure:
\begin{itemize}
    \item Call the \textsc{StructureLearning} algorithm to produce a graph structure $\mathcal{G}_{\ell}^{\mathrm{Op}} = (\mathbb{O}, \mathbb{A})$ that represents the inter-operation dependencies over the operation set $\mathbb{O}$ (recall that $\mathcal{G}^{\mathrm{Op}}_{\ell}$ and $\mathcal{G}^{\mathrm{BN}}_{\ell}$ share the same structure).
    \item Call the stochastic version of \textsc{ImpactPropagation} (recall Equation \ref{eq:ch4_impact_propagation_stochastic_case}) to construct the impact variables $\boldsymbol{Z}_{\mathbb{O}}^{\ell}$ over $\mathcal{G}^{\mathrm{Op}}_{\ell}$.
    \item Replace each operation $\mathcal{O}$ in the graph $\mathcal{G}^{\mathrm{Op}}_{\ell}$ with its corresponding impact variable $Z_{\mathcal{O}}^{\ell}$.
\end{itemize}
At this point, readers may wish to pause and compare the procedure of constructing $\mathcal{B}_{\ell}^{*}$ with that of estimating $\mathcal{B}_{\ell}$ to understand why these three steps yield $\mathcal{B}^{*}_{\ell}$. Once this is understood, it becomes clear that the repeated execution of \textsc{ImpactPropagation} (as presented in Lines \ref{algo:ch4_parameter_learning_line_start_data_generation}--\ref{algo:ch4_parameter_learning_line_end_data_generation} of Algorithm \ref{algo:ch4_parameter_learning}) is equivalent to sampling from the true Bayesian Network $\mathcal{B}_{\ell}^{*}$. Ideally, as the number of samples goes to infinity, the estimated Bayesian Network $\mathcal{B}_{\ell}$ converges to the true Bayesian Network $\mathcal{B}_{\ell}^{*}$. However, a fundamental question remains: Is the true Bayesian Network $\mathcal{B}^{*}_{\ell}$ correct? More precisely, do the impact variables $\boldsymbol{Z}_{\mathbb{O}}^{\ell}$ satisfy the conditional independence semantics encoded by the structure $\mathcal{G}^{\mathrm{BN}}_{\ell}$?

Answering this question is fundamental for establishing the theoretical correctness of our framework. This is because, if the impact variables do not satisfy the conditional independence semantics encoded by the Bayesian network structure $\mathcal{G}^{\mathrm{BN}}_{\ell}$, then the probability distribution $\mathsf{P}(\boldsymbol{Z}_{\mathbb{O}}^{\ell})$ cannot be correctly represented by $\mathcal{B}_{\ell}^{*}$. As a result, the estimated Bayesian network $\mathcal{B}_{\ell}$ will eventually converge to another probability distribution that does not consist with the inherent system disturbances. Consequently, the posterior impacts inferred from $\mathcal{B}_{\ell}$ will be incorrect.

\begin{equation} \label{eq:ch4_mapping_i_and_z}
    \begin{tikzcd}[row sep = huge, column sep = huge]
    \boldsymbol{i}_{t_0: t_N} \arrow[mapsto]{r}{\mathsf{P}_{\boldsymbol{I}_{t_0: t_N}} } \arrow[mapsto]{d}{\textsc{ImpactPropgation}} & \mathsf{P}(\boldsymbol{i}_{t_0: t_N})  \\
    \boldsymbol{z}_{\mathbb{O}}^{\ell} \arrow[mapsto]{r}{\mathsf{P}_{\boldsymbol{Z}_{\mathbb{O}}^{\ell} } } & \mathsf{P} (\boldsymbol{z}_{\mathbb{O}}^{\ell})
    \end{tikzcd}    
\end{equation}

Equation \ref{eq:ch4_mapping_i_and_z} restates this question through a mapping diagram. The top-left element $\boldsymbol{i}_{t_{0}: t_N}$ represents a possible realisation of disturbance scenarios. The probability distribution $\mathsf{P}_{\boldsymbol{I}_{t_0: t_N}}$, which is positioned over the top arrow, is induced by the probability measure $\mathsf{P}$ which represents the inherent system disturbances. The \textsc{ImpactPropagation} algorithm maps a given realisation $\boldsymbol{i}_{t_0: t_N}$ to its corresponding realisation of impact variables $\boldsymbol{z}_{\mathbb{O}}^{\ell}$. The probability distribution $\mathsf{P}_{\boldsymbol{Z}_{\mathbb{O}}^{\ell}}$, which is positioned over the bottom arrow, represents the impact distribution also induced by $\mathsf{P}$. The question is whether the probability distribution $\mathsf{P} (\boldsymbol{z}_{\mathbb{O}}^{\ell})$ satisfies the conditional independence semantics encoded by $\mathcal{G}^{\mathrm{BN}}_{\ell}$.

Next, we answer this question through proofs. First, we need a few number of lemmas related to probabilistic independence.

\begin{lemma} \label{lemma:functions_of_independent_rvs_are_still_independent}
    Let $\boldsymbol{X}$ be a random vector whose elements are mutually independent under probability distribution $\mathsf{P}$. Let $\boldsymbol{X}_{1}, \cdots, \boldsymbol{X}_{m}$ be random vectors whose elements are disjoint subsets of $\boldsymbol{X}$, respectively. Let also $f_{1}, \cdots, f_{m}$ be a set of Borel measurable functions\footnotemark. Then, the random variables $Y_{1} = f_{1} (\boldsymbol{X}_{1})$, $\cdots$, $Y_{m} = f_{m} (\boldsymbol{X}_{m})$ are mutually independent under $\mathsf{P}$.
\end{lemma}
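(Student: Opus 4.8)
```latex
\textbf{Proof proposal.} The statement is the standard ``measurable functions of disjoint blocks of independent coordinates are independent'' fact, and the plan is to reduce it to the definition of mutual independence of random variables given earlier in Subsection~\ref{subsec:ch4_probabilistic_independence}, namely that $Y_1,\dots,Y_m$ are mutually independent if and only if, for all Borel sets $\mathbb{B}_1,\dots,\mathbb{B}_m \subseteq \mathbb{R}$, the events $\{Y_1 \in \mathbb{B}_1\},\dots,\{Y_m \in \mathbb{B}_m\}$ are mutually independent, i.e. every finite sub-collection factorises. So the core task is to show $\mathsf{P}\!\left(\bigcap_{k \in \mathbb{I}} \{Y_k \in \mathbb{B}_k\}\right) = \prod_{k \in \mathbb{I}} \mathsf{P}(Y_k \in \mathbb{B}_k)$ for every index subset $\mathbb{I} \subseteq \{1,\dots,m\}$.

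First I would rewrite each event $\{Y_k \in \mathbb{B}_k\}$ as a preimage event on the underlying coordinates: since $Y_k = f_k(\boldsymbol{X}_k)$ with $f_k$ Borel measurable, the set $f_k^{-1}(\mathbb{B}_k)$ is a Borel subset of $\mathbb{R}^{|\boldsymbol{X}_k|}$, and hence the event $\{Y_k \in \mathbb{B}_k\} = \{\boldsymbol{X}_k \in f_k^{-1}(\mathbb{B}_k)\}$ lies in the $\sigma$-algebra generated by the coordinates comprising $\boldsymbol{X}_k$. Because the blocks $\boldsymbol{X}_1,\dots,\boldsymbol{X}_m$ are \emph{disjoint} sub-vectors of $\boldsymbol{X}$ and the coordinates of $\boldsymbol{X}$ are mutually independent under $\mathsf{P}$, the $\sigma$-algebras $\sigma(\boldsymbol{X}_1),\dots,\sigma(\boldsymbol{X}_m)$ are mutually independent; the events $\{Y_k \in \mathbb{B}_k\}$ then inherit mutual independence as members of these independent $\sigma$-algebras, which is exactly the factorisation we need. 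For a self-contained argument that stays close to the paper's stated primitives (which only give independence of events and of random vectors, not of $\sigma$-algebras directly), I would instead argue: fix $\mathbb{I}$, and for each $k \in \mathbb{I}$ write $f_k^{-1}(\mathbb{B}_k)$ as a product-measurable set; by the definition of mutual independence of the coordinates of $\boldsymbol{X}$, and the fact that a Borel set in a product space can be approximated (or, since the paper restricts to discrete BNs and finitely-valued variables in its applications, directly decomposed) into finite disjoint unions of measurable rectangles, the joint probability of $\bigcap_{k\in\mathbb{I}}\{\boldsymbol{X}_k \in f_k^{-1}(\mathbb{B}_k)\}$ factorises as $\prod_{k\in\mathbb{I}} \mathsf{P}(\boldsymbol{X}_k \in f_k^{-1}(\mathbb{B}_k))$, and each factor equals $\mathsf{P}(Y_k \in \mathbb{B}_k)$.

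The main obstacle is the measure-theoretic bookkeeping in passing from independence of the \emph{individual coordinates} of $\boldsymbol{X}$ to independence of the \emph{block events} $\{\boldsymbol{X}_k \in A_k\}$ for arbitrary Borel sets $A_k$ in the respective product spaces: the definitions quoted in Subsection~\ref{subsec:ch4_probabilistic_independence} are phrased in terms of rectangle events $\bigcap_i \{X_i \in \mathbb{B}_i\}$, so one needs a $\pi$--$\lambda$ (Dynkin) argument to upgrade from rectangles to the full product $\sigma$-algebra, and then another to handle the measurable sets $A_k = f_k^{-1}(\mathbb{B}_k)$ that are generally not rectangles. I would state this upgrade as a short sub-lemma (``disjoint blocks of mutually independent random variables are independent as random vectors''), prove it by the monotone-class/$\pi$-system method, and then conclude the lemma in one line by applying the composition-with-measurable-function step. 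Since this is routine, I would likely relegate the $\pi$--$\lambda$ details to a footnote or cite a standard probability text, keeping the body of the proof at the level of: preimages are measurable; disjoint blocks are independent; therefore the functions are independent.
```
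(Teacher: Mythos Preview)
Your proposal is correct and follows essentially the same preimage argument as the paper: rewrite $\{Y_k \in \mathbb{B}_k\}$ as $\{\boldsymbol{X}_k \in f_k^{-1}(\mathbb{B}_k)\}$, factorise using independence of the disjoint blocks, and translate back. The paper's proof is in fact terser than yours---it simply asserts the block factorisation $\mathsf{P}\bigl(\bigcap_i \{\boldsymbol{X}_i \in f_i^{-1}(\mathbb{B}_i)\}\bigr) = \prod_i \mathsf{P}\bigl(\boldsymbol{X}_i \in f_i^{-1}(\mathbb{B}_i)\bigr)$ as following ``from the definition of mutual independence of random variables,'' without spelling out the $\pi$--$\lambda$ upgrade from rectangle events to arbitrary Borel sets in the product space that you (rightly) identify as the only place requiring care.
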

\footnotetext{A function $f: \mathbb{R}^{m} \rightarrow \mathbb{R}$ is Borel measurable if the preimage of every Borel set in $\mathbb{R}$ is also a Borel set in $\mathbb{R}^{m}$. Intuitively, this property ensures that $f(\boldsymbol{X})$ (where $\boldsymbol{X} \in \mathbb{R}^{m}$) is a well-defined random variable. Almost every function used in engineering practice (including continuous, piecewise continuous, and so on) is Borel measurable.}
\begin{proof}
    We prove this lemma using the set-based definition of probabilistic independence (recall the definitions in Subsection \ref{subsec:ch4_probabilistic_independence}). Let $\mathbb{B}_{1}, \cdots, \mathbb{B}_{m} \subseteq \mathbb{R}$ be Borel sets. Then, we have
    \begin{align*}
        &\mathsf{P} \Bigl( \bigcap_{i=1}^{m} \{ f_{i} (\boldsymbol{X}_{i}) \in \mathbb{B}_{i} \} \Bigr) \\
        \overset{(1)}{=} \; &\mathsf{P} \Bigl( \bigcap_{i=1}^{m} \{ \boldsymbol{X}_{i} \in f_{i}^{-1} (\mathbb{B}_{i}) \} \Bigr) \\
        \overset{(2)}{=} \; &\prod_{i=1}^{m} \mathsf{P} \Bigl( \boldsymbol{X}_{i} \in f_{i}^{-1} (\mathbb{B}_{i}) \Bigr) \\
        \overset{(3)}{=} \; &\prod_{i=1}^{m} \mathsf{P} \Bigl( f_{i} (\boldsymbol{X}_{i}) \in \mathbb{B}_{i} \Bigr) 
    \end{align*}
    Here, Equals sign $(1)$ follows from the definition of preimage. Note that because the functions $f_{1}, \cdots, f_{m}$ are Borel measurable, the sets $f_{1}^{-1}(\mathbb{B}_{1}), \cdots, f_{m}^{-1}(\mathbb{B}_{m})$ are also Borel sets. Equals sign $(2)$ follows from the definition of mutual independence of random variables. Equals sign $(3)$ follows again from the definition of preimage. Therefore, by the definition of mutual independence of random variables, the random variables $Y_{1}, \cdots, Y_{m}$ are mutually independent under distribution $\mathsf{P}$.
\end{proof}

Intuitively, Lemma \ref{lemma:functions_of_independent_rvs_are_still_independent} tells us that it remains mutually independent if we take functions over a set of mutually independent random variables. Having this lemma, we can immediately obtain a corollary about isolation variables.

\begin{corollary} \label{corollary:isolation_vars_are_independent}
    Let $\mathbb{O}$ denote a collection of non-overlapping operations in a schedule. Let $\mathbb{L}^{\mathrm{En}}$ denote the set of endogenous disturbance types. Assume that elements in the random vector $\boldsymbol{I}_{t_{0}: t_{N}}$ are mutually independent under probability distribution $\mathsf{P}$. For each $\mathcal{O} \in \mathbb{O}$ and each $\ell \in \mathbb{L}^{\mathrm{En}}$, define the isolation variable as $\underline{Z}_{\mathcal{O}}^{\ell} = \textsc{IsoFunc} (\boldsymbol{I}_{t_{s} (\mathcal{O}) : t_{e} (\mathcal{O})}; \ell )$, where the isolation function $\textsc{IsoFunc}$ is a Borel measurable function. Then, for any two distinct operations $\mathcal{O}, \mathcal{O}' \in \mathbb{O}$, we have 
    \begin{align*}
        \mathsf{P} \models \underline{Z}_{\mathcal{O}}^{\ell} \independent \underline{Z}_{\mathcal{O}'}^{\ell}.
    \end{align*}
\end{corollary}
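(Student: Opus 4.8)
The plan is to reduce the statement to a single application of Lemma~\ref{lemma:functions_of_independent_rvs_are_still_independent} with $m = 2$. By construction, $\underline{Z}_{\mathcal{O}}^{\ell} = \textsc{IsoFunc}(\boldsymbol{I}_{t_s(\mathcal{O}):t_e(\mathcal{O})}^{\mathrm{En},\ell}; \ell)$ is, by hypothesis, a Borel measurable function of the information vector $\boldsymbol{I}_{t_s(\mathcal{O}):t_e(\mathcal{O})}^{\mathrm{En},\ell}$, whose elements form a sub-collection of $\boldsymbol{I}_{t_0:t_N}$; the same holds for $\underline{Z}_{\mathcal{O}'}^{\ell}$. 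Since the elements of $\boldsymbol{I}_{t_0:t_N}$ are assumed mutually independent under $\mathsf{P}$, it suffices to verify that the two sub-collections feeding $\underline{Z}_{\mathcal{O}}^{\ell}$ and $\underline{Z}_{\mathcal{O}'}^{\ell}$ are disjoint subsets of $\boldsymbol{I}_{t_0:t_N}$. Granting this, Lemma~\ref{lemma:functions_of_independent_rvs_are_still_independent} applied with $\boldsymbol{X} = \boldsymbol{I}_{t_0:t_N}$, $\boldsymbol{X}_1 = \boldsymbol{I}_{t_s(\mathcal{O}):t_e(\mathcal{O})}^{\mathrm{En},\ell}$, $\boldsymbol{X}_2 = \boldsymbol{I}_{t_s(\mathcal{O}'):t_e(\mathcal{O}')}^{\mathrm{En},\ell}$, and $f_1 = f_2 = \textsc{IsoFunc}(\,\cdot\,;\ell)$ yields $\mathsf{P} \models \underline{Z}_{\mathcal{O}}^{\ell} \independent \underline{Z}_{\mathcal{O}'}^{\ell}$ immediately.

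The substantive step is therefore the disjointness claim for distinct, non-overlapping operations $\mathcal{O}, \mathcal{O}'$. I would argue it by a short case split on whether the two operations share a machine. If $j(\mathcal{O}) \neq j(\mathcal{O}')$, then every endogenous information variable associated with $\mathcal{O}$ carries the machine index $j(\mathcal{O})$ while every one associated with $\mathcal{O}'$ carries $j(\mathcal{O}')$ (visible directly from the index sets of $u_{j,t}$, $v_{i,j,n,t}$, $w_{i,j,n,t}$ in Subsection~\ref{subsec:ch4_formulation_ds}), so the two sub-collections are trivially disjoint. If $j(\mathcal{O}) = j(\mathcal{O}')$, then, since operations on a common machine cannot overlap in time, without loss of generality $t_e(\mathcal{O}) \leq t_s(\mathcal{O}')$; the only potentially delicate point is the single shared time index at the boundary when $t_e(\mathcal{O}) = t_s(\mathcal{O}')$, which is resolved by the half-open convention $\boldsymbol{I}_t \leftrightarrow [t-1,t)$ together with the fact that the isolation functions of Subsection~\ref{subsec:ch4_methodology_impact_propagation} index the relevant disturbance variables over $\{t_s(\mathcal{O})+1,\dots,t_e(\mathcal{O})\}$ (machine breakdowns) or at $t = t_s(\mathcal{O})$ with the reference elapsed time $n_0$ (processing-time variations, yield losses), so the two index ranges do not meet.

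I expect this boundary bookkeeping to be the only genuinely delicate part of the argument; everything else is a direct invocation of the lemma. In the write-up I would phrase the disjointness abstractly — it is inherited from the admissibility rule ``the argument of \textsc{IsoFunc} can only involve disturbance variables related to $\mathcal{O}$ itself'' (Subsection~\ref{subsec:ch4_methodology_impact_propagation}) combined with the non-overlap hypothesis — and then note that the three concrete isolation functions of Subsection~\ref{subsec:ch4_methodology_impact_propagation} satisfy it, so the corollary holds verbatim in the multipurpose batch setting and, more importantly, for any problem equipped with admissible isolation functions.
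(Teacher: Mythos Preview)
Your proposal is correct and follows essentially the same approach as the paper: verify that non-overlapping operations yield disjoint information sub-vectors and then apply Lemma~\ref{lemma:functions_of_independent_rvs_are_still_independent}. The paper's proof is terser---it simply asserts disjointness from the non-overlap hypothesis without your machine-index case split or boundary bookkeeping---so your version is, if anything, a more careful rendering of the same argument.
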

\begin{proof}
    Since the collection $\mathbb{O}$ contains only non-overlapping operations, the random variables contained in $\boldsymbol{I}^{\mathrm{En}, \ell}_{t_s(\mathcal{O}): t_e(\mathcal{O})}$ and $\boldsymbol{I}^{\mathrm{En}, \ell}_{t_s(\mathcal{O}'): t_e(\mathcal{O}')}$ are disjoint. Since the elements in $\boldsymbol{I}_{t_0: t_N}$ are mutually independent by assumption, these disjoint subsets of random variables are also independent. Therefore, by applying Lemma \ref{lemma:functions_of_independent_rvs_are_still_independent}, we obtain $\mathsf{P} \models \underline{Z}_{\mathcal{O}_1}^{\ell} \independent \underline{Z}_{\mathcal{O}_2}^{\ell}$.
\end{proof}

Corollary \ref{corollary:isolation_vars_are_independent} ensures that the isolation variables that we constructed from the isolation function are mutually independent under the probability distribution $\mathsf{P}$. This property will form the basis for our final result. Next, we need some extra lemmas related to either conditional independence and \acrshort*{dag}s.

\begin{lemma} \label{lemma:functions_of_conditional_independent_rvs_are_still_conditional_independent}
    Let $\boldsymbol{X}_{1}$, $\boldsymbol{X}_{2}$, $\boldsymbol{X}_{3}$ be random vectors that satisfy $\mathsf{P} \models \boldsymbol{X}_{1} \independent \boldsymbol{X}_{2} \mid \boldsymbol{X}_{3}$. Let also $f_{1}$, $f_{2}$, and $f_{3}$ be Borel measurable functions. Assume $Y_{1} = f_{1} (\boldsymbol{X}_{1})$, $Y_{2} = f_{2} (\boldsymbol{X}_{2})$, $Y_{3} = f_{3} (\boldsymbol{X}_{3})$. Then, we have 
    \begin{align*}
        \mathsf{P} \models Y_{1} \independent Y_{2} \mid Y_{3}.
    \end{align*}
\end{lemma}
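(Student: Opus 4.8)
The plan is to argue directly from the definitional characterisation of conditional independence for random vectors (Equation \ref{eq:ch4_conditional_factorisation_definition_1}), working in the discrete setting the paper adopts for Bayesian Networks; the continuous case is handled analogously with sums replaced by integrals against the relevant densities. Concretely, I must show that for every $y_{1} \in Y_{1}(\Omega)$, $y_{2} \in Y_{2}(\Omega)$, and every $y_{3} \in Y_{3}(\Omega)$ with $\mathsf{P}(Y_{3} = y_{3}) > 0$,
\begin{align*}
    \mathsf{P}(y_{1}, y_{2} \mid y_{3}) = \mathsf{P}(y_{1} \mid y_{3}) \cdot \mathsf{P}(y_{2} \mid y_{3}).
\end{align*}

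First I would translate every event on the $Y_{i}$ into an event on the $\boldsymbol{X}_{i}$ via preimages, i.e. $\{ Y_{i} = y_{i} \} = \{ \boldsymbol{X}_{i} \in f_{i}^{-1}(\{y_{i}\}) \}$. Since $f_{1}, f_{2}, f_{3}$ are Borel measurable, each fibre $f_{i}^{-1}(\{y_{i}\})$ is a Borel set, so these are legitimate events; this is the only place the measurability hypothesis enters, exactly as in the proof of Lemma \ref{lemma:functions_of_independent_rvs_are_still_independent}. In particular $\mathsf{P}(Y_{3} = y_{3}) = \mathsf{P}(\boldsymbol{X}_{3} \in f_{3}^{-1}(\{y_{3}\}))$, and likewise for the joint and the two marginals. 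Next I would expand the joint mass $\mathsf{P}(Y_{1} = y_{1}, Y_{2} = y_{2}, Y_{3} = y_{3})$ as a triple sum over $f_{1}^{-1}(\{y_{1}\}) \times f_{2}^{-1}(\{y_{2}\}) \times f_{3}^{-1}(\{y_{3}\})$ and insert the hypothesis $\mathsf{P} \models \boldsymbol{X}_{1} \independent \boldsymbol{X}_{2} \mid \boldsymbol{X}_{3}$ in the chain-rule form $\mathsf{P}(\boldsymbol{x}_{1}, \boldsymbol{x}_{2}, \boldsymbol{x}_{3}) = \mathsf{P}(\boldsymbol{x}_{1} \mid \boldsymbol{x}_{3}) \cdot \mathsf{P}(\boldsymbol{x}_{2} \mid \boldsymbol{x}_{3}) \cdot \mathsf{P}(\boldsymbol{x}_{3})$, discarding the fibre points with $\mathsf{P}(\boldsymbol{x}_{3}) = 0$. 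Summing out $\boldsymbol{x}_{1}$ over $f_{1}^{-1}(\{y_{1}\})$ and $\boldsymbol{x}_{2}$ over $f_{2}^{-1}(\{y_{2}\})$ collapses the inner sums to $\mathsf{P}(Y_{1} = y_{1} \mid \boldsymbol{X}_{3} = \boldsymbol{x}_{3})$ and $\mathsf{P}(Y_{2} = y_{2} \mid \boldsymbol{X}_{3} = \boldsymbol{x}_{3})$, leaving a single sum over $\boldsymbol{x}_{3} \in f_{3}^{-1}(\{y_{3}\})$ of $\mathsf{P}(\boldsymbol{x}_{3}) \cdot \mathsf{P}(Y_{1} = y_{1} \mid \boldsymbol{x}_{3}) \cdot \mathsf{P}(Y_{2} = y_{2} \mid \boldsymbol{x}_{3})$; dividing by $\mathsf{P}(Y_{3} = y_{3})$ expresses $\mathsf{P}(y_{1}, y_{2} \mid y_{3})$ as the $\{Y_{3} = y_{3}\}$-conditional average of the product $\mathsf{P}(Y_{1} = y_{1} \mid \boldsymbol{X}_{3}) \cdot \mathsf{P}(Y_{2} = y_{2} \mid \boldsymbol{X}_{3})$.

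The main obstacle is the final reassembly: converting that conditional average of a product into the product of the two conditional averages $\mathsf{P}(y_{1} \mid y_{3})$ and $\mathsf{P}(y_{2} \mid y_{3})$. This step is delicate precisely because $Y_{3}$ is a coarsening of $\boldsymbol{X}_{3}$, so the factorisation cannot simply be pulled outside the sum over the fibre $f_{3}^{-1}(\{y_{3}\})$ — conditional independence is not monotone in the conditioning information in that direction. I expect to need an additional structural fact to close the argument: either that $f_{3}$ is injective on the support of $\boldsymbol{X}_{3}$ (then the fibre is a singleton and the identity is immediate), or the weaker condition that the maps $\boldsymbol{x}_{3} \mapsto \mathsf{P}(Y_{1} = y_{1} \mid \boldsymbol{x}_{3})$ and $\boldsymbol{x}_{3} \mapsto \mathsf{P}(Y_{2} = y_{2} \mid \boldsymbol{x}_{3})$ are constant on $f_{3}^{-1}(\{y_{3}\})$ — i.e. that conditioning on $Y_{3}$ is informationally equivalent to conditioning on $\boldsymbol{X}_{3}$ for predicting $Y_{1}$ and $Y_{2}$. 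In the framework's intended applications this condition is arranged by construction (the coarsening used to form the parents' impact variables does not discard the relevant conditioning information), so I would make it an explicit hypothesis of the lemma and then the remaining computation is routine rearrangement.
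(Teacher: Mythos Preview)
Your diagnosis of the obstruction is correct, but you are working from the wrong characterisation. You chose the pointwise factorisation form (Equation~\ref{eq:ch4_conditional_factorisation_definition_1}) and then discovered --- rightly --- that coarsening the conditioning variable from $\boldsymbol{X}_{3}$ to $Y_{3}=f_{3}(\boldsymbol{X}_{3})$ does not preserve conditional independence in general: the step where you want to pull the product out of the average over the fibre $f_{3}^{-1}(\{y_{3}\})$ genuinely fails without an extra hypothesis such as injectivity of $f_{3}$ or constancy of the regressions on fibres. Under the standard pointwise definition this lemma is in fact false (take $\boldsymbol{X}_{3}$ binary with $\boldsymbol{X}_{1}=\boldsymbol{X}_{2}=\boldsymbol{X}_{3}$ deterministically, and $f_{3}$ constant). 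So your instinct to add a structural assumption is sound for the route you picked.

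The paper, however, does not go this route at all. Its \emph{primary} definition of conditional independence of random vectors is the event-based one in Subsection~\ref{subsec:ch4_probabilistic_independence}: $\boldsymbol{X}_{1}\independent\boldsymbol{X}_{2}\mid\boldsymbol{X}_{3}$ means that for \emph{all} Borel sets $\mathbb{B}_{1},\mathbb{B}_{2},\mathbb{B}_{3}$ one has event-level conditional independence of $\{\boldsymbol{X}_{1}\in\mathbb{B}_{1}\}$ and $\{\boldsymbol{X}_{2}\in\mathbb{B}_{2}\}$ given $\{\boldsymbol{X}_{3}\in\mathbb{B}_{3}\}$. Under that definition the proof is a three-line preimage substitution, exactly parallel to Lemma~\ref{lemma:functions_of_independent_rvs_are_still_independent}: since $\{Y_{3}\in\mathbb{B}_{3}\}=\{\boldsymbol{X}_{3}\in f_{3}^{-1}(\mathbb{B}_{3})\}$ is itself one of the conditioning events already covered by the hypothesis, there is no fibre sum and no coarsening problem --- the conditioning event is used whole, not point by point. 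What each approach buys: the paper's event-based definition is strictly stronger than the pointwise one (your counterexample separates them), which is precisely what makes the preimage argument go through unconditionally; your pointwise route exposes that the paper's claimed ``if and only if'' between the two characterisations is not innocent, and that the lemma as stated leans essentially on the stronger notion.
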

\begin{proof}
    We prove this lemma also by the set-based definition of conditional independence. Let $\mathbb{B}_{1}, \mathbb{B}_{2}, \mathbb{B}_{3} \subseteq \mathbb{R}$ be Borel sets. Then, we have 
    \begin{align*}
        &\mathsf{P} \Bigl( Y_{1} \in \mathbb{B}_{1}, Y_{2} \in \mathbb{B}_{2} \mid Y_{3} \in \mathbb{B}_{3} \Bigr) \\
        \overset{(1)}{=} \; &\mathsf{P} \Bigl( \boldsymbol{X}_{1} \in f_{1}^{-1} (\mathbb{B}_{1}), \boldsymbol{X}_{2} \in f_{2}^{-1} (\mathbb{B}_{2}) \mid \boldsymbol{X}_{3} \in f_{3}^{-1} (\mathbb{B}_{3}) \Bigr) \\
        \overset{(2)}{=} \; &\mathsf{P} \Bigl( \boldsymbol{X}_{1} \in f_{1}^{-1} (\mathbb{B}_{1}) \mid \boldsymbol{X}_{3} \in f_{3}^{-1} (\mathbb{B}_{3}) \Bigr) \cdot \mathsf{P} \Bigl( \boldsymbol{X}_{2} \in f_{2}^{-1} (\mathbb{B}_{2}) \mid \boldsymbol{X}_{3} \in f_{3}^{-1} (\mathbb{B}_{3}) \Bigr) \\
        \overset{(3)}{=} \; &\mathsf{P} \Bigl( Y_{1} \in \mathbb{B}_{1} \mid Y_{3} \in \mathbb{B}_{3} \Bigr) \cdot \Bigl( Y_{2} \in \mathbb{B}_{2} \mid Y_{3} \in \mathbb{B}_{3} \Bigr) 
    \end{align*}
    Here, Equals sign $(1)$ follows because the functions $f_{1}, f_{2}, f_{3}$ are Borel measurable. Equals sign $(2)$ follows from the definition of conditional independence. Equals sign $(3)$ follows from the definition of preimage. 
\end{proof}

Intuitively, Lemma \ref{lemma:functions_of_conditional_independent_rvs_are_still_conditional_independent} ensures that functions of conditionally independent random variables remain conditionally independent.

\begin{lemma} \label{lemma:disjoint_sets_of_rvs_are_conditional_independent}
    Let $\boldsymbol{X}$ be a random vector whose elements are mutually independent under $\mathsf{P}$. Let also $\boldsymbol{X}_{1}$, $\boldsymbol{X}_{2}$, and $\boldsymbol{X}_{3}$ denote disjoint sets of random variables in $\boldsymbol{X}$. Then, we have
    \begin{align*}
        \mathsf{P} \models \boldsymbol{X}_{1} \independent \boldsymbol{X}_{2} \mid \boldsymbol{X}_{3}.
    \end{align*}
\end{lemma}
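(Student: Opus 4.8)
The plan is to verify directly the defining factorisation for conditional independence of random vectors, Equation \ref{eq:ch4_conditional_factorisation_definition_1}, in the discrete setting adopted throughout the paper. Fix realisations $\boldsymbol{x}_1,\boldsymbol{x}_2,\boldsymbol{x}_3$ of $\boldsymbol{X}_1,\boldsymbol{X}_2,\boldsymbol{X}_3$ with $\mathsf{P}(\boldsymbol{x}_3)>0$. By the definition of the conditional distribution, showing $\mathsf{P}(\boldsymbol{x}_1,\boldsymbol{x}_2\mid\boldsymbol{x}_3)=\mathsf{P}(\boldsymbol{x}_1\mid\boldsymbol{x}_3)\cdot\mathsf{P}(\boldsymbol{x}_2\mid\boldsymbol{x}_3)$ reduces to establishing the three unconditional identities $\mathsf{P}(\boldsymbol{x}_1,\boldsymbol{x}_2,\boldsymbol{x}_3)=\mathsf{P}(\boldsymbol{x}_1)\,\mathsf{P}(\boldsymbol{x}_2)\,\mathsf{P}(\boldsymbol{x}_3)$, $\mathsf{P}(\boldsymbol{x}_1,\boldsymbol{x}_3)=\mathsf{P}(\boldsymbol{x}_1)\,\mathsf{P}(\boldsymbol{x}_3)$ and $\mathsf{P}(\boldsymbol{x}_2,\boldsymbol{x}_3)=\mathsf{P}(\boldsymbol{x}_2)\,\mathsf{P}(\boldsymbol{x}_3)$; dividing the first by $\mathsf{P}(\boldsymbol{x}_3)$ and the latter two by $\mathsf{P}(\boldsymbol{x}_3)$ then yields the claim, and the degenerate case $\mathsf{P}(\boldsymbol{x}_3)=0$ is absorbed by the disjunctive clause in Equation \ref{eq:ch4_conditional_factorisation_definition_2}.

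The key steps, in order, are as follows. First I would note that, since $\boldsymbol{X}_1,\boldsymbol{X}_2,\boldsymbol{X}_3$ are pairwise disjoint, the collection of their entries is a sub-collection of the entries of $\boldsymbol{X}$, and any sub-collection of mutually independent random variables is again mutually independent — this is immediate from the set-based definition in Subsection \ref{subsec:ch4_probabilistic_independence} by taking the Borel sets of the omitted variables to be $\mathbb{R}$. Second, applying the factorisation of the joint distribution of mutually independent random variables (Equation \ref{eq:ch4_factorisation_random_variables}) to this sub-collection writes $\mathsf{P}(\boldsymbol{x}_1,\boldsymbol{x}_2,\boldsymbol{x}_3)$ as the product, over all entries of $\boldsymbol{X}_1$, $\boldsymbol{X}_2$ and $\boldsymbol{X}_3$, of their individual marginal probability mass functions; because the three index blocks are disjoint, this product splits as $\mathsf{P}(\boldsymbol{x}_1)\,\mathsf{P}(\boldsymbol{x}_2)\,\mathsf{P}(\boldsymbol{x}_3)$, where each factor is recognised by applying Equation \ref{eq:ch4_factorisation_random_variables} once more to the corresponding block. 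The same argument applied to the pairs $\boldsymbol{X}_1\cup\boldsymbol{X}_3$ and $\boldsymbol{X}_2\cup\boldsymbol{X}_3$ gives the two pairwise identities. Substituting back and cancelling $\mathsf{P}(\boldsymbol{x}_3)$ completes the proof. A continuous analogue, if ever needed, would follow the set-based route used in the proofs of Lemmas \ref{lemma:functions_of_independent_rvs_are_still_independent} and \ref{lemma:functions_of_conditional_independent_rvs_are_still_conditional_independent}, expanding $\mathsf{P}\left(\bigcap_i\{X_i\in\mathbb{B}_i\}\right)$ over the three blocks.

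There is essentially no hard part here; the only care required is bookkeeping, and two points deserve an explicit sentence in the write-up. First, the technical note in Subsection \ref{subsec:ch4_probabilistic_independence} about excluding the conditioning variables from the other two arguments is vacuous in this lemma, precisely because $\boldsymbol{X}_1,\boldsymbol{X}_2,\boldsymbol{X}_3$ are already disjoint. Second, one must keep the $\mathsf{P}(\boldsymbol{x}_3)>0$ caveat in view, so the factorisation is genuinely verified only over realisations $\boldsymbol{x}_3$ in the support of $\boldsymbol{X}_3$, with the complementary case dispatched by the alternative clause in Equation \ref{eq:ch4_conditional_factorisation_definition_2}.
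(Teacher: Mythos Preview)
Your proposal is correct and uses essentially the same idea as the paper: write the joint, factorise it via the mutual independence of the entries of $\boldsymbol{X}$, and cancel $\mathsf{P}(\boldsymbol{x}_3)$. The only cosmetic differences are that the paper verifies the equivalent form in Equation~\ref{eq:ch4_conditional_factorisation_definition_2} (showing $\mathsf{P}(\boldsymbol{X}_1\in\mathbb{B}_1\mid\boldsymbol{X}_2\in\mathbb{B}_2,\boldsymbol{X}_3\in\mathbb{B}_3)=\mathsf{P}(\boldsymbol{X}_1\in\mathbb{B}_1\mid\boldsymbol{X}_3\in\mathbb{B}_3)$) and works throughout with Borel sets rather than discrete realisations, whereas you verify Equation~\ref{eq:ch4_conditional_factorisation_definition_1} in the discrete setting and note the set-based analogue as an aside.
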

\begin{proof}
    We prove this lemma using the set-based definition of conditional independence as follows.
    {\allowdisplaybreaks
    \begin{align*}
        &\mathsf{P} \Bigl( \boldsymbol{X}_{1} \in \mathbb{B}_{1} \mid \boldsymbol{X}_{2} \in \mathbb{B}_{2}, \boldsymbol{X}_{3} \in \mathbb{B}_{3} \Bigr) \\
        \overset{(1)}{=} \; &\frac{ \mathsf{P}\Bigl( \boldsymbol{X}_{1} \in \mathbb{B}_{1}, \boldsymbol{X}_{2} \in \mathbb{B}_{2}, \boldsymbol{X}_{3} \in \mathbb{B}_{3} \Bigr) }{ \mathsf{P}\Bigl( \boldsymbol{X}_{1} \in \mathbb{B}_{1}, \boldsymbol{X}_{2} \in \mathbb{B}_{2} \Bigr) } \\
        \overset{(2)}{=} \; &\frac{ \mathsf{P}\Bigl( \boldsymbol{X}_{1} \in \mathbb{B}_{1} \Bigr) \cdot \mathsf{P}\Bigl( \boldsymbol{X}_{2} \in \mathbb{B}_{2} \Bigr) \cdot \mathsf{P}\Bigl( \boldsymbol{X}_{3} \in \mathbb{B}_{3} \Bigr)  }{ \mathsf{P}\Bigl( \boldsymbol{X}_{2} \in \mathbb{B}_{2} \Bigr) \cdot \mathsf{P}\Bigl( \boldsymbol{X}_{3} \in \mathbb{B}_{3} \Bigr) } \\
        = \; &\mathsf{P} \Bigl( \boldsymbol{X}_{1} \in \mathbb{B}_{1} \Bigr) \\
        = \; &\frac{ \mathsf{P}\Bigl( \boldsymbol{X}_{1} \in \mathbb{B}_{1} \Bigr) \cdot \mathsf{P}\Bigl( \boldsymbol{X}_{3} \in \mathbb{B}_{3} \Bigr) }{ \mathsf{P} (\boldsymbol{X}_{3} \in \mathbb{B}_{3}) } \\
        \overset{(3)}{=} \; &\frac{ \mathsf{P}\Bigl( \boldsymbol{X}_{1} \in \mathbb{B}_{1}, \boldsymbol{X}_{3} \in \mathbb{B}_{3} \Bigr) }{ \mathsf{P} \Bigl(\boldsymbol{X}_{3} \in \mathbb{B}_{3} \Bigr) }  \\
        \overset{(4)}{=} \; &\mathsf{P} \Bigl( \boldsymbol{X}_{1} \in \mathbb{B}_{1} \mid \boldsymbol{X}_{3} \in \mathbb{B}_{3} \Bigr)
    \end{align*}
    }
Here, Equals sign $(1)$ follows from the definition of conditional probability. Equals sign $(2)$ follows because random vectors $X_{1}$, $X_{2}$, $X_{3}$ are disjoint subsets of mutually independent random variables $\boldsymbol{X}$. Equals sign $(3)$ follows again because $\boldsymbol{X}_{1}$ and $\boldsymbol{X}_{3}$ are disjoint in $\boldsymbol{X}$. Equals sign $(4)$ follows from the definition of conditional probability.
\end{proof}

\begin{lemma} \label{lemma:ch4_dag_inclusion}
    Let $\mathcal{G} = (\mathbb{X}, \mathbb{A})$ be a directed acyclic graph. Let $\mathrm{H}(\mathcal{G})$ denote the height of $\mathcal{G}$. Let also $\mathbb{H}^{\mathcal{G}}_{j}$ denote the collection of nodes with height $j$ in $\mathcal{G}$. Then, for every $k \in \{0, 1, \cdots, \mathrm{H}(\mathcal{G}) - 1 \}$ and every node $X \in \mathbb{H}^{\mathcal{G}}_{k+1}$, we have
    \begin{align*}
        \mathrm{Pa}^{\mathcal{G}}(X) 
        \subseteq \bigcup_{i=1}^{k+1} (\mathrm{Pa}^{\mathcal{G}})^{(i)} (X) 
        \subseteq \bigcup_{j=0}^{k} \mathbb{H}_{j}^{\mathcal{G}} 
        \subseteq \overline{\mathrm{De}}^{\mathcal{G}}(X).
    \end{align*}
\end{lemma}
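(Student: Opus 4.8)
The plan is to establish the three inclusions separately, left to right, where the middle one carries essentially all the content and the outer two are short.

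The leftmost inclusion is immediate from the recursive definition of the $n$-fold parent set: $(\mathrm{Pa}^{\mathcal{G}})^{(1)}(X) = \mathrm{Pa}^{\mathcal{G}}\bigl((\mathrm{Pa}^{\mathcal{G}})^{(0)}(X)\bigr) = \mathrm{Pa}^{\mathcal{G}}(\{X\}) = \mathrm{Pa}^{\mathcal{G}}(X)$, so $\mathrm{Pa}^{\mathcal{G}}(X)$ is precisely the $i=1$ term of the union $\bigcup_{i=1}^{k+1}(\mathrm{Pa}^{\mathcal{G}})^{(i)}(X)$, and since $k \ge 0$ this term is present.

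For the middle inclusion I would first record the basic height-monotonicity fact: if $Y \in \mathrm{Pa}^{\mathcal{G}}(W)$ then $\mathrm{H}^{\mathcal{G}}(Y) \le \mathrm{H}^{\mathcal{G}}(W) - 1$. This holds because in a finite DAG every node is reachable from some root (trace parents backwards; the process terminates by finiteness and acyclicity), so $Y$ lies on some root-to-$Y$ path; appending the arc $Y \to W$ to a longest such path yields a root-to-$W$ path of length $\mathrm{H}^{\mathcal{G}}(Y) + 1$, whence $\mathrm{H}^{\mathcal{G}}(W) \ge \mathrm{H}^{\mathcal{G}}(Y) + 1$. Then I would prove by induction on $i \in \{1,\dots,k+1\}$ that every node of $(\mathrm{Pa}^{\mathcal{G}})^{(i)}(X)$ has height at most $(k+1) - i$: the base case $i=1$ applies the monotonicity fact to $X$, which has height $k+1$; the inductive step applies it to each $Y \in (\mathrm{Pa}^{\mathcal{G}})^{(i)}(X)$, using $(\mathrm{Pa}^{\mathcal{G}})^{(i+1)}(X) = \bigcup_{Y \in (\mathrm{Pa}^{\mathcal{G}})^{(i)}(X)} \mathrm{Pa}^{\mathcal{G}}(Y)$. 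Consequently every node appearing in $\bigcup_{i=1}^{k+1}(\mathrm{Pa}^{\mathcal{G}})^{(i)}(X)$ has height in $\{0,\dots,k\}$, i.e. lies in $\bigcup_{j=0}^{k}\mathbb{H}_j^{\mathcal{G}}$, giving the inclusion.

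For the rightmost inclusion, recall $\overline{\mathrm{De}}^{\mathcal{G}}(X) = \mathbb{X} \setminus \mathrm{De}^{\mathcal{G}}(X)$, so it suffices to show that no node of height at most $k$ is a descendant of $X$. I would argue by contraposition: if $Y \in \mathrm{De}^{\mathcal{G}}(X)$ there is a path from $X$ to $Y$ of length $m \ge 1$; concatenating it with a longest root-to-$X$ path, of length $\mathrm{H}^{\mathcal{G}}(X) = k+1$, produces a root-to-$Y$ path of length $(k+1)+m \ge k+2$, so $\mathrm{H}^{\mathcal{G}}(Y) \ge k+2 > k$ and hence $Y \notin \bigcup_{j=0}^{k}\mathbb{H}_j^{\mathcal{G}}$. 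Equivalently, any node of height at most $k$ is a non-descendant of $X$.

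There is no genuine obstacle here; the only points demanding care are (i) justifying that heights are well defined by noting that every node of a finite DAG is reachable from a root, which is also what makes the ``append an arc'' step legitimate, and (ii) keeping the index bookkeeping honest — in particular that the induction need only run up to $i = k+1$ (beyond which the $n$-fold parent set is forced to be empty, since negative heights are impossible) and that path-length conventions cause no trouble, as $X$ itself is excluded from $\bigcup_{j=0}^{k}\mathbb{H}_j^{\mathcal{G}}$ simply because it has height $k+1$.
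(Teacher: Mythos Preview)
Your proposal is correct and follows essentially the same three-inclusion structure as the paper's proof. The only notable difference is that for the middle inclusion the paper gives a short contradiction argument (if some $X'$ were an $i$-fold parent of $X$ with height $\ge k+1$, this would contradict $X \in \mathbb{H}^{\mathcal{G}}_{k+1}$), whereas you make the underlying height-monotonicity fact explicit and run a tidy induction to get the sharper bound $\mathrm{H}^{\mathcal{G}}(Y) \le (k+1)-i$ for $Y \in (\mathrm{Pa}^{\mathcal{G}})^{(i)}(X)$; your version is more carefully justified but otherwise the same idea.
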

\begin{proof}
    Intuitively, this lemma states the following chain of inclusions: the parent nodes of $X$, the recursive parent nodes of $X$, the collection of nodes whose height is less than $k+1$, and the non-descendant nodes of $X$. We prove these inclusions one by one.
    
    The first inclusion $\mathrm{Pa}^{\mathcal{G}}(X) \subseteq \bigcup_{i=1}^{k+1} (\mathrm{Pa}^{\mathcal{G}})^{(i)} (X)$ follows directly from the definition of $n$-fold parents (see Subsection \ref{subsec:ch4_prerequisite_dag}).

    We prove the second inclusion $\bigcup_{i=1}^{k+1} (\mathrm{Pa}^{\mathcal{G}})^{(i)} (X)  \subseteq \bigcup_{j=0}^{k} \mathbb{H}_{j}^{\mathcal{G}}$ by contradiction. Assume that $\bigcup_{i=1}^{k+1} (\mathrm{Pa}^{\mathcal{G}})^{(i)} (X)  \nsubseteq \bigcup_{j=0}^{k} \mathbb{H}_{j}^{\mathcal{G}}$. Then, this implies that there exists a node $X' \in \mathbb{X}$ such that $X' \in \bigcup_{i=1}^{k+1} (\mathrm{Pa}^{\mathcal{G}})^{(i)} (X)$ but $X' \notin \bigcup_{j=0}^{k} \mathbb{H}_{j}^{\mathcal{G}}$. The latter implies that $\mathrm{H}^{\mathcal{G}} (X') \geq k+1$. However, since we assume that  $X' \in \bigcup_{i=1}^{k+1} (\mathrm{Pa}^{\mathcal{G}})^{(i)} (X)$, it follows from $X \in \mathbb{H}_{k+1}^{\mathcal{G}}$ that we must have $\mathrm{H}^{\mathcal{G}}(X') \leq k$. Therefore, we arrive at a contradiction, which proves the second inclusion.
    
    We prove the third inclusion $\bigcup_{j=0}^{k} \mathbb{H}_{j}^{\mathcal{G}} \subseteq \overline{\mathrm{De}}^{\mathcal{G}}(X)$ also by contradiction. Assume that $\bigcup_{j=0}^{k} \mathbb{H}_{j}^{\mathcal{G}} \nsubseteq \overline{\mathrm{De}}^{\mathcal{G}}(X)$, which implies that there exists a node $X' \in \bigcup_{j=0}^{k} \mathbb{H}^{\mathcal{G}}_{j}$ such that $X' \notin \overline{\mathrm{De}}^{\mathcal{G}}(X)$. In other words, the node $X'$ is a descendant of $X$ in $\mathcal{G}$. Therefore, a path $X' \rightarrow \cdots \rightarrow X$ should exist in $\mathcal{G}$. However, this contradicts with $X' \in \bigcup_{j=0}^{k} \mathbb{H}^{\mathcal{G}}_{j}$. Therefore, we prove the third inclusion.
\end{proof}

\begin{lemma} \label{lemma:ch4_dag_disjoint}
    Let $\mathcal{G} = (\mathbb{X}, \mathbb{A})$ be a directed acyclic graph. Let also $X \in \mathbb{X}$ and $X' \in \overline{\mathrm{De}}^{\mathcal{G}} (X) \cap \mathbb{H}_{j}^{\mathcal{G}}$ be two nodes in $\mathcal{G}$, where $j$ is an integer such that $0 \leq j \leq \mathrm{H}(\mathcal{G})$. Then we have
    \begin{align*}
        \{ X \} \cap \bigcup_{i=0}^{j} (\mathrm{Pa}^{\mathcal{G}})^{(i)} (X') = \varnothing.
    \end{align*}
\end{lemma}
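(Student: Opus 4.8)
The plan is to show that the whole union $\bigcup_{i=0}^{j}(\mathrm{Pa}^{\mathcal{G}})^{(i)}(X')$ is contained in $\{X'\}\cup\mathrm{An}^{\mathcal{G}}(X')$, and then observe that $X$ belongs to neither of these two pieces; the disjointness $\{X\}\cap\bigcup_{i=0}^{j}(\mathrm{Pa}^{\mathcal{G}})^{(i)}(X')=\varnothing$ follows immediately. This reduces the lemma to two elementary facts about directed acyclic graphs, so I expect no serious obstacle.

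First I would record that $(\mathrm{Pa}^{\mathcal{G}})^{(0)}(X')=\{X'\}$ by convention, and that $(\mathrm{Pa}^{\mathcal{G}})^{(i)}(X')\subseteq\mathrm{An}^{\mathcal{G}}(X')$ for every $i\geq 1$. The latter I would prove by a short induction on $i$, in the same spirit as the proof of Lemma \ref{lemma:ch4_dag_inclusion}: for $i=1$ we have $(\mathrm{Pa}^{\mathcal{G}})^{(1)}(X')=\mathrm{Pa}^{\mathcal{G}}(X')\subseteq\mathrm{An}^{\mathcal{G}}(X')$ since any parent reaches $X'$ by a single arc; for the inductive step, any $Y\in(\mathrm{Pa}^{\mathcal{G}})^{(i)}(X')$ is a parent of some $W\in(\mathrm{Pa}^{\mathcal{G}})^{(i-1)}(X')$, and concatenating the arc $Y\rightarrow W$ with a path from $W$ to $X'$ — supplied by the inductive hypothesis when $i-1\geq 1$, or by $W=X'$ when $i-1=0$ — yields a path of positive length from $Y$ to $X'$, so $Y\in\mathrm{An}^{\mathcal{G}}(X')$. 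Combining these gives $\bigcup_{i=0}^{j}(\mathrm{Pa}^{\mathcal{G}})^{(i)}(X')\subseteq\{X'\}\cup\mathrm{An}^{\mathcal{G}}(X')$. It then remains to check $X\neq X'$ and $X\notin\mathrm{An}^{\mathcal{G}}(X')$, and both follow from $X'\in\overline{\mathrm{De}}^{\mathcal{G}}(X)$: since each node is counted as its own descendant, $X\in\mathrm{De}^{\mathcal{G}}(X)$ while $X'\notin\mathrm{De}^{\mathcal{G}}(X)$, hence $X\neq X'$; and if $X$ were an ancestor of $X'$ there would be a path from $X$ to $X'$, i.e. $X'\in\mathrm{De}^{\mathcal{G}}(X)$, again contradicting $X'\in\overline{\mathrm{De}}^{\mathcal{G}}(X)$.

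The only point that genuinely deserves care — and the closest thing to an obstacle — is that the statement quietly presupposes the convention that a node is among its own descendants, so that $\overline{\mathrm{De}}^{\mathcal{G}}(X)$ excludes $X$ itself; without this, the claim fails when $X'=X$, because then $X\in(\mathrm{Pa}^{\mathcal{G}})^{(0)}(X')$. I would also note, for the reader, that the hypothesis $X'\in\mathbb{H}_{j}^{\mathcal{G}}$ (equivalently $\mathrm{H}^{\mathcal{G}}(X')=j$) is not used in the argument above; it serves only to guarantee $(\mathrm{Pa}^{\mathcal{G}})^{(i)}(X')=\varnothing$ for $i>j$, so that the finite union already exhausts $\{X'\}\cup\mathrm{An}^{\mathcal{G}}(X')$, which is the form in which the lemma will be invoked when relating the impact variables to the independence semantics of Bayesian Networks.
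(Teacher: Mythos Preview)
Your proposal is correct and follows essentially the same route as the paper: both split off the $i=0$ case via $X'\neq X$ (relying on the convention that $X$ is its own descendant, which the paper also invokes), and for $i\geq 1$ both argue that membership in $(\mathrm{Pa}^{\mathcal{G}})^{(i)}(X')$ would produce a path $X\rightarrow\cdots\rightarrow X'$, contradicting $X'\in\overline{\mathrm{De}}^{\mathcal{G}}(X)$. Your version is a bit more explicit (you spell out the induction for $(\mathrm{Pa}^{\mathcal{G}})^{(i)}(X')\subseteq\mathrm{An}^{\mathcal{G}}(X')$) and your remarks on the self-descendant convention and on the hypothesis $X'\in\mathbb{H}_{j}^{\mathcal{G}}$ being unused are accurate extras, but the underlying argument is the same.
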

\begin{proof}
    When $i=0$, the case is trivial because a non-descendant node of $X$ cannot be $X$ itself. For the case when $i = \{1, \cdots, j \}$, we prove by contradiction. Assume that $\{ X \} \cap \bigcup_{i=1}^{j} (\mathrm{Pa}^{\mathcal{G}})^{(i)} (X') \neq \varnothing$. This implies that $X \in \bigcup_{i=1}^{j} (\mathrm{Pa}^{\mathcal{G}})^{(i)} (X')$. Therefore, a path $X \rightarrow \cdots \rightarrow X'$ should exist in $\mathcal{G}$. However, since $X' \in \overline{\mathrm{De}} (X)$, which by definition implies that the path $X \rightarrow \cdots \rightarrow X'$ does not exist in $\mathcal{G}$, we arrive at a contradiction.
\end{proof}

Having the above lemmas prepared, we are now ready to prove an important lemma for impact variables. 
\begin{lemma} \label{lemma:function_of_impact_variables}
    Let $\mathcal{G} = (\mathbb{O}_{t+h^{\mathrm{C}}: t+h_{t}}, \mathbb{A})$ be a \acrshort*{dag} that represents the dependencies between operations in $\mathbb{O}_{t+h^{\mathrm{C}}: t+h_{t}}$. For each operation $\mathcal{O} \in \mathbb{O}_{t + h^{\mathrm{C}}: t + h_{t} \mid t}$ and endogenous disturbance type $ \ell \in \mathbb{L}^{\mathrm{En}}$, construct the random variable $Z_{\mathcal{O}}^{\ell} = \textsc{PropFunc} (\underline{Z}_{\mathcal{O}}^{\ell}, \boldsymbol{Z}_{\mathrm{Pa}^{\mathcal{G}} (\mathcal{O})}^{\ell} ; \ell )$ according to the \textsc{ImpactPropagation} algorithm (that is, Algorithm \ref{algo:ch4_impact_propagation}). Then, for each operation $\mathcal{O} \in \mathbb{O}_{t + h^{\mathrm{C}}: t + h_{t}}$, its corresponding impact variable $Z_{\mathcal{O}}^{\ell}$ is a Borel measurable function of the following set of isolation variables.
    \begin{align*}
        \Bigl\{ \underline{Z}_{\mathcal{O}}^{\ell}: \mathcal{O} \in \bigcup_{i=0}^{\mathrm{H}^{\mathcal{G}} (\mathcal{O})} (\mathrm{Pa}^{\mathcal{G}})^{(i)} (\mathcal{O}) \Bigr\}.
    \end{align*}
\end{lemma}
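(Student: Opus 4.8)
The plan is to prove the statement by induction on the height $\mathrm{H}^{\mathcal{G}}(\mathcal{O})$ of the operation node in $\mathcal{G}$, exploiting the fact that \textsc{ImpactPropagation} defines $Z_{\mathcal{O}}^{\ell}$ recursively from $\underline{Z}_{\mathcal{O}}^{\ell}$ and the impact variables of the parents of $\mathcal{O}$ (Lines \ref{algo:ch4_impact_propagation_line_propfunc_o0set} and \ref{algo:ch4_impact_propagation_line_mainloop_propfunc} of Algorithm \ref{algo:ch4_impact_propagation}), together with the $n$-fold parent notation of Subsection \ref{subsec:ch4_prerequisite_dag}. Throughout I would abbreviate $A_{i}(\mathcal{O}) \coloneqq (\mathrm{Pa}^{\mathcal{G}})^{(i)}(\mathcal{O})$ and use two elementary facts about $\mathcal{G}$: (i) for every arc $\mathcal{O}' \rightarrow \mathcal{O}$ we have $\mathrm{H}^{\mathcal{G}}(\mathcal{O}') \leq \mathrm{H}^{\mathcal{G}}(\mathcal{O}) - 1$, since appending the arc $\mathcal{O}' \rightarrow \mathcal{O}$ to any longest root-to-$\mathcal{O}'$ path produces a root-to-$\mathcal{O}$ path one arc longer; and (ii) $\mathrm{Pa}^{\mathcal{G}}$ is monotone on node sets, which is immediate from $\mathrm{Pa}^{\mathcal{G}}(\mathbb{X}') = \bigcup_{X \in \mathbb{X}'}\mathrm{Pa}^{\mathcal{G}}(X)$. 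I would also record once that finite compositions and finite tuples of Borel measurable maps are Borel measurable, and that the concrete $\textsc{PropFunc}$ designs of Subsection \ref{subsec:ch4_methodology_impact_propagation} are Borel measurable since they are built from $\max$, $\min$, $\lceil\cdot\rceil$, and affine maps.

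For the base case $\mathrm{H}^{\mathcal{G}}(\mathcal{O}) = 0$, the operation $\mathcal{O}$ is a root, so $Z_{\mathcal{O}}^{\ell} = \textsc{PropFunc}(\underline{Z}_{\mathcal{O}}^{\ell}, \varnothing; \ell)$ is a Borel measurable function of the single variable $\underline{Z}_{\mathcal{O}}^{\ell}$, and the claimed index set is exactly $A_{0}(\mathcal{O}) = \{\mathcal{O}\}$. For the inductive step, assume the statement holds for all operations of height at most $k$, and let $\mathrm{H}^{\mathcal{G}}(\mathcal{O}) = k+1$. By fact (i), every $\mathcal{O}' \in \mathrm{Pa}^{\mathcal{G}}(\mathcal{O})$ has height at most $k$, so the induction hypothesis writes each $Z_{\mathcal{O}'}^{\ell}$ as a Borel measurable function of $\{\underline{Z}_{\mathcal{O}''}^{\ell} : \mathcal{O}'' \in \bigcup_{i=0}^{\mathrm{H}^{\mathcal{G}}(\mathcal{O}')} A_{i}(\mathcal{O}')\}$. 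Since $Z_{\mathcal{O}}^{\ell} = \textsc{PropFunc}(\underline{Z}_{\mathcal{O}}^{\ell}, \boldsymbol{Z}_{\mathrm{Pa}^{\mathcal{G}}(\mathcal{O})}^{\ell}; \ell)$, composing measurable maps shows $Z_{\mathcal{O}}^{\ell}$ is a Borel measurable function of the isolation variables indexed by $\{\mathcal{O}\} \cup \bigcup_{\mathcal{O}' \in \mathrm{Pa}^{\mathcal{G}}(\mathcal{O})} \bigcup_{i=0}^{\mathrm{H}^{\mathcal{G}}(\mathcal{O}')} A_{i}(\mathcal{O}')$.

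It then remains to check that this index set is contained in $\bigcup_{i=0}^{k+1} A_{i}(\mathcal{O})$. The term $\{\mathcal{O}\}$ is $A_{0}(\mathcal{O})$. For the rest I would first establish the auxiliary inclusion: if $\mathcal{O}' \in \mathrm{Pa}^{\mathcal{G}}(\mathcal{O})$ then $A_{i}(\mathcal{O}') \subseteq A_{i+1}(\mathcal{O})$ for every $i \geq 0$, by a short induction on $i$ — the case $i=0$ is $\{\mathcal{O}'\} \subseteq \mathrm{Pa}^{\mathcal{G}}(\mathcal{O})$, and the step is $A_{i+1}(\mathcal{O}') = \mathrm{Pa}^{\mathcal{G}}(A_{i}(\mathcal{O}')) \subseteq \mathrm{Pa}^{\mathcal{G}}(A_{i+1}(\mathcal{O})) = A_{i+2}(\mathcal{O})$ using fact (ii). Combining this with $\mathrm{H}^{\mathcal{G}}(\mathcal{O}') \leq k$ gives $\bigcup_{i=0}^{\mathrm{H}^{\mathcal{G}}(\mathcal{O}')} A_{i}(\mathcal{O}') \subseteq \bigcup_{i=0}^{k} A_{i+1}(\mathcal{O}) = \bigcup_{i=1}^{k+1} A_{i}(\mathcal{O})$; taking the union over $\mathcal{O}' \in \mathrm{Pa}^{\mathcal{G}}(\mathcal{O})$ and adjoining $A_{0}(\mathcal{O})$ yields the desired containment, and the induction closes.

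The main obstacle is bookkeeping rather than conceptual: carefully aligning the $n$-fold parent indices when composing the recursive representations, in particular proving the auxiliary inclusion $A_{i}(\mathcal{O}') \subseteq A_{i+1}(\mathcal{O})$ and lining up the height bounds so that the union over the parents of $\mathcal{O}$ never exceeds level $\mathrm{H}^{\mathcal{G}}(\mathcal{O})$. A minor point to handle explicitly is the Borel measurability of $\textsc{PropFunc}$ and the stability of measurability under finite composition and tupling; with those in hand the argument is a routine structural induction.
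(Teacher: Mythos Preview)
Your proof is correct and follows essentially the same route as the paper: both argue by induction on the height $\mathrm{H}^{\mathcal{G}}(\mathcal{O})$, handle roots as the base case, and in the inductive step unwind the recursive definition $Z_{\mathcal{O}}^{\ell} = \textsc{PropFunc}(\underline{Z}_{\mathcal{O}}^{\ell}, \boldsymbol{Z}_{\mathrm{Pa}^{\mathcal{G}}(\mathcal{O})}^{\ell};\ell)$ to express $Z_{\mathcal{O}}^{\ell}$ in terms of isolation variables indexed by $\{\mathcal{O}\}\cup\bigcup_{\mathcal{O}'\in\mathrm{Pa}^{\mathcal{G}}(\mathcal{O})}\bigcup_{i=0}^{\mathrm{H}^{\mathcal{G}}(\mathcal{O}')}A_i(\mathcal{O}')$. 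The only substantive differences are that the paper cites its Lemma~\ref{lemma:ch4_dag_inclusion} to bound parent heights (your fact~(i) does the same more directly), and that where the paper simply asserts the resulting index set equals $\bigcup_{i=1}^{k+1}A_i(\mathcal{O})$ ``by the definition of recursive parents,'' you supply the auxiliary inclusion $A_i(\mathcal{O}')\subseteq A_{i+1}(\mathcal{O})$ via a short induction and also make the Borel-measurability bookkeeping explicit---both welcome clarifications rather than a different argument.
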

\begin{proof}
    We prove this lemma by induction. Let $j$ denote the case when $\mathrm{H}^{\mathcal{G}} (\mathcal{O}) = j$. For the base case $j = 0$, since there exist no parent operations of $\mathcal{O}$ in $\mathcal{G}$, the impact variable will be defined as $Z_{\mathcal{O}}^{\ell} = \textsc{PropFunc} (\underline{Z}_{\mathcal{O}}^{\ell}, \varnothing; \ell)$. That is, the impact variable $Z_{\mathcal{O}}^{\ell}$ is only a function of its corresponding isolation variable $\underline{Z}^{\ell}_{\mathcal{O}}$. Recall by convention, we define $(\mathrm{Pa}^{\mathcal{G}})^{(0)} (\mathcal{O}) = \{ \mathcal{O} \}$. Therefore, we prove the base case.

    Assume that the statement holds for cases when $j = 1, \cdots, k$, where $k < \mathrm{H}({\mathcal{G}})$. Now we consider the case $j = k+1$, that is, $\mathrm{H}^{\mathcal{G}} (\mathcal{O}) = k+1$. According to the definition of the impact variables, we have $Z_{\mathcal{O}}^{\ell} = \textsc{PropFunc} (\underline{Z}^{\ell}_{\mathcal{O}}, \boldsymbol{Z}^{\ell}_{\mathrm{Pa}^{\mathcal{G}} (\mathcal{O}) } ; \ell)$. For the second argument, from Lemma \ref{lemma:ch4_dag_inclusion} we know that $\mathrm{Pa}^{\mathcal{G}} (\mathcal{O}) \subseteq \bigcup_{i=0}^{k} \mathbb{H}^{\mathcal{G}}_{k}$. Therefore, applying the inductive assumption, we have that the second argument $\boldsymbol{Z}_{\mathrm{Pa}^{\mathcal{G}} (\mathcal{O}) }^{\ell}$ is a function of the following set of isolation variables,
    \begin{align*}
        \Bigl\{ \underline{Z}^{\ell}_{\mathcal{O}''}: \mathcal{O}'' \in \bigcup_{\mathcal{O}' \in \mathrm{Pa}^{\mathcal{G}} (\mathcal{O}) } \bigcup_{i=0}^{\mathrm{H}^{\mathcal{G}} (\mathcal{O}') } (\mathrm{Pa}^{\mathcal{G}})^{(i)}  (\mathcal{O}') \Bigr\},
    \end{align*}
    which, by the definition of recursive parents (note that $\mathrm{H}^{\mathcal{G}} (\mathcal{O}) = k+1$ and $\mathcal{O}' \in \mathrm{Pa}^{\mathcal{G}} (\mathcal{O})$), equals 
    \begin{align*}
        \Bigl\{ \underline{Z}^{\ell}_{\mathcal{O}''}: \mathcal{O}'' \in \bigcup_{i=1}^{\mathrm{H}^{\mathcal{G}} (\mathcal{O}) } (\mathrm{Pa}^{\mathcal{G}})^{(i)}  (\mathcal{O}) \Bigr\}.
    \end{align*}
    Therefore, in the case $j = k+1$, the impact variable $Z_{\mathcal{O}}^{\ell}$ is a function of 
    \begin{align*}
        \{ \underline{Z}_{\mathcal{O}}^{\ell} \} \cup \Bigl\{ \underline{Z}^{\ell}_{\mathcal{O}''}: \mathcal{O}'' \in \bigcup_{i=1}^{\mathrm{H}^{\mathcal{G}} (\mathcal{O}) } (\mathrm{Pa}^{\mathcal{G}})^{(i)}  (\mathcal{O}) \Bigr\},
    \end{align*}
    which again, by the convention $(\mathrm{Pa}^{\mathcal{G}})^{(0)} (\mathcal{O}) = \{ \mathcal{O} \}$, equals 
    \begin{align*}
        \Bigl\{ \underline{Z}_{\mathcal{O}}^{\ell}: \mathcal{O} \in \bigcup_{i=0}^{\mathrm{H}^{\mathcal{G}} (\mathcal{O})} (\mathrm{Pa}^{\mathcal{G}})^{(i)} (\mathcal{O}) \Bigr\}.
    \end{align*}
    Therefore, we prove the inductive case and therefore the lemma.
\end{proof}

Intuitively, Lemma \ref{lemma:function_of_impact_variables} states that within the scope of isolation variables, an impact variable only depends on those isolation variables corresponding to its recursive parents. Having this property, we can immediately obtain the following conditional independence statement.

\begin{corollary} \label{corollary:impact_variable_and_isolation_variable_property}
    Let $\mathcal{G} = (\mathbb{O}_{t+h^{\mathrm{C}}: t+h_{t}}, \mathbb{A})$ be a \acrshort*{dag} that represents the dependencies between operations in $\mathbb{O}_{t+h^{\mathrm{C}}: t+h_{t}}$. For each operation $\mathcal{O} \in \mathbb{O}_{t + h^{\mathrm{C}}: t + h_{t} \mid t}$ and endogenous disturbance type $ \ell \in \mathbb{L}^{\mathrm{En}}$, construct the random variable $Z_{\mathcal{O}}^{\ell} = \textsc{PropFunc} (\underline{Z}_{\mathcal{O}}^{\ell}, \boldsymbol{Z}_{\mathrm{Pa}^{\mathcal{G}} (\mathcal{O})}^{\ell} ; \ell )$ according to the \textsc{ImpactPropagation} algorithm (that is, Algorithm \ref{algo:ch4_impact_propagation}). Then, for each operation $\mathcal{O} \in \mathbb{O}_{t + h^{\mathrm{C}}: t + h_{t}}$, we have 
    \begin{align*}
        \mathsf{P} \models Z_{\mathcal{O}}^{\ell} \independent \underbrace{\Bigl\{ \underline{Z}^{\ell}_{\mathcal{O}'}: \mathcal{O}' \in \mathbb{O}_{t + h^{\mathrm{C}}: t + h_{t}} \setminus \bigcup_{i=0}^{\mathrm{H}^{\mathcal{G}} (\mathcal{O})  } (\mathrm{Pa}^{\mathcal{G}})^{(i)} (\mathcal{O}) \Bigr\}}_{(a)} \mid \underbrace{\Bigl\{ \underline{Z}^{\ell}_{\mathcal{O}'}: \mathcal{O}' \in \bigcup_{i=1}^{\mathrm{H}^{\mathcal{G}} (\mathcal{O}) } (\mathrm{Pa}^{\mathcal{G}})^{(i)} (\mathcal{O}) \Bigr\}}_{(b)},
    \end{align*}
    where the conditioning term $(b)$ represents the isolation variables corresponding to the exclusive recursive parents\footnotemark of $\mathcal{O}$ in $\mathcal{G}$, and the second term $(a)$ represents the isolation variables corresponding to the complement of inclusive recursive parents of $\mathcal{O}$ in $\mathcal{G}$.
\end{corollary}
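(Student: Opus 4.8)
The plan is to chain together three facts already established: Lemma~\ref{lemma:function_of_impact_variables}, which says that $Z_{\mathcal{O}}^{\ell}$ is a Borel function of \emph{only} the isolation variables indexed by $\mathcal{O}$ and its recursive parents; the mutual independence of all isolation variables; and the conditional-independence machinery of Lemmas~\ref{lemma:functions_of_conditional_independent_rvs_are_still_conditional_independent} and~\ref{lemma:disjoint_sets_of_rvs_are_conditional_independent}. The skeleton is: isolate which isolation variables matter, split the rest off by mutual independence, and then ``upgrade'' the statement from $\underline{Z}_{\mathcal{O}}^{\ell}$ to $Z_{\mathcal{O}}^{\ell}$.

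First I would strengthen Corollary~\ref{corollary:isolation_vars_are_independent} from pairwise to mutual independence: since the operations in $\mathbb{O}_{t+h^{\mathrm{C}}:t+h_{t}}$ are non-overlapping, the information vectors $\boldsymbol{I}^{\mathrm{En},\ell}_{t_{s}(\mathcal{O}'):t_{e}(\mathcal{O}')}$ are pairwise-disjoint sub-blocks of the mutually independent vector $\boldsymbol{I}_{t_0:t_N}$, and each $\underline{Z}_{\mathcal{O}'}^{\ell}$ is a Borel function of its own block, so Lemma~\ref{lemma:functions_of_independent_rvs_are_still_independent} gives that the whole family $\{\underline{Z}_{\mathcal{O}'}^{\ell}:\mathcal{O}'\in\mathbb{O}_{t+h^{\mathrm{C}}:t+h_{t}}\}$ is mutually independent under $\mathsf{P}$. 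Next I would fix $\mathcal{O}$ and partition $\mathbb{O}_{t+h^{\mathrm{C}}:t+h_{t}}$ into the three pieces $\{\mathcal{O}\}$, the exclusive recursive parents $B:=\bigcup_{i=1}^{\mathrm{H}^{\mathcal{G}}(\mathcal{O})}(\mathrm{Pa}^{\mathcal{G}})^{(i)}(\mathcal{O})$, and the remainder $A:=\mathbb{O}_{t+h^{\mathrm{C}}:t+h_{t}}\setminus(\{\mathcal{O}\}\cup B)$, noting $\{\mathcal{O}\}\cup B=\bigcup_{i=0}^{\mathrm{H}^{\mathcal{G}}(\mathcal{O})}(\mathrm{Pa}^{\mathcal{G}})^{(i)}(\mathcal{O})$. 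Pairwise disjointness needs only one observation: $\mathcal{O}\notin B$, because $\mathcal{O}\in(\mathrm{Pa}^{\mathcal{G}})^{(i)}(\mathcal{O})$ for some $i\ge 1$ would produce a directed path of $i\ge1$ arcs from $\mathcal{O}$ back to $\mathcal{O}$, contradicting acyclicity of $\mathcal{G}$ (this is the same kind of argument as in Lemma~\ref{lemma:ch4_dag_disjoint}); and $A$ is disjoint from $\{\mathcal{O}\}\cup B$ by construction. Applying Lemma~\ref{lemma:disjoint_sets_of_rvs_are_conditional_independent} to the three disjoint sub-collections of the mutually independent isolation variables then yields $\mathsf{P}\models\underline{Z}_{\mathcal{O}}^{\ell}\independent\{\underline{Z}_{\mathcal{O}'}^{\ell}:\mathcal{O}'\in A\}\mid\{\underline{Z}_{\mathcal{O}'}^{\ell}:\mathcal{O}'\in B\}$, i.e.\ term $(a)$ conditionally independent of $\underline{Z}_{\mathcal{O}}^{\ell}$ given term $(b)$.

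The last step upgrades $\underline{Z}_{\mathcal{O}}^{\ell}$ to $Z_{\mathcal{O}}^{\ell}$. By Lemma~\ref{lemma:function_of_impact_variables}, $Z_{\mathcal{O}}^{\ell}$ is a Borel function of $\{\underline{Z}_{\mathcal{O}}^{\ell}\}\cup\{\underline{Z}_{\mathcal{O}'}^{\ell}:\mathcal{O}'\in B\}$ --- the ``own'' isolation variable together with exactly the conditioning set $(b)$. The subtlety, and what I expect to be the main obstacle, is that $Z_{\mathcal{O}}^{\ell}$ depends on variables lying both outside and inside the conditioning set, so Lemma~\ref{lemma:functions_of_conditional_independent_rvs_are_still_conditional_independent} cannot be invoked verbatim with $Z_{\mathcal{O}}^{\ell}$ as $f_1$ of the first argument alone. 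I would resolve this by first augmenting the conditionally-independent side: from $\underline{Z}_{\mathcal{O}}^{\ell}\independent\{(a)\}\mid\{(b)\}$ it follows that $\bigl(\underline{Z}_{\mathcal{O}}^{\ell},\{(b)\}\bigr)\independent\{(a)\}\mid\{(b)\}$ --- under the disjointness convention adopted in the technical note of Subsection~\ref{subsec:ch4_probabilistic_independence} this is in fact the same statement, since the conditioning variables are stripped from both sides --- and then Lemma~\ref{lemma:functions_of_conditional_independent_rvs_are_still_conditional_independent}, applied with $f_1$ the Lemma~\ref{lemma:function_of_impact_variables} map sending $\bigl(\underline{Z}_{\mathcal{O}}^{\ell},\{(b)\}\bigr)$ to $Z_{\mathcal{O}}^{\ell}$ and $f_2,f_3$ the identities on $\{(a)\}$ and $\{(b)\}$, delivers $\mathsf{P}\models Z_{\mathcal{O}}^{\ell}\independent\{(a)\}\mid\{(b)\}$, which is the corollary. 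Equivalently, one can bypass the augmentation entirely with a one-line per-realisation check: conditioning on $\{\{(b)\}=\boldsymbol{b}\}$ with $\mathsf{P}(\{(b)\}=\boldsymbol{b})>0$, the event $\{Z_{\mathcal{O}}^{\ell}=z\}$ intersected with $\{\{(b)\}=\boldsymbol{b}\}$ becomes $\{\underline{Z}_{\mathcal{O}}^{\ell}\in S\}$ for a Borel preimage $S$, after which Lemma~\ref{lemma:disjoint_sets_of_rvs_are_conditional_independent} gives the required factorisation directly. Everything else --- mutual independence of the isolation variables, the partition, and the acyclicity check --- is routine given the lemmas already proved; the only care needed is to keep the bookkeeping consistent with the ``exclude the conditioning variables from both sides'' convention when the impact variable straddles the conditioning boundary.
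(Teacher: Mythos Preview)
Your proposal is correct and follows essentially the same route as the paper: establish mutual independence of the isolation variables, partition the operation set into $\{\mathcal{O}\}$, the exclusive recursive parents $B$, and the complement $A$, invoke Lemma~\ref{lemma:disjoint_sets_of_rvs_are_conditional_independent} on the three disjoint blocks, and then pass from $\underline{Z}_{\mathcal{O}}^{\ell}$ to $Z_{\mathcal{O}}^{\ell}$ via Lemma~\ref{lemma:function_of_impact_variables} and Lemma~\ref{lemma:functions_of_conditional_independent_rvs_are_still_conditional_independent}. The paper's proof is terser---it simply writes ``applying Lemmas~\ref{lemma:disjoint_sets_of_rvs_are_conditional_independent} and~\ref{lemma:functions_of_conditional_independent_rvs_are_still_conditional_independent}, we complete the proof''---whereas you explicitly flag and resolve the subtlety that $Z_{\mathcal{O}}^{\ell}$ depends on variables both in and outside the conditioning set $(b)$; your augmentation argument (or the equivalent per-realisation preimage check) is exactly the care that the paper's terse invocation leaves implicit under its ``exclude the conditioning variables from both sides'' convention.
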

\footnotetext{For convenience, we call the set $\bigcup_{i=0}^{\mathrm{H}^{\mathcal{G}} (\mathcal{O}) } (\mathcal{O}) $ the \textit{inclusive recursive parents} of $\mathcal{O}$ (because it includes the operation $\mathcal{O}$ itself), while call the set $\bigcup_{i=1}^{\mathrm{H}^{\mathcal{G}} (\mathcal{O}) } (\mathcal{O}) $ the \textit{exclusive recursive parents} of $\mathcal{O}$ (because it excludes the operation $\mathcal{O}$ itself).}
\begin{proof}
    From Lemma \ref{lemma:function_of_impact_variables}, we know that the impact variable $Z_{\mathcal{O}}^{\ell}$ is a function of the following isolation variables
    \begin{align} \label{eq:set_of_recursive_parent_isolation_variable}
        \{ \underline{Z}_{\mathcal{O}}^{\ell}: \mathcal{O} \in \bigcup_{i=0}^{\mathrm{H}^{\mathcal{G}} (\mathcal{O})} (\mathrm{Pa}^{\mathcal{G}})^{(i)} (\mathcal{O}) \}.
    \end{align}
    By excluding the conditioning term $(b)$ from Equation \ref{eq:set_of_recursive_parent_isolation_variable}, we remain to show that $\{ \underline{Z}_{\mathcal{O}}^{\ell} \}$ and the second term $(a)$ are disjoint sets in $\{ \underline{Z}_{\mathcal{O}}^{\ell}: \mathcal{O} \in \mathbb{O}_{t + h^{\mathrm{C}}: t + h_{t}} \}$. This statement follows immediately from Lemma \ref{lemma:ch4_dag_disjoint}. Therefore, applying Lemmas \ref{lemma:disjoint_sets_of_rvs_are_conditional_independent} and \ref{lemma:functions_of_conditional_independent_rvs_are_still_conditional_independent}, we complete the proof.
\end{proof}

Now we are ready to present the main conclusion of the independence semantics encoded in impact variables.

\begin{theorem} \label{theorem:impact_variable_conditional_independence}
    Let $\mathcal{G} = (\mathbb{O}_{t+h^{\mathrm{C}}: t+h_{t}}, \mathbb{A})$ be a \acrshort*{dag} that represents the dependencies between operations in $\mathbb{O}_{t+h^{\mathrm{C}}: t+h_{t}}$. For each operation $\mathcal{O} \in \mathbb{O}_{t + h^{\mathrm{C}}: t + h_{t} \mid t}$ and endogenous disturbance type $ \ell \in \mathbb{L}^{\mathrm{En}}$, construct the random variable $Z_{\mathcal{O}}^{\ell} = \textsc{PropFunc} (\underline{Z}_{\mathcal{O}}^{\ell}, \boldsymbol{Z}_{\mathrm{Pa}^{\mathcal{G}} (\mathcal{O})}^{\ell} ; \ell )$ according to the \textsc{ImpactPropagation} algorithm (that is, Algorithm \ref{algo:ch4_impact_propagation}). Then, for each operation $\mathcal{O} \in \mathbb{O}_{t + h^{\mathrm{C}}: t + h_{t}}$, we have 
    \begin{align*}
        \mathsf{P} \models Z_{\mathcal{O}}^{\ell} \independent \boldsymbol{Z}_{\overline{\mathrm{De}}^{\mathcal{G}} (\mathcal{O})}^{\ell} \mid \boldsymbol{Z}_{\mathrm{Pa}^{\mathcal{G}} (\mathcal{O})}^{\ell}
    \end{align*}
\end{theorem}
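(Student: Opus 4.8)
The plan is to deduce this theorem from Corollary~\ref{corollary:impact_variable_and_isolation_variable_property}, which already packages the required conditional independence at the level of the \emph{isolation} variables, and then to transport it to the level of the \emph{impact} variables via Lemma~\ref{lemma:functions_of_conditional_independent_rvs_are_still_conditional_independent}. The point is that $Z_{\mathcal{O}}^{\ell}$, $\boldsymbol{Z}_{\mathrm{Pa}^{\mathcal{G}}(\mathcal{O})}^{\ell}$ and $\boldsymbol{Z}_{\overline{\mathrm{De}}^{\mathcal{G}}(\mathcal{O})}^{\ell}$ are all Borel measurable functions of isolation variables, and these functions respect exactly the partition of the isolation variables that appears in the corollary: $Z_{\mathcal{O}}^{\ell}$ depends only on the isolation variables of the \emph{inclusive} recursive parents of $\mathcal{O}$; $\boldsymbol{Z}_{\mathrm{Pa}^{\mathcal{G}}(\mathcal{O})}^{\ell}$ depends only on those of the \emph{exclusive} recursive parents of $\mathcal{O}$; and $\boldsymbol{Z}_{\overline{\mathrm{De}}^{\mathcal{G}}(\mathcal{O})}^{\ell}$ depends only on the isolation variables indexed by $\mathbb{O}_{t+h^{\mathrm{C}}: t+h_{t}} \setminus \{\mathcal{O}\}$, i.e. it does not involve $\underline{Z}_{\mathcal{O}}^{\ell}$.

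First I would record these three functional dependencies. The ones for $Z_{\mathcal{O}}^{\ell}$ and $\boldsymbol{Z}_{\mathrm{Pa}^{\mathcal{G}}(\mathcal{O})}^{\ell}$ are exactly what Lemma~\ref{lemma:function_of_impact_variables} supplies (for the parents one additionally uses the identity, already established inside that lemma's proof, that the union over $\mathcal{O}' \in \mathrm{Pa}^{\mathcal{G}}(\mathcal{O})$ of the inclusive recursive parents of $\mathcal{O}'$ is the set of exclusive recursive parents of $\mathcal{O}$). For $\boldsymbol{Z}_{\overline{\mathrm{De}}^{\mathcal{G}}(\mathcal{O})}^{\ell}$: each component $Z_{\mathcal{O}'}^{\ell}$ with $\mathcal{O}' \in \overline{\mathrm{De}}^{\mathcal{G}}(\mathcal{O})$ is, by Lemma~\ref{lemma:function_of_impact_variables}, a function of the isolation variables indexed by the inclusive recursive parents of $\mathcal{O}'$, and Lemma~\ref{lemma:ch4_dag_disjoint} (applied with $j = \mathrm{H}^{\mathcal{G}}(\mathcal{O}')$) guarantees that $\mathcal{O}$ is not among those recursive parents, since a non-descendant of $\mathcal{O}$ cannot have $\mathcal{O}$ as an ancestor. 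Taking the union over all $\mathcal{O}' \in \overline{\mathrm{De}}^{\mathcal{G}}(\mathcal{O})$ — which, recalling from Subsection~\ref{subsec:ch4_prerequisite_dag} that $\mathcal{O}$ is counted among its own descendants, so that $\mathcal{O} \notin \overline{\mathrm{De}}^{\mathcal{G}}(\mathcal{O})$, is consistent — and using that the complement of the inclusive recursive parents of $\mathcal{O}$ together with the exclusive recursive parents of $\mathcal{O}$ is precisely $\mathbb{O}_{t+h^{\mathrm{C}}: t+h_{t}} \setminus \{\mathcal{O}\}$, gives the claimed dependence.

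Next I would invoke Corollary~\ref{corollary:impact_variable_and_isolation_variable_property}, which gives $\mathsf{P} \models Z_{\mathcal{O}}^{\ell} \independent (a) \mid (b)$, where $(a)$ collects the isolation variables indexed by the complement of the inclusive recursive parents of $\mathcal{O}$ and $(b)$ collects those indexed by the exclusive recursive parents of $\mathcal{O}$. Since $(a)$ and $(b)$ have disjoint index sets and $Z_{\mathcal{O}}^{\ell}$ is an impact variable lying in neither, the conditioning-exclusion convention introduced at the end of Subsection~\ref{subsec:ch4_probabilistic_independence} is vacuous here, so the statement can be read equivalently as $\mathsf{P} \models Z_{\mathcal{O}}^{\ell} \independent ((a) \cup (b)) \mid (b)$. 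I would then apply Lemma~\ref{lemma:functions_of_conditional_independent_rvs_are_still_conditional_independent} — in the vector-valued form obtained by repeating its proof with Borel subsets of $\mathbb{R}^{n}$ in place of Borel subsets of $\mathbb{R}$ — with $f_{1}$ the identity on $Z_{\mathcal{O}}^{\ell}$, $f_{2}$ the map sending the isolation variables indexed by $(a) \cup (b) = \mathbb{O}_{t+h^{\mathrm{C}}: t+h_{t}} \setminus \{\mathcal{O}\}$ to $\boldsymbol{Z}_{\overline{\mathrm{De}}^{\mathcal{G}}(\mathcal{O})}^{\ell}$, and $f_{3}$ the map sending the isolation variables indexed by $(b)$ to $\boldsymbol{Z}_{\mathrm{Pa}^{\mathcal{G}}(\mathcal{O})}^{\ell}$. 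The conclusion $\mathsf{P} \models Z_{\mathcal{O}}^{\ell} \independent \boldsymbol{Z}_{\overline{\mathrm{De}}^{\mathcal{G}}(\mathcal{O})}^{\ell} \mid \boldsymbol{Z}_{\mathrm{Pa}^{\mathcal{G}}(\mathcal{O})}^{\ell}$ is then the theorem.

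I expect the only real obstacle to be the bookkeeping in the second step — verifying, simultaneously for every non-descendant $\mathcal{O}'$ of $\mathcal{O}$, that $\boldsymbol{Z}_{\overline{\mathrm{De}}^{\mathcal{G}}(\mathcal{O})}^{\ell}$ does not functionally involve $\underline{Z}_{\mathcal{O}}^{\ell}$, which is precisely where acyclicity enters through Lemma~\ref{lemma:ch4_dag_disjoint}. A secondary, purely technical point is that Lemma~\ref{lemma:functions_of_conditional_independent_rvs_are_still_conditional_independent} is stated for scalar-valued $f_{i}$ while two of our three maps are vector-valued, so one must note (or reprove in one line) its vector analogue. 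Everything else is a direct chaining of results already established in this section.
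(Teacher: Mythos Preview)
Your proposal is correct and uses the same scaffolding as the paper: Corollary~\ref{corollary:impact_variable_and_isolation_variable_property} for the independence at the isolation-variable level, Lemma~\ref{lemma:function_of_impact_variables} for the functional dependencies, and Lemma~\ref{lemma:functions_of_conditional_independent_rvs_are_still_conditional_independent} to transport the independence to the impact variables. The difference lies in the middle step: the paper first \emph{narrows} the second slot of Corollary~\ref{corollary:impact_variable_and_isolation_variable_property} down to the isolation variables indexed by $\overline{\mathrm{De}}^{\mathcal{G}}(\mathcal{O})$ through a five-line chain (decomposition rule, Lemma~\ref{lemma:ch4_dag_inclusion}, Lemma~\ref{lemma:ch4_dag_disjoint}, then the exclusion convention), and only then shows that $\boldsymbol{Z}_{\overline{\mathrm{De}}^{\mathcal{G}}(\mathcal{O})}^{\ell}$ is a function of that set. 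You instead \emph{widen} the second slot to $(a)\cup(b)=\{\underline{Z}_{\mathcal{O}'}^{\ell}:\mathcal{O}'\in\mathbb{O}_{t+h^{\mathrm{C}}:t+h_{t}}\setminus\{\mathcal{O}\}\}$ in one step via the convention, and observe (via Lemma~\ref{lemma:ch4_dag_disjoint}) that $\boldsymbol{Z}_{\overline{\mathrm{De}}^{\mathcal{G}}(\mathcal{O})}^{\ell}$ is a function of this larger set simply because none of its components involve $\underline{Z}_{\mathcal{O}}^{\ell}$. Your route is shorter and also sidesteps the paper's justification for its statement~(1), which appeals to the claim that ``a non-descendant of a non-descendant of $\mathcal{O}$ is also a non-descendant of $\mathcal{O}$'' --- a claim that fails on easy counterexamples (e.g.\ a graph with an isolated node). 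Both proofs share the same final application of Lemma~\ref{lemma:functions_of_conditional_independent_rvs_are_still_conditional_independent} with overlapping $\boldsymbol{X}_{2}$ and $\boldsymbol{X}_{3}$, so the two technical caveats you flag (the vector-valued extension and the interaction with the exclusion convention) are present in the paper's argument as well.
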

\begin{proof}
    \allowdisplaybreaks
    \begin{subequations} \label{eq:ch4_derivation_conditional_independence_theorem}
    \begin{align}
        & \; Z_{\mathcal{O}}^{\ell} \independent \Bigl\{ \underline{Z}^{\ell}_{\mathcal{O}'}: \mathcal{O}' \in \mathbb{O}_{t + h^{\mathrm{C}}: t + h_{t}} \setminus \bigcup_{i=0}^{\mathrm{H}^{\mathcal{G}} (\mathcal{O})  } (\mathrm{Pa}^{\mathcal{G}})^{(i)} (\mathcal{O}) \Bigr\} \mid  \Bigl\{ \underline{Z}^{\ell}_{\mathcal{O}'}: \mathcal{O}' \in \bigcup_{i=1}^{\mathrm{H}^{\mathcal{G}} (\mathcal{O}) } (\mathrm{Pa}^{\mathcal{G}})^{(i)} (\mathcal{O}) \Bigr\} \label{eq:ch4_derivation_conditional_independence_theorem_1} \\
        \implies & \;  Z_{\mathcal{O}}^{\ell} \independent \Bigl\{ \underline{Z}^{\ell}_{\mathcal{O}'}: \mathcal{O}' \in \Bigl( \mathbb{O}_{t + h^{\mathrm{C}}: t + h_{t}} \setminus \bigcup_{i=0}^{\mathrm{H}^{\mathcal{G}} (\mathcal{O})  } (\mathrm{Pa}^{\mathcal{G}})^{(i)} (\mathcal{O}) \Bigr) \cap \overline{\mathrm{De}}^{\mathcal{G}} (\mathcal{O})  \Bigr\} \mid \Bigl\{ \underline{Z}^{\ell}_{\mathcal{O}'}: \mathcal{O}' \in \bigcup_{i=1}^{\mathrm{H}^{\mathcal{G}} (\mathcal{O}) } (\mathrm{Pa}^{\mathcal{G}})^{(i)} (\mathcal{O}) \Bigr\} \label{eq:ch4_derivation_conditional_independence_theorem_2} \\
        \implies & \;  Z_{\mathcal{O}}^{\ell} \independent \Bigl\{ \underline{Z}^{\ell}_{\mathcal{O}'}: \mathcal{O}' \in \overline{\mathrm{De}}^{\mathcal{G}} (\mathcal{O}) \setminus \Bigl( \bigcup_{i=0}^{\mathrm{H}^{\mathcal{G}} (\mathcal{O})  } (\mathrm{Pa}^{\mathcal{G}})^{(i)} (\mathcal{O}) \Bigr) \Bigr\} \mid \Bigl\{ \underline{Z}^{\ell}_{\mathcal{O}'}: \mathcal{O}' \in \bigcup_{i=1}^{\mathrm{H}^{\mathcal{G}} (\mathcal{O}) } (\mathrm{Pa}^{\mathcal{G}})^{(i)} (\mathcal{O}) \Bigr\} \label{eq:ch4_derivation_conditional_independence_theorem_3} \\
        \implies & \;  Z_{\mathcal{O}}^{\ell} \independent \Bigl\{ \underline{Z}^{\ell}_{\mathcal{O}'}: \mathcal{O}' \in \overline{\mathrm{De}}^{\mathcal{G}} (\mathcal{O}) \setminus \Bigl( \bigcup_{i=1}^{\mathrm{H}^{\mathcal{G}} (\mathcal{O})  } (\mathrm{Pa}^{\mathcal{G}})^{(i)} (\mathcal{O}) \Bigr) \Bigr\} \mid \Bigl\{ \underline{Z}^{\ell}_{\mathcal{O}'}: \mathcal{O}' \in \bigcup_{i=1}^{\mathrm{H}^{\mathcal{G}} (\mathcal{O}) } (\mathrm{Pa}^{\mathcal{G}})^{(i)} (\mathcal{O}) \Bigr\} \label{eq:ch4_derivation_conditional_independence_theorem_4} \\
        \implies & \;  Z_{\mathcal{O}}^{\ell} \independent \Bigl\{ \underline{Z}^{\ell}_{\mathcal{O}'}: \mathcal{O}' \in  \overline{\mathrm{De}}^{\mathcal{G}} (\mathcal{O}) \Bigr\} \mid \Bigl\{ \underline{Z}^{\ell}_{\mathcal{O}'}: \mathcal{O}' \in \bigcup_{i=1}^{\mathrm{H}^{\mathcal{G}} (\mathcal{O}) } (\mathrm{Pa}^{\mathcal{G}})^{(i)} (\mathcal{O}) \Bigr\}  \label{eq:ch4_derivation_conditional_independence_theorem_5}
    \end{align}    
    \end{subequations}
    Here, Equation \ref{eq:ch4_derivation_conditional_independence_theorem_1} follows directly from Corollary \ref{corollary:impact_variable_and_isolation_variable_property}. Equation \ref{eq:ch4_derivation_conditional_independence_theorem_2} follows from the decomposition rule of conditional independence (that is, $\boldsymbol{X}_1 \independent \boldsymbol{X}_{2}, \boldsymbol{X}_{3} \mid \boldsymbol{X}_{4} \implies \boldsymbol{X}_1 \independent \boldsymbol{X}_{2} \mid \boldsymbol{X}_{4}$). Note that the intersection operation always yields a smaller set than the original. Equation \ref{eq:ch4_derivation_conditional_independence_theorem_3} follows from the inclusions proved in Lemma \ref{lemma:ch4_dag_inclusion} and De Morgan's law. Equation \ref{eq:ch4_derivation_conditional_independence_theorem_4} follows from Lemma \ref{lemma:ch4_dag_disjoint} (which implies $\overline{\mathrm{De}}^{\mathcal{G}} (\mathcal{O}) \cap \{ \mathcal{O} \} = \varnothing$). Equation \ref{eq:ch4_derivation_conditional_independence_theorem_5} follows from the convention that we assume $\boldsymbol{X}_{1} \independent \boldsymbol{X}_{2} \mid \boldsymbol{X}_{3}$ actually means $(\boldsymbol{X}_{1} \setminus \boldsymbol{X}_{3}) \independent (\boldsymbol{X}_{2} \setminus \boldsymbol{X}_{3}) \mid \boldsymbol{X}_{3}$. 

    Next, we show that (1) the impact variables $\boldsymbol{Z}_{\overline{\mathrm{De}}^{\mathcal{G}} (\mathcal{O})}^{\ell}$ are a function of $\{ \underline{Z}^{\ell}_{\mathcal{O}'}: \mathcal{O}' \in  \overline{\mathrm{De}}^{\mathcal{G}} (\mathcal{O}) \}$ (that is, the second term in Equation \ref{eq:ch4_derivation_conditional_independence_theorem_5}), and (2) the impact variables $\boldsymbol{Z}_{\mathrm{Pa}^{\mathcal{G}} (\mathcal{O})}^{\ell}$ are a function of $\{ \underline{Z}^{\ell}_{\mathcal{O}'}: \mathcal{O}' \in \bigcup_{i=1}^{\mathrm{H}^{\mathcal{G}} (\mathcal{O}) } (\mathrm{Pa}^{\mathcal{G}})^{(i)} (\mathcal{O}) \}$ (that is, the conditioning term in Equation \ref{eq:ch4_derivation_conditional_independence_theorem_5}). For the statement (1), from Lemma \ref{lemma:function_of_impact_variables} we know that for any $\mathcal{O}' \in \overline{\mathrm{De}}^{\mathcal{G}} (\mathcal{O})$, its corresponding impact variable $Z_{\mathcal{O}'}^{\ell}$ is a function of $\{ \underline{Z}^{\ell}_{\mathcal{O}''}: \mathcal{O}'' \in \bigcup_{i=0}^{\mathrm{H}^{\mathcal{G}} (\mathcal{O}')} (\mathrm{Pa}^{\mathcal{G}})^{(i)} (\mathcal{O}') \}$. Therefore, taking the union of $\mathcal{O}'$ over $\overline{\mathrm{De}}^{\mathcal{G}} (\mathcal{O})$, we have that the impact variables $\boldsymbol{Z}^{\ell}_{\overline{\mathrm{De}}^{\mathcal{G} } (\mathcal{O}) }$ are a function of 
    \begin{align*}
        \Bigl\{ \underline{Z}_{\mathcal{O}'}^{\ell}: \mathcal{O}' \in \bigcup_{\mathcal{O}'' \in \overline{\mathrm{De}}^{\mathcal{G}} (\mathcal{O})} \bigcup_{i=0}^{\mathrm{H}^{\mathcal{G}} (\mathcal{O}'') } (\mathrm{Pa}^{\mathcal{G}})^{(i)} (\mathcal{O}'') \Bigr\},
    \end{align*}
    which, by the inclusions implied by Lemma \ref{lemma:ch4_dag_inclusion}, is a subset of 
    \begin{align*}
        \Bigl\{ \underline{Z}^{\ell}_{\mathcal{O}'} : \mathcal{O}' \in \bigcup_{\mathcal{O}'' \in \overline{\mathrm{De}}^{\mathcal{G}} (\mathcal{O})} \overline{\mathrm{De}}^{\mathcal{G}} (\mathcal{O}'') \Bigr\}, 
    \end{align*}
    which is also a subset of 
    \begin{align*}
        \Bigl\{ \underline{Z}^{\ell}_{\mathcal{O}'} : \mathcal{O}' \in \overline{\mathrm{De}}^{\mathcal{G}} (\mathcal{O}') \Bigr\},
    \end{align*}
    because a non-descendant of non-descendant of $\mathcal{O}'$ is also a non-descendant of $\mathcal{O}'$. Therefore, we prove the statement (1). 
    
    For the statement (2), following the similar ideas used in proving statement (1), the impact variables $\boldsymbol{Z}^{\ell}_{\mathrm{Pa}^{\mathcal{G}} (\mathcal{O}) }$ are a function of 
    \begin{align*}
        \Bigl\{ \underline{Z}_{\mathcal{O}'}^{\ell}: \mathcal{O}' \in \bigcup_{\mathcal{O}'' \in {\mathrm{Pa}}^{\mathcal{G}} (\mathcal{O})} \bigcup_{i=0}^{\mathrm{H}^{\mathcal{G}} (\mathcal{O}'') } (\mathrm{Pa}^{\mathcal{G}})^{(i)} (\mathcal{O}'') \Bigr\},
    \end{align*}
    which, by the definition of recursive parents, equals exactly the conditioning term in Equation \ref{eq:ch4_derivation_conditional_independence_theorem_5}. Therefore, by applying Lemma \ref{lemma:functions_of_conditional_independent_rvs_are_still_conditional_independent} (recall that this lemma states functions of conditionally independent random variables are still conditionally independent) to Equation \ref{eq:ch4_derivation_conditional_independence_theorem_5}, we complete the proof.
\end{proof}

Theorem \ref{theorem:impact_variable_conditional_independence} actually forms the theoretical foundation for the entire Bayesian dynamic scheduling method. Intuitively, Theorem \ref{theorem:impact_variable_conditional_independence} tells us that, if we already have a \acrshort*{dag} $\mathcal{G}^{\mathrm{Op}}$ that describes the inter-operation dependencies implied by a schedule, then, the impact variables generated by running the \textsc{ImpactPropagation} algorithm over this \acrshort*{dag} will satisfy the conditional independence semantics implied by the \acrshort*{dag} structure of $\mathcal{G}^{\mathrm{Op}}$ (that is, the conditional independence statements implied by a \acrshort*{bn} structure). This guarantee has two implications in practice. First, it ensures that the posterior distribution that we inferred from the \acrshort*{bn} of impact variables consists with the actual disturbance distributions in the system. This implies that our Bayesian dynamic scheduling algorithm is probabilistically correct. On the other hand, for each node $\mathcal{O}$ in the \acrshort*{dag} $\mathcal{G}^{\mathrm{Op}}$, if we replace it with its corresponding impact variable $Z_{\mathcal{O}}^{\ell}$, then the resulted \acrshort*{dag} (that is, the \acrshort*{bn} $\mathcal{G}^{\mathrm{BN}}$) will correctly encode the joint distribution of impact variables involved in the schedule. Therefore, Theorem \ref{theorem:impact_variable_conditional_independence} actually unifies the operation \acrshort*{dag} $\mathcal{G}^{\mathrm{Op}}$ and the \acrshort*{bn} structure $\mathcal{G}^{\mathrm{BN}}$. This unification enables us to conveniently switch between these two \acrshort*{dag}s in the Bayesian dynamic scheduling algorithm, as we presented in Algorithm \ref{algo:ch4_bds}.

\subsection{Design of Bayesian Network structure} \label{subsec:ch4_design_of_bn_structure}
According to Theorem \ref{theorem:impact_variable_conditional_independence}, in principle, the correctness of our Bayesian dynamic scheduling method is not affected by the \acrshort*{bn} structure $\mathcal{G}^{\mathrm{BN}}$, which also describes the inter-operation dependencies. In other words, the structure of $\mathcal{G}^{\mathrm{BN}}$ only affects the semantics of impact variables, while does not affect the correctness of the probabilistic inference. Therefore, when applying our Bayesian dynamic scheduling method to a practical instance, even if there exist multiple possible approaches of constructing the \acrshort*{dag} $\mathcal{G}^{\mathrm{BN}}$, it is hard to assert which structure is ``correct'' while others are not. However, typically, constructing \acrshort*{dag}s that can be engineeringly interpreted is beneficial, because the resulted impact variables will have physically interpretable semantics. As a result, the posterior distribution inferred from the \acrshort*{bn} will have practical semantics. To the best of our knowledge, constructing such physically meaningful structures are not too difficult in most applications. For example, for parallel machine environments, the dependencies only exist in consecutive operations on the same machine; for job shop environments, the dependencies exist in either consecutive operations on the same machine, or consecutive operations within a job sequence; for batch processes (our case), the dependencies are related to processing routes, as we formalised in the \textsc{StructureLearning} algorithm. As a short conclusion, in practice, despite the fact that the system features in different dynamic scheduling problems may be versatile, the principle of designing the \acrshort*{dag} structure is to reflect the inter-operation dependencies implied by the system as much as possible.

%%%%%%%%%%%%%%%%%%%%%%%%%%%%%%%%%%%%%%%%%%%%%%%%%%%%%%%%%%%%%%%%%%%%%%%%%%%%%%%%%%%%%%%%%%
%%%%%%%%%%%%%%%%%%%%%%%%%%%%%%%%%%%%%%%%%%%%%%%%%%%%%%%%%%%%%%%%%%%%%%%%%%%%%%%%%%%%%%%%%%
\section{Experimental design} \label{sec:ch4_experimental_design}

We evaluate our Bayesian dynamic scheduling framework by solving four benchmark examples adopted from Li et al. \cite{li_novel_2022}, including two small-sized examples and two medium-sized examples. The detailed parameters for these examples are presented in Appendix \ref{appendix:parameters_of_examples}, including the STN structure, the task-machine configurations, the material configurations, and the demand profiles. The system dynamics follow the formulation presented in Subsection \ref{subsec:ch4_formulation_ds}, while schedules are generated using the MILP formulation presented in Subsection \ref{subsec:ch4_formulation_milp}.

Throughout the four examples, we discretise the time axis into one-hour intervals (that is, each time step represents an hour), while consider a dynamic scheduling timespan of 240 hours (that is, $t_N = t_{240}$). The certainty horizon $h^{\mathrm{C}}$ is set to $12$ hours for all types of disturbances. The minimum required scheduling horizon $h^{\min}$ is set to $48$ hours.

\subsubsection*{Randomness and replicability}
From an implementation perspective, three sources of randomness may affect the final performance of our framework: realisations of disturbance scenarios, sampling processes within the Monte Carlo simulation, and branching processes inside MILP solvers. Ideally, to achieve fully replicable results, these three sources of randomness should be controlled through a unified random seed. However, this is surprisingly tricky in practice. On the one hand, to reduce computational time, we often need to parallelise the sampling process through multiprocessing. While each worker process maintains its own Random Number Generator state, different processes may draw samples in different orders due to the fork-vs-spawn semantics and nondeterministic programming task scheduling. Therefore, it is nontrivial to replicate the sampling processes within the Monte Carlo simulation. On the other hand, random seeds inside the MILP solvers may behave differently across different operating systems, such as Windows and MacOS. 

Therefore, to obtain statistically reliable results, we perform the experiments as follows. For each of the four benchmark problems, we generate five disturbance scenarios. For each disturbance scenario, we run our framework ten times with same algorithm parameter. Also, to setup a baseline comparison, we also implement the periodically-completely rescheduling strategy (that is, calling the MILP solver at fixed intervals with no variables fixed in advance), where the scheduling frequency ranges from every $1$ hour to every $12$ hours. Note that this range is exhaustive, because the certainty horizon equals $12$ hours, and we cannot ensure the feasibility of action variables if the rescheduling frequency is longer than every 12 hours.

\subsubsection*{Performance metrics}
As presented in Equations \ref{eq:ch4_cost_func_ct} and \ref{eq:ch4_nerv_func_dt}, we consider two performance metrics: the cumulative cost and the cumulative nervousness. We consider the cost consisting of three parts: the cost of initiating a batch, the cost of holding inventories, and the cost of having backlogs. The cost parameters can be found in Tables \ref{table:ch4_task_machine_configuration} and \ref{table:ch4_material_configuration}. In addition to the cumulative cost, in this work, we also study the theoretical aspect of the long-term cost. To the best of our knowledge, this aspect has not been widely studied in the PSE community. Specifically, we compute the following two costs.
\begin{itemize}
    \item $c^{*}$. We call $c^{*}$ the \textit{nominal cost}. Specifically, this cost is obtained by solving an MILP that (1) covers the entire dynamic scheduling timespan $[t_0, t_N]$, and (2) assume that the system is under nominal condition (that is, no urgent demands, no machine breakdowns, no processing delays, and no yield losses). This nominal cost $c^{*}$ represents the theoretical optimal long-term cost where the scheduler has infinite look-ahead information and no disturbances exist in the system.    
    \item $c^{\infty}$. We call $c^{\infty}$ the \textit{oracle cost}. That is, this cost is also obtain by solving an MILP that covers the entire dynamic scheduling timespan $[t_0, t_N]$. However, in this case we assume that all the realisations of disturbances are provided a priori to the scheduler as known parameters in the MILP formulation. Theoretically, this cost represents situation where the scheduler has infinite look-ahead information. Computing $c^{\infty}$ helps us understand dynamic scheduling in two ways. First, by comparing $c^{\infty}$ with $c^{*}$, we can evaluate the theoretical effect of disturbances in the system. That is, the value $c^{\infty} - c^{*}$ reveals how much additional cost will be incurred when disturbances exist in the system. Second, by comparing $c^{\infty}$ with $c$, which is the cumulative cost generated by an actual dynamic scheduling algorithm, we can roughly evaluate the additional cost that is due to the incompleteness of the look-ahead information. That is, we can think of the value $c - c^{\infty}$ consists of two parts. The first part is due to the incompleteness of the look-ahead information, and the second part is the gap between a dynamic scheduling algorithm and the best possible dynamic scheduling algorithm, although in practice, we generally do not have access to the best possible.
\end{itemize}

\subsubsection*{Algorithm parameters}
The main algorithm parameters used in this experiment are presented in Table \ref{table:ch4_algorithm_parameters}. To ensure the practicability of our method, across all the four benchmark problems, we begin with a default setting of parameters where $l_2 = 0.5$ and $l_3 = 0.5$. When the performance of our algorithm is not competitive (specifically, in Example 4), we allow ourselves to manually adjust $l_2$ and $l_3$ through a few times of trial-and-errors, while keep other parameters unchanged.

\begin{table}[H]
    \caption{Main algorithm parameters}
    \small
    \setstretch{1.5}
    \centering
    \begin{tabular}{llll}
        \toprule
        & $n$ & $l_2$ & $l_3$ \\
        \midrule
        Example 1 & 1000 & 0.5 & 0.5 \\
        Example 2 & 1000 & 0.5 & 0.5 \\
        Example 3 & 2500 & 0.5 & 0.5 \\
        Example 4 & 4000 & 0.4 & 0.6 \\
        \bottomrule
    \end{tabular}
    \label{table:ch4_algorithm_parameters}
\end{table}

With respect to the commercial solver, throughout the four cases, we set the time limit for solving each MILP to 300 seconds. The MIP gap is set to 0.01.

Regarding problem-specific parameters related to endogenous disturbances, recall that the indices $\ell_{1}, \ell_{2}$, and $\ell_{3}$ represent machine breakdowns, processing delays, and yield losses, respectively. Under the context of multipurpose batch scheduling, we set the propagation indicators as $\chi_{\ell_{1}}^{\mathrm{Temp}} = 0, \chi_{\ell_{1}}^{\mathrm{Spat}} = 1, \chi_{\ell_{2}}^{\mathrm{Temp}} = 1, \chi_{\ell_{2}}^{\mathrm{Spat}} = 1, \chi_{\ell_{3}}^{\mathrm{Temp}} = 0, \chi_{\ell_{3}}^{\mathrm{Spat}} = 1$, respectively. These indicators mean that, the impact of processing time variation can propagate through both temporal and spatial dimensions, while the impact of machine breakdown and yield loss can only propagate through the spatial dimension.

Throughout the four cases, we set the thresholds for determining unrecoverable random states as follows. We set $l_{1}^{U} = 100$, which means that in any cases, we do not consider that the yield loss can result in an operation being unrecoverable. We set $l_{1}^{V} = 1$, which means that when an operation is delayed by at least one hour, it is considered as unrecoverable. We set also $l_{1}^{W} = 1$, which means that when an operation is affected by machine breakdown, it is considered as unrecoverable.

%%%%%%%%%%%%%%%%%%%%%%%%%%%%%%%%%%%%%%%%%%%%%%%%%%%%%%%%%%%%%%%%%%%%%%%%%%%%%%%%%%%%%%%%%%

%%%%%%%%%%%%%%%%%%%%%%%%%%%%%%%%%%%%%%%%%%%%%%%%%%%%%%%%%%%%%%%%%%%%%%%%%%%%%%%%%%%%%%%%%%
\section{Computational results} \label{sec:ch4_results}
The compilation of numerical results are visualised in Figure \ref{fig:ch4_performance_result}. In each subplot, the $x$-axis represents costs (the value of $c$), while the $y$-axis represents the number of operation changes (the value of $d$). The light blue vertical line represents the value of $c^{*}$. The pink vertical line represents the value of $c^{\infty}$. Each light brown scatter represents a result that is generated by the periodic, completely rescheduling policy. The integer to the top right of a light brown scatter represents the corresponding rescheduling frequency, which is measured in hours. Each light green scatter represents a result that is generated by our Bayesian dynamic scheduling method.

\begin{sidewaysfigure}
\begin{figure}[H]
    \includegraphics[width=0.85\textwidth]{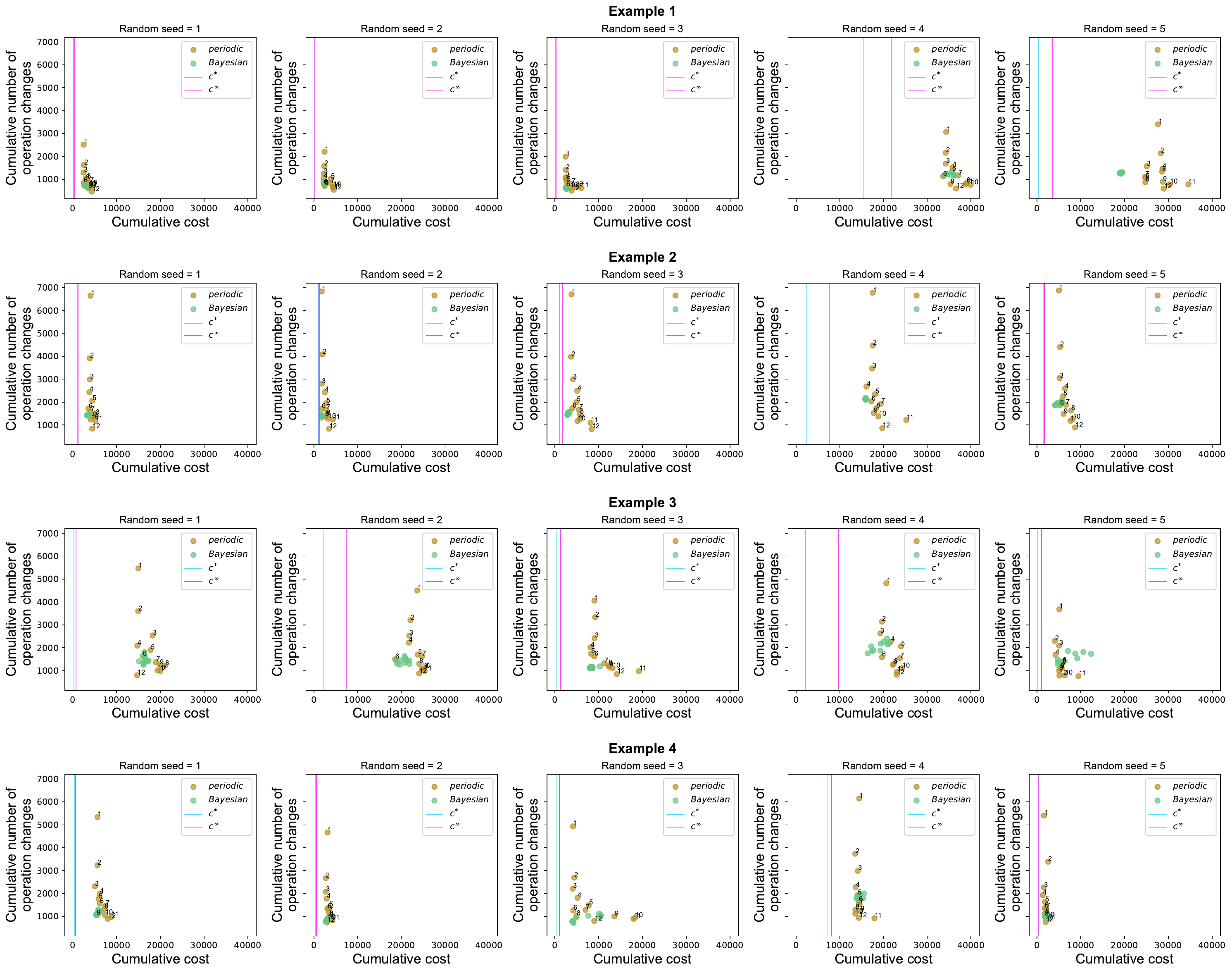}
    \centering
    \caption{
        Compilation of computational results. 
        In each subplot, the x-axis represents costs (the value of $c$). The y-axis represents the number of operation changes (the value of $d$). The light blue vertical line represents the value of $c^{*}$. The pink vertical line represents the value of $c^{\infty}$. Each light brown scatter represents a result that is generated by the periodic, completely rescheduling policy. The integer to the top right of a light brown scatter represents the corresponding rescheduling frequency, which is measured in hours. Each light green scatter represents a result that is generated by our Bayesian dynamic scheduling method.
        }
    \label{fig:ch4_performance_result}
\end{figure}
\end{sidewaysfigure}

From Figure \ref{fig:ch4_performance_result}, we want to highlight several observations. First, in some scenarios, such as Example 1 with Random seed = 1 and Example 4 with Random seed = 5, the gap between $c^{*}$ and $c^{\infty}$ is negligible that the two vertical lines are almost overlapped. This means that, in these scenarios, although there exist disturbances in the system, if the scheduler has infinite look-ahead information, it is possible to absorb the effect of disturbances by rearranging the schedule over the entire dynamic scheduling timespan. However, in other scenarios, such as Example 1 with Random seed = 4 and Example 3 with Random seed = 2, the gap between $c^{*}$ and $c^{\infty}$ is significant. In these scenarios, the realisation of disturbances is ``bad'', because even if the scheduler has infinite look-ahead information, it is impossible to recover the nominal cost by rearranging the schedule. From a practical perspective, this observation implies that the effect of disturbances may depend largely on the specific realisation.

Another observation is that, there hardly exist scenarios that either the periodic rescheduling policy or our Bayesian dynamic scheduling method achieves a long-term cost that is close to $c^{\infty}$. As mentioned before, this gap consists of two parts. While the first part is due to the incompleteness of the look-ahead information, the second part is due to the suboptimality of the specific dynamic scheduling algorithm. Because we generally do not know the best cumulative cost that can be achieved by any potential dynamic scheduling algorithms, we are still unaware of how much the information incompleteness contributes to this gap. From our perspective, understanding the structure of this gap is helpful to understand the mechanism behind the performance of a dynamic scheduling algorithm. Therefore, this issue is worth exploring in the future.

With respect to the periodic, completely rescheduling policy, we have some findings that have not been mentioned by the existing literature. First, throughout all the scenarios, we can generally observe a $\frac{1}{x}$-like shape of cost-nervousness curve that is generated by the periodically rescheduling policy. Typical examples include Example 3 with Random seed = 3 and Example 4 with Random seed = 3. This finding implies that, in practice, if we were to use a periodic rescheduling policy, there will potentially be a tradeoff between the long-term cost and the system nervousness. Another finding is that, under the constraint of incomplete look-ahead information, it is not true that higher rescheduling frequency will lead to a better long-term cost. Instead, in some scenarios, the best long-term cost is generated by a slow rescheduling frequency. For example, in Example 3 with Random seed = 1, among all the rescheduling frequencies that are ranged from every hour to every 12 hours, the every-12-hour frequency generates the best long-term cost and system nervousness. Moreover, according to the results, it is generally difficult to predict which rescheduling frequency will work the best. Therefore, from a practical perspective, it is worth exploring that, if the scheduler was choose to use a periodically rescheduling policy, how to determine the best rescheduling frequency.

On the other hand, in most scenarios, except for Example 4 with Random seed = 4, our Bayesian dynamic scheduling method generates a set of long-term costs that are close to the optimal cost boundary generated by the periodically rescheduling policy. In some cases, such as Example 1 with Random seed = 5 and Example 2 with Random seed = 5, our method achieves a long-term cost that is even better than the cost generated by any rescheduling frequencies. This evidence implies that, in addition to the when-to-reschedule policy, the how-to-reschedule policy also has a significant impact on the long-term cost, especially under the constraint of incomplete look-ahead information. While the existing literature has already had a wide range of discussions over the when-to-reschedule policy in multipurpose batch processes, we point out that the how-to-reschedule policy may be equally important to the long-term cost.

Also, the performance of our Bayesian dynamic scheduling method is generally stable. See, for example Example 1 with all Random seeds, Example 2 with all Random seeds, Example 3 with Random seed = 1, 2, 3, and Example 4 with Random seed = 1, 2, 4, 5. However, in the remaining scenarios, such as Example 3 with Random seed = 4 and Example 4 with Random seed = 3, we observe a slight dispersion of the long-term cost. There exist two potential explanations for this instability. First, the number of episodes for MC simulation is not sufficient. As a result, the joint distribution $\hat{\mathsf{P}}$ that is learned by the BN is far from the true underlying distribution $\mathsf{P}^{*}$. Second, the commercial solver returns different schedules in different runs, which results in subsequent operations being affected. While exploring this issue is beyond the scope of this paper, we will investigate this issue in future works.

\begin{sidewaysfigure}
\begin{figure}[H]
    \includegraphics[width=1.0\textwidth]{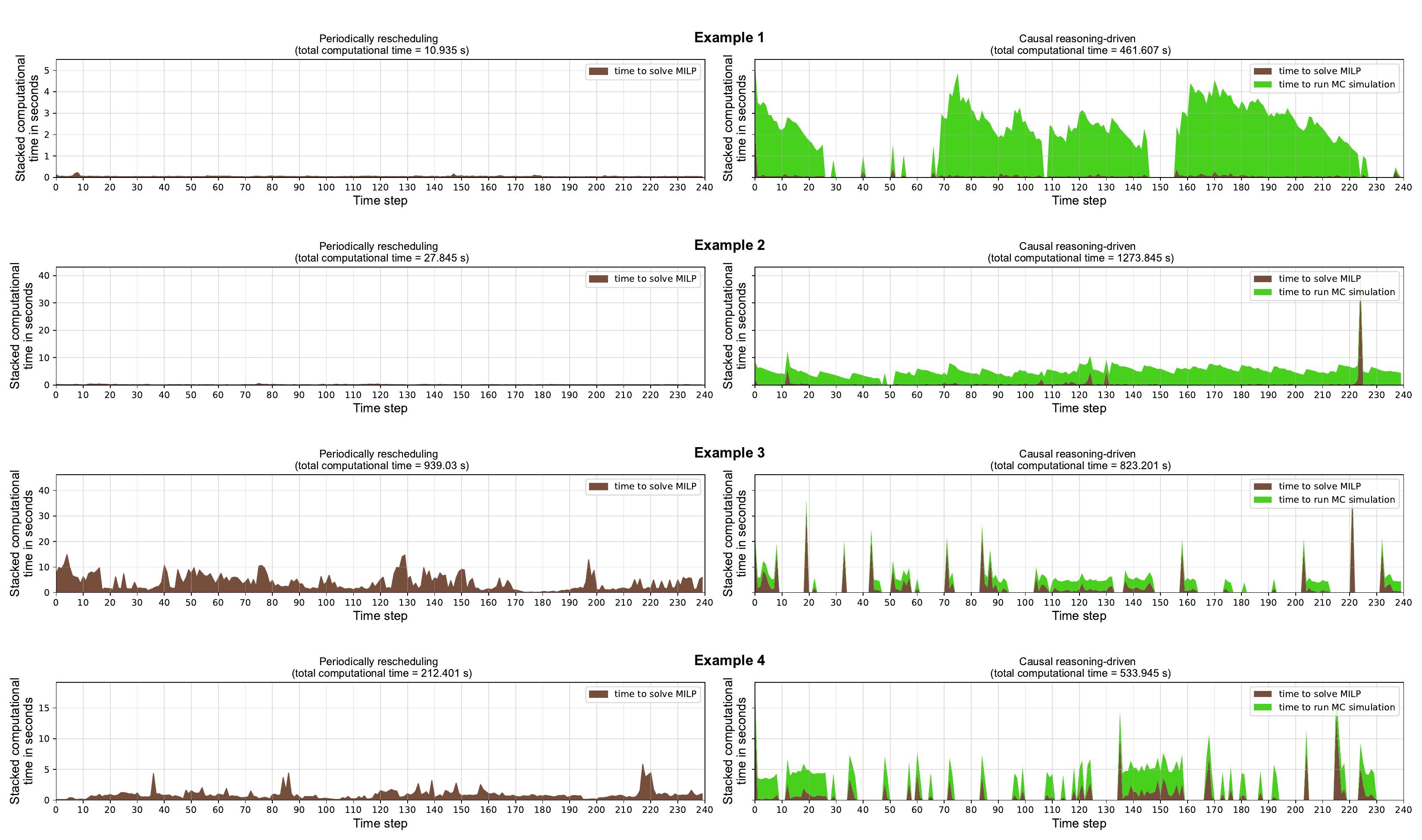}
    \centering
    \caption{
        Stack plots of computational times generated by the periodically rescheduling policy with a rescheduling frequency of every hour (left) and our Bayesian dynamic scheduling method (right) \\
        }
    \label{fig:ch4_computational_time}
\end{figure}    
\end{sidewaysfigure}

To provide an intuition about computational times of our method, for each case, we take a snapshot for the computational time that is consumed at each time step. As a comparison, we also take snapshots for computational times that are consumed by the periodically rescheduling policy with a rescheduling frequency of every hour. The resulted stack plots are shown in Figure \ref{fig:ch4_computational_time}. Notably, we do not include the computational time that is consumed by the structure learning and the inference, because they are negligible in our cases. As we observe, in small-sized problems, such as Example 1 and Example 2, our method takes a significant longer time than the periodically rescheduling policy. This is because most of the computational time is consumed on the MC simulation. However, for larger-sized problems, the computational time of our method is competitive with the periodically rescheduling policy. Especially, in the snapshot in Example 3, the cumulative computational time of our method is even shorter than the time consumed by the periodically rescheduling policy. This is because of two reasons. First, as mentioned in Section \ref{subsec:ch4_methodology_rescheduling_strategies}, we use a warm start strategy to leverage the information from the old schedule and the probabilistic information provided by the BN. Second, we reduce the times of rescheduling. Note that rather than a comprehensive comparison, Figure \ref{fig:ch4_computational_time} is a snapshot of single runs from each case, which means that it does not imply statistical significance. While a comprehensive study on the topic of computational time is beyond the scope of this paper, we leave this topic to our future works.
%%%%%%%%%%%%%%%%%%%%%%%%%%%%%%%%%%%%%%%%%%%%%%%%%%%%%%%%%%%%%%%%%%%%%%%%%%%%%%%%%%%%%%%%%%
%%%%%%%%%%%%%%%%%%%%%%%%%%%%%%%%%%%%%%%%%%%%%%%%%%%%%%%%%%%%%%%%%%%%%%%%%%%%%%%%%%%%%%%%%%
\section{Conclusions} \label{sec:ch4_conclusions}
In this paper, we consider the dynamic scheduling problem of multipurpose batch processes under incomplete look-ahead information. We first describe the problem using the dynamic system formalism in Section \ref{sec:ch4_problem_formulation}. Then, we establish the detailed Bayesian dynamic scheduling method in Section \ref{sec:ch4_methodology}. Notably, we develop an algorithm, which is called impact propagation, to construct a set of random variables that can be used to describe how severe a set of operations is likely to be impacted by disturbances. The random variables that are constructed by this algorithm satisfy a set of conditional independence statements. This property allows us to develop a Bayesian Network that describes the joint distribution over this set of random variables. Based on this theoretical foundation, we then develop the main body of our Bayesian dynamic scheduling method. Roughly speaking, the main body of our method is fine-controlling the rescheduling policy based on the posterior distribution that is inferred by the Bayesian Network. 

We test our method on four benchmark problems in multipurpose batch process scheduling. These problems are adopted from \cite{li_novel_2022}. The computational results show that, in most scenarios, our method performs well in long-term cost and system nervousness (Figure \ref{fig:ch4_performance_result}). We also discuss the theoretical aspects of the long-term cost in dynamic scheduling of multipurpose batch processes. That is, the nominal cost (the cost that is generated by assuming there is no disturbance in the system) and the oracle cost (the cost that is generated by assuming the decision maker has infinite look-ahead information). The analysis reveals that, the effect of disturbances may depend heavily on the specific realisation of disturbances. Also, for a dynamic scheduling method, under the constraint of incomplete look-ahead information, it is generally difficult to achieve the oracle cost. 

We also point out the following future directions. First, it is useful to understand the value of look-ahead information in dynamic scheduling of multipurpose batch processes. That is, by providing more look-ahead information, to what extent can a decision maker do better in the long-term cost? Second, our results reveal that, the long-term cost not only depend on the when-to-reschedule policy, but also depend on the how-to-reschedule policy. Specifically, it is not always the best choice to generate an optimal schedule (in terms of the objective function in an MILP) at each time step. As a result, how can we determine a better how-to-reschedule policy for multipurpose batch process scheduling? Finally, more warm start approaches are worth exploring to reduce the computational time of generating a new schedule.
%%%%%%%%%%%%%%%%%%%%%%%%%%%%%%%%%%%%%%%%%%%%%%%%%%%%%%%%%%%%%%%%%%%%%%%%%%%%%%%%%%%%%%%%%%

\section*{Acknowledgments} 

Taicheng Zheng appreciates the financial support from UoM-CSC joint scholarship (202106440020). Dan Li appreciates the financial support from UoM-CSC joint scholarship (201908130170). Jie Li appreciates the financial support from Engineering and Physical Sciences Research Council (EP/V051008/1).

\printglossary[type=\acronymtype]

\printbibliography

@article{li_integrated_2016,
	title = {Integrated gasoline blending and order delivery operations: {Part} {I}. short‐term scheduling and global optimization for single and multi‐period operations},
	volume = {62},
	copyright = {http://onlinelibrary.wiley.com/termsAndConditions\#vor},
	issn = {0001-1541, 1547-5905},
	shorttitle = {Integrated gasoline blending and order delivery operations},
	url = {https://aiche.onlinelibrary.wiley.com/doi/10.1002/aic.15168},
	doi = {10.1002/aic.15168},
	abstract = {Gasoline is one of the most valuable products in an oil refinery and can account for as much as 60–70\% of total profit. Optimal integrated scheduling of gasoline blending and order delivery operations can significantly increase profit by avoiding ship demurrage, improving customer satisfaction, minimizing quality give‐aways, reducing costly transitions and slop generation, exploiting low‐quality cuts, and reducing inventory costs. In this article, we first introduce a new unit‐specific event‐based continuous‐time formulation for the integrated treatment of recipes, blending, and scheduling of gasoline blending and order delivery operations. Many operational features are included such as nonidentical parallel blenders, constant blending rate, minimum blend length and amount, blender transition times, multipurpose product tanks, changeovers, and piecewise constant profiles for blend component qualities and feed rates. To address the nonconvexities arising from forcing constant blending rates during a run, we propose a hybrid global optimization approach incorporating a schedule adjustment procedure, iteratively via a mixed‐integer programming and nonlinear programming scheme, and a rigorous deterministic global optimization approach. The computational results demonstrate that our proposed formulation does improve the mixed‐integer linear programming relaxation of Li and Karimi, Ind. Eng. Chem. Res., 2011, 50, 9156–9174. All examples are solved to be 1\%‐global optimality with modest computational effort. © 2016 American Institute of Chemical Engineers
              AIChE J
              , 62: 2043–2070, 2016},
	language = {en},
	number = {6},
	urldate = {2025-11-30},
	journal = {AIChE Journal},
	author = {Li, Jie and Xiao, Xin and Floudas, Christodoulos A.},
	month = jun,
	year = {2016},
	pages = {2043--2070},
}

@article{zhang_data-driven_2021,
	title = {Data-{Driven} {Dispatching} {Rules} {Mining} and {Real}-{Time} {Decision}-{Making} {Methodology} in {Intelligent} {Manufacturing} {Shop} {Floor} with {Uncertainty}},
	volume = {21},
	issn = {1424-8220},
	url = {https://www.mdpi.com/1424-8220/21/14/4836},
	doi = {10.3390/s21144836},
	abstract = {In modern manufacturing industry, the methods supporting real-time decision-making are the urgent requirement to response the uncertainty and complexity in intelligent production process. In this paper, a novel closed-loop scheduling framework is proposed to achieve real-time decision making by calling the appropriate data-driven dispatching rules at each rescheduling point. This framework contains four parts: ofﬂine training, online decision-making, data base and rules base. In the ofﬂine training part, the potential and appropriate dispatching rules with managers’ expectations are explored successfully by an improved gene expression program (IGEP) from the historical production data, not just the available or predictable information of the shop ﬂoor. In the online decision-making part, the intelligent shop ﬂoor will implement the scheduling scheme which is scheduled by the appropriate dispatching rules from rules base and store the production data into the data base. This approach is evaluated in a scenario of the intelligent job shop with random jobs arrival. Numerical experiments demonstrate that the proposed method outperformed the existing well-known single and combination dispatching rules or the discovered dispatching rules via metaheuristic algorithm in term of makespan, total ﬂow time and tardiness.},
	language = {en},
	number = {14},
	urldate = {2025-11-30},
	journal = {Sensors},
	author = {Zhang, Liping and Hu, Yifan and Tang, Qiuhua and Li, Jie and Li, Zhixiong},
	month = jul,
	year = {2021},
	pages = {4836},
}

@book{friedman_probabilistic_2009,
	address = {Cambridge, MA},
	series = {Adaptive computation and machine learning},
	title = {Probabilistic graphical models: principles and techniques},
	isbn = {978-0-262-01319-2},
	shorttitle = {Probabilistic graphical models},
	language = {en},
	publisher = {MIT Press},
	author = {Friedman, Nir},
	year = {2009},
	keywords = {Bayesian statistical decision theory, Graphic methods, Graphical modeling (Statistics)},
}

@article{janak_production_2006,
	title = {Production scheduling of a large-scale industrial batch plant. {II}. {Reactive} scheduling},
	volume = {45},
	issn = {0888-5885},
	doi = {10.1021/ie0600590},
	abstract = {The aim of reactive scheduling is to update the current production schedule in order to provide an immediate response to an unexpected event such as equipment breakdown or the addition/modification of orders. Reactive scheduling formulations should be fast and efficient and take into account the schedule currently in progress as well as planned productions that are not affected by the unexpected event. In this work, a reactive scheduling framework is developed which utilizes an efficient mixed-integer linear programming (MILP) mathematical framework developed for short-term scheduling problems with modifications introduced to reflect the effects of the unforeseen event. To avoid full rescheduling of the current production horizon, the formulation determines tasks which are not affected by the unforeseen event, either directly or indirectly, and can be carried out as scheduled. The resulting tasks along with additional subsets of tasks are then fixed in the MILP problem, and the rest of the horizon is rescheduled. We consider two types of unexpected events: unit shutdown and the addition or modification of orders. Both cases utilize the nominal schedules, the original MILP formulation for short-term scheduling with modifications, and a program to determine which tasks may be fixed before rescheduling. The formulation is then able to determine an updated production schedule for the remaining time horizon in a reasonable amount of CPU time. Reactive scheduling of a large-scale industrial batch plant is performed to demonstrate the effectiveness of the proposed approach.},
	number = {25},
	journal = {INDUSTRIAL \& ENGINEERING CHEMISTRY RESEARCH},
	author = {Janak, Stacy L. and Floudas, Christodoulos A. and Kallrath, Josef and Vormbrock, Norbert},
	month = dec,
	year = {2006},
	note = {111 citations (Crossref) [2025-08-14]},
	pages = {8253--8269},
}

@article{prata_integrated_2008,
	title = {Integrated scheduling and dynamic optimization of grade transitions for a continuous polymerization reactor},
	volume = {32},
	issn = {0098-1354},
	doi = {10.1016/j.compchemeng.2007.03.009},
	abstract = {This paper presents a modeling and numerical solution method for an integrated grade transition and production scheduling problem for a continuous polymerization reactor. The optimal sequence of production stages and the transitions between them is supposed to be determined for producing a given number of polymer grades at certain amounts and quality specifications in the most economical way. The production schedule has to satisfy due dates for specific orders. This operational problem is cast into a mixed-integer dynamic optimization problem. Disjunctions and logical constraints are combined with a validated differential-algebraic model describing the polymer process during the production of a specific grade as well as along a transition between two different grades. The modeling and solution approach proposed by Oldenburg et al. [Oldenburg, J., Marquardt, W., Heinz D., \& Leineweber, D. B. (2003)] is tailored to this problem class to provide an efficient solution technique. An industrial example process serves as an example to illustrate the modeling and solution techniques suggested. (C) 2007 Elsevier Ltd. All rights reserved.},
	number = {3},
	journal = {COMPUTERS \& CHEMICAL ENGINEERING},
	author = {Prata, Adrian and Oldenburg, Jan and Kroll, Andreas and Marquardt, Wolfgang},
	month = mar,
	year = {2008},
	note = {95 citations (Crossref) [2025-08-14]},
	pages = {463--476},
}

@article{soares_real-time_2008,
	title = {Real-time scheduling of batch systems using {Petri} nets and linear logic},
	volume = {81},
	issn = {0164-1212},
	doi = {10.1016/j.jss.2008.01.018},
	abstract = {This paper presents an approach to model, design and verify scenarios of real-time systems used in the scheduling and global coordination of batch systems. The initial requirements of a system specified with sequence diagrams are translated into a single p-time Petri net model representing the global behavior of the system. For the Petri net fragments involved in conflicts, symbolic production and consumption dates assigned to tokens are calculated based on the sequent calculus of linear logic. These dates are then used for off-line conflict resolution within a token player algorithm used for scenario verification of real-time specifications and which can be seen as a simulation tool for UML interaction diagrams. (C) 2008 Elsevier Inc. All rights reserved.},
	number = {11},
	journal = {JOURNAL OF SYSTEMS AND SOFTWARE},
	author = {Soares, Michel dos Santos and Julia, Stephane and Vrancken, Jos},
	month = nov,
	year = {2008},
	note = {16 citations (Crossref) [2025-08-14]},
	pages = {1983--1996},
}

@article{lim_proactive_2009,
	title = {Proactive {Scheduling} {Strategy} {Applied} to {Decoking} {Operations} of an {Industrial} {Naphtha} {Cracking} {Furnace} {System}},
	volume = {48},
	issn = {0888-5885},
	doi = {10.1021/ie800331z},
	abstract = {The scheduling of decoking operations in a naphtha cracking furnace system is an important issue to ethylene producers, because excessive coke deposits inside the furnace coils can negatively impact plant safety and productivity. For optimal scheduling, accurate online estimation of the thickness of the deposited coke is essential. In practice, the coke thickness can be estimated from various furnace operating Variables, but measurement errors and unexpected changes in the coke growth rate cause significant uncertainties in the estimation. Errors in the coke thickness estimate manifest themselves as gaps between the model prediction and actual measured values of key operating variables such as the pressure drop and the tube temperature. To handle the potential conflicts in an established schedule caused by the uncertainties, we propose to use a “proactive” scheduling strategy. In “reactive” scheduling, rescheduling is performed whenever an unexpected operational problem causes an unscheduled decoking operation, thus making a standing schedule no longer viable. On the other hand, in the “proactive” scheduling strategy, model information, as well as measurement information, are used to determine appropriate rescheduling points before actual operational problems arise. Under the proposed proactive scheduling strategy, the model predictions of the pressure drop and the tube temperature are compared against their measurements while the plant is operating according to a current decoking schedule. Whenever the gap between the model prediction and the measurement is larger than a given threshold value, the model is updated based on the measurements and the scheduling problem is solved again with the updated model information. The new scheduling solution is applied to the operation until the next scheduling point is found. This proactive scheduling procedure is applied to a simulated system of multiple furnaces. The advantages of the proactive scheduling strategy, in terms of productivity and risk management, are shown by comparing it with a reactive scheduling strategy and a heuristic decoking strategy over a large number of scenarios.},
	number = {6},
	journal = {INDUSTRIAL \& ENGINEERING CHEMISTRY RESEARCH},
	author = {Lim, Heejin and Choi, Jaein and Realff, Matthew and Lee, Jay H. and Park, Sunwon},
	month = mar,
	year = {2009},
	note = {23 citations (Crossref) [2025-08-14]},
	pages = {3024--3032},
}

@article{li_reactive_2008,
	title = {Reactive scheduling using parametric programming},
	volume = {54},
	issn = {0001-1541},
	doi = {10.1002/aic.11593},
	abstract = {To address the various disruptive events that occur during process operations, reactive scheduling is commonly used. However, a major limitation of the existing reactive scheduling techniques is the response time, which might cause significant delay while the generation of a new schedule takes place. In this article, a novel approach is proposed to improve the efficiency of reactive scheduling and to avoid the resolution of a complex optimization problem when uncertain event occurs during the scheduling period. In the proposed method, reactive schedule is obtained from the solution of multiparametric programming problem, which is solved ahead of time and covers all possible outcomes of future uncertainty. The multiparametric programming problem is derived from a new reactive scheduling formulation, which integrates disruptive events (rush order and machine breakdown) as uncertain parameters in the process modeling. Several examples are presented to illustrate the effectiveness of the proposed approach. (C) 2008 American Institute of Chemical Engineers.},
	number = {10},
	journal = {AICHE JOURNAL},
	author = {Li, Zukui and Ierapetritou, Marianthi G.},
	month = oct,
	year = {2008},
	note = {55 citations (Crossref) [2025-08-14]},
	pages = {2610--2623},
}

@article{wittmann-hohlbein_proactive_2013,
	title = {Proactive {Scheduling} of {Batch} {Processes} by a {Combined} {Robust} {Optimization} and {Multiparametric} {Programming} {Approach}},
	volume = {59},
	issn = {0001-1541},
	doi = {10.1002/aic.14140},
	abstract = {We address short-term batch process scheduling problems contaminated with uncertainty in the data. The mixed integer linear programming (MILP) scheduling model, based on the formulation of Ierapetritou and Floudas, Ind Eng Chem Res. 1998; 37(11):4341-4359, contains parameter dependencies at multiple locations, yielding a general multiparametric (mp) MILP problem. A proactive scheduling policy is obtained by solving the partially robust counterpart formulation. The counterpart model may remain a multiparametric problem, yet it is immunized against uncertainty in the entries of the constraint matrix and against all parameters whose values are not available at the time of decision making. We extend our previous work on the approximate solution of mp-MILP problems by embedding different uncertainty sets (box, ellipsoidal and budget parameter regulated uncertainty), and by incorporating information about the availability of uncertain data in the construction of the partially robust scheduling model. For any parameter realization, the corresponding schedule is then obtained through function evaluation. (c) 2013 American Institute of Chemical Engineers AIChE J, 59: 4184-4211, 2013},
	number = {11},
	journal = {AICHE JOURNAL},
	author = {Wittmann-Hohlbein, Martina and Pistikopoulos, Efstratios N.},
	month = nov,
	year = {2013},
	note = {17 citations (Crossref) [2025-08-14]},
	pages = {4184--4211},
}

@article{oyebolu_dynamic_2019,
	title = {Dynamic scheduling of multi-product continuous biopharmaceutical facilities: {A} hyper-heuristic framework},
	volume = {125},
	issn = {0098-1354},
	doi = {10.1016/j.compchemeng.2019.03.002},
	abstract = {The biopharmaceutical industry is increasingly interested in moving from batch to semi-continuous manufacturing processes. These continuous bioprocesses are more failure-prone and process failure is more consequential. In addition, the probability of failure is dependent on process run time which generally is determined independent of scheduling considerations. This work presents a discrete-event simulation of continuous bioprocesses in a scheduling environment. Dynamic scheduling policies are investigated to make operational decisions in a multi-product manufacturing facility and react to process failure events and uncertain demand. First, different scheduling policies are adapted from the stochastic lot sizing literature and a novel look-ahead scheduling policy is proposed. Then, policy parameters (including process run time) are tuned using evolutionary algorithms. Our results demonstrate that the tuned policies perform much better than a policy that estimates policy parameters based on service level considerations and a policy based on a fixed cyclical sequence. (C) 2019 Elsevier Ltd. All rights reserved.},
	journal = {COMPUTERS \& CHEMICAL ENGINEERING},
	author = {Oyebolu, Folarin B. and Allmendinger, Richard and Farid, Suzanne S. and Branke, Juergen},
	month = jun,
	year = {2019},
	note = {14 citations (Crossref) [2025-08-14]},
	pages = {71--88},
}

@article{rawlings_incorporating_2019,
	title = {Incorporating automation logic in online chemical production scheduling},
	volume = {128},
	issn = {0098-1354},
	doi = {10.1016/j.compchemeng.2019.06.015},
	abstract = {In this paper, we address the problem of incorporating knowledge of the automation system in a chemical plant into the online scheduling problem. Optimization models for online scheduling necessarily omit some of the plant dynamics to ensure sufficiently fast solution times for use in online rescheduling. This can result in the computed schedules not being feasible when executed in the plant. To overcome this difficulty, we propose to use online analysis of a formal model of the automation logic to detect these infeasible schedules and avoid them when rescheduling. Model checking is applied to an abstraction of the automation system's dynamics to detect infeasible schedules, and a state-space resource task network scheduling model is used to account for the associated delay information. We demonstrate the techniques using an illustrative running example, and show the utility of the integrated approach using a case study involving multiple batch reactions. (C) 2019 Elsevier Ltd. All rights reserved.},
	journal = {COMPUTERS \& CHEMICAL ENGINEERING},
	author = {Rawlings, Blake C. and Avadiappan, Venkatachalam and Lafortune, Stphane and Maravelias, Christos T. and Wassick, John M.},
	month = sep,
	year = {2019},
	note = {7 citations (Crossref) [2025-08-14]},
	pages = {201--215},
}

@article{rahal_proactive_2020,
	title = {Proactive and reactive scheduling of the steelmaking and continuous casting process through adaptive robust optimization},
	volume = {133},
	issn = {0098-1354},
	doi = {10.1016/j.compchemeng.2019.106658},
	abstract = {The schedule of a steelmaking and continuous casting (SCC) process defines the unit assignments of casts and ladles, and the processing sequence and timing in each unit. These decisions have to meet strict production requirements such as limited storage duration and continuous operation under various disruptions. Uncertain processing time is a common factor that may invalidate the initial schedule. Traditional deterministic reactive scheduling implements corrective measures for every disruption event by either repairing the deterministic schedule or rescheduling from scratch. In this work, we present a novel adaptive robust optimization method for SCC scheduling under uncertainty. Compared to the deterministic scheduling method, adaptive robust proactive scheduling generates an initial schedule immune to most major disruptions eliminating the need for most changes. Adaptive robust hybrid scheduling presents an attractive trade-off between solution conservatism and excessive scheduling modifications. (C) 2019 Elsevier Ltd. All rights reserved.},
	journal = {COMPUTERS \& CHEMICAL ENGINEERING},
	author = {Rahal, Said and Li, Zukui and Papageorgiou, Dimitri J.},
	month = feb,
	year = {2020},
	note = {20 citations (Crossref) [2025-08-14]},
}

@article{dwibedy_semi-online_2022,
	title = {Semi-online scheduling: {A} survey},
	volume = {139},
	issn = {0305-0548},
	doi = {10.1016/j.cor.2021.105646},
	abstract = {In real-life applications, neither all the inputs of an algorithm are available at the outset, as in an offline framework, nor do they occur one by one in order, as in an online setup. Semi-online is an intermediate theoretically and practically significant framework with additional information on the successive inputs to address the limitations of online and offline frameworks. One key motivation for studying semi-online algorithms is to investigate how additional information can improve the performance of online algorithms. In online scheduling, jobs are received one by one and each job must be scheduled irrevocably before the availability of the next job. Semi-online scheduling is a relaxed variant of online scheduling, where some Extra Piece Information (EPI) about the whole job sequence is known a priori, or additional algorithmic extensions are allowed. The EPI may include one or more of the following parameter(s) such as the maximum processing time, the total sum of the processing time of all jobs, arrival sequence of the jobs based on processing time, optimum makespan value, or the range of processing time. The design of improved competitive semi-online algorithms for m-machine scheduling problem has received significant research attention after the seminal works of Liu et al. (1996) and Kellerer et al. (1997). In this survey article, we highlight the scholarly contributions and stateof-the-art results for semi-online scheduling on parallel machine models such as identical, uniformly related, and unbounded batch by considering preemptive and non-preemptive as the processing formats with optimality criteria such as makespan, load balancing, machine cost, Lp-norm load balancing, early work maximization, and the sum of completion time plus total penalty cost. The survey begins with a brief introduction to the online and semi-online frameworks for the m-machine scheduling problem and presentation of a standard well-known algorithmic performance measure, the competitive analysis. Practical applications, preliminary concepts, research motivation, and taxonomy of semi-online scheduling are presented as a foundation for a basic understanding of the area. State-of-the-art results achieved by the deterministic semi-online algorithms are classified and presented based on machine models and known EPI with a special focus on novel intuitions and algorithmic development. We discuss and analyze the impact of EPI on the competitive performance of semi-online algorithms. A classification of the references based on specific EPI is outlined for further research investigation. Finally, we conclude the survey with non-trivial research challenges and open problems.},
	journal = {COMPUTERS \& OPERATIONS RESEARCH},
	author = {Dwibedy, Debasis and Mohanty, Rakesh},
	month = mar,
	year = {2022},
	note = {18 citations (Crossref) [2025-08-14]},
}

@article{gupta_design_2019,
	title = {On the design of online production scheduling algorithms},
	volume = {129},
	issn = {00981354},
	url = {https://linkinghub.elsevier.com/retrieve/pii/S0098135419304016},
	doi = {10.1016/j.compchemeng.2019.106517},
	language = {en},
	urldate = {2021-10-06},
	journal = {Computers \& Chemical Engineering},
	author = {Gupta, Dhruv and Maravelias, Christos T.},
	month = oct,
	year = {2019},
	note = {20 citations (Crossref) [2025-07-21]},
	pages = {106517},
}

@article{woolway_application_2019,
	title = {On the application of a metaheuristic suite with parallel implementations for the scheduling of multipurpose batch plants},
	volume = {126},
	issn = {0098-1354},
	doi = {10.1016/j.compchemeng.2019.04.022},
	abstract = {In a recent publication by Woolway and Majozi (2018), a novel framework for the application of metaheuristics to the scheduling of multipurpose batch plants was proposed. The initial framework was only developed for continuous-time approaches. This work extends this framework to the discrete-time approach. Additionally, a suite of metaheuristics is applied to two well-known literature examples in both of these frameworks with serial and parallel implementations. Results show these metaheuristics capable of achieving global solutions in the discrete-time approaches and close-to-optimal solutions in continuous-time approaches. These solutions can be found in reduced computational times of up to six orders of magnitude when compared to modern mathematical programming approaches. (C) 2019 Elsevier Ltd. All rights reserved.},
	journal = {COMPUTERS \& CHEMICAL ENGINEERING},
	author = {Woolway, Matthew and Majozi, Thokozani},
	month = jul,
	year = {2019},
	note = {9 citations (Crossref) [2025-07-10]},
	pages = {371--390},
}

@article{awad_constraint_2022,
	title = {A constraint programming model for makespan minimisation in batch manufacturing pharmaceutical facilities},
	volume = {156},
	issn = {0098-1354},
	doi = {10.1016/j.compchemeng.2021.107565},
	abstract = {This work introduces a constraint programming (CP) model to minimise the makespan of a large scheduling problem in batch pharmaceutical manufacturing facilities. The model included campaign manufacturing, sequence-dependent changeover, forbidden product-equipment assignment, non-working periods, limited renewable resources, finite intermediate/non-intermediate storage policies and preemptive and non-preemptive operations. The model was tested by scheduling two real examples. Each example consisted of two cases. Case A used the actual number of workers per shift. Case B involved reorganising the shift pattern of the workers. The results from the model were qualitatively validated and compared with the baseline schedules. The makespan of the first example decreased by 24.35\% for case 1.A and 38.63\% for case 1.B. The second example reported a 2.17\% decrease in makespan for case 2.A and 16.67\% decrease for case 2.B. This demonstrates the success of the proposed model as a simulation tool to identify the manufacturing bottlenecks by running what-if scenarios. (c) 2021 Elsevier Ltd. All rights reserved.},
	journal = {COMPUTERS \& CHEMICAL ENGINEERING},
	author = {Awad, Mohamed and Mulrennan, Konrad and Donovan, John and Macpherson, Russell and Tormey, David},
	month = jan,
	year = {2022},
	note = {17 citations (Crossref) [2025-07-04]},
}

@article{di_pretoro_demand_2022,
	title = {Demand response scheduling using derivative-based dynamic surrogate models},
	volume = {160},
	issn = {0098-1354},
	doi = {10.1016/j.compchemeng.2022.107711},
	abstract = {When assessing demand response to solve optimal scheduling problems, the optimization algorithm needs to be coupled with the process model in order to quantify the behavior of the monitored variable. For negligible transients, a first approximation consists of applying the steady state correlation between input and output variables. On the contrary, when dynamics show a relevant bias with respect to the estimated steady state response, a more accurate model is required. When coupling dynamic models of entire chemical processes with the optimization algorithm, the computational effort drastically increases. In those cases, the model should be simplified without losing its accuracy. In this research work, we propose a derivative-based approach for dynamic surrogate modeling applied to an ethylene oxide production scheduling problem. Thanks to its general validity, once derived, it can be applied to any setpoint trajectory. This approach allows to reduce the computational time of the optimization algorithm by orders of magnitude, providing results with the same accuracy as the detailed process simulation.},
	journal = {COMPUTERS \& CHEMICAL ENGINEERING},
	author = {Di Pretoro, Alessandro and Bruns, Bastian and Negny, Stephane and Gruenewald, Marcus and Riese, Julia},
	month = apr,
	year = {2022},
}

@article{bonfill_proactive_2008,
	title = {Proactive approach to address the uncertainty in short-term scheduling},
	volume = {32},
	issn = {0098-1354},
	doi = {10.1016/j.compchemeng.2007.08.014},
	abstract = {The identification of reliable schedules serves a valuable function as a basis for coordinating outside activities within the highly dynamic and uncertain supply chain (SC) environment. A contribution is made in the area of proactive scheduling with the development of a stochastic modeling framework to support the short-term scheduling problem with uncertain operation times and equipment breakdowns. A set of scenarios for the uncertain parameters is anticipated in the decision stage, along with information concerning the reactive scheduling approach to be taken during schedule execution. A robust predictive schedule is pursued, with the flexibility to absorb disruptive events without major changes when rescheduling is required. Either rigorous or heuristic techniques can be used to optimize a robustness measure that explicitly accounts for the eventual wait times and idle times that may arise during execution. The application of the framework to different case studies shows the flexibility of the predictive schedule, the different decisions that can be drawn based on the rescheduling strategy considered, and the importance of exploiting the information of the uncertainty as well as the incorporation of the rescheduling policy proactively. (C) 2007 Elsevier Ltd. All rights reserved.},
	number = {8},
	journal = {COMPUTERS \& CHEMICAL ENGINEERING},
	author = {Bonfill, A. and Espuna, A. and Puigjaner, L.},
	month = aug,
	year = {2008},
	pages = {1689--1706},
}

@article{zheng_rescheduling_2025,
	title = {A {Rescheduling} {Strategy} for {Multipurpose} {Batch} {Processes} with {Processing} {Time} {Variation} and {Demand} {Uncertainty}},
	volume = {13},
	copyright = {https://creativecommons.org/licenses/by/4.0/},
	issn = {2227-9717},
	url = {https://www.mdpi.com/2227-9717/13/2/312},
	doi = {10.3390/pr13020312},
	abstract = {In this paper, we address the problem of dynamic scheduling of a multipurpose batch process subject to two types of disturbances, namely, processing time variation and demand uncertainty. We propose a rescheduling strategy that combines several ideas. First, when we generate a new schedule, we simultaneously construct a Directed Acyclic Graph (DAG) to represent this new schedule. While each node in the DAG represents an operation, each arc represents the dependency of an operation on another. Based on this DAG, we then use a simple procedure to determine how long an operation is allowed to be delayed without affecting the current makespan. After that, when the new schedule is used for online execution, we trigger a rescheduling procedure only when (1) we infer from the predetermined delayable time information that the current makespan will be extended, or (2) we observe new demands, or (3) the current schedule is not guaranteed to be feasible. In the rescheduling procedure, only the affected operations are allowed to be revised, while those unaffected operations are fixed. By doing this, we can reduce system nervousness and improve computational efficiency. The computational results demonstrate that our method can achieve an order of magnitude of reduction in both the number of operation changes and the computational time with a slightly better long-term makespan, compared to the widely used periodically–completely rescheduling strategy.},
	language = {en},
	number = {2},
	urldate = {2025-06-30},
	journal = {Processes},
	author = {Zheng, Taicheng and Li, Dan and Li, Jie},
	month = jan,
	year = {2025},
	pages = {312},
}

@article{li_novel_2022,
	title = {Novel {Multiple} {Time}-grid {Continuous}-time {Mathematical} {Formulation} for {Short}-term {Scheduling} of {Multipurpose} {Batch} {Plants}},
	volume = {61},
	number = {43},
	journal = {Industrial \& Engineering Chemistry Research},
	author = {Li, Dan and Rakovitis, Nikolaos and Zheng, Taicheng and Pan, Yueting and Li, Jie and Kopanos, Giorgos},
	year = {2022},
	note = {Publisher: ACS Publications},
	pages = {16093--16111},
}

@article{nie_extended_2014,
	title = {Extended {Discrete}-{Time} {Resource} {Task} {Network} {Formulation} for the {Reactive} {Scheduling} of a {Mixed} {Batch}/{Continuous} {Process}},
	volume = {53},
	issn = {0888-5885, 1520-5045},
	url = {https://pubs.acs.org/doi/10.1021/ie500363p},
	doi = {10.1021/ie500363p},
	abstract = {A mixed integer linear programming (MILP) model is developed for the optimal reactive scheduling of a mixed batch/continuous process, based on the discrete time resource task network (RTN) representation and extensions. The scheduling task is complicated with the mixed process units and network structure, as well as operation rules such as product changeovers. The extended RTN model introduces modiﬁcations to the conventional RTN models such as multiextent resource balances and, also, adds more features such as resource limit balances and resource slacks. These extensions allow for eﬃcient modeling of the mixed plant in great detail. The extended RTN model is further reformulated to the state space form by incorporating lif ted state variables that represent task histories. The state space RTN model facilitates reactive schedule design, particularly when used with the rolling horizon scheme. In the case study, we show the advantages of the state space RTN model in periodic rescheduling under process disruptions.},
	language = {en},
	number = {44},
	urldate = {2024-09-07},
	journal = {Industrial \& Engineering Chemistry Research},
	author = {Nie, Yisu and Biegler, Lorenz T. and Wassick, John M. and Villa, Carlos M.},
	month = nov,
	year = {2014},
	note = {51 citations (Crossref) [2024-09-07]},
	pages = {17112--17123},
}

@article{hwangbo_production_2024,
	title = {Production rescheduling via explorative reinforcement learning while considering nervousness},
	volume = {186},
	issn = {00981354},
	url = {https://linkinghub.elsevier.com/retrieve/pii/S0098135424001182},
	doi = {10.1016/j.compchemeng.2024.108700},
	abstract = {Nervousness-aware rescheduling is essential in maximizing the profitability and stability of processes in manufacturing industries. It involves re-optimization to meet scheduling goals while minimizing deviations from the base schedule. However, conventional mathematical optimization becomes impractical due to high computational costs and the inability to handle real-time rescheduling. Here, we propose an online rescheduling agent trained by explorative reinforcement learning that autonomously optimizes schedules while considering schedule nervousness. In a static scheduling environment, our model consistently achieves over 90\% of the cost objective with scalability and flexibility. A computational time comparison proves that the reinforcement learning methodology makes near-optimal decisions rapidly, irrespective of the complexity of the scheduling problem. Furthermore, we present several realistic rescheduling scenarios that demonstrate the capability of our methodology. Our study illustrates the significant potential of reinforcement learning methodology in expediting digital transformation and process automation within real-world manufacturing systems.},
	language = {en},
	urldate = {2024-09-07},
	journal = {Computers \& Chemical Engineering},
	author = {Hwangbo, Sumin and Liu, J. Jay and Ryu, Jun-Hyung and Lee, Ho Jae and Na, Jonggeol},
	month = jul,
	year = {2024},
	note = {0 citations (Crossref) [2024-09-07]},
	pages = {108700},
}

@book{ross_first_2014,
	address = {Boston},
	edition = {Ninth edition},
	title = {A first course in probability},
	isbn = {978-0-321-79477-2},
	language = {en},
	publisher = {Pearson},
	author = {Ross, Sheldon M.},
	year = {2014},
	note = {OCLC: 803961222},
}

@article{wittmannhohlbein_proactive_2013,
	title = {Proactive scheduling of batch processes by a combined robust optimization and multiparametric programming approach},
	volume = {59},
	issn = {0001-1541, 1547-5905},
	url = {https://aiche.onlinelibrary.wiley.com/doi/10.1002/aic.14140},
	doi = {10.1002/aic.14140},
	abstract = {We address short‐term batch process scheduling problems contaminated with uncertainty in the data. The mixed integer linear programming (MILP) scheduling model, based on the formulation of Ierapetritou and Floudas, Ind Eng Chem Res. 1998; 37(11):4341–4359, contains parameter dependencies at multiple locations, yielding a general multiparametric (mp) MILP problem. A proactive scheduling policy is obtained by solving the partially robust counterpart formulation. The counterpart model may remain a multiparametric problem, yet it is immunized against uncertainty in the entries of the constraint matrix and against all parameters whose values are not available at the time of decision making. We extend our previous work on the approximate solution of mp‐MILP problems by embedding different uncertainty sets (box, ellipsoidal and budget parameter regulated uncertainty), and by incorporating information about the availability of uncertain data in the construction of the partially robust scheduling model. For any parameter realization, the corresponding schedule is then obtained through function evaluation. © 2013 American Institute of Chemical Engineers
              AIChE J
              , 59: 4184–4211, 2013},
	language = {en},
	number = {11},
	urldate = {2024-01-04},
	journal = {AIChE Journal},
	author = {Wittmann‐Hohlbein, Martina and Pistikopoulos, Efstratios N.},
	month = nov,
	year = {2013},
	pages = {4184--4211},
}

@article{kopanos_reactive_2014,
	title = {Reactive {Scheduling} by a {Multiparametric} {Programming} {Rolling} {Horizon} {Framework}: {A} {Case} of a {Network} of {Combined} {Heat} and {Power} {Units}},
	volume = {53},
	issn = {0888-5885, 1520-5045},
	shorttitle = {Reactive {Scheduling} by a {Multiparametric} {Programming} {Rolling} {Horizon} {Framework}},
	url = {https://pubs.acs.org/doi/10.1021/ie402393s},
	doi = {10.1021/ie402393s},
	abstract = {In this work, we introduce an approach for the reactive scheduling of production systems with bounded uncertain parameters. The proposed method follows a state-space representation for the scheduling problem, and relies on the use of a rolling horizon framework and multiparametric programming techniques. We show that by considering as uncertain parameters the set of variables that describe the state of the system at the beginning of the prediction horizon, we can eﬀectively formulate a set of state-space multiparametric programming problems that are solved just once and oﬀ-line. In contrast to existing approaches, the repetitive solution of a new multiparametric problem after each disruptive event is avoided. The results of the parametric optimization are used in a rolling horizon basis without the need for online optimization. The proposed multiparametric programming rolling horizon (mp-RH) approach is applied in the scheduling problem of a network of combined heat and power units (i.e., a unit commitment problem type). Several case studies are solved, potential extensions of the proposed method are provided, and challenging areas wherein research is necessary are discussed.},
	language = {en},
	number = {11},
	urldate = {2024-01-04},
	journal = {Industrial \& Engineering Chemistry Research},
	author = {Kopanos, Georgios M. and Pistikopoulos, Efstratios N.},
	month = mar,
	year = {2014},
	pages = {4366--4386},
}

@article{rahmani_stable_2016,
	title = {A stable reactive approach in dynamic flexible flow shop scheduling with unexpected disruptions: {A} case study},
	volume = {98},
	issn = {03608352},
	shorttitle = {A stable reactive approach in dynamic flexible flow shop scheduling with unexpected disruptions},
	url = {https://linkinghub.elsevier.com/retrieve/pii/S0360835216302133},
	doi = {10.1016/j.cie.2016.06.018},
	abstract = {In industrial environments, scheduling systems often operate under dynamic and stochastic circumstances. In these conditions, it is inevitable to encounter some disruptions which are inherently stochastic or totally unexpected events. These disruptions may cause the initial schedule to become infeasible and non-optimal. So, appropriate revisions and rescheduling methods are needed to overcome the unfavorable subsequent of these disruptions. In this paper, we address a dynamic ﬂexible ﬂow shop (FFS) environment considering unexpected arrival of new jobs into the process as disruptions. A novel reactive model is proposed based on a classical objective function (total weighted tardiness) and two new surrogate measures, stability and resistance to change. In fact the proposed model is presented to generate a stable reschedule against of any possible occurrences of mentioned disruption. Due to the computational complexity, a variable neighborhood search (VNS) algorithm is implemented to solve the problem. To show the performance of the reactive approach, a case study in petrochemical industry is studied. Computational experiments and comparisons of the proposed algorithm with three dispatching rule and an efﬁcient rescheduling approach show the efﬁciency of the presented reactive approach to reschedule the jobs.},
	language = {en},
	urldate = {2023-12-24},
	journal = {Computers \& Industrial Engineering},
	author = {Rahmani, Donya and Ramezanian, Reza},
	month = aug,
	year = {2016},
	pages = {360--372},
}

@article{sabuncuoglu_hedging_2009,
	title = {Hedging production schedules against uncertainty in manufacturing environment with a review of robustness and stability research},
	volume = {22},
	issn = {0951-192X, 1362-3052},
	url = {https://www.tandfonline.com/doi/full/10.1080/09511920802209033},
	doi = {10.1080/09511920802209033},
	language = {en},
	number = {2},
	urldate = {2023-12-24},
	journal = {International Journal of Computer Integrated Manufacturing},
	author = {Sabuncuoglu, I. and Goren, S.},
	month = feb,
	year = {2009},
	pages = {138--157},
}

@article{han_proactive_2010,
	title = {A proactive approach for simultaneous berth and quay crane scheduling problem with stochastic arrival and handling time},
	volume = {207},
	issn = {03772217},
	url = {https://linkinghub.elsevier.com/retrieve/pii/S037722171000528X},
	doi = {10.1016/j.ejor.2010.07.018},
	abstract = {For a container terminal system, efﬁcient berth and quay crane (QC) schedules have great impact on the improvement of both operation efﬁciency and customer satisfaction. In this paper we address berth and quay crane scheduling problems in a simultaneous way, with uncertainties of vessel arrival time and container handling time. The berths are of discrete type and vessels arrive dynamically with different service priorities. QCs are allowed to move to other berths before ﬁnishing processing on currently assigned vessels, adding more ﬂexibility to the terminal system. A mixed integer programming model is proposed, and a simulation based Genetic Algorithm (GA) search procedure is applied to generate robust berth and QC schedule proactively. Computational experiment shows the satisﬁed performance of our developed algorithm under uncertainty.},
	language = {en},
	number = {3},
	urldate = {2023-07-20},
	journal = {European Journal of Operational Research},
	author = {Han, Xiao-le and Lu, Zhi-qiang and Xi, Li-feng},
	month = dec,
	year = {2010},
	pages = {1327--1340},
}

@article{morita_proactive_2023,
	title = {Proactive project scheduling using {GPU}-accelerated simulations under uncertainty environment},
	issn = {0951-192X, 1362-3052},
	url = {https://www.tandfonline.com/doi/full/10.1080/0951192X.2023.2228273},
	doi = {10.1080/0951192X.2023.2228273},
	abstract = {Numerous scheduling generation and revision methods have been developed for project schedul­ ing under uncertain environments in previous research. However, most of these methods have yet to address a practical project with more than a hundred activities involving multiple decisions in scheduling generation and execution phases. This study proposes a local search-based scheduling method that comprehensively evaluates a baseline schedule considering decision-making in both the planning and execution phases. The proposed method performs simulations to evaluate schedule robustness accurately using GPU to find a locally optimal solution for large problem instances in a reasonable time. A series of numerical experiments demonstrate that the proposed method can generate a robust schedule for large-scale project instances. These results conclude that the proposed method utilizing the simulation-based evaluation and the GPU acceleration is effective and provide insight into developing a scheduling method for practical projects.},
	language = {en},
	urldate = {2023-07-04},
	journal = {International Journal of Computer Integrated Manufacturing},
	author = {Morita, Daisuke and Suwa, Haruhiko},
	month = jun,
	year = {2023},
	pages = {1--18},
}

@article{li_process_2008,
	title = {Process scheduling under uncertainty: {Review} and challenges},
	volume = {32},
	issn = {00981354},
	shorttitle = {Process scheduling under uncertainty},
	url = {https://linkinghub.elsevier.com/retrieve/pii/S0098135407000580},
	doi = {10.1016/j.compchemeng.2007.03.001},
	abstract = {Uncertainty is a very important concern in production scheduling since it can cause infeasibilities and production disturbances. Thus scheduling under uncertainty has received a lot of attention in the open literature in recent years from chemical engineering and operations research communities. The purpose of this paper is to review the main methodologies that have been developed to address the problem of uncertainty in production scheduling as well as to identify the main challenges in this area. The uncertainties in process scheduling are ﬁrst analyzed, and the different mathematical approaches that exist to describe process uncertainties are classiﬁed. Based on the different descriptions for the uncertainties, alternative scheduling approaches and relevant optimization models are reviewed and discussed. Further research challenges in the ﬁeld of process scheduling under uncertainty are identiﬁed and some new ideas are discussed.},
	language = {en},
	number = {4-5},
	urldate = {2023-04-14},
	journal = {Computers \& Chemical Engineering},
	author = {Li, Zukui and Ierapetritou, Marianthi},
	month = apr,
	year = {2008},
	pages = {715--727},
}

@article{cui_proactive_2018,
	title = {A proactive approach to solve integrated production scheduling and maintenance planning problem in flow shops},
	volume = {115},
	issn = {03608352},
	url = {https://linkinghub.elsevier.com/retrieve/pii/S0360835217305545},
	doi = {10.1016/j.cie.2017.11.020},
	abstract = {This paper deals with the integration of production scheduling and maintenance planning in order to optimize the bi-objective of quality robustness and solution robustness for ﬂow shops with failure uncertainty. First, a proactive model is proposed to formulate the problem mathematically. Then, Monte Carlo sampling method is adopted to obtain the objective value for feasible solutions and a surrogate measure is proposed to approximate the objective function eﬃciently. Based on the sampling method and surrogate measure, a two-loop algorithm is devised to optimize the sequence of jobs, positions of preventive maintenances and idle times simultaneously. Computational results indicate that solution robustness and stability of quality robustness can be signiﬁcantly improved using our algorithm compared with the solutions obtained by the traditional way.},
	language = {en},
	urldate = {2023-03-16},
	journal = {Computers \& Industrial Engineering},
	author = {Cui, Weiwei and Lu, Zhiqiang and Li, Chen and Han, Xiaole},
	month = jan,
	year = {2018},
	pages = {342--353},
}

@book{scutari_bayesian_2021,
	address = {Boca Raton},
	edition = {2},
	title = {Bayesian {Networks}: {With} {Examples} in {R}},
	isbn = {978-0-429-34743-6},
	shorttitle = {Bayesian {Networks}},
	url = {https://www.taylorfrancis.com/books/9780429347436},
	language = {en},
	urldate = {2023-01-27},
	publisher = {Chapman and Hall/CRC},
	author = {Scutari, Marco and Denis, Jean-Baptiste},
	month = may,
	year = {2021},
	doi = {10.1201/9780429347436},
}

@book{darwiche_modeling_2009,
	address = {Cambridge ; New York},
	title = {Modeling and reasoning with {Bayesian} networks},
	isbn = {978-0-521-88438-9},
	language = {en},
	publisher = {Cambridge University Press},
	author = {Darwiche, Adnan},
	year = {2009},
	note = {OCLC: ocn263146943},
	keywords = {Bayesian statistical decision theory, Graphic methods, Inference, Modeling, Probabilities},
}

@article{ouelhadj_survey_2009,
	title = {A survey of dynamic scheduling in manufacturing systems},
	volume = {12},
	issn = {1094-6136, 1099-1425},
	url = {http://link.springer.com/10.1007/s10951-008-0090-8},
	doi = {10.1007/s10951-008-0090-8},
	abstract = {In most real-world environments, scheduling is an ongoing reactive process where the presence of a variety of unexpected disruptions is usually inevitable, and continually forces reconsideration and revision of pre-established schedules. Many of the approaches developed to solve the problem of static scheduling are often impractical in real-world environments, and the near-optimal schedules with respect to the estimated data may become obsolete when they are released to the shop ﬂoor. This paper outlines the limitations of the static approaches to scheduling in the presence of realtime information and presents a number of issues that have come up in recent years on dynamic scheduling.},
	language = {en},
	number = {4},
	urldate = {2022-06-06},
	journal = {Journal of Scheduling},
	author = {Ouelhadj, Djamila and Petrovic, Sanja},
	month = aug,
	year = {2009},
	note = {527 citations (Crossref) [2022-06-13]},
	pages = {417--431},
}

@article{kotidis_digiglyc_2021,
	title = {{DigiGlyc}: {A} hybrid tool for reactive scheduling in cell culture systems},
	volume = {154},
	issn = {00981354},
	shorttitle = {{DigiGlyc}},
	url = {https://linkinghub.elsevier.com/retrieve/pii/S0098135421002386},
	doi = {10.1016/j.compchemeng.2021.107460},
	language = {en},
	urldate = {2022-06-07},
	journal = {Computers \& Chemical Engineering},
	author = {Kotidis, Pavlos and Pappas, Iosif and Avraamidou, Styliana and Pistikopoulos, Efstratios N. and Kontoravdi, Cleo and Papathanasiou, Maria M.},
	month = nov,
	year = {2021},
	pages = {107460},
}

@article{lee_semi-online_2013,
	title = {Semi-online scheduling problems on a small number of machines},
	volume = {16},
	issn = {1094-6136},
	doi = {10.1007/s10951-013-0329-x},
	abstract = {We consider the semi-online parallel machine scheduling problem of minimizing the makespan given a priori information: the total processing time, the largest processing time, the combination of the previous two or the optimal makespan. We propose a new algorithm that can be applied to the problem with the known total or largest processing time and prove that it has improved competitive ratios for the cases with a small number of machines. Improved lower bounds of the competitive ratio are also provided by presenting adversary lower bound examples.},
	language = {English},
	number = {5},
	journal = {JOURNAL OF SCHEDULING},
	author = {Lee, K and Lim, K},
	month = oct,
	year = {2013},
	keywords = {2 IDENTICAL MACHINES, ALGORITHMS, BOUNDS, Competitive ratio, Lower bound example, SUM, Semi-online scheduling, TASKS},
	pages = {461--477},
}

@article{vieira_rescheduling_2003,
	title = {Rescheduling manufacturing systems: {A} framework of strategies, policies, and methods},
	volume = {6},
	issn = {1094-6136},
	doi = {10.1023/A:1022235519958},
	abstract = {Many manufacturing facilities generate and update production schedules, which are plans that state when certain controllable activities (e.g., processing of jobs by resources) should take place. Production schedules help managers and supervisors coordinate activities to increase productivity and reduce operating costs. Because a manufacturing system is dynamic and unexpected events occur, rescheduling is necessary to update a production schedule when the state of the manufacturing system makes it infeasible. Rescheduling updates an existing production schedule in response to disruptions or other changes. Though many studies discuss rescheduling, there are no standard definitions or classification of the strategies, policies, and methods presented in the rescheduling literature. This paper presents definitions appropriate for most applications of rescheduling manufacturing systems and describes a framework for understanding rescheduling strategies, policies, and methods. This framework is based on a wide variety of experimental and practical approaches that have been described in the rescheduling literature. The paper also discusses studies that show how rescheduling affects the performance of a manufacturing system, and it concludes with a discussion of how understanding rescheduling can bring closer some aspects of scheduling theory and practice.},
	language = {English},
	number = {1},
	journal = {JOURNAL OF SCHEDULING},
	author = {Vieira, GE and Herrmann, JW and Lin, E},
	month = jan,
	year = {2003},
	keywords = {DEPENDENT SETUP TIMES, ENVIRONMENT, HEURISTICS, JOB SHOPS, PERFORMANCE, PRACTICAL INACCURATE DATA, ROBUSTNESS, RULES, SCHEDULING PROBLEM, SINGLE-MACHINE, dynamic scheduling, predictive-reactive scheduling, rescheduling},
	pages = {39--62},
}

@article{risbeck_unification_2019,
	title = {Unification of closed-loop scheduling and control: {State}-space formulations, terminal constraints, and nominal theoretical properties},
	volume = {129},
	issn = {0098-1354},
	shorttitle = {Unification of closed-loop scheduling and control},
	url = {https://www.sciencedirect.com/science/article/pii/S0098135419300493},
	doi = {10.1016/j.compchemeng.2019.06.021},
	abstract = {Due to increases in both connectivity of process systems and availability of computing power, there is growing interest in the integration of online scheduling and control. While significant progress has been made on formulations and optimization strategies for this integrated problem, obtaining a single solution is not sufficient for a practical implementation: because of disturbances and the inherent finite-horizon scope, it eventually becomes necessary to reoptimize considering the new state of the plant. When this loop is closed, there is an unfortunate gap in knowledge with respect to theoretical properties possessed by the resulting evolution of the system. Thus, to facilitate an analysis of the online behavior, we present a unified state-space formulation for integrated scheduling and control. We then use recent theory from economic model predictive control to derive terminal constraints that can be added to the problem to provide nominal recursive feasibility and a bound on closed-loop performance. Via examples, we demonstrate the improved performance of these techniques relative to naive closed-loop implementation strategies. These developments put closed-loop scheduling, economic-optimizing control, and the integrated problem within a common framework, and they can hopefully serve to guide the development of alternative decomposition strategies that still enjoy the desired nominal theoretical properties while avoiding unexpected pathological behavior.},
	language = {en},
	urldate = {2022-03-20},
	journal = {Computers \& Chemical Engineering},
	author = {Risbeck, Michael J. and Maravelias, Christos T. and Rawlings, James B.},
	month = oct,
	year = {2019},
	keywords = {Closed-loop properties, Model predictive control, Optimization, Scheduling, State-space representation},
	pages = {106496},
}

@article{avadiappan_state_2021,
	title = {State estimation in online batch production scheduling: concepts, definitions, algorithms and optimization models},
	volume = {146},
	issn = {00981354},
	shorttitle = {State estimation in online batch production scheduling},
	url = {https://linkinghub.elsevier.com/retrieve/pii/S0098135420312527},
	doi = {10.1016/j.compchemeng.2020.107209},
	language = {en},
	urldate = {2022-03-20},
	journal = {Computers \& Chemical Engineering},
	author = {Avadiappan, Venkatachalam and Maravelias, Christos T.},
	month = mar,
	year = {2021},
	pages = {107209},
}

@article{burns_discrete-event_2022,
	title = {Discrete-event simulation and scheduling for {Mohs} micrographic surgery},
	volume = {16},
	issn = {1747-7778, 1747-7786},
	url = {https://www.tandfonline.com/doi/full/10.1080/17477778.2020.1750315},
	doi = {10.1080/17477778.2020.1750315},
	abstract = {Mohs Micrographic Surgery (MMS) is a layer-by-layer, skin-saving surgical method for excising skin cancer. The number of excised layers is stochastic, which creates operational challenges for the clinic. We develop a discrete-event simulation of a MMS surgical clinic to investigate how appointment schedules impact clinic operations. The model simulates the patient ﬂow for a single day with physician, procedure rooms, and histotechnicians as limiting resources. The process times, rate of no-shows, and number of excised layers are stochastic model inputs. The value of the MMS simulation is demonstrated through analysis of ﬁve scheduling methods, thirteen schedule templates, and three performance measures: clinic throughput, patient waiting time, and clinic overtime. The results show that the number of patients scheduled changes the ideal spacing between appointments. MMS clinics can beneﬁt from using this simulation model to explore new scheduling templates, especially when reduced patient waiting time and clinic overtime is a priority.},
	language = {en},
	number = {1},
	urldate = {2022-02-10},
	journal = {Journal of Simulation},
	author = {Burns, Patrick and Konda, Sailesh and Alvarado, Michelle},
	month = jan,
	year = {2022},
	note = {3 citations (Crossref) [2022-02-12]},
	pages = {43--57},
}

@article{harjunkoski_scope_2014,
	title = {Scope for industrial applications of production scheduling models and solution methods},
	volume = {62},
	issn = {00981354},
	url = {https://linkinghub.elsevier.com/retrieve/pii/S0098135413003682},
	doi = {10.1016/j.compchemeng.2013.12.001},
	language = {en},
	urldate = {2021-12-04},
	journal = {Computers \& Chemical Engineering},
	author = {Harjunkoski, Iiro and Maravelias, Christos T. and Bongers, Peter and Castro, Pedro M. and Engell, Sebastian and Grossmann, Ignacio E. and Hooker, John and Méndez, Carlos and Sand, Guido and Wassick, John},
	month = mar,
	year = {2014},
	note = {326 citations (Crossref) [2022-02-12]},
	keywords = {30\%},
	pages = {161--193},
}

@article{gupta_rescheduling_2016,
	title = {From rescheduling to online scheduling},
	volume = {116},
	issn = {02638762},
	url = {https://linkinghub.elsevier.com/retrieve/pii/S0263876216303732},
	doi = {10.1016/j.cherd.2016.10.035},
	language = {en},
	urldate = {2021-11-03},
	journal = {Chemical Engineering Research and Design},
	author = {Gupta, Dhruv and Maravelias, Christos T. and Wassick, John M.},
	month = dec,
	year = {2016},
	note = {47 citations (Crossref) [2021-11-03]},
	pages = {83--97},
}

@article{subramanian_state-space_2012,
	title = {A state-space model for chemical production scheduling},
	volume = {47},
	issn = {00981354},
	url = {https://linkinghub.elsevier.com/retrieve/pii/S0098135412002098},
	doi = {10.1016/j.compchemeng.2012.06.025},
	language = {en},
	urldate = {2021-10-06},
	journal = {Computers \& Chemical Engineering},
	author = {Subramanian, Kaushik and Maravelias, Christos T. and Rawlings, James B.},
	month = dec,
	year = {2012},
	note = {83 citations (Crossref) [2021-10-06]},
	pages = {97--110},
}

@article{lee_mixed-integer_2017,
	title = {Mixed-integer programming models for simultaneous batching and scheduling in multipurpose batch plants},
	volume = {106},
	issn = {00981354},
	url = {https://linkinghub.elsevier.com/retrieve/pii/S0098135417302880},
	doi = {10.1016/j.compchemeng.2017.07.007},
	language = {en},
	urldate = {2021-10-06},
	journal = {Computers \& Chemical Engineering},
	author = {Lee, Hojae and Maravelias, Christos T.},
	month = nov,
	year = {2017},
	note = {17 citations (Crossref) [2021-10-06]},
	pages = {621--644},
}

@article{gupta_general_2017,
	title = {A {General} {State}-{Space} {Formulation} for {Online} {Scheduling}},
	volume = {5},
	issn = {2227-9717},
	url = {http://www.mdpi.com/2227-9717/5/4/69},
	doi = {10.3390/pr5040069},
	abstract = {We present a generalized state-space model formulation particularly motivated by an online scheduling perspective, which allows modeling (1) task-delays and unit breakdowns; (2) fractional delays and unit downtimes, when using discrete-time grid; (3) variable batch-sizes; (4) robust scheduling through the use of conservative yield estimates and processing times; (5) feedback on task-yield estimates before the task ﬁnishes; (6) task termination during its execution; (7) post-production storage of material in unit; and (8) unit capacity degradation and maintenance. Through these proposed generalizations, we enable a natural way to handle routinely encountered disturbances and a rich set of corresponding counter-decisions. Thereby, greatly simplifying and extending the possible application of mathematical programming based online scheduling solutions to diverse application settings. Finally, we demonstrate the effectiveness of this model on a case study from the ﬁeld of bio-manufacturing.},
	language = {en},
	number = {4},
	urldate = {2021-10-06},
	journal = {Processes},
	author = {Gupta, Dhruv and Maravelias, Christos},
	month = nov,
	year = {2017},
	note = {20 citations (Crossref) [2021-10-06]},
	pages = {69},
}

@article{wong_critical_2019,
	title = {Critical review of supply chain innovation research (1999–2016)},
	volume = {82},
	issn = {00198501},
	url = {https://linkinghub.elsevier.com/retrieve/pii/S0019850118300506},
	doi = {10.1016/j.indmarman.2019.01.017},
	abstract = {This paper aims to systematically review the supply chain innovation literature over the last 18 years. It examines the development and current state of supply chain innovation research in management and identifies research gaps. A literature review is conducted to identify and analyze publications in peer-reviewed academic journals that include contributions from different strands of management research. This paper analyzes the theoretical contributions of the supply chain innovation literature using Gregor's (2006) framework of theory classification. It also evaluates the levels of analysis of the literature using the structural view model proposed by Skinner, Han, and Chang (2006). This research identified and analyzed various topics related to the supply chain innovation construct and showed that supply chain innovations can be studied at multiple analytical levels. It also revealed that the field has largely relied on manufacturing firm-based samples and U.S. samples, limiting the generalizability of the findings. The identification and analysis of relevant articles highlighted the need to conceptualize the supply chain innovation construct and develop measurement scales to operationalize it. This literature review is the first to focus on supply chain innovations, summarizing the development of the last 18 years and providing fruitful opportunities for future research. The results presented can be applied to the decision-making process of managers regarding supply chain innovations.},
	language = {en},
	urldate = {2021-03-03},
	journal = {Industrial Marketing Management},
	author = {Wong, David T.W. and Ngai, Eric W.T.},
	month = oct,
	year = {2019},
	note = {16 citations (Semantic Scholar/DOI) [2021-03-04]},
	pages = {158--187},
}

\appendix

\section{Illustrative example for graph-related definitions} \label{appendix:illustraive_example_dag}

In this section, we present an illustrative example the graph-related definitions that we introduced in Subsection \ref{subsec:ch4_prerequisite_dag}.

\begin{figure}[h]
\includegraphics[width=0.70\textwidth]{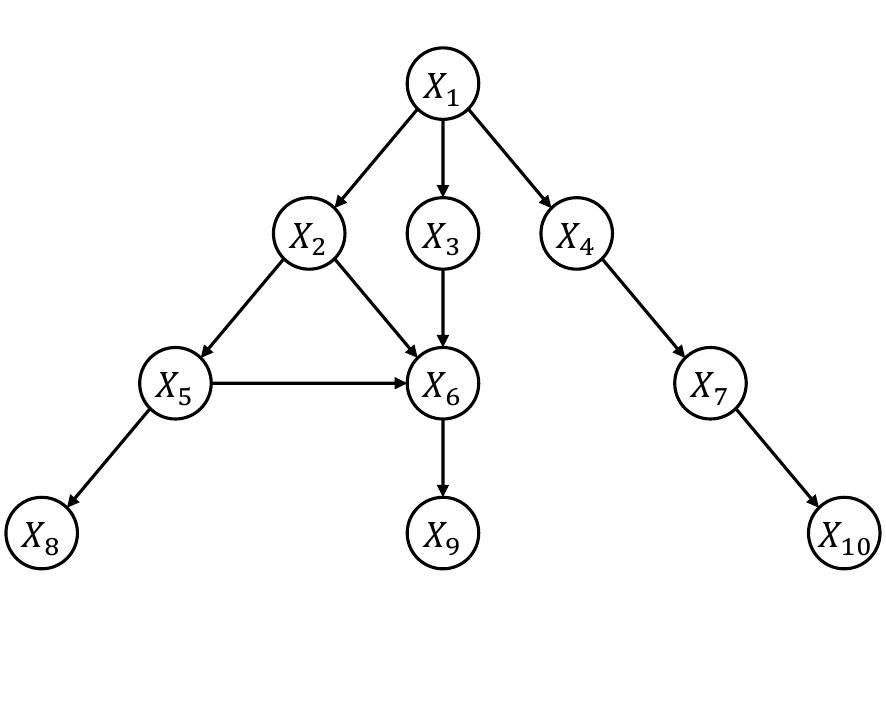}
\centering
\caption{
    An illustrative \acrshort*{dag}
}
\label{fig:appendix2_illustrative_dag}
\end{figure}

Figure \ref{fig:appendix2_illustrative_dag} presents the illustrative \acrshort*{dag}. The \acrshort*{dag} $\mathcal{G}$ consists of nodes $\mathbb{X} = \{X_{1}, X_{2}, \cdots, X_{10} \}$ and arcs $\mathbb{{A}} = \{ X_{1} \rightarrow X_{2}, X_{1} \rightarrow X_{3}, X_{1} \rightarrow X_{4}, X_{2} \rightarrow X_{5}, X_{2} \rightarrow X_{6}, X_{5} \rightarrow X_{6}, X_{3} \rightarrow X_{6}, X_{4} \rightarrow X_{7}, X_{5} \rightarrow X_{8}, X_{6} \rightarrow X_{9}, X_{7} \rightarrow X_{10} \}$. Because the arc $X_{1} \rightarrow X_{2}$ exists, the node $X_{1}$ is a parent of $X_{2}$, while the node $X_{2}$ is a child of $X_{1}$. The sequence of arcs $X_{1} \rightarrow X_{3} \rightarrow X_{6} \rightarrow X_{9}$ is a path with length 3, because it contains 3 arcs in the sequence (the number of arrows in the sequence). The node $X_{1}$ is the only root node in $\mathcal{G}$, because only the node $X_{1}$ has no parent nodes in $\mathcal{G}$. By convention, the height of $X_{1}$ equals $0$. The height of the node $X_{9}$ equals $4$, because the longest path from the root node $X_{1}$ to $X_{9}$ is $X_{1} \rightarrow X_{2} \rightarrow X_{5} \rightarrow X_{6} \rightarrow X_{9}$, which contains 4 arcs. The collections of nodes with height $0$, $1$, $2$, $3$, and $4$ are denoted by $\mathbb{H}^{\mathcal{G}}_{0} = \{ X_{1} \}$, $\mathbb{H}^{\mathcal{G}}_{1} = \{ X_{2}, X_{3}, X_{4} \}$, $\mathbb{H}^{\mathcal{G}}_{2} = \{ X_{5}, X_{7} \}$, $\mathbb{H}^{\mathcal{G}}_{3} = \{ X_{6}, X_{8}, X_{10} \}$, and $\mathbb{H}^{\mathcal{G}}_{4} = \{ X_{9} \}$, respectively. Therefore, the height of the \acrshort*{dag} equals $4$, because the maximum height over all nodes equals $4$. The ancestors of $X_{6}$ is denoted by $\mathrm{An}^{\mathcal{G}} (X_{6}) = \{ X_{5}, X_{2}, X_{3}, X_{1} \}$. The descendants of $X_{2}$ is denoted by $\mathrm{De}^{\mathcal{G}} (X_{2}) = \{ X_{5}, X_{6}, X_{8}, X_{9} \}$. 

A possible topological ordering of the \acrshort*{dag} is $X_{1}$, $X_{2}$, $X_{3}$, $X_{4}$, $X_{5}$, $X_{6}$, $X_{7}$, $X_{8}$, $X_{9}$, $X_{10}$, because for any arc $X_{i} \rightarrow X_{j}$ in $\mathbb{A}$, the parent $X_{i}$ precedes the child $X_{j}$ in the ordering. However, there exist also other possible topological orderings. For example, $X_{1}$, $X_{3}$, $X_{2}$, $X_{4}$, $X_{5}$, $X_{7}$, $X_{6}$, $X_{9}$, $X_{8}$, $X_{10}$ is also a valid topological ordering. 

The $0$-fold parents of $X_{9}$ are $(\mathrm{Pa}^{\mathcal{G}})^{(0)} (X_{9}) = \{ X_{9} \}$, that is, by convention the node $X_{9}$ itself. The $1$-fold parents of $X_{9}$ are $(\mathrm{Pa}^{\mathcal{G}})^{(1)} (X_{9}) = \{ X_{6} \}$. The $2$-fold parents of $X_{9}$ are $(\mathrm{Pa}^{\mathcal{G}})^{(2)} (X_{9}) = \{ X_{5}, X_{2}, X_{3} \}$, that is, the parents of $X_{6}$. The $3$-fold parents of $X_{9}$ are $(\mathrm{Pa}^{\mathcal{G}})^{(3)} (X_{9}) = \{ X_{2}, X_{1} \}$, that is, the union of parents of $X_{5}$, $X_{2}$, and $X_{3}$. 
\section{Illustration of the empirical procedure for constructing logic-based transition rules} \label{appendix:trans_func}

When using the dynamic system formalism to describe a batch process, one of the main challenges exist in how to encode the feasibility logic between the action and the state variables in the transition function \textsc{TransFunc}. For example, \textsc{TransFunc} should not allow the system to initiate a new operation on a machine when another operation is ongoing. Also, when the downstream storage unit reaches full capacity, upstream machines cannot complete their operations and transfer produced materials. It becomes even more complex to incorporate these feasibility constraints into \textsc{TransFunc} when there exist endogenous disturbances in the system.

In previous works, a popular approach to address this challenge is to develop a \textit{state-space} model \cite{subramanian_state-space_2012, gupta_general_2017, risbeck_unification_2019}. In a state-space model, $\textsc{TransFunc}$ is modelled as a linear combination of the system variables. That is, the general form of the state-space model is given by
\begin{align} \label{eq:general_state_space_model}
    \boldsymbol{s}_{t+1} = A \boldsymbol{s}_{t} + B \boldsymbol{a}_{t} + C \boldsymbol{i}_{t},
\end{align}
where the matrices $A$, $B$, and $C$ are state-space matrices that incorporate the system dynamics, which implies these feasibility constraints. One significant advantage of the state-space model is that it is compatible with existing MILP formulations. That is, if a scheduler has already developed an MILP formulation for generating new schedules, the $\textsc{TransFunc}$ can be easily derived from this formulation. This is often achieved via following the below two-step procedure. First, solve the MILP problem at $t$-th time step (in the context of dynamic systems) and obtain the solution. Then, use the values of decision variables at $(t+1)$-th time step in the MILP solution as the values for state variables $\boldsymbol{s}_{t+1}$. Following this procedure, essentially, the state-space matrices correspond to a set of sub-matrices in the left-hand side coefficient matrix of the MILP formulation.

However, based on our experience, there exist several limitations in using a state-space model to describe the system dynamics of a batch process. On the one hand, it can become time consuming to compute \textsc{TransFunc} many times because it requires solving an MILP problem from scratch. Specifically, when the complete solution is not used for subsequent executions, this may result in a waste of computational resources, which often leads to a non-negligible computational overhead. On the other hand, the matrix representation used in the state-space model often does not appear in a modular form in complex situations. That is, when a new type of disturbance needs to be considered, the entire set of state-space matrices are often required to be completely redesigned. This redesigning process can be painstaking for human practitioners and can incur additional development costs in practice.

\begin{figure}[H]
\includegraphics[width=1.0\textwidth]{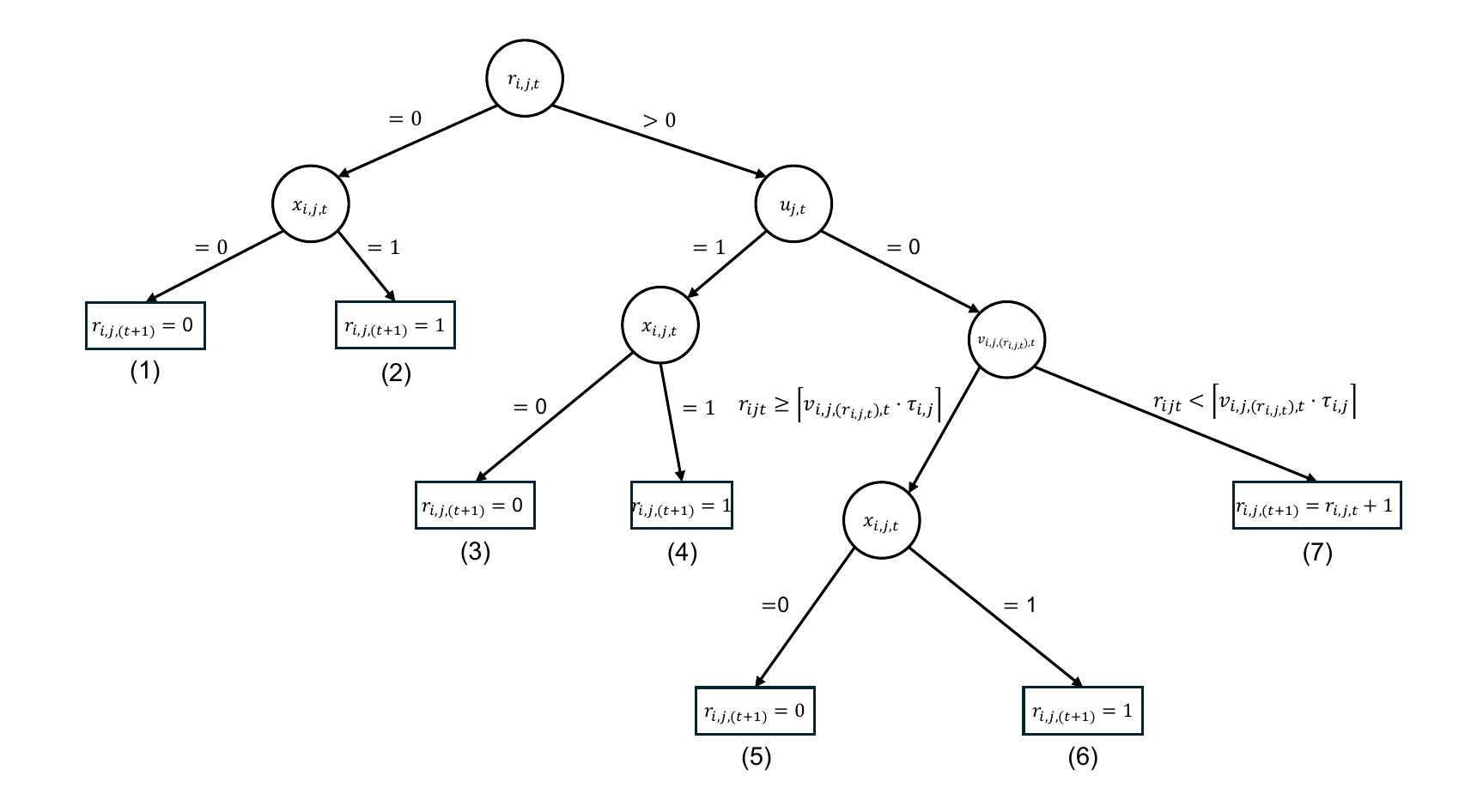}
\centering
\caption{
    Decision tree for determining the value of state variable $r_{i,j,t+1}$. \\
    In this decision tree, each non-leaf node denotes a system variable from the previous time step that may influence the value of $r_{i,j,t+1}$. Each arc outgoing from a non-leaf node represents a set of possible values for the corresponding non-leaf node variable. Each leaf node represents a set of possible values that the state variable $r_{i,j,t+1}$ can take, determined by travelling the decision tree from the root node to the leaf node}
\label{fig:appendix2_decision_tree_rijt}
\end{figure}

To address these limitations, we propose a rule-based method that uses a set of decision trees to describe $\textsc{TransFunc}$. In this method, each state variable $s_{t+1} \in \boldsymbol{s}_{t+1}$ is associated with a decision tree that describes its dedicated transition rules. Each decision tree is composed as follows. A non-leaf node represents a system variable $x_{t}$ (from the previous time step) that may influence the value of $s_{t+1}$. An arc that is outgoing from a non-leaf node represents a set of possible values for the corresponding non-leaf node variable. A leaf node represents a set of possible values that the state variable $s_{t+1}$ can take, which is determined by following the logic path from the root node to the leaf node. As an example, Fig. \ref{fig:appendix2_decision_tree_rijt} presents the decision tree for determining the value of the state variable $r_{i,j,t+1}$, which represents the elapsed time of operations.

In the remainder of this appendix section, we first describe the empirical procedure for designing a decision tree. Next, we use the state variable $r_{i,j,t+1}$ as an example to demonstrate step by step how to design its corresponding decision tree by following the empirical procedure. After that, we explain how to encode the decision tree as a set of predicate logic formulas and how to implement these formulas in general-purpose programming languages. 

\subsubsection*{Empirical procedure for designing the decision tree}
To clearly describe the empirical procedure, we need additional notations. We use $x \xrightarrow{y} *$ to represent an arc, where $x$ is the parent node, $y$ is the attribute of the arc, and the child node has not been specified but the arc is known to exist. We denote a sequence of variables by $S = \langle a_1, a_2, \cdots, a_n \rangle$. We use $\frown$ to denote the concatenation operation, that is, $S \frown \langle a_{n+1} \rangle = \langle a_1, \cdots, a_{n}, a_{n+1} \rangle$. Also, we will use $\triangleleft S$ to represent the sequence $S$ with the last element removed, that is, $\triangleleft S = \langle a_1, \cdots, a_{n-1} \rangle$.

The empirical procedure for constructing a decision tree is stated as follows.
\begin{itemize}
    \item Step 0 (Initialisation). Specify the state variable $s_{t+1}$ for which the decision tree is to be determined. Identify the set of system variables $\mathbb{X}_{t}$ at $t$-th time step that may affect the value of $s_{t+1}$. Initialise an empty sequence $S$ to denote the variables that are currently being explored. Proceed to Step 1.
    \item Step 1 (Partitioning). Select a variable from $\mathbb{X}_{t}$ that may affect the value of $s_{t+1}$ and label it as $x_{t}$. Partition the possible values of $x_{t}$ as $\Pi(x_{t}) = \{ \Pi^{(1)}(x_{t}), \cdots, \Pi^{(n)}(x_{t}) \}$. For each subset of possible values $\Pi^{(i)}(x_t) \in \Pi(x_t)$, add the arc $x_{t} \xrightarrow{\Pi^{(i)}(x_t)} *$ to the decision tree. Update $\mathbb{X}_{t} \coloneqq \mathbb{X}_{t} \setminus \{ x_t \}$ and $S \coloneqq S \frown \langle x_{t} \rangle$. Go to Step 2.
    \item Step 2 (Branching). Choose one of the subsets $\Pi^{(i)}(x_t)$ from $\Pi(x_t)$. Update $\Pi(x_{t}) \coloneqq \Pi(x_{t}) \setminus \Pi^{(i)} (x_t)$. Check if the remaining variables in $\mathbb{X}_{t}$ may affect the value of $s_{t+1}$. If they do, then go back to Step 1; otherwise, proceed to Step 3.
    \item Step 3 (Pruning). Mark the unspecified child node $*$ in the arc $x_{t} \xrightarrow{\Pi^{(i)}(x_t)} *$ with the value of $s_{t+1}$ in this situation. Proceed to Step 4.
    \item Step 4 (Backtracking). Check if $\Pi(x_t) = \varnothing$. If this is true, which means that all subsets in $\Pi(x_t)$ have been explored, then repeat the following process until $\Pi(x_t) \neq \varnothing$: update $\mathbb{X}_{t} \coloneqq \mathbb{X}_{t} \cup \{ x_{t} \}$, update $S \coloneqq \triangleleft S$, and relabel $x_{t}$ as the last element in $S$. If this process stops with $\Pi(x_t)$ being nonempty, go back to Step 2. Otherwise, which means that all possible situations have been explored, terminate the procedure. If $\Pi(x_t) \neq \varnothing$, go back to Step 2.
\end{itemize}

\subsubsection*{Illustrative example}
Here, we take $r_{i,j,t+1}$ as an example to illustrate the empirical procedure and how the decision tree can be interpreted by a set of predicate logic formulas. Although this step-by-step illustration may look daunting at first, unfortunately, it is necessary to go through these steps to avoid missing any corner cases. Once readers have become familiar with the idea of this procedure, it will become not difficult to apply it to a new state variable. When going through the illustration, it is helpful to also refer to explanations that are presented in Figure \ref{fig:appendix2_decision_tree_rijt} and Figure \ref{fig:appendix2_machine_status_rijt}.

The empirical procedure for designing the decision tree that encodes the transition rules for the state variables $r_{i,j,t+1}$, where $j \in \mathbb{J}$, $i \in \mathbb{I}_{j}$, and $ t \in \{t_0, \cdots, t_N-1 \}$ is illustrated as follows. 

\begin{figure}[h]
\includegraphics[width=1.0\textwidth]{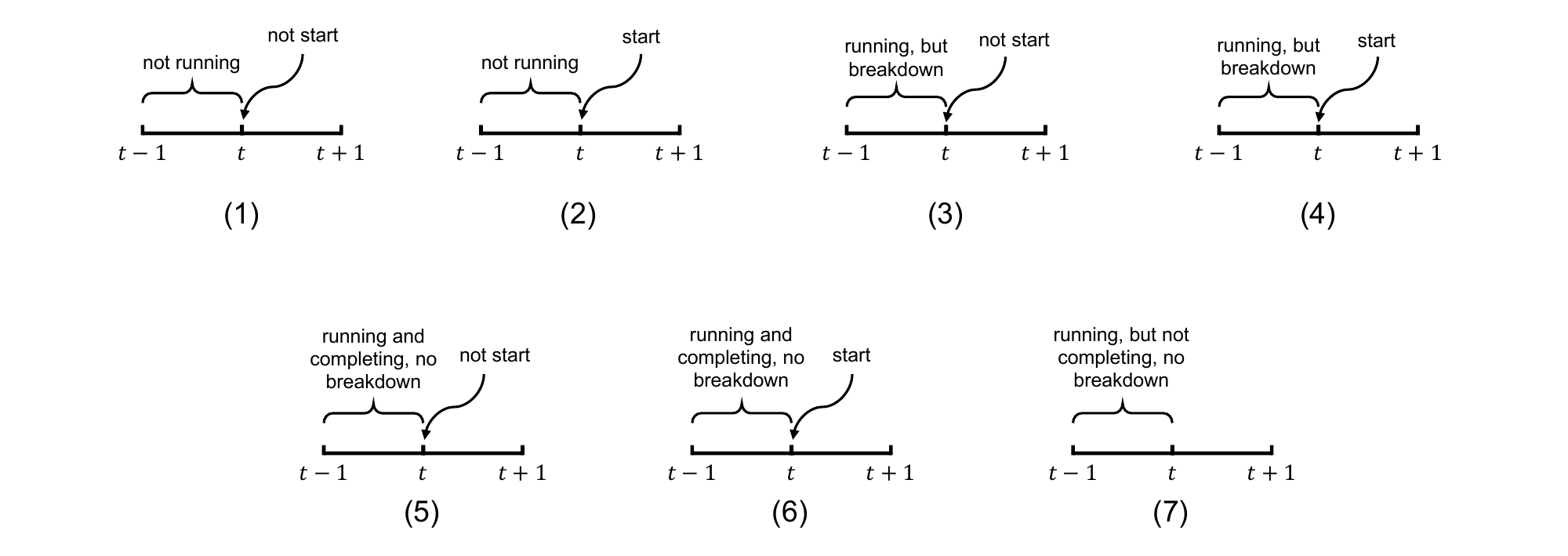}
\centering
\caption{
    Illustration of different operation statuses corresponding to the leaf nodes of the decision tree shown in Figure \ref{fig:appendix2_decision_tree_rijt}
}
\label{fig:appendix2_machine_status_rijt}
\end{figure}

\begin{enumerate}
    \item (Initialisation) We first identify the system variables that may affect the value of $r_{i,j,t+1}$, which represents the elapsed time of an operation by the end of time interval $[t, t+1)$. It is clear that it depends on $r_{i,j,t}$, the elapsed time during the last time step. Also, it may depend on whether the system initiates a new operation ($x_{i,j,t}$) at time point $t$. Regarding disturbances, if the processing time extends ($v_{i,j,(r_{i,j,t}),t}$), or if the machine breaks down ($u_{j,t}$) during $[t-1, t)$, then $r_{i,j,t+1}$ may also be affected. As a result, we initialise with $\mathbb{X}_{t} \coloneqq \{ r_{i,j,t}, x_{i,j,t}, v_{i,j,(r_{i,j,t}),t}, u_{j,t} \}$, and an empty sequence $S$. Proceed to Partitioning.
    
    \item (Partitioning) Without loss of generality, we select $r_{i,j,t}$ as the first variable to partition. Label $r_{i,j,t}$ as $x_t$. Because $r_{i,j,t}$ is a nonnegative integer variable, we partition it as $\Pi(r_{i,j,t}) = \bigl\{ \Pi^{(1)}(r_{i,j,t}), \Pi^{(2)}(r_{i,j,t}) \bigr\} = \bigl\{ \{ r_{i,j,t} = 0 \}, \{r_{i,j,t} > 0 \} \bigr\}$, out of which represents that the operation is not running and is running during $[t-1, t)$, respectively. According to this partition, we add arcs $r_{i,j,t} \xrightarrow{ \{ r_{i,j,t} = 0 \}} *$ and $r_{i,j,t} \xrightarrow{ \{ r_{i,j,t} > 0 \}} *$ to the decision tree. Update $\mathbb{X}_{t} \coloneqq \mathbb{X}_{t} \setminus \{ x_{t} \} = \{ x_{i,j,t}, v_{i,j,(r_{i,j,t}),t}, u_{j,t} \}$ and $S \coloneqq S \frown \langle x_{t} \rangle = \langle r_{i,j,t} \rangle$. Proceed to Branching.

    \item (Branching) We first branch on $\{ r_{i,j,t} = 0\}$. Update $\Pi(r_{i,j,t}) \coloneqq \bigl\{ \{r_{i,j,t} > 0 \} \bigr\}$. In this case, there is no operation that is running on the machine during $[t-1, t)$. Therefore, the processing time variation and machine breakdown information during this interval will not affect the elapsed time by the end of $[t, t+1)$. However, it is possible that the system initiates an operation at time point $t$. Therefore, when $\{r_{i,j,t} = 0 \}$, we still need to consider the value of $x_{i,j,t}$. Go back to Partitioning.

    \item (Partitioning) Next, we choose $x_{i,j,t}$ to partition. Label $x_{i,j,t}$ as $x_t$. Because $x_{i,j,t}$ is a binary variable that indicates whether an operation is initiated at time point $t$, we partition its possible values by $\Pi(x_{i,j,t}) = \bigl\{ \{ x_{i,j,t} = 0 \}, \{x_{i,j,t} = 1 \} \bigr\}$. Add arcs $x_{i,j,t} \xrightarrow{ \{ x_{i,j,t} = 0 \}} *$ and $x_{i,j,t} \xrightarrow{ \{ x_{i,j,t} = 1 \}} *$ to the decision tree. Update $\mathbb{X}_{t} \coloneqq \mathbb{X}_{t} \setminus \{ x_{t} \} = \{v_{i,j,(r_{i,j,t}),t}, u_{j,t} \}$ and $S \coloneqq S \frown \langle x_{t} \rangle =  \langle r_{i,j,t}, x_{i,j,t} \rangle$. Proceed to Branching.

    \item (Branching) Here, we choose to first branch on the case when $\{ x_{i,j,t} = 0 \}$. Update $\Pi(x_{i,j,t}) \coloneqq \bigl\{ \{x_{i,j,t} = 1 \} \bigr\}$. In this case, there is no operation that is running during $[t-1, t)$. Also, there is no operation that is initiated at $t$-th time step. Therefore, there will be no operation that is running during $[t, t+1)$. As a result, there are no further variables in $\mathbb{X}_{t}$ that may affect the value of $r_{i,j,t+1}$. Proceed to Pruning.

    \item (Pruning) We mark the arc $x_{i,j,t} \xrightarrow{ \{x_{i,j,t} = 0 \} } *$ as $x_{i,j,t} \xrightarrow{ \{x_{i,j,t} = 0 \} } \{ r_{i,j,t} = 0 \}$. Proceed to Backtracking.

    \item (Backtracking) Recall that until this step, the label of $x_t$ is on $x_{i,j,t}$. Because $\Pi(x_{t}) = \bigl\{ \{x_{i,j,t} = 1 \} \bigr\} \neq \varnothing$, return to Branching.

    \item (Branching) This time, we branch on $\{ x_{i,j,t} = 1 \}$. Update $\Pi(x_{i,j,t}) \coloneqq \varnothing$. In this case, there is no operation that is running during $[t-1, t)$. However, the system initiates an operation at time point $t$. Regardless of whether this newly initiated operation will complete by the end of $[t, t+1)$ or not, we have $r_{i,j,t+1} = 1$. There are no other variables in $\mathbb{X}_{t}$ that may affect the value of $r_{i,j,t+1}$. Therefore, we proceed to Pruning.
    
    \item (Pruning) Again, we mark the arc $x_{i,j,t} \xrightarrow{ \{x_{i,j,t} = 0 \} } *$ as $x_{i,j,t} \xrightarrow{ \{x_{i,j,t} = 0 \} } \{ r_{i,j,t+1} = 1 \}$. Proceed to Backtracking.

    \item (Backtracking) Recall that $x_{t} = x_{i,j,t}$. Because $\Pi(x_{t}) = \varnothing$, we update $\mathbb{X}_{t} \coloneqq \mathbb{X}_{t} \cup \{ x_{t} \} = \{v_{i,j,(r_{i,j,t}),t}, u_{j,t}, x_{i,j,t} \}$, $S \coloneqq \triangleleft S = \langle r_{i,j,t} \rangle$. Assign $x_{t} \coloneqq r_{i,j,t}$. In this case, we know that $\Pi(r_{i,j,t}) = \bigl\{ \{r_{i,j,t} > 0 \} \bigr\}$, which is not empty. Therefore, we return to Branching.

    \item (Branching) At this point, the only choice is to branch on $\{ r_{i,j,t} > 0 \}$. Update $\Pi(r_{i,j,t}) \coloneqq \varnothing$. When $r_{i,j,t} > 0$, the situation becomes more complex. That is, the running operation may be disrupted by a machine breakdown, the processing time may vary, and another operation may start at time point $t$. Therefore, we proceed to Partitioning.

    \item (Partitioning) We select $u_{j,t}$ to partition. Label $u_{j,t}$ as $x_{t}$. Because $u_{j,t}$ is a binary variable that represents machine breakdown, we partition it as $\Pi(u_{j,t}) = \bigl\{ \{ u_{j,t} = 0 \}, \{u_{j,t} = 1 \} \bigr\}$. Add arcs $u_{j,t} \xrightarrow{\{u_{j,t} = 0 \}} *$ and $u_{j,t} \xrightarrow{\{u_{j,t} = 1 \}} *$ to the decision tree. Update $\mathbb{X}_{t} \coloneqq \mathbb{X}_{t} \setminus \{ x_{t} \} = \{x_{i,j,t}, v_{i,j,(r_{i,j,t}),t} \}$ and $S \coloneqq S \frown \langle x_{t} \rangle = \langle r_{i,j,t}, u_{j,t} \rangle$. Proceed to Branching.

    \item (Branching) We branch on $\{ u_{j,t} = 1 \}$. Update $\Pi(u_{j,t}) \coloneqq \bigl\{ \{ u_{j,t} = 0 \} \bigr\}$. In this case, there is a machine breakdown event during $[t-1, t)$. However, this information alone does not ensure that $r_{i,j,t+1} = 0$ because it is still possible that an operation is initiated at time point $t$. Therefore, we go back to Partitioning.

    \item (Partitioning) We partition on $x_{i,j,t}$ again. Label $x_{i,j,t}$ as $x_{t}$. Partition $\Pi(x_{i,j,t}) = \bigl\{ \{ x_{i,j,t} = 0 \}, \{x_{i,j,t} = 1 \} \bigr\}$, and add arcs $x_{i,j,t} \xrightarrow{ \{ x_{i,j,t} = 0 \}} *$ and $x_{i,j,t} \xrightarrow{ \{ x_{i,j,t} = 1 \}} *$ to the decision tree. Update $\mathbb{X}_{t} \coloneqq \mathbb{X}_{t} \setminus \{ x_{t} \} = \{ v_{i,j,(r_{i,j,t},t)} \}$, $S \coloneqq S \frown \langle x_{t} \rangle = \langle r_{i,j,t}, u_{j,t}, x_{i,j,t} \rangle $. Proceed to Branching.

    \item (Branching) We branch on $\{ x_{i,j,t} = 0 \}$. Update $\Pi(x_{i,j,t}) \coloneqq \bigl\{ \{ x_{i,j,t} = 1 \} \bigr\}$. In this case, there is an operation that is running during $[t-1, t)$, but it is disrupted by a machine breakdown, and no operation is initiated at $t$. No further variables in $\mathbb{X}_{t}$ will affect $r_{i,j,t+1}$. Proceed to pruning.
    
    \item (Pruning) Mark the arc $x_{i,j,t} \xrightarrow{ \{x_{i,j,t} = 0 \} } *$ as $x_{i,j,t} \xrightarrow{ \{x_{i,j,t} = 0 \} } \{r_{i,j,t} = 0 \}$. Proceed to Backtracking.

    \item (Backtracking) Because $\Pi(x_t) = \Pi(x_{i,j,t}) \neq \varnothing$, go back to Branching.

    \item (Branching) Branch on $\{ x_{i,j,t} = 1 \}$. Update $\Pi(x_{i,j,t}) \coloneqq \varnothing$. This case represents the scenario that an operation was running during $[t-1, t)$, but it was disrupted by a machine breakdown event, and a new operation is initiated at time point $t$. In this case, we have $r_{i,j,t+1} = 1$. No further variables in $\mathbb{X}_{t}$ will affect its value. Proceed to Pruning.
    
    \item (Pruning) Mark the arc $x_{i,j,t} \xrightarrow{ \{x_{i,j,t} = 1 \} } *$ as $x_{i,j,t} \xrightarrow{ \{x_{i,j,t} = 1 \} } \{r_{i,j,t} = 1 \}$. Proceed to Backtracking.

    \item (Backtracking) Because $\Pi(x_{t}) = \Pi(x_{i,j,t}) = \varnothing$, update $\mathbb{X}_{t} \coloneqq \mathbb{X} \cup \{ x_{t} \} = \{ x_{i,j,t}, v_{i,j,(r_{i,j,t}),t} \}$, $S \coloneqq \triangleleft S = \langle r_{i,j,t}, u_{j,t} \rangle$, relabel $x_{t}$ as $u_{j,t}$. Because $\Pi(u_{j,t}) \neq \varnothing$, we return to Branching.

    \item (Branching) We branch on $\{ u_{j,t} = 0 \}$. Update $\Pi(u_{j,t}) \coloneqq \varnothing$. In this case, the operation is running during $[t-1, t)$ without disruption. Therefore, we need to check if the operation will complete normally by the end of $[t-1, t)$. Go back to Partitioning.

    \item (Partitioning) We partition on $v_{i,j,(r_{i,j,t}),t}$. Label $v_{i,j,(r_{i,j,t}),t}$ as $x_{t}$. Because it denotes the multiplicative factor of varied processing time, we need to compare it with the elapsed time. If they are equal, the operation will complete normally. Otherwise, the operation will continue. Therefore, we partition the possible values of $v_{i,j,(r_{i,j,t}),t}$ as $\Pi(v_{i,j,(r_{i,j,t}),t}) \coloneqq \bigl\{ \{ \lceil \tau_{i, j} \cdot v_{i,j,(r_{i,j,t}),t} \rceil \leq r_{i,j,t} \}, \{ \lceil \tau_{i, j} \cdot v_{i,j,(r_{i,j,t}),t} \rceil > r_{i,j,t} \} \bigr\}$. Add arcs $v_{i,j,(r_{i,j,t}),t} \xrightarrow{ \{ \lceil \tau_{i, j} \cdot v_{i,j,(r_{i,j,t}),t} \rceil \leq r_{i,j,t} \}} *$ and $v_{i,j,(r_{i,j,t}),t} \xrightarrow{ \{ \lceil \tau_{i, j} \cdot v_{i,j,(r_{i,j,t}),t} \rceil > r_{i,j,t} \}} *$ to the decision tree. Update $\mathbb{X}_{t} \coloneqq \mathbb{X}_{t} \setminus \{ x_{t} \} = \{ x_{i,j,t} \}  $ and $S \coloneqq S \frown \langle x_{t} \rangle = \langle r_{i,j,t}, u_{j,t}, v_{i,j,(r_{i,j,t}),t} \rangle $. Proceed to Branching.

    \item (Branching) We first consider the case $\Pi^{(1)}(v_{i,j,(r_{i,j,t}),t}) = \{ \lceil \tau_{i, j} \cdot v_{i,j,(r_{i,j,t}),t} \rceil \leq r_{i,j,t} \} $. In this case, the operation completes by the end of $[t-1, t)$. The only remaining variable that may affect $r_{i,j,t+1}$ is $x_{i,j,t}$, that is, whether the system initiates another operation at time point $t$. Go back to Partitioning.
    
    \item (Partitioning) Similarly, we partition on $x_{i,j,t}$ and label it as $x_{t}$. Assign $\Pi(x_{i,j,t}) \coloneqq \bigl\{ \{ x_{i,j,t} = 0 \}, \{x_{i,j,t} = 1 \} \bigr\}$, and add arcs $x_{i,j,t} \xrightarrow{ \{ x_{i,j,t} = 0 \}} *$ and $x_{i,j,t} \xrightarrow{ \{ x_{i,j,t} = 1 \}} *$. Update $\mathbb{X}_{t} \coloneqq \mathbb{X}_{t} \setminus \{ x_{t} \} = \varnothing$ and $S \coloneqq S \frown \langle x_{t} \rangle = \langle r_{i,j,t}, u_{j,t}, v_{i,j,(r_{i,j,t}),t}, x_{i,j,t} \rangle$. 

    \item (Branching) Branch on $\{ x_{i,j,t} = 0 \}$. Update $\Pi(x_{i,j,t}) \coloneqq \bigl\{ \{x_{i,j,t} = 1 \} \bigr\}$. No further variables in $\mathbb{X}_{t}$ may affect $r_{i,j,t}$ because it is an empty set. Proceed to Pruning.

    \item (Pruning) Mark the arc $x_{i,j,t} \xrightarrow{\{ x_{i,j,t} = 0 \}} *$ as $x_{i,j,t} \xrightarrow{\{ x_{i,j,t} = 0 \}} \{ r_{i,j,t} = 0 \}$. Proceed to Backtracking.

    \item (Backtracking) Because $\Pi(x_{t}) = \Pi(x_{i,j,t}) \neq \varnothing$, go back to Branching.

    \item (Branching) Branch on $\{ x_{i,j,t} = 1 \}$. Update $\Pi(x_{i,j,t}) \coloneqq \varnothing$. Again, no further variables in $\mathbb{X}_{t}$ may affect the value of $r_{i,j,t}$. Proceed to Pruning.

    \item (Pruning) Mark the arc $x_{i,j,t} \xrightarrow{\{ x_{i,j,t} = 1 \}} *$ as $x_{i,j,t} \xrightarrow{\{ x_{i,j,t} = 1 \}} \{ r_{i,j,t} = 1 \}$. Proceed to Backtracking.

    \item (Backtracking) Because $\Pi(x_{t}) = \Pi(x_{i,j,t}) = \varnothing$, we update $\mathbb{X}_{t} \coloneqq \mathbb{X}_{t} \cup \{ x_{t} \} = \{ x_{i,j,t} \}$, $S \coloneqq \triangleleft S = \langle r_{i,j,t}, u_{j,t}, v_{i,j,(r_{i,j,t}),t} \rangle$, and relabel $v_{i,j,(r_{i,j,t}),t}$ as $x_{t}$. Now we have $\Pi(x_{t}) = \Pi(v_{i,j,(r_{i,j,t}),t}) \neq \varnothing$, we go back to Branching.

    \item (Branching) Now, branch on $\{ \lceil \tau_{i, j} \cdot v_{i,j,(r_{i,j,t}),t} \rceil > r_{i,j,t} \}$. Update $\Pi(v_{i,j,(r_{i,j,t}),t}) \coloneqq \varnothing$. In this case, the operation is running on the machine and will not complete by the end of $[t-1, t)$. This scenario presents a corner case where different modellers may choose different approaches. While some modellers may model this situation such that a new operation preempts the previously running one, others may choose to reject the new operation initiation. In our model, we follow the latter approach, where the previously running operation takes priority and the system prevents any new operation from being initiated. Therefore, we conclude that no further variables need to be considered. Proceed to Pruning.
    
    \item (Pruning) Mark the arc $v_{i,j,(r_{i,j,t}),t} \xrightarrow{ \{ \lceil \tau_{i, j} \cdot v_{i,j,(r_{i,j,t}),t} \rceil > r_{i,j,t} \}} *$ as $v_{i,j,(r_{i,j,t}),t} \xrightarrow{ \{ \lceil \tau_{i, j} \cdot v_{i,j,(r_{i,j,t}),t} \rceil > r_{i,j,t} \}} \{ r_{i,j,t+1} = r_{i,j,t} + 1 \}$. Proceed to Backtracking.

    \item (Backtracking) Recall that $x_{t} = v_{i,j,(r_{i,j,t}),t}$ and $\Pi_{x_{t}} = \varnothing$. We update $\mathbb{X}_{t} \coloneqq \{ v_{ij(r_{ij(t)})t} \}$, $S \coloneqq \langle r_{i,j,t}, u_{j,t} \rangle$. Relabel $u_{j,t}$ as $x_{t}$. Because $\Pi(u_{j,t})$ is empty, continue to update as $\mathbb{X}_{t} \coloneqq \{ v_{ij(r_{ij(t)})t}, u_{j,t} \}$, $S \coloneqq \langle r_{i,j,t} \rangle$. Relabel $r_{i,j,t}$ as $x_{t}$. Because $\Pi(r_{i,j,t}) = \varnothing$, repeat this process with $\mathbb{X}_{t} \coloneqq \{ v_{i,j,(r_{i,j,t}),t}, u_{j,t}, r_{i,j,t} \}$, and $S \coloneqq \varnothing$. This process stops when $S$ is empty, which means that all possible situations have been explored. Terminate the procedure.
\end{enumerate}

Then, by encoding the resulted decision tree (Figure \ref{fig:appendix2_decision_tree_rijt}), we obtain the transition rules presented in Subsection \ref{subsec:ch4_formulation_ds}:
\begin{subequations}
\begin{numcases}{r_{i,j,t+1} = }
    0       & $\text{if } \left( r_{i,j,t} = 0 \land x_{i,j,t} = 0 \right) \lor$ \nonumber \\
            & $\phantom{\text{if }} \left( r_{i,j,t} > 0 \land u_{j,t} = 1 \land x_{i,j,t} = 0 \right) \lor$ \nonumber  \\ 
            & $\phantom{\text{if }} \left( r_{i,j,t} > 0 \land u_{j,t} = 0 \land \lceil \tau_{i,j} \cdot v_{i,j,(r_{i,j,t}),t} \rceil = r_{i,j,t} \land x_{i,j,t} = 0 \right)$  \nonumber \\[2ex]
    1       & $\text{if } \left( r_{i,j,t} = 0 \land x_{i,j,t} = 1 \right) \lor$ \nonumber \\
            & $\phantom{\text{if }} \left( r_{i,j,t} > 0 \land u_{j,t} = 1 \land x_{i,j,t} = 1 \right) \lor$ \nonumber \\
            & $\phantom{\text{if }} \left( r_{i,j,t} > 0 \land u_{j,t} = 0 \land \lceil \tau_{i,j} \cdot v_{i,j,(r_{i,j,t}),t} \rceil = r_{i,j,t} \land x_{i,j,t} = 1 \right)$ \nonumber \\[2ex]
    r_{i,j,t} + 1 & $\text{if } \left( r_{i,j,t} > 0 \land u_{j,t} = 0 \land \lceil \tau_{i,j} \cdot v_{i,j,(r_{i,j,t}),t} \rceil > r_{i,j,t} \right)$  \nonumber
\end{numcases}    
\end{subequations}
The remaining transition rules are actually obtained via a similar empirical approach.

Having illustrated this empirical procedure, we obtain the decision tree for expressing the transition rules for the state variable $r_{i,j,t+1}$. As we can deduce from this procedure, there are several advantages of using a decision tree. First, by traversing the tree, we ensure that no corner cases of the transition function are ignored. Additionally, it disentangles the coupled logic into smaller components, which allows the modeller to focus on one case at a time. As a result, following this procedure reduces the possibility of negligence. Finally, this expression is highly modularised, which means that if new disturbances need to be considered, we can quickly check each leaf node in the decision tree to determine if different values of the new disturbances will affect the target state variable.
\section{Parameters of batch process Examples} \label{appendix:parameters_of_examples}

This section presents the detailed parameters of the four batch process examples adopted in Section \ref{sec:ch4_experimental_design}. Figure \ref{fig:ch4_stn_all} presents their STN structures. Table \ref{table:ch4_task_machine_configuration} presents their task-machine configurations, including compatible task-machine pairs, nominal processing times, batch size limits, and starting costs. Table \ref{table:ch4_material_configuration} presents their material configurations, including storage limits, inventory costs, backlog costs, and initial inventories. Table \ref{table:ch4_demand_profile} presents their demand profiles.

\begin{figure}[H]
    \centering
    \begin{subfigure}[b]{0.8\textwidth}
        \centering
        \includegraphics[width=\textwidth]{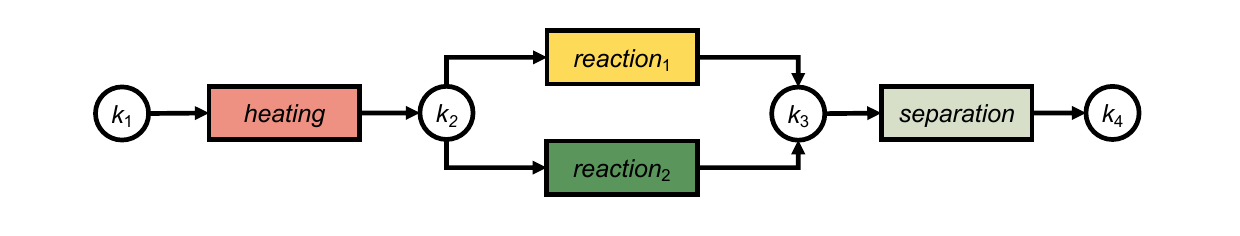}
        \caption{STN for Example 1}
        \label{subfig:ch4_stn_ex1}
    \end{subfigure}

    \vspace{1cm}
    
    \begin{subfigure}[b]{0.8\textwidth}
        \centering
        \includegraphics[width=\textwidth]{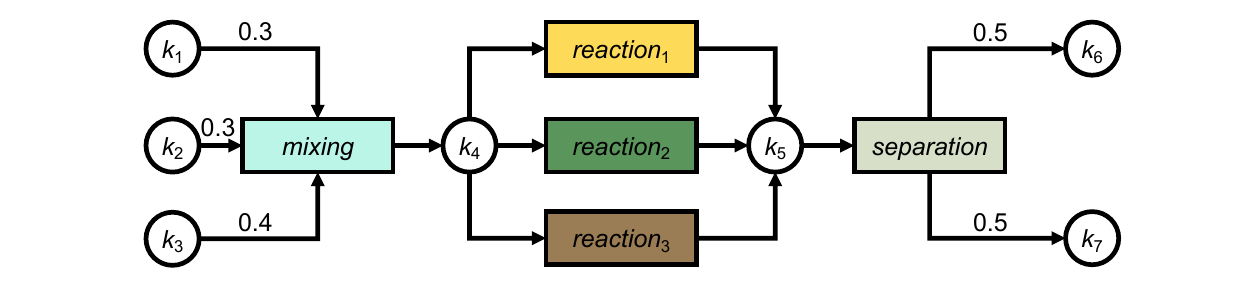}
        \caption{STN for Example 2}
        \label{subfig:ch4_stn_ex2}
    \end{subfigure}

    \vspace{1cm}
    
    \begin{subfigure}[b]{0.8\textwidth}
        \centering
        \includegraphics[width=\textwidth]{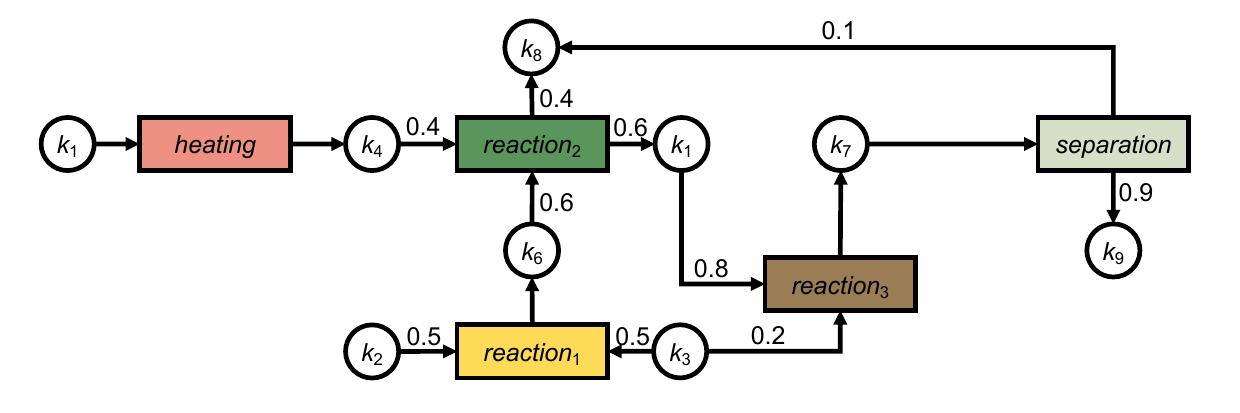}
        \caption{STN for Example 3}
        \label{subfig:ch4_stn_ex3}
    \end{subfigure}

    \vspace{1cm}
    
    \begin{subfigure}[b]{0.8\textwidth}
        \centering
        \includegraphics[width=\textwidth]{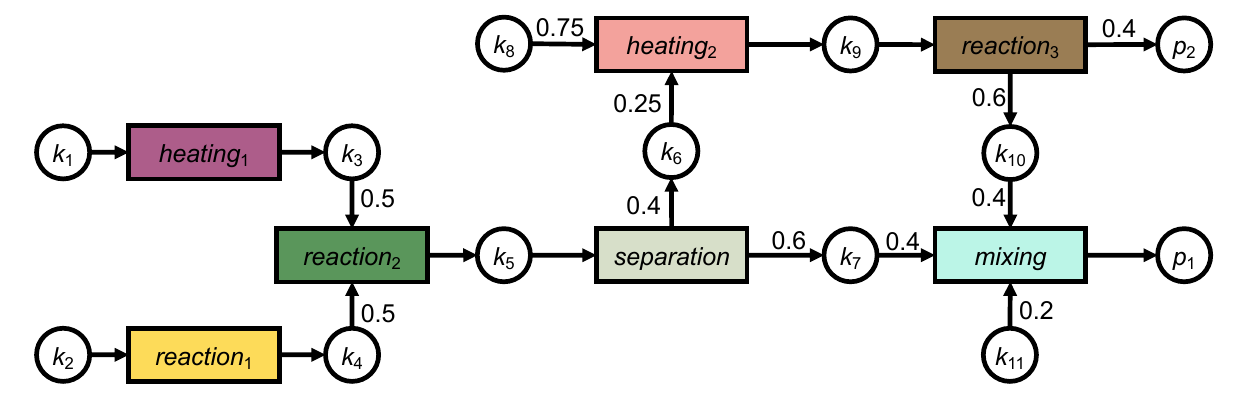}
        \caption{STN for Example 4}
        \label{subfig:ch4_stn_ex4}
    \end{subfigure}
    
    \caption{Compilation of STN representations of the four benchmark examples}
    \label{fig:ch4_stn_all}
\end{figure}
\clearpage

\begin{table}[H] 
    \caption{Task-machine configurations} 
    \small
    \setstretch{1.5}
    \centering
    \begin{tabular}{lllllll}
        \toprule
        & task & machine & processing time & min batch & max batch & starting cost \\
        \midrule
        Example 1  & heating & $j_1$ & 1 & 2 & 8 & 0.01 \\
                & reaction\textsubscript{1} & $j_2$ & 3 & 1 & 4 & 0.01 \\
                & reaction\textsubscript{2} & $j_3$ & 1 & 0.5 & 2 & 0.01 \\
                & separation & $j_4$ & 2 & 1.875 & 7.5 & 0.01 \\
        Example 2  & mixing & $j_1$ & 1 & 2 & 8 & 1 \\
                & reaction\textsubscript{1} & $j_2$ & 3 & 1.5 & 6 & 1 \\
                & reaction\textsubscript{2} & $j_3$ & 1 & 0.75 & 3 & 1 \\
                & reaction\textsubscript{3} & $j_4$ & 1 & 0.75 & 3 & 1 \\
                & separation & $j_5$ & 2 & 3.75 & 15 & 1 \\
        Example 3  & heating & heater & 3 & 0.5 & 2 & 0.1 \\
                & reaction\textsubscript{1} & reactor\textsubscript{1} & 4 & 1.25 & 5 & 0.1 \\
                & & reactor\textsubscript{2} & 4 & 2 & 8 & 0.1 \\
                & reaction\textsubscript{2} & reactor\textsubscript{1} & 4 & 1.25 & 5 & 0.1 \\
                & & reactor\textsubscript{2}  & 4 & 2 & 8 & 0.1 \\
                & reaction\textsubscript{3} & reactor\textsubscript{1} & 2 & 1.25 & 5 & 0.1 \\
                & & reactor\textsubscript{2} & 2 & 2 & 8 & 0.1 \\
                & separation & separator & 4 & 1.25 & 5 & 0.1 \\
        Example 4  & heating\textsubscript{1} & heater & 2 & 2.5 & 10 & 0.1 \\
                & heating\textsubscript{2} & heater & 3 & 2.5 & 10 & 0.1 \\
                & reaction\textsubscript{1} & reactor\textsubscript{1} & 4 & 2.5 & 10 & 0.1 \\
                & & reactor\textsubscript{2} & 4 & 3.75 & 15 & 0.1 \\
                & reaction\textsubscript{2} & reactor\textsubscript{1} & 2 & 2.5 & 10 & 0.1 \\
                & & reactor\textsubscript{2} & 2 & 3.75 & 15 & 0.1 \\
                & reaction\textsubscript{3} & reactor\textsubscript{1} & 4 & 2.5 & 10 & 0.1 \\
                & & reactor\textsubscript{2} & 4 & 3.75 & 15 & 0.1 \\
                & separation & separator & 6 & 5 & 20 & 0.1 \\
                & mixing & mixer\textsubscript{1} & 4 & 2 & 8 & 0.1 \\
                & & mixer\textsubscript{2} & 4 & 3 & 12 & 0.1 \\
        \bottomrule
    \end{tabular}    
    \label{table:ch4_task_machine_configuration}
\end{table}
\clearpage

\begin{table}[H]
    \small
    \setstretch{1.5}
    \centering
    \caption{Material configurations}
    \begin{tabular}{llllll}
        \toprule
        & material & storage limit & inventory cost & backlog cost & inventory at $t_0$  \\
        \midrule
        Example 1  & $k_1$  & $\infty$ & 0.01 & 1  & 0  \\
                & $k_2$  & 25  & 0.03 & 3  & 0  \\
                & $k_3$  & 25  & 0.03 & 3  & 0  \\
                & $k_4$  & $\infty$ & 0.1  & 10 & 10 \\
        Example 2  & $k_1$  & $\infty$ & 0.01 & 1  & 0  \\
                & $k_2$  & $\infty$ & 0.01 & 1  & 0  \\
                & $k_3$  & $\infty$ & 0.01 & 1  & 0  \\
                & $k_4$  & 30  & 0.01 & 1  & 0  \\
                & $k_5$  & 30  & 0.01 & 1  & 0  \\
                & $k_6$  & $\infty$ & 0.05 & 5  & 10 \\
                & $k_7$  & $\infty$ & 0.05 & 5  & 10 \\
        Example 3  & $k_1$  & $\infty$ & 0.01 & 1  & 0  \\
                & $k_2$  & $\infty$ & 0.01 & 1  & 0  \\
                & $k_3$  & $\infty$ & 0.01 & 1  & 0  \\
                & $k_4$  & 20  & 0.01 & 1  & 0  \\
                & $k_5$  & 40  & 0.01 & 1  & 0  \\
                & $k_6$  & 30  & 0.01 & 1  & 0  \\
                & $k_7$  & 40  & 0.01 & 1  & 0  \\
                & $k_8$  & $\infty$ & 0.08 & 8  & 10 \\
                & $k_9$  & $\infty$ & 0.12 & 12 & 10 \\
        Example 4  & $k_1$  & $\infty$ & 0.01 & 1  & 0  \\
                & $k_2$  & $\infty$ & 0.01 & 1  & 0  \\
                & $k_3$  & 30  & 0.01 & 1  & 0  \\
                & $k_4$  & 30  & 0.01 & 1  & 0  \\
                & $k_5$  & 45  & 0.01 & 1  & 0  \\
                & $k_6$  & $\infty$ & 0.01 & 1  & 0  \\
                & $k_7$  & 30  & 0.01 & 1  & 0  \\
                & $k_8$  & $\infty$ & 0.01 & 1  & 0  \\
                & $k_9$  & 30  & 0.01 & 1  & 0  \\
                & $k_{10}$ & $\infty$ & 0.01 & 1  & 0  \\
                & $k_{11}$ & $\infty$ & 0.01 & 1  & 0  \\
                & $p_1$  & $\infty$ & 0.04 & 4  & 40 \\
                & $p_2$  & $\infty$ & 0.08 & 8  & 20 \\
        \bottomrule
    \end{tabular}
    \label{table:ch4_material_configuration}
\end{table}
\clearpage

\begin{table}[H] 
    \caption{Demand profiles}
    \setstretch{1.5}
    \centering
    \small
    \begin{tabular}{lllll} 
        \toprule
        & product & baseline demand & intermittent demand & urgent demand \\
        \midrule
        Example 1 & $k_4$ & 16/12 h & $\mathrm{Pois}(0.05) / \mathrm{Unif}(14, 24)$ & $\mathrm{Pois}(0.01) / \mathrm{Unif}(2.4, 4.8)$ \\
        Example 2 & $k_6$ & 30/12 h & $\mathrm{Pois}(0.03) / \mathrm{Unif}(10, 20)$ & $\mathrm{Pois}(0.01) / \mathrm{Unif}(4.5, 9.0)$ \\
               & $k_7$ & 30/12 h & $\mathrm{Pois}(0.03) / \mathrm{Unif}(5, 15)$ & $\mathrm{Pois}(0.01) / \mathrm{Unif}(4.5, 9.0)$ \\
        Example 3 & $k_8$ & 6/12 h & $\mathrm{Pois}(0.05) / \mathrm{Unif}(2, 4)$ & $\mathrm{Pois}(0.01) / \mathrm{Unif}(0.9, 1.8)$ \\
               & $k_9$ & 10/12 h & $\mathrm{Pois}(0.02) / \mathrm{Unif}(3, 6)$ & $\mathrm{Pois}(0.01) / \mathrm{Unif}(1.5, 3.0)$ \\
        Example 4 & $p_1$ & 16/12 h & $\mathrm{Pois}(0.08) / \mathrm{Unif}(10, 20)$ & $\mathrm{Pois}(0.01) / \mathrm{Unif}(4.0, 8.0)$ \\
               & $p_2$ & 7.5/12 h & $\mathrm{Pois}(0.03) / \mathrm{Unif}(5, 15)$ & $\mathrm{Pois}(0.01) / \mathrm{Unif}(1.875, 3.75)$ \\
        \bottomrule
    \end{tabular}
    \label{table:ch4_demand_profile}
    \begin{tablenotes}
        \item The notation ``16/12 h'' means that, every 12 hours (that is, $t_{12}, t_{24}, t_{36}, \cdots$), a demand for 16 units of product occurs. 
        \item The notation ``$\mathrm{Pois}(0.05) / \mathrm{Unif}(14, 24)$'' means that, at each hour, the number of arriving orders is sampled from a Poisson distribution with mean 0.05, and the size of every order is sampled from a uniform distribution between 14 and 24 units.
    \end{tablenotes}
\end{table}
\clearpage

\end{document}